\documentclass[12pt]{article}
\usepackage[ruled,vlined]{algorithm2e}
\usepackage{xr-hyper}
\usepackage[affil-it]{authblk}

\usepackage{graphicx}
\usepackage{subcaption}
\usepackage{float}

\usepackage{amsmath, amssymb, amsfonts, amsthm}
\usepackage{comment}
\usepackage{natbib}
\usepackage{enumitem} % for algorithm/list spacing
\usepackage{xcolor}
\usepackage{array}

\usepackage[utf8]{inputenc} % allow utf-8 input
\usepackage[T1]{fontenc}    % use 8-bit T1 fonts
\usepackage{hyperref}       % hyperlinks
\usepackage{url}            % simple URL typesetting
\usepackage{booktabs}       % professional-quality tables
\usepackage{nicefrac}       % compact symbols for 1/2, etc.
\usepackage{microtype}      % microtypography
\usepackage[symbol]{footmisc}

\newtheorem{prop}{Proposition}
\newtheorem{theorem}{Theorem}
\newtheorem{lem}{Lemma}
\newtheorem{remark}{Remark}
\newtheorem{corollary}{Corollary}

\newtheorem{definition}{Definition}
\newtheorem{claim}{Claim}

\newtheorem{assumption}{Assumption}

\providecommand{\keywords}[1]{%
    \small
    \textbf{\textit{Keywords---}} #1
}

\DeclareMathOperator*{\argmin}{arg\,min}

\newcommand{\bbE}{\mathbb{E}}
\newcommand{\bbR}{\mathbb{R}}
\newcommand{\bbC}{\mathbb{C}}
\newcommand{\bbP}{\mathbb{P}}

\newcommand{\bq}{\boldsymbol{q}}
\newcommand{\bu}{\boldsymbol{u}}
\newcommand{\bv}{\boldsymbol{v}}

\newcommand{\bX}{\boldsymbol{X}}
\newcommand{\bB}{\boldsymbol{\beta}}

\newcommand{\bY}{\boldsymbol{Y}}
\newcommand{\bZ}{\boldsymbol{Z}}

\newcommand{\cA}{\mathcal{A}}
\newcommand{\cP}{\mathcal{P}}
\newcommand{\cU}{\mathcal{U}}

\newcommand{\cF}{\mathcal{F}}

\newcommand{\cN}{\mathcal{N}}

\newcommand{\no}{\backslash}
\newcommand{\del}{\no}
\newcommand{\ind}{\mathbb{I}}
\newcommand{\stb}{\mathrm{stb}}

\newcommand{\til}{\widetilde}

\newcommand{\RNum}[1]{\mathrm{\uppercase\expandafter{\romannumeral #1\relax}}}

\newcommand{\bQ}[3]{Q^{*(\no \{#1,#2\}, #3)}}
\newcommand{\qvec}[2]{\boldsymbol{q}^{*(\no #1, #2)}}

\makeatletter

\newcommand*{\addFileDependency}[1]{% argument=file name and extension
\typeout{(#1)}% latexmk will find this if $recorder=0
\@addtofilelist{#1}
\IfFileExists{#1}{}{\typeout{No file #1.}}
}\makeatother

\begin{document}
\title{LOCO-AdaMP: Built-in LOCO Inference for Adaptive Minipatch Ensembles with Enhanced Prediction}
\author{
Yinan Cheng and Lili Zheng\\
Department of Statistics\\
University of Illinois Urbana-Champaign
}
\date{}
\maketitle
\begin{abstract}
As black-box machine learning models become increasingly common, extracting interpretations with uncertainty quantification has become a critical challenge. One popular type of interpretation is leave-one-covariate-out (LOCO) feature importance, while prior LOCO inference methods often require data-splitting or model-refitting. A recent ensemble framework, LOCO-MP, addresses these challenges using minipatches that subsample both observations and features, but massive feature subsampling can hurt prediction in high-dimensional sparse settings. Motivated by this limitation, we consider minipatch ensembles with adaptive feature sampling guided by LOCO importance, and propose LOCO-AdaMP, which enables free LOCO inference for the resulting adaptive minipatch ensemble. We show that LOCO-AdaMP yields {\em substantially improved predictive models} while retaining {\em asymptotically valid feature importance inference without data-splitting}, despite the complex dependence between the adaptive sampling distribution and the LOCO importance statistics. Our analysis relies on a careful leave-two-out perturbation bound for the iteratively updated sampling probabilities together with the stability of LOCO scores induced by observation subsampling. Empirical results on synthetic and real datasets demonstrate advantages of LOCO-AdaMP over existing methods in predictive performance, inferential power, and stability. Overall, LOCO-AdaMP provides a flexible ensemble framework (agnostic to base models) that delivers both strong predictive performance and asymptotically valid, powerful feature importance inference for regression.
\end{abstract}
\keywords{Model-agnostic feature importance, leave-one-covariate-out, minipatch ensembles, adaptive feature sampling, leave-one-out inference}
\section{Introduction}
Interpretable machine learning has become an essential component of modern statistical practice, particularly as complex models achieve state-of-the-art predictive performance at the cost of transparency. Among various interpretability tools, feature importance quantifies the contribution of individual covariates to a model's predictions, helping interpret those predictions \citep{ribeiro2016should} and supporting model debugging \citep{adebayo2020debugging}.

A variety of feature importance methods have been proposed to interpret machine learning models. Traditionally, model-intrinsic measures of feature importance are frequently used, such as linear model coefficients. Recently, many model-agnostic feature importance measures have been proposed, including permutation‑based importance \citep{breiman2001random, altmann2010permutation}, partial dependence‑based approaches \citep{friedman2001greedy}, and Shapley values \citep{lundberg2017unified, lundberg2017consistent}. As \cite{gan2022model} and \cite{williamson2021nonparametric} note, many model-agnostic feature importance measures can be classified into two different, albeit related, categories: population vs. machine learning (or algorithmic) feature importance. Population feature importance examines the underlying data-generating mechanism, with the machine learning model being only a tool; while machine learning feature importance cares about how much the given machine learning model relies on each feature. Additionally, machine learning feature importance can also be closely related to population importance under certain consistency or modeling assumptions \citep{gan2022model}. In this paper, we focus on machine learning feature importance, which is essential for guiding the deployment of and diagnosing a trained machine learning model.

\paragraph{LOCO Importance.} Given the critical application of machine learning feature importance, quantifying its uncertainty level is ubiquitous. One broadly applied approach for machine learning feature importance inference is the leave-one-covariate-out (LOCO) framework, which measures the importance of a feature through the change in predictive performance induced by excluding that feature from the learning procedure \citep{lei2018distribution,rinaldo2019bootstrapping}. Consider i.i.d. samples $Z_1,\dots,Z_N \sim \cP$, where each $Z_i = (X_i,Y_i)\in \bbR^M\times\bbR$ is a pair of $M$-dimensional features and scalar response. Let $\bZ$ denote the training dataset $(Z_1,\dots, Z_N)$ of size $N$, and $Z^* = (X^*, Y^*)\sim \cP$ denote an unseen test point independent from $\bZ$. Suppose that we have trained a black-box model $\mu(\cdot;\bZ)$ for predicting $Y^*$ given $X^*$. The LOCO importance for feature $j$ is defined as
\begin{equation}\label{eq:target}
    \Delta_j(\bZ) = \Delta(\mu_{\no j},\mu) := \bbE_{Z^*\sim \cP}\left[\ell(Z^*,\mu_{\no j}(\cdot;\bZ_{:,\no j})) - \ell(Z^*,\mu(\cdot;\bZ))\,\middle|\,\bZ\right].
\end{equation}
Here, $\mu_{\no j}(\cdot;\bZ_{:,\no j})$ is the reduced predictive model trained using the same algorithm on the same data set $\bZ$ but without feature $j$, and $\ell(Z^*,\mu)$ is a loss function that evaluates the prediction error of applying model $\mu$ to sample $Z^*$. The expectation is taken over the unseen test data conditioning on the training data and consequently, the trained model $\mu$ and the reduced model $\mu_{\no j}$. Therefore, the LOCO importance $\Delta_j(\bZ)$ characterizes how important feature $j$ is to the specific trained model $\mu$, by comparing it with $\mu_{\no j}$.

\paragraph{Challenges in LOCO Importance Inference.}  
To estimate and provide inference for the LOCO importance~\eqref{eq:target}, the original LOCO method relies on data splitting, with training set yielding $\mu$ and $\mu_{\no j}$ and test set approximating the expectation over $Z^*\sim \cP$. However, data-splitting creates an unavoidable tension between predictive accuracy and inferential efficiency. Recently, \cite{gan2022model} propose an attractive ensemble-based alternative, the LOCO-MP framework, to the original LOCO method: LOCO-MP utilizes the minipatch ensemble, constructed by repeatedly subsampling both observations and features to form small training subsets, called minipatches. LOCO-MP estimates the LOCO importance of the trained minipatch ensemble in a leave-one-observation-out (LOO) fashion, thereby avoiding data splitting. Moreover, this double-subsampling structure in minipatch learning eliminates the need to refit the model separately for each left-out feature or observation given the trained ensemble. Nevertheless, one limitation of LOCO-MP lies in the uniform subsampling of features; if massive feature subsampling is in use due to memory and computational constraint, the resulting minipatch ensemble could have substantially degraded predictive accuracy, especially in high-dimensional sparse settings where only a small subset of features is informative. Motivated by this, our goal is to develop a new ensemble framework that delivers both {\em strong predictions} and {\em valid built-in LOCO importance inference that requires no extra data after training}.

\subsection{Main Contributions}
In this paper, we propose LOCO-AdaMP, an ensemble regression framework built from adaptive feature sampling and uniform observation sampling, which enjoys both {\em strong prediction performance and almost-free, asymptotically valid LOCO inference without data splitting}. Specifically, our main contribution and paper organization are summarized as follows.
\begin{itemize}
    \item In Section~\ref{sec:method}, we introduce our LOCO-AdaMP framework, which leverages the adaptive feature sampling scheme originally proposed by \cite{Liu2026model}. Different from their method which focuses on feature selection via the updated sampling probability, we focus on the adaptive minipatch ensemble itself as a predictor and its feature importance inference. Compared to minipatch ensembles built from uniform double-subsampling, we show that adaptive minipatch ensemble has substantially improved prediction performance while it also preserves the key attractive property of LOCO-MP: almost free inference that requires no extra data or computation after model training, as we can simply aggregate trained minipatches to obtain leave-one-out estimates for the LOCO importance. We include a detailed discussion on the interpretation of our LOCO-AdaMP inference target in Section~\ref{sec:disc_target}.
    \item Although adaptive sampling dramatically improves prediction, it also brings nontrivial challenges to establishing inference validity due to the involved dependency between the iteratively updated feature sampling distribution and the feature importance estimates. In Section~\ref{sec:theory}, we address this challenge and show that our almost ``free" confidence intervals achieve asymptotically valid coverage given any bounded base models and any data distributions subject to mild assumptions on ensemble hyperparameters. Our proof develops some new technical tools such as the leave-two-out analysis and a careful induction argument over iterations (see proof sketch in Section~\ref{sec:sketch}). 
    \item In Section~\ref{sec:sims}, we conduct extensive simulation studies comparing LOCO-AdaMP with prior LOCO methods. The results show that given the same amount of data for training and inference, LOCO-AdaMP yields substantially stronger predictors with valid inference; it also achieves greater statistical efficiency as reflected by improved power and more informative confidence intervals.
    \item In Section~\ref{sec:realdata}, we demonstrate the practical utility of LOCO-AdaMP through a case study on Alzheimer’s disease using the ROSMAP RNA-sequencing dataset. Compared with existing model-agnostic feature importance inference methods, LOCO-AdaMP achieves improved predictive performance and identifies more plausible genes with greater stability.
\end{itemize}

\subsection{Related Work}
A large body of recent work has sought to develop statistical inference for population feature importance. Notable examples include the Floodgate framework \citep{zhang2020floodgate, pmlrwang24ad}, $R^2$-based and Shapley-based variable importance measures \citep{williamson2021nonparametric, williamson2020efficient}, the generalized covariance measure (GCM) \citep{shah2020hardness, scheidegger2022weighted}, the projected covariance measure (PCM) \citep{lundborg2024projected}, and disentangled feature importance (DFI) \citep{du2025disentangled}. The cost of repeatedly fitting reduced models has also motivated more efficient approaches, including lazy training \citep{gao2022lazy} and warm starts with early stopping \citep{sun2024reliable}. While these methods focus on algorithm-independent notions of a feature's predictive contribution under the underlying distribution, machine learning feature importance offers a complementary perspective by assessing its contribution to the predictive performance of a fitted model. We refer readers to \cite{gan2022model} for a more detailed discussion of the relationship between these perspectives.

Motivated by this complementary perspective, various methods have been developed for evaluating machine learning feature importance. While the LOCO importance considered in our work concerns a fitted machine learning model, \cite{verdinelli2024feature} examine LOCO and Shapley-based importance at the population level, with particular attention to how feature dependence affects their interpretation. Among other methods for assessing machine learning feature importance, one prominent class consists of conditional permutation-based methods: \cite{strobl2008conditional} develop a conditional permutation scheme for computing feature importance in random forests, \cite{watson2021testing} propose the model-agnostic conditional predictive impact (CPI) using model-X knockoffs, and \cite{chamma2023statistically} provide alternative conditional permutation procedures with asymptotic type-I error control. Another approach, model class reliance (MCR), characterizes the range of feature importance across an entire class of well-performing prediction models \citep{fisher2019all}.

Ensemble and subsampling methods have long been used to improve prediction and enhance stability \citep{ho1998random,soloff2024bagging}. In high-dimensional settings, random-subspace and random-projection aggregation methods have been studied both for their predictive performance and for their ability to identify important variables \citep{cannings2017random,tian2021rase,wang2024sharp}. On the theoretical side, appropriately tuned observation-subsampled ensembles can reduce prediction risk \citep{patil2023bagging}; under an orthogonal design, feature-subsampled ensembles of greedy forward-selection estimators can reduce both bias and variance \citep{chen2025revisiting}. Beyond prediction, ensemble methods have enabled uncertainty quantification through random-forest-based inference \citep{mentch2016quantifying,cattaneo2026inference} and minipatch-based LOCO inference \citep{gan2022model}. Meanwhile, minipatch learning and adaptive subsampling methods have been proposed for feature selection or feature ranking in large-scale learning \citep{yao2020feature, toghani2021mp, chen2025top}. The works most closely related to our paper are \cite{gan2022model}, which develops feature importance inference based on uniformly subsampled minipatches, and \cite{Liu2026model}, which introduces an adaptive feature subsampling strategy for minipatch ensembles in the context of feature selection.

\section{LOCO Inference for Adaptive Minipatch Ensembles}\label{sec:method}
Throughout this paper, we focus on the regression setting where the response is continuous. Given a trained black-box model $\mu(\cdot;\bZ)$, we are interested in {\em the predictive importance of any given feature $j$ to $\mu$}. One natural metric is the LOCO importance defined in \eqref{eq:target}, which measures the prediction error change if {\em removing feature $j$ from the training process} by taking expectation over $Z^*$. In \eqref{eq:target}, the loss function $\ell$ often takes the form of the squared error or absolute error in regression. We write $\Delta_j(\bZ)$ when the fitted models are clear from context, and $\Delta(\mu_{\no j},\mu)$ when emphasizing that the contrast is between the reduced model $\mu_{\no j}$ and the full model $\mu$.

Through a comparison between $\mu(\cdot)$ with the reduced model $\mu_{\no j}(\cdot)$, $\Delta_j$ measures the additional predictive value provided by feature $j$ for $\mu(\cdot)$ beyond the remaining features. A positive (negative) $\Delta_j$ indicates that the feature contributes positively (resp. negatively) to the model. 
The LOCO importance metric has been widely adopted in the feature importance literature; detailed discussions of LOCO and related metrics can be found in the literature \citep{lei2018distribution,gan2022model,Covert2020ExplainingBR}.

\subsection{Background: Prior Inference Methods for LOCO}

\begin{figure}[htbp]
\centering
\begin{subfigure}[t]{0.4\textwidth}
\centering
\includegraphics[width=1\linewidth]{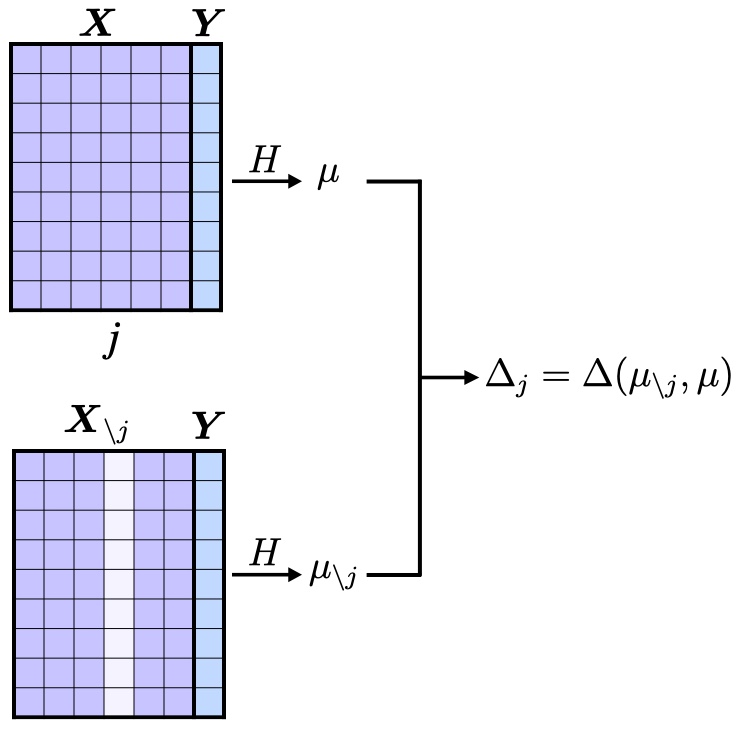}
\caption{LOCO Metric}
\label{fig:target}
\end{subfigure}
\hfill
\begin{subfigure}[t]{0.58\textwidth}
\centering
\includegraphics[width=1\linewidth]{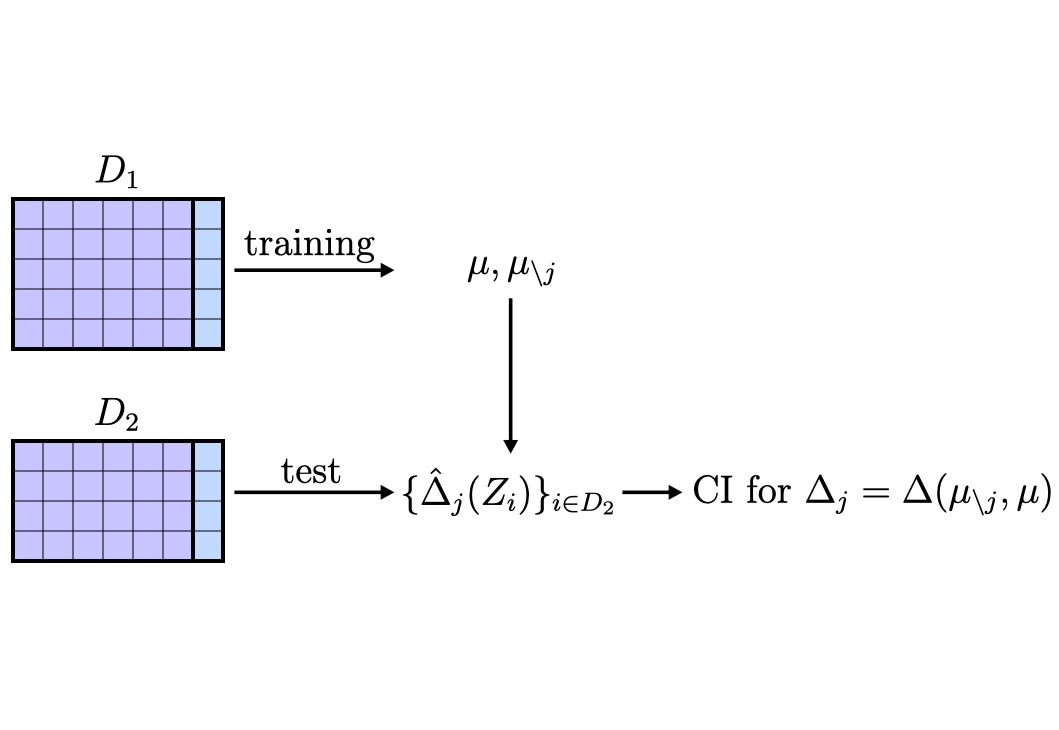}
\caption{Inference method for LOCO: LOCO-Split}
\label{fig:locosplit}
\end{subfigure}
\caption{\small Illustration of the LOCO metric and LOCO-Split.}
\end{figure}

Before proposing our inferential framework, we would like to start by introducing two major prior methods for LOCO inference that motivate our approach: LOCO-Split and LOCO-MP.

\paragraph{LOCO-Split.} To perform statistical inference for the LOCO importance \eqref{eq:target}, \cite{lei2018distribution} propose to split data into a training set $D_1$ that leads to $\mu$ and $\mu_{\no j}$ and a test set $D_2$ for evaluating their prediction error difference. The confidence interval is then constructed via a normal approximation or a nonparametric test. We refer to this method as ``LOCO-Split'' to distinguish it from other methods that do not use data-splitting. An illustration of LOCO-Split is presented in Figure~\ref{fig:locosplit}. 

\paragraph{LOCO-MP: LOCO inference is almost free within the minipatch framework.} Although LOCO-Split comes with assumption-light guarantees, its reliance on data-splitting inherently creates a tension between training and inference power. Interestingly, \cite{gan2022model} propose LOCO-MP (LOCO inference for minipatch ensembles), an ensemble framework with built-in inference for LOCO importance that requires no additional data or computation after training. Specifically, ``minipatches'' refers to randomly subsampled small subsets of observations and features $\{(I_k,F_k)\}_{k=1}^K$ where $I_k,\,F_k$ are independently, uniformly sampled from $[N]$ and $[M]$ without replacement, with sizes $n,\,m$, respectively. On each minipatch $(I_k,F_k)$, they apply any given base learner to $\bZ_{I_k,F_k}$ to train a model $\mu_k(\cdot)$, and the final ensembled predictor is their average $\mu(\cdot)=\frac{1}{K}\sum_{k=1}^K\mu_k(\cdot)$. This is very similar to bagging where observations are subsampled/bootstrapped, while in minipatch learning, features are also subsampled together with observations. After the minipatch ensembles are trained, given any feature $j$, they aim to perform inference for $\Delta_j(\bZ)$ in \eqref{eq:target}, with $\mu(\cdot)$ being the ensembled minipatch predictor and $\mu_{\no j}(\cdot)$ being a hypothetical reduced model trained using minipatch learning but without feature $j$. 

\begin{figure}[!htb]
\centering
\begin{subfigure}[t]{\textwidth}
\centering
\includegraphics[width=1\linewidth]{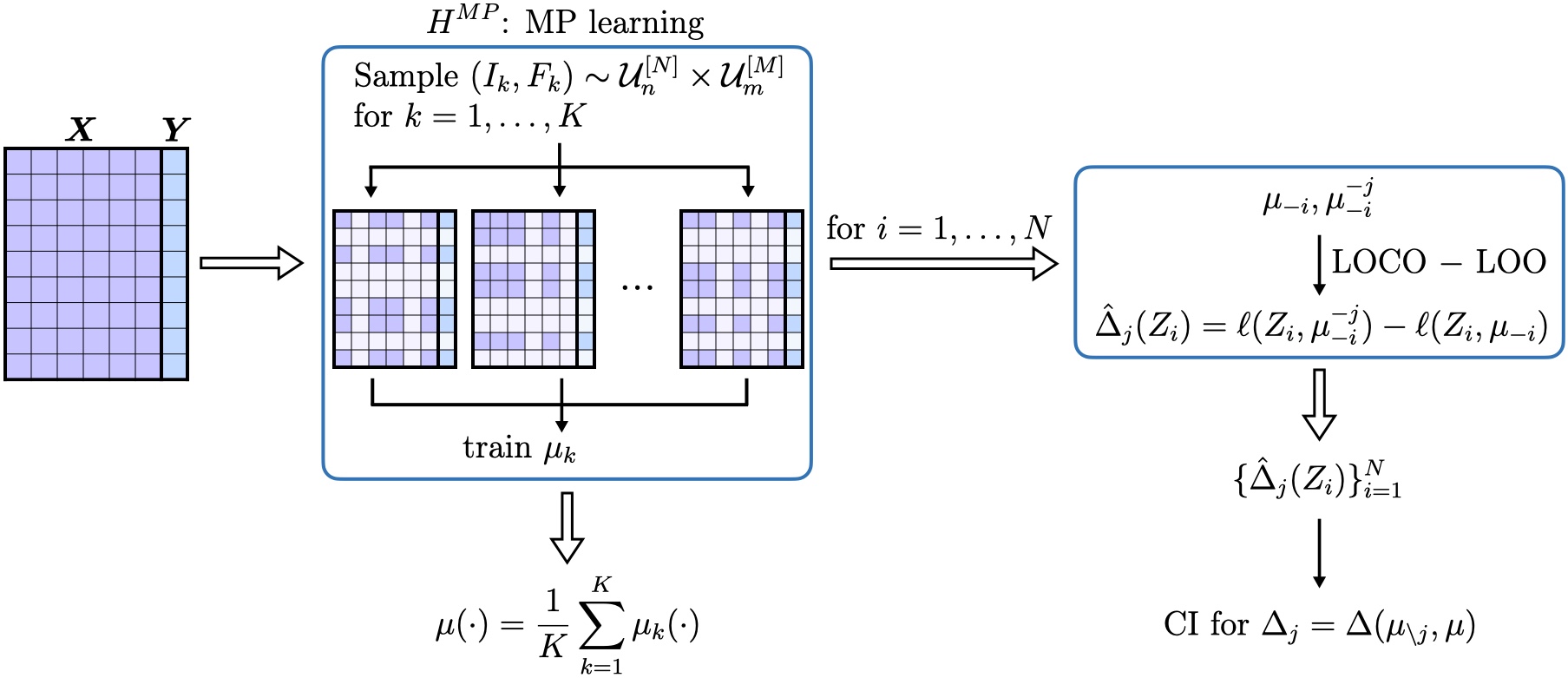}
\caption{LOCO-MP}
\label{fig:locomp}
\end{subfigure}

\vspace{0.2cm}

\begin{subfigure}[t]{\textwidth}
\centering
\includegraphics[width=1\linewidth]{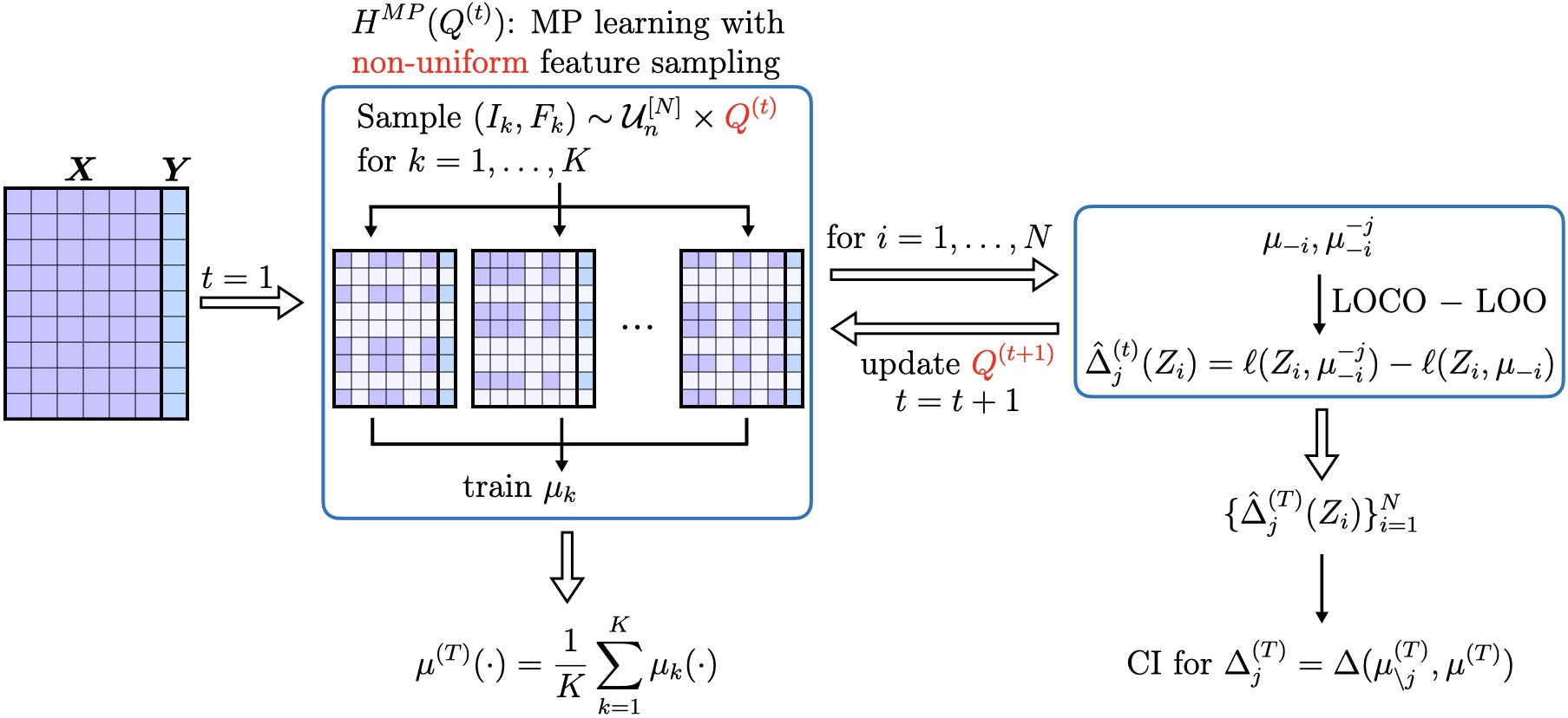}
\caption{LOCO-AdaMP}
\label{fig:locoadamp}
\end{subfigure}
\caption{\small LOCO-MP utilizes uniform feature sampling to yield minipatches, while LOCO-AdaMP utilizes non-uniform feature sampling to yield minipatches. $\cU_n^{[N]}$ and $\cU_m^{[M]}$ represent uniform distributions, and $Q^{(t)}$ represents the non-uniform feature sampling distribution. Formal definitions of these sampling distributions are included in Section~\ref{sec:app_def_not} in the appendix.}
\end{figure}

To achieve the inference goal, LOCO-MP does not require extra model refitting or calibration data. Instead, they
use leave-one-observation-out to evaluate the full model's performance: for $1\leq i\leq N$, compute $\ell(Z_i, \mu_{-i}(\cdot;\bZ))$ with $\mu_{-i}(   X_i;\bZ) = \frac{1}{\sum_{k=1}^{K} \mathbb{I}(i \notin I_k)} \sum_{k=1}^{K} \mathbb{I}(i \notin I_{k}) \mu_{k} (X_{i};\bZ)$, which consists of simple model averaging. They then leave both observation $i$ and feature $j$ out to evaluate the LOCO-LOO error: $\ell(Z_i, \mu_{-i}^{-j}(\cdot;\bZ))$ with $\mu_{-i}^{-j}(X_i;\bZ) = \frac{1}{\sum_{k=1}^{K} \mathbb{I}(i \notin I_k) \mathbb{I}(j \notin F_k)} \sum_{k=1}^K \mathbb{I}(i\notin I_k)\mathbb{I}(j\notin F_k) \mu_{k}(X_i;\bZ)$. Putting the LOCO-LOO and LOO errors together, they collect $\{\hat{\Delta}_j(Z_i)=\ell(Z_i, \mu_{-i}^{-j}(\cdot;\bZ)) - \ell(Z_i, \mu_{-i}(\cdot;\bZ))\}_{i=1}^N$ and construct an asymptotic normal confidence interval.
Importantly, the procedure above requires neither model refitting nor additional held-out data once the minipatch learning process is complete. Thus, LOCO importance inference incurs essentially no additional cost under the LOCO-MP framework, making it an attractive low-cost byproduct of training for practitioners. An illustration of LOCO-MP is presented in Figure~\ref{fig:locomp}.

One limitation of LOCO-MP, however, is that both prediction and inference are tied to the minipatch ensemble framework. This naturally raises the question of whether minipatch predictors are themselves competitive from a predictive perspective. Feature subsampling in minipatch ensembles can induce implicit regularization similar to a ridge penalty, potentially improving prediction in noisy and correlated settings \citep{gan2022model,yao2021minipatch,lejeune2020implicit}. {\em However, in sparse settings where only a small subset of features carries strong signal, uniform feature subsampling may be less effective}: when the minipatch size is small, many minipatches may exclude all signal features. This motivates us to consider alternative strategies beyond uniform subsampling.

\subsection{Background: Adaptive Sampling for Minipatches} In fact, the idea of adaptive feature subsampling in minipatches has also been explored for related, though distinct, problems. \cite{yao2020feature} propose a minipatch-based method for feature selection in linear models, where adaptive sampling substantially improves selection performance. More directly related to our work, \cite{Liu2026model} design an adaptive feature subsampling strategy within the LOCO-MP framework for model-agnostic feature selection, called LOCO-guided Adaptive Minipatch Sampling (LAMPS). LAMPS starts with running LOCO-MP with uniform sampling over all features; then each feature is sampled with probability approximately proportional to its LOCO importance score from the previous iteration, so that the sampling distribution gradually shifts toward more promising features. While this adaptive strategy is effective for feature selection, it remains unclear whether it improves the ensemble predictor itself, which we call AdaMP (adaptive minipatch ensemble), and whether valid LOCO importance inference can still be obtained for AdaMP with iteratively updated sampling probabilities.

\paragraph{Boosted predictive performance via AdaMP.} 
To evaluate the predictive gain through the adaptive sampling strategy in AdaMP \citep{Liu2026model}, we conduct simulation studies across linear, non-linear data generating processes with minipatch base learners being ridge regression, decision trees, and kernel ridge regression. A low-dimensional setting $(M=50,N=200)$ and a high-dimensional setting $(M=500,N=200)$ are considered. Figure~\ref{fig:locoada_pred} shows that adaptive sampling substantially improves the predictive performance of minipatch ensembles, and the improvement is more significant in high dimensions. More simulation details are included in Section~\ref{sec:sim_pred}. This clear predictive advantage of AdaMP motivates us to consider LOCO inference built within this new framework, with the hope that this will lead to a much stronger ensemble predictor also equipped with the almost-free inference for feature importance.
\begin{figure}[htb]
    \centering
    \includegraphics[width=1\linewidth]{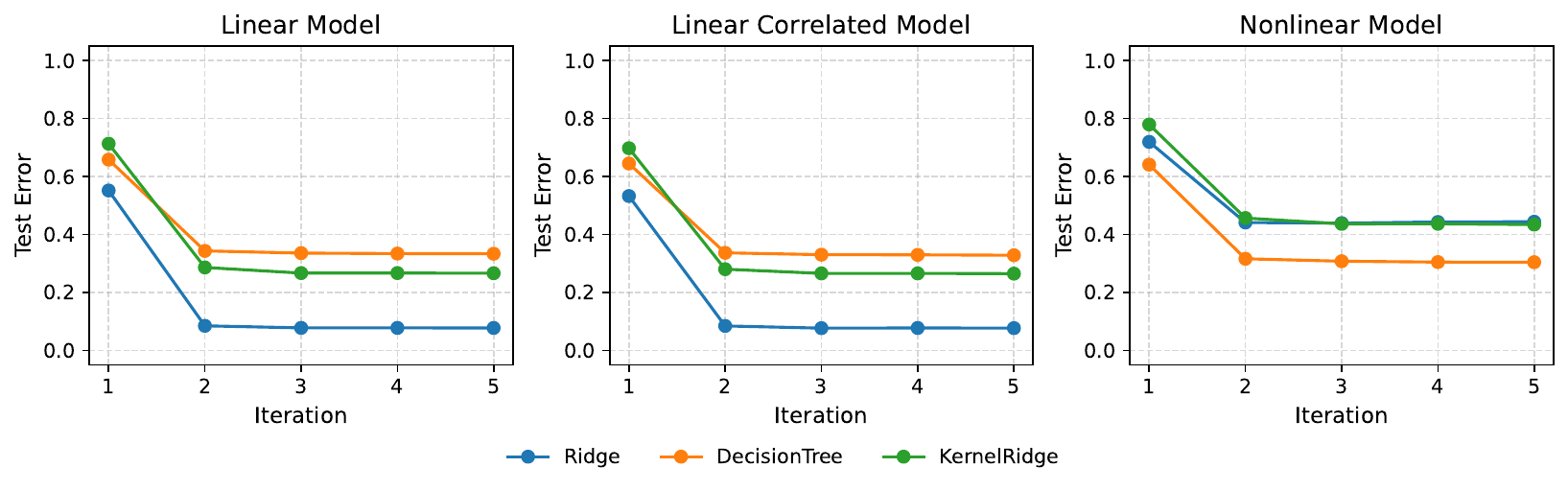}

    \includegraphics[width=1\linewidth]{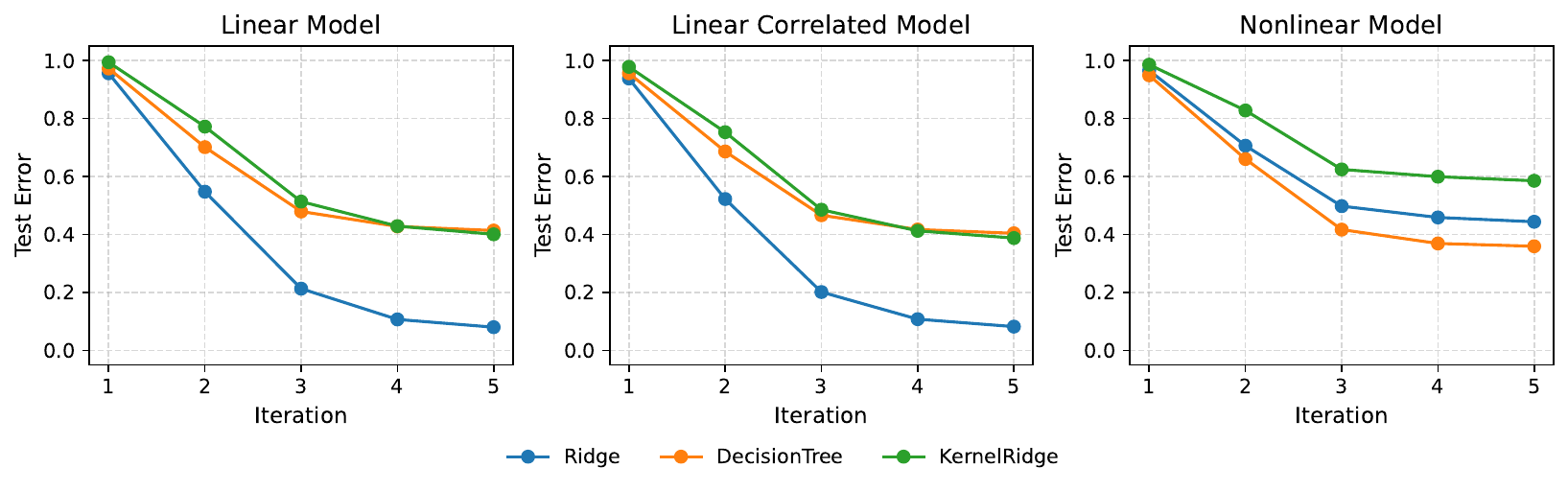}
 
    \captionsetup{font=normalsize}
    \caption{\small Prediction errors over iterations in the low-dimensional ($M=50,N=200$, top row) and high-dimensional ($M=500,N=200$, bottom row) settings, averaged over $100$ simulation replicates. We use the same simulation setup as in Section~\ref{sec:sim_pred}.}
    \label{fig:locoada_pred}
\end{figure}

\subsection{Built-in LOCO Inference for Adaptive Minipatch Ensembles: LOCO-AdaMP}
In this section, we introduce the LOCO-AdaMP framework, which performs LOCO inference for Adaptive minipatch ensembles without requiring extra model fitting or held-out data after training. The idea is very similar to LOCO-MP \citep{gan2022model}, and the main difference we made is adopting the adaptive feature subsampling procedure initially developed for feature selection by \cite{Liu2026model}. Although the inference idea seems a straightforward extension of LOCO-MP, we will see in Section~\ref{sec:theory} that the main challenge lies in the validity theory despite the adaptive procedure that iteratively reuses all observations during training. In Section~\ref{sec:disc_target}, we will also discuss in detail the interpretation of the inference target in LOCO-AdaMP, and how it relates to and differs from the targets of LOCO-Split and LOCO-MP.

\paragraph{Adaptive Minipatch Ensembles.} We first review the adaptive minipatch ensemble framework proposed in \cite{Liu2026model}. Motivated by LOCO-MP \citep{gan2022model}, the adaptive minipatch ensemble framework iterates between two modules for $T$ iterations: 
\begin{itemize}
    \item[(i)] {\em Training} module: for $1\leq k\leq K$, independently sample random indices $I_k\subset [N]$ and $F_k\subset[M]$, then repeatedly apply the given base learner on each minipatch $(I_k,F_k)$ to obtain $\mu_k(\cdot)$. The sampling mechanism is as follows: $I_k$ is a size-$n$ subset sampled uniformly at random without replacement from $[N]$; while $F_k=\{1\leq j\leq M: S_{k,j}=1\}$ where $\{S_{k,j}\}_{j=1}^M$ independently follow Bernoulli distribution with probability $q_j$, conditioning on the event that $\sum_{j=1}^M S_{k,j}>0$. Here, the probability vector $\bq=(q_1,\dots,q_M)^\top$ determines the distribution of feature set $F_k$, satisfying $\sum_{j=1}^M q_j=m$ for a given feature size parameter $m$. $\bq$ is initialized as $\frac{m}{M}\mathbf{1}_M$ in the first iteration and updated later in module (ii). The training module is summarized in Algorithm~\ref{algo:adamp_training}. 
    \item[(ii)] {\em Feature importance} module: Given the trained minipatch ensemble $\{\mu_k\}_{k=1}^K$, for each feature $1\leq j\leq M$, compute the feature importance scores $\{\hat{\Delta}_j(Z_i)\}_{i=1}^N$. Here, $\hat{\Delta}_j(Z_i)=\ell(Z_i,\mu^{-j}_{-i})-\ell(Z_i,\mu_{-i})$, where $\mu_{-i}$ and $\mu^{-j}_{-i}$ are the LOO and LOCO-LOO predictors obtained through simple model averaging based on the trained ensemble $\{\mu_k\}_{k=1}^{K}$. The detailed steps correspond to Algorithm~\ref{algo:adamp_FI} with \verb|Inference| set as FALSE. The feature sampling probability vector $\bq$ is then updated based on the feature importance estimates $\{\bar{\Delta}_j=\frac{1}{N}\sum_{i=1}^N\hat{\Delta}_j(Z_i)\}_{j=1}^M$, using the scheme summarized in Algorithm~\ref{algo:samp_prob}.   
\end{itemize}

\begin{algorithm}[htbp]
\caption{AdaMP: Training Module}
\label{algo:adamp_training}
		\noindent{\textbf{Input}}: Training pairs $\bZ=(\bX,\bY)$ with $M$ features and sample size $N$, minipatch observational size $n$, feature sampling probability $\bq=(q_1,\dots,q_M)$; number of minipatches $K$; base learner $H$.\\
        \noindent{For $k=1,...,K$:}
				\begin{enumerate}
                    \item Randomly subsample $n$ observations without replacement, $I_k \subset [N]$.
                    \item Let $F_k = \emptyset$. 
                    \item While $F_k=\emptyset$:
                    \begin{itemize}
                        \item For $j=1,\dots,M$, sample  $S_{k,j}\sim \text{Bernoulli}(q_j)$. If $S_{k,j}=1$, $F_k = F_k \cup \{j\}$. 
                    \end{itemize}
				    \item Train prediction model $\mu_k$ on $\boldsymbol{Z}_{I_k,F_k}$: for any $X\in \bbR^M$, $\mu_k(X;\bZ) = H(\boldsymbol{Z}_{I_k,F_k})(X_{F_k})$, also denoted by $\mu_{I_k,F_k}(X;\bZ) $.
                \end{enumerate}
		\textbf{Output}: $\{\mu_k(\cdot)\}_{k=1}^K$, $\{(I_k,F_k)\}_{k=1}^K$.
\end{algorithm}

\cite{Liu2026model} considers $O(\log(M/m))$ iterations of the training and feature importance modules and focuses on the final feature sampling probability vector $\bq$ for feature selection. However, another important output of this framework is an updated minipatch ensemble predictor, built from uniform observational subsampling and weighted feature subsampling with the iteratively updated $\bq$. In the following, we denote the probability vector used in the $T$th iteration by $\bq^{(T)}$, and the minipatch predictor built from $\bq^{(T)}$ by $\mu^{(T)}$. As shown in Figure~\ref{fig:locoada_pred}, the prediction performance of $\mu^{(T)}$ has substantial improvement over the uniform-sampling-based minipatch ensembles $\mu^{(1)}$.

\begin{algorithm}[!htbp]
\caption{AdaMP: Feature Importance Module with Optional Inference}
\label{algo:adamp_FI}
		\noindent{\textbf{Input}}: Training pairs $\bZ=(\bX,\bY)$ with $M$ features and sample size $N$, trained minipatch predictors $\{\mu_k(\cdot)\}_{k=1}^K$ and associated minipatch indices $\{(I_k,F_k)\}_{k=1}^K$, indicator variable \verb|Inference|.
            \begin{enumerate}
            
            \item Obtain LOO predictions,  LOCO-LOO predictions, and LOO feature occlusions for each feature $j$:
			\begin{enumerate}
				\item Obtain the ensembled LOO prediction:\\ $\mu_{-i}(X_i;\bZ) = \frac{1}{\sum_{k=1}^{K} \mathbb{I}(i \notin I_k)} \sum_{k=1}^{K} \mathbb{I}(i \notin I_{k}) \mu_{k} (X_{i};\bZ)$;
				
				\item Obtain the ensembled LOCO-LOO prediction:\\ $\mu_{-i}^{-j}(X_i;\bZ) = \frac{1}{\sum_{k=1}^{K} \mathbb{I}(i \notin I_k) \mathbb{I}(j \notin F_k)} \sum_{k=1}^K \mathbb{I}(i\notin I_k)\mathbb{I}(j\notin F_k) \mu_{k}(X_i;\bZ)$;
			 	\item Calculate LOO feature occlusions: \\
                $\hat{\Delta}_j(Z_{i}) = \ell(Z_i,\mu_{-i}^{-j}(\cdot;\bZ)) - \ell(Z_i,\mu_{-i}(\cdot;\bZ))$.
            \end{enumerate} 

            \item Obtain the sample mean  $\bar{\Delta}_j=\frac{1}{N}\sum_{i=1}^N\hat{\Delta}_j(Z_i)$.
            \item If \verb|Inference| = TRUE, compute a $1-\alpha$ interval for each $\Delta_j$: \\
            $\hat{\mathbb{C}}_j=
			\left[\bar{\Delta}_j - \frac{z_{\alpha/2}\hat{\sigma}_j}{\sqrt{N}},\bar{\Delta}_j +\frac{z_{\alpha/2}\hat{\sigma}_j}{\sqrt{N}}\right]$, with $\hat{\sigma}_j= \sqrt{\frac{\sum_{i=1}^N(\hat{\Delta}_j(Z_i)-\bar{\Delta}_j)^2}{N-1}}$ being the sample standard deviation, and $z_{\alpha/2}$ being the upper $\frac{\alpha}{2}$ standard normal quantile.
		\end{enumerate}
		\textbf{Output}: $(\bar{\Delta}_1,\dots,\bar{\Delta}_M)$ and if \verb|Inference| = TRUE, $(\hat{\mathbb{C}}_1, \dots, \hat{\mathbb{C}}_M)$.
\end{algorithm}

\paragraph{LOCO inference target for the adaptive minipatch ensemble.} Here, we aim to interpret the trained adaptive minipatch ensemble via LOCO feature importance inference. That is, we aim to understand each feature's contribution to $\mu^{(T)}(\cdot)$ by constructing confidence intervals for $\Delta_j^{(T)}$:
\begin{equation}\label{eq:loco_adamp_target}
    \Delta_j^{(T)} := \bbE\left[\ell(Z^*,\mu_{\no j}^{(T)}(\cdot;\bZ)) - \ell(Z^*,\mu^{(T)}(\cdot;\bZ))\,\middle|\,\bZ,\cA_{T}\right].
\end{equation}
where $\mu_{\no j}^{(T)}$ is a hypothetical reduced model if we've trained the minipatch ensembles without seeing feature $j$, adopting uniform observational subsampling with size $n$ and feature subsampling with probability vector $\bq^{(T)}$. The error function $\ell$ can be any appropriate error metric specified by practitioners, and we focus on the squared error loss for theoretical development, as it facilitates closed-form derivations of the error differences. The expectation in \eqref{eq:loco_adamp_target} is taken over the random test data $Z^*$, conditioning on the training data $\bZ$ and the randomness in the adaptive minipatch ensemble training, denoted by $\cA_T$. Therefore, $\Delta_j^{(T)}$ probes into the importance of feature $j$ by contrasting the trained adaptive minipatch ensemble $\mu^{(T)}$ with its reduced version $\mu_{\no j}^{(T)}$ that does not require feature $j$ as input. A more extensive discussion on the interpretation of $\Delta_j^{(T)}$ is included in Section~\ref{sec:disc_target}.

\paragraph{LOCO-AdaMP.} Similar to LOCO-MP, we propose to conduct LOCO inference for $\mu^{(T)}$ through a normal approximation for the LOCO-LOO scores in the final iteration. The full algorithm that includes both training and inference is summarized as Algorithm~\ref{algo:loco_adaptive}, consisting of the AdaMP training module and feature importance module with \verb|Inference| set as TRUE. An illustration figure for LOCO-AdaMP is included in Figure~\ref{fig:locoadamp}.

\begin{algorithm}[htbp]
\caption{Update Sampling Probabilities Using Feature Importance}
\label{algo:samp_prob}
\noindent\textbf{Input}: Feature importance scores $\{\bar{\Delta}_1, \dots, \bar{\Delta}_M\}$; minipatch feature size $m$; probability upper bound $\delta$; constant buffer $c_0>0$.

\begin{enumerate}
    \item Compute non-negative scores: $\tilde{\Delta}_j = \bar{\Delta}_j - \min_\ell \bar{\Delta}_\ell + \frac{c_0}{M}$ for all $j$.
        \item Define $\omega_j(t) = \tilde{\Delta}_j \wedge t$ as the thresholded weight of feature $j$, a function of $t>0$. Find the desired threshold
        \begin{equation}\label{eq:threshold}
            t^*: = \sup\left\{0<t\leq \max_l\tilde{\Delta}_l:\frac{m\max_l\omega_l(t)}{\sum_{l=1}^M\omega_l(t)}\leq \delta\right\}
        \end{equation} 
        using Algorithm \ref{algo:find_threshold}.
        \item Compute updated feature sampling probabilities $q_j = \frac{m \omega_j(t^*)}{\sum_{k=1}^M \omega_k(t^*)}$ for $j = 1, \dots, M$.
    \end{enumerate}
\noindent\textbf{Output}: Feature sampling probability vector $\boldsymbol{q} = (q_1, \dots, q_M)$.
\end{algorithm}

Therefore, LOCO-AdaMP is a one-step add-on to the AdaMP training procedure, an almost-free built-in LOCO inference for AdaMP. Compared to LOCO-Split, LOCO-AdaMP does not require data splitting or post-training model refitting. Although we need to be constrained within the minipatch framework like LOCO-MP, as an ensemble framework, the base learner can be arbitrarily chosen. Furthermore, the adaptive feature subsampling also significantly improves the predictive performance of the trained minipatch ensemble, compared to LOCO-MP, which we will extensively validate in Sections~\ref{sec:sims} and \ref{sec:realdata}.

\begin{algorithm}[htbp]
\caption{Adaptive Minipatch LOCO Inference}
\label{algo:loco_adaptive}
		\noindent{\textbf{Input}}: Training pairs $\bZ=(\bX,\bY)$ with $M$ features and sample size $N$, minipatch sizes $n$, $m$; number of training iterations $T$; number of minipatches within each iteration $(K_1,\dots,K_T)$; base learner $H$; the highest sampling probability $\delta\in (m/M,1)$; confidence level $1-\alpha$; feature set $\cF$ to perform LOCO inference for.

            \begin{enumerate}
            \item Initialize feature sampling probability vector $\bq^{(1)}=(\frac{m}{M},\dots,\frac{m}{M})^\top$.
            \item For $t = 1,\dots, T$,
            \begin{enumerate}  
                \item Apply Algorithm~\ref{algo:adamp_training} to $\bZ$ with observational subsample size $n$, $\bq^{(t)}$, number of minipatches $K_t$, and base learner $H$ to obtain $\{\mu_k^{(t)}\}_{k=1}^{K_t}$ and $\{(I_k,F_k)\}_{k=1}^{K_t}$.
                \item Apply Algorithm~\ref{algo:adamp_FI} to $\bZ$, $\{\mu_k^{(t)}\}_{k=1}^{K_t}$, and $\{(I_k,F_k)\}_{k=1}^{K_t}$ to obtain $(\bar{\Delta}_1^{(t)},\dots,\bar{\Delta}_M^{(t)})$. If $t<T$, set \verb|Inference| as FALSE; otherwise, set \verb|Inference| as TRUE.
                \item If $t<T$, apply Algorithm \ref{algo:samp_prob} to $(\bar{\Delta}_1^{(t)},\dots,\bar{\Delta}_M^{(t)})$ to obtain the updated probability vector $q^{(t+1)}$.
            \end{enumerate}
            \item For $j\in \cF$, obtain a $1-\alpha$ confidence interval $\hat{\mathbb{C}}_j$ for each $\Delta_j$ from the output of Algorithm \ref{algo:adamp_FI} in the $T$th iteration.
		\end{enumerate}
		\textbf{Output}: $\{\hat{\mathbb{C}}_j: j\in \cF\}$
\end{algorithm}

\subsection{Discussion on the Inferential Target}\label{sec:disc_target}
The inferential target of LOCO-AdaMP, $\Delta_j^{(T)}$, defined in \eqref{eq:loco_adamp_target}, is a measure of machine-learning feature importance that explains the trained model $\mu^{(T)}$, rather than population feature importance that explains the underlying data generating mechanism~\citep{gan2022model}. Similar to other LOCO importance metrics~\citep{lei2018distribution,gan2022model}, $\Delta_j^{(T)}$ compares the predictive performance of the full model $\mu^{(T)}$ with that of a reduced model, $\mu^{(T)}_{\no j}$, obtained by aggregating minipatch predictors trained without feature $j$. The magnitude of $\Delta_j^{(T)}$ reflects how much $\mu^{(T)}$ relies on feature $j$, while its sign indicates whether feature $j$ is helpful or detrimental to the predictive performance of $\mu^{(T)}$.

\paragraph{Comparison to other LOCO inferential targets.} Our inferential target differs from those of LOCO-Split and LOCO-MP because the three methods explain different predictive models: LOCO-Split explains a model trained only on the training split; LOCO-MP explains a minipatch ensemble constructed via uniform subsampling; LOCO-AdaMP explains an adaptive minipatch ensemble predictor. As we will show in Section~\ref{sec:sims}, the model we explain, AdaMP, achieves superior predictive performance compared with the predictors explained by LOCO-Split and LOCO-MP. It even outperforms the full model trained on the entire data set without minipatch ensembling, whose feature importance cannot be estimated or tested without additional data. This comparison highlights that AdaMP provides a more competitive model for future deployment and is therefore particularly relevant to explain. Furthermore, as shown in Section~\ref{sec:sims}, LOCO-AdaMP is also more effective in detecting truly important features in the data-generating model, perhaps because AdaMP provides a better approximation to the underlying predictive relationship.

\paragraph{How adaptive sampling shapes the inferential target.} To further understand how adaptive sampling shapes the target of LOCO-AdaMP, we conduct toy simulations and track the LOCO importance of three features across iterations. Results under a linear data-generating model are shown in Figure~\ref{fig:target50_ind}. As we update the sampling probability, the strongest feature (feature 1) has increasing importance as it gets sampled and used more often. Feature 5, the weakest signal feature, also shows increasing importance with ridge and kernel ridge base learners. Its importance remains at a similar level across iterations when the base learner is a decision tree, perhaps because trees are less effective at capturing weak signals in linear models. The noise feature always has a low importance from the first iteration onward, revealing a self-reinforcing pattern in which low importance leads to a low sampling probability and then subsequent low importance in the following iterations.
\begin{figure}[htbp]
    \centering
    \includegraphics[width=\linewidth]{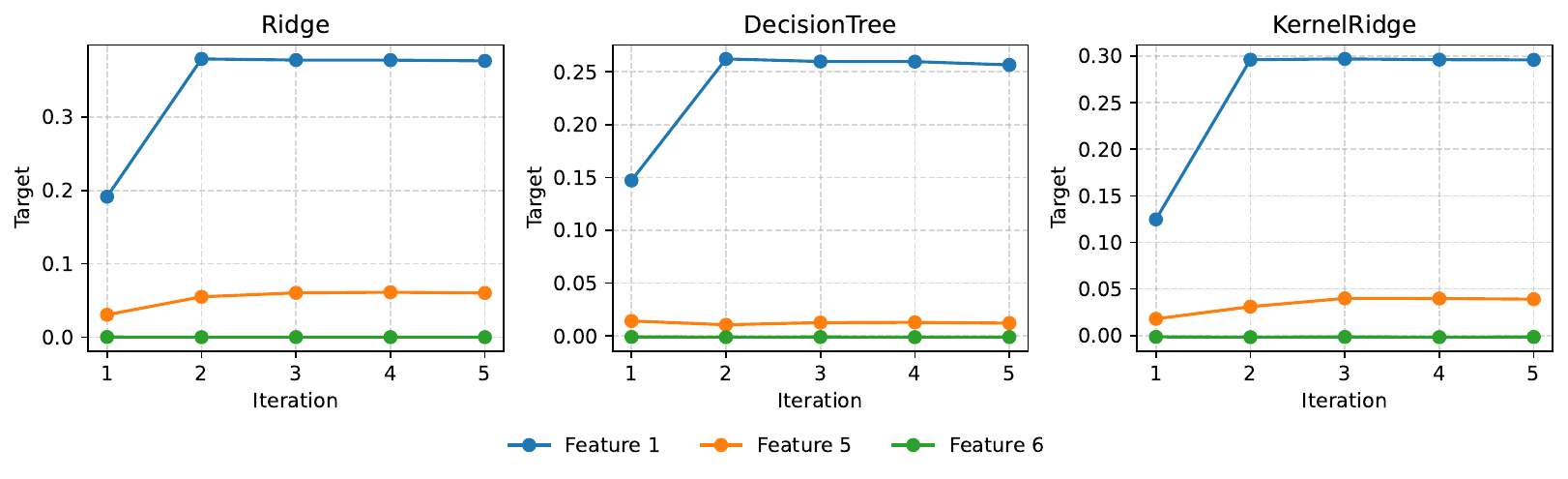}
    \caption{\small Inference targets $\Delta_j^{(t)}$ of LOCO-AdaMP over iterations $t$ in the low-dimensional setting ($M = 50$) for default linear model setup. Feature 1 and feature 5 are signal features with SNR$=2.5$ and SNR$=1$ respectively, while feature 6 is a noise feature. Results are averaged over $100$ simulation replicates. We use the same simulation setup as in Section~\ref{sec:sim_pred}. Figures with y-axis on a log-scale and results under correlated linear model, nonlinear model and high-dimensional settings are included in the Appendix~\ref{add_inf_tar}.}
    \label{fig:target50_ind}
\end{figure}
\paragraph{The role of the reduced model.} The LOCO metric requires a reduced model as a baseline for assessing how predictions would change when feature \(j\) is excluded. As discussed by \cite{Covert2020ExplainingBR}, when quantifying the contribution of feature \(j\) to a given machine learning model, the reduced model can be constructed in many ways. For example, one may replace feature \(j\) with a fixed value or a knockoff counterpart at prediction time, permute its values, or retrain the model without it. LOCO methods take the last approach, not requiring knowing the joint feature distribution. In LOCO-AdaMP, our reduced model is \(\mu_{\no j}^{(T)}\), which aggregates the trained minipatch predictors that exclude feature \(j\) while preserving the adaptive sampling path $Q^{(T)}$ used to construct the full model \(\mu^{(T)}\). Alternatively, one could define \(\widetilde{\mu}_{\no j}^{(T)}\) by rerunning the entire adaptive minipatch training algorithm on \(\bZ_{\no j}\). Unlike our reduced model, \(\widetilde{\mu}_{\no j}^{(T)}\) relearns the feature-sampling probabilities after feature \(j\) is removed and therefore follows a different adaptive sampling path. For the purpose of explaining the fitted full model \(\mu^{(T)}\), we argue that \(\mu_{\no j}^{(T)}\) provides a more appropriate baseline, as it yields a more local comparison and more directly isolates the effect of excluding feature \(j\) from the fitted ensemble \(\mu^{(T)}\).

\section{Coverage Guarantees}\label{sec:theory}
In this section, we establish asymptotic validity of the LOCO-AdaMP confidence interval. 
We note that our validity theory does not follow directly from the theory for LOCO-MP \citep{gan2022model}, although the final inferential steps look very similar. The key challenge here is brought by the adaptive nature of Algorithm~\ref{algo:loco_adaptive}, which iteratively updates the feature sampling probabilities based on previously estimated feature importance scores from all samples. Consequently, the LOO and LOCO-LOO predictors in the final iteration also depend on the left-out observation through the adaptive feature sampling distribution. Therefore, we develop new proof techniques to address the involved dependency not appearing before in prior work. 

\subsection{Theoretical Statement}
We start by defining necessary notations used in our coverage theory. For training data $\bZ=(\bX,\bY)$ and new data point $Z^* = (X^*, Y^*)$, we define $\mu_{I,F}(X^*;\bZ)$ as the predictor trained on the minipatch $(I, F)$ given base learner $H$, i.e.,  $\mu_{I,F}(X^*;\bZ) = H(\bZ_{I,F})(X^*_{F})$. Given a feature sampling distribution function $Q:2^{[M]}\rightarrow[0,1]$, let $Q_{\no j}$ be the conditional distribution for $F\sim Q$ given that $F\not\owns j$.
We define $\mu^*(X^*;Q,\bZ)$ as the expectation taken over random subsample of minipatches, i.e.,  $\mu^*(X^*;Q,\bZ)=\sum_{I\subset[N], F\subset[M]}\mu_{I,F}(X^*;\bZ)\frac{1}{{N \choose n}}\ind(|I|=n)Q(F)$. Let $h_j(Z^*, \bZ, Q)$ denote the LOCO importance of feature $j$ associated with this expected ensemble predictor:
\begin{equation}\label{eq:h_j_def_main}
    h_j(Z^*, \bZ, Q)=\ell(Z^*,\mu^*(\cdot;Q_{\no j},\bZ)) - \ell(Z^*,\mu^*(\cdot;Q,\bZ)).
\end{equation}
For technical purpose, we define $Q^{*(T)}$ as the final-iteration feature sampling probability mass function, updated using the LOCO importance computed from combinatorial averages of minipatch predictors, i.e., let $K_t\rightarrow \infty$ (detailed definition presented in Algorithm~\ref{algo:loco_adaptive_combinatorial}). Then, let $h_j(Z^*,\bZ)=h_j(Z^*, \bZ, Q^{*(T)}(\bZ))$, $h_j(Z^*)=\bbE_{\bZ}[h_j(Z^*, \bZ)\mid Z^*]$, and we denote the variance $\text{Var}(h_j(Z^*))$ by $\sigma_j^2$.

\begin{assumption} [Bounded predictions and responses]
\label{ass:adp_bounded}
     Suppose that the predictions of base minipatch predictors and the response variable are bounded: $\forall I\subset[N]$ with size $n$, $F\subset [M]$,
    \begin{align*}
    |\mu_{I,F}(X^*;\bZ)|\le C,\quad |Y|\leq C,
    \end{align*}
    where $\mu_{I,F}(\cdot;\bZ)$ is the trained base model on minipatch $(I,F)$ as defined in Algorithm~\ref{algo:adamp_training}.
\end{assumption}

\begin{assumption}[Base model stability]\label{ass:stb}
    There exists a function $\stb(n):\bbR\rightarrow \bbR^{+}$, such that, for any training set $\bZ$ consisting of $n-1$ i.i.d. samples from $\cP$ and any feature subset $F\subset [M]$ satisfying $|F|\leq 4\max\{m,\log(n)\}$, 
    \begin{equation*}
        \begin{split}
            \left\|H([\bZ_F;\tilde{Z}_F])(X^*_F) - H([\bZ_F;\tilde{Z}^{'}_F])(X^*_F)\right\|_{\psi_2|\bZ,X^*}\leq {\rm stb}(n),\\
            \bbE\big[H([\bZ_F;\tilde{Z}_F])(X^*_F) - H([\bZ_F;\tilde{Z}^{'}_F])(X^*_F)\big]^2\leq {\rm stb}^2(n),
        \end{split}
    \end{equation*}
    where $\bZ_F = (\bX_{:,F},\bY)$, $\tilde{Z},\,\tilde{Z}^{'}$ are i.i.d. copies from $\cP$. $\|\cdot\|_{\psi_2|\bZ,X^*}$ in the first inequality denotes the $\psi_2$-norm of the given random variable conditioning on $\bZ,\,X^*$.
\end{assumption}
Assumption~\ref{ass:stb} is a stability assumption on the base learner, requiring that the random prediction change due to replacing one single training sample is sub-Gaussian-$\stb(n)$ as long as the feature dimension remains controlled. Note that Assumption~\ref{ass:adp_bounded} implies Assumption~\ref{ass:stb} with $\stb(n)=O(1)$. While for certain base learners such as the ridge and kernel ridge regression with bounded kernel diagonals and lower bounded regularization parameter, as well as locally strongly convex empirical risk minimization, Assumption~\ref{ass:stb} is satisfied with $\stb(n)\asymp \frac{1}{n}$ (details presented in Appendix~\ref{sec:stb_examples}). We will see in Assumption~\ref{ass:adp_prob_stb} that a more stable base learner allows greater flexibility in the choice of the minipatch sample size $n$.
\begin{assumption}[Minipatch size and stability]
\label{ass:adp_prob_stb}
     $\frac{n}{N}, \frac{\delta}{1-e^{-m}}, \frac{m}{\delta M} \le c_1$ for some constant $0 < c_1 < 1$, $\delta\geq c_2$ for some $0<c_2<1$. In addition, $$N\gg \frac{m^{8(T-1)}}{\sigma_j^2}\left(n^2\stb^2(n)\left(\log M +\log N\right)+1\right).$$
\end{assumption}
Assumption~\ref{ass:adp_prob_stb} ensures that the ensembled minipatch predictor is sufficiently stable. Similar to \cite{gan2022model}, when the base model is more stable (smaller $\stb(n)$), our condition on $n$ is milder. For instance, if $\stb(n)\asymp \frac{1}{n}$ (satisfied by the examples in Appendix~\ref{sec:stb_examples}), then $n$ can be arbitrary and Assumption~\ref{ass:adp_prob_stb} is implied by $N\gg \frac{m^{8(T-1)}}{\sigma_j^2}(\log M+\log N)$. On the other hand, if $\stb(n)\leq C$ (immediately implied by Assumption~\ref{ass:adp_bounded}), minipatch size $n$ needs to satisfy $n\ll \frac{\sigma_j}{m^{4(T-1)}}\sqrt{\frac{N}{\log M+\log N}}$.

Compared to the small-$n$ assumption for LOCO-MP \citep{gan2022model}, our assumption involves an explicit dependence on the minibatch size $m$, which arises from the iterative minipatch-based update procedure with $T$ iterations. This dependence disappears when $T=1$. In practice, $T$ can be kept small (e.g., $T=5$ in Figure~\ref{fig:locoada_pred}), and despite this additional structure, the condition remains mild in high dimensions if we choose $m\asymp \log M$, leading to a polylog dependency of sample size $N$ on the dimension $M$.

\begin{assumption}[Number of minipatches]
\label{ass:adp_n_minipatches}
     The number of random minipatches $K_t$ in each iteration satisfies $\min_{1\leq t\leq T}K_t \gg  m^{8T-9}\big(\frac{N}{\sigma_j^2}+1\big)M (\log M+\log N)$.
\end{assumption}
Assumption~\ref{ass:adp_n_minipatches} requires a sufficiently large number of random minipatches. In practice, due to parallel computing and the memory efficiency of training the base model on each minipatch, setting a large $K_t$ often induces a mild cost. 
Next, we introduce a technical assumption that forbids the least important feature in any iteration be heavily sampled in prior iterations, which helps ensure the stability of the adaptive sampling procedure. A similar assumption was made by \cite{Liu2026model}. Formally, let $j_{\min}^{(t)} = \argmin_{j}\bar{\Delta}_j^{(t)}$, the feature with the lowest estimated LOCO importance in the $t$th iteration; let $j_{\min}^{*(t)}$ be defined similarly, but replacing the ensembled minipatch predictor by its combinatorially averaged counterpart when computing the LOCO importance; also let $j_{\min}^{*(\del S,t)}$ be similarly defined but we apply the AdaMP procedure on $\bZ_{\no S,:}$ for a subset $S\subset[N]$. In addition, let $\bq^{(t)}$, $\bq^{*(t)}$, and $\bq^{*(\del S,t)}$ be the updated sampling probability vectors of these three described procedures.

\begin{assumption}[Least important features were not over-sampled]\label{ass:minDelta_q_bnd}
    $\forall 1\leq t'\leq t \leq T$, $\bq_{j_{\min}^{(t)}}^{(t')},  \bq_{j_{\min}^{*(t)}}^{*(t')} \leq \frac{Cm}{M}$, and $\forall S,\,S'\subset[N]$ with size $0\leq |S|,|S'|\leq 2$,  $\bq_{j_{\min}^{*(\del S,t)}}^{*(\del S',t')}\leq \frac{Cm}{M}$ for a universal constant $C>0$.
\end{assumption}

To understand when Assumption~\ref{ass:minDelta_q_bnd} holds, we conduct simulation studies across linear and non-linear settings and compute the sampling probabilities of least important features (see Figure~\ref{fig:ass500} in Appendix), which partially validates this assumption.

\begin{assumption}[Bounded third-moment of standard feature importance]\label{ass:3moment}
The standardized feature importance function has a bounded third moment:
    \begin{equation*}
        \bbE\left[|h_j(Z^*)-\bbE(h_j(Z^*))^3|/\sigma_j^3\right]\leq C.
    \end{equation*}
\end{assumption}
Assumption~\ref{ass:3moment} is a mild moment condition that guarantees uniform integrability and supports the central limit theorem argument.

\begin{theorem}[Coverage Guarantee]\label{thm:main}
    Under Assumptions~\ref{ass:adp_bounded}-\ref{ass:3moment}, if we run Algorithm~\ref{algo:loco_adaptive} with $Z_1,\dots,Z_N \overset{i.i.d.}\sim \cP$, the squared error loss $\ell(Y,\hat{Y})=(Y-\hat{Y})^2$, and the number of iterations $T\leq MN$, then $\forall j\in \cF$, our confidence interval $\hat{\bbC}_j$ has asymptotically valid coverage for $\Delta_j^{(T)}$ defined in~\eqref{eq:loco_adamp_target}:
    $$
    \lim_{N\rightarrow \infty}\bbP(\Delta_j^{(T)}\in \hat{\bbC}_j)=1-\alpha.
    $$
\end{theorem}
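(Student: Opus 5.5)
The plan follows the LOCO-MP argument of \cite{gan2022model}: reduce the interval to a CLT for an i.i.d. sum plus negligible remainders. Write
\[
\sqrt{N}\,(\bar\Delta_j^{(T)}-\Delta_j^{(T)}) = \frac{1}{\sqrt N}\sum_{i=1}^N\big(h_j(Z_i)-\bbE h_j\big) + R_1+R_2+R_3,
\]
where the remainders are defined as follows.
\begin{itemize}
\item $R_1$ replaces the random-minipatch LOO and LOCO-LOO predictors in every iteration by their combinatorial averages, i.e.\ it replaces $\bq^{(t)}$ by $\bq^{*(t)}$ and the final ensembles by $\mu^*$.
\item $R_2$ replaces the leave-one-out quantity $h_j(Z_i,\bZ_{\no i},Q^{*(T)}(\bZ))$ by its counterpart in which the sampling distribution is computed without observation $i$, namely $Q^{*(\no i,T)}$.
\item $R_3$ is the usual LOO-stability remainder: the centered sum of $h_j(Z_i,\bZ_{\no i},Q^{*(\no i,T)})$ minus the Hájek projection $\sum_i h_j(Z_i)$, together with the difference between $\bbE[h_j(Z^*,\bZ)|\bZ]$ and $\Delta_j^{(T)}$.
\end{itemize}
Given these, the Lyapunov CLT under Assumption~\ref{ass:3moment} handles the leading term. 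It then suffices to show that $R_1,R_2,R_3=o_p(\sigma_j)$ and that $\hat\sigma_j/\sigma_j\to1$ in probability; Slutsky's lemma finishes the proof. With squared loss, every difference expands into products of bounded prediction differences, using Assumption~\ref{ass:adp_bounded}.

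\textbf{Step 1 (finite $K_t$).} Conditionally on $\bq^{(t)}$, each LOO and LOCO-LOO average is an average of about $K_t(1-n/N)\,(1-q_j)$ bounded i.i.d. terms. Hoeffding's inequality plus a union bound over $i,j$ gives an error of order $\sqrt{(\log M+\log N)/K_t}$. Propagating this through Algorithm~\ref{algo:samp_prob} requires a Lipschitz bound on the map $\bar\Delta\mapsto \bq$. The threshold $t^*$ is a monotone water-filling solution. Using the buffer $c_0/M$, $\delta\ge c_2$, and Assumption~\ref{ass:minDelta_q_bnd}, which keeps the minimizer's probability at $O(m/M)$, I would show $\|\bq-\bq'\|_1\lesssim m^{a}\,M\,\|\bar\Delta-\bar\Delta'\|_\infty$ for some small power $a$. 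Iterating over $T$ steps loses a factor $m^{O(1)}$ each time, and Assumption~\ref{ass:adp_n_minipatches} is exactly what makes the accumulated error $o(\sigma_j/\sqrt N)$.

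\textbf{Step 2 (main obstacle: the dependence of $Q^{*(T)}$ on $Z_i$).} I would prove by induction on $t$ that $\|\bq^{*(t)}-\bq^{*(\no i,t)}\|_1$ and the corresponding leave-two-out differences $\bq^{*(\no\{i,i'\},t)}$ are bounded, with high probability, by $m^{4(t-1)}\,n\,\stb(n)\sqrt{\log M+\log N}/N$ up to constants. The base case uses that $\bar\Delta^{*(t)}_j$ is an average of $N$ LOO scores. Removing one observation changes one summand by $O(1/N)$. It changes the others through the bagged predictors: a point appears in a minipatch with probability $n/N$, and each minipatch changes by a sub-Gaussian $\stb(n)$ amount under Assumption~\ref{ass:stb}, with feature size bounded because $|F|\le4\max(m,\log n)$ with high probability. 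The inductive step composes this with the Lipschitz bound of Step 1.

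The delicate point is that $R_2$ must be $o_p(\sigma_j)$ after summing over $N$ terms, so a crude bound of $N\cdot$(per-term change) is too weak. Here the leave-two-out objects come in. I would write $R_2=\sum_i D_i$ and bound $\bbE R_2^2=\sum_{i,i'}\bbE D_iD_{i'}$. For $i\ne i'$, I would replace $D_i$ by its version computed without $Z_{i'}$; this version is conditionally mean-zero given the other points. This gives cross terms of order the product of a one-out and a two-out perturbation, and produces the $m^{8(T-1)}n^2\stb^2(n)$ condition in Assumption~\ref{ass:adp_prob_stb}. This step is where I expect most of the work.

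\textbf{Step 3.} $R_3$ and the variance estimator are handled as in \cite{gan2022model} via an Efron--Stein bound for the LOO-stable statistic. Here $\bq$ is frozen at $Q^{*(\no i,T)}$, so the minipatch-averaged predictor has stability $n\,\stb(n)/N$. Consistency of $\hat\sigma_j^2$ follows from the same perturbation bounds together with boundedness. Finally, I would note that $T\le MN$ keeps the union bounds over iterations logarithmic.
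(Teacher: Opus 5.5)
Your decomposition matches the paper's: finite-$K$ error, then $Q^{(T)}\to Q^{*(T)}$, then $Q^{*(T)}\to Q^{*(\no i,T)}$, then a Bayle-type stability remainder, closed with the Lyapunov CLT, variance consistency and Slutsky. Two steps, however, fail as you describe them.

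\textbf{Propagation through Algorithm~\ref{algo:samp_prob}.} Your inequality $\|\bq-\bq'\|_1\lesssim m^aM\|\bar\Delta-\bar\Delta'\|_\infty$ is true but too lossy. Combined with coordinatewise errors of order $\sqrt{(\log M+\log N)/K_t}$ from Hoeffding, each iteration multiplies the error by $m^aM$, not by $m^{O(1)}$.
\begin{itemize}
\item Already at $T=2$, the $Q^{(T)}\to Q^{*(T)}$ term needs $K_t\gg m^{2a}M^2N(\log M+\log N)/\sigma_j^2$. Assumption~\ref{ass:adp_n_minipatches} is only linear in $M$.
\item Your Step~2 would likewise pick up powers of $M$ across iterations, instead of the $M$-free rate you claim.
\end{itemize}
Three ingredients are missing.
\begin{itemize}
\item The update map is Lipschitz with respect to $\|\varepsilon\|_1+M\varepsilon_{\min}$. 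Only the shift by $\min_l\bar\Delta_l$ carries weight $M$.
\item Coordinate $j$'s error scales with $q_j$. For finite $K$, a Bernstein bound gives $\sqrt{q_j(\log M+\log N)/K}$, because the LOCO-LOO contrast is driven by the roughly $q_jK$ minipatches containing $j$. For the leave-one-out perturbation, use $|h_j(\cdot,Q)|\lesssim q_j$ together with the $q_j$-weighted bound of Lemma~\ref{lem:diffQ_err}. Since $\sum_jq_j=m$, $\ell_1$ errors then cost only a factor $m$.
\item Assumption~\ref{ass:minDelta_q_bnd} keeps every future minimizer at $q_j\lesssim m/M$ in all earlier iterations. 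Its error is therefore $O(1/M)$ times the $\ell_1$ rate, so $M\varepsilon_{\min}$ is no worse than $\|\varepsilon\|_1$.
\end{itemize}
The induction must carry jointly the $\ell_1$ bound and this $1/M$-scale bound for every coordinate that is $j_{\min}$ at some later iteration. You cite Assumption~\ref{ass:minDelta_q_bnd}, but your inequality never uses it.

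\textbf{Placement of the leave-two-out device.} $R_2$ does not need it. You have
\[
\frac{1}{\sqrt N}\sum_i|D_i|\;\lesssim\;\sqrt N\max_i\|\bq^{*(T)}-\bq^{*(\no i,T)}\|_1\;\lesssim\;\frac{m^{4(T-1)}\bigl(n\stb(n)\sqrt{\log M+\log N}+1\bigr)}{\sqrt N},
\]
which is $o(\sigma_j)$ exactly by Assumption~\ref{ass:adp_prob_stb}; that is what the assumption encodes. So your cross-term argument is unnecessary. Its key claim, that the $Z_{i'}$-deleted $D_i$ is conditionally mean-zero, is also unsupported.

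Leave-two-out is indispensable inside your Step~2 induction for $t\ge2$. Consider the data-perturbation part $\sum_F[\mu_F(X_l;\bZ_{\no l,:})-\mu_F(X_l;\bZ_{\no\{i,l\},:})]Q^{*(t)}(F)$. It averages per-$F$ sub-Gaussian differences against a $Q^{*(t)}$ that depends on $Z_i$ and on every other point. It is therefore neither centered nor sub-Gaussian, and a union bound over $F$ costs $2^M$. ``Composing with the Lipschitz bound'' handles only the $\bq$-perturbation, not this term. The paper's fix has three parts.
\begin{itemize}
\item Rewrite the bagging difference as $\frac{n}{N(N-1)}\sum_{i'\ne i}g_F(\cdot,Z_i,Z_{i'})$. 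Here $g_F$ compares adding $Z_i$ versus $Z_{i'}$ to a common $I_0$. It is antisymmetric, hence centered given the rest, and $\stb(n)$-sub-Gaussian when $|F|\le4\max\{m,\log n\}$.
\item Replace $Q^{*(t)}$ by $Q^{*(\no\{i,i'\},t)}$, which is independent of $(Z_i,Z_{i'})$.
\item Pay the swap cost $\|\bq^{*(t)}-\bq^{*(\no\{i,i'\},t)}\|_1$. This is why the induction must run over all removed sets $|S|\le2$.
\end{itemize}
The same issue recurs in Step~3. Replacing $Z_l$ changes $Q^{*(\no i,T)}$, so $\bq$ is not frozen. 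The Bayle stability bound must pass through $Q^{*(\no\{i,l\},T)}$, together with an $L^2$ bound on $\|\bq^{*(\no\{i,l\},T)}-\bq^{*(\no i,T)}\|_1$. It recurs again when comparing $\bbE[h_j(Z_i,\bZ_{\no i,:},Q^{*(\no i,T)})\mid\bZ_{\no i,:}]$ with $\bbE[h_j(Z^*,\bZ,Q^{*(T)})\mid\bZ]$.
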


Theorem~\ref{thm:main} shows that the proposed confidence interval $\hat{\mathbb C}_j$ achieves asymptotically valid coverage for $\Delta_j$ despite the nontrivial dependence structure induced by adaptive feature sampling and iterative LOCO evaluation. The coverage guarantee in Theorem~\ref{thm:main} considers the marginal coverage, as the probability is taken over the randomness of $\Delta_j^{(T)}$ itself, similar to the coverage of LOCO-MP. In order to address the challenges brought by the adaptive procedure, the main price we pay is the polynomial dependence on $m$ in Assumptions~\ref{ass:adp_prob_stb} and \ref{ass:adp_n_minipatches}. As discussed earlier, when setting $m\asymp \log M$ and bounded $T$, this extra term scales as ${\rm polylog}(M)$, a mild factor in high-dimensional settings. In practice, small $T$ and $m$ are often sufficient for substantially improved prediction and valid inference.

Notably, the theorem does not require the sample size to exceed the number of features, and therefore applies to high-dimensional settings with $N<M$ that commonly arise in modern machine learning applications.

\subsection{Proof Sketch}\label{sec:sketch}
As noted at the beginning of Section~\ref{sec:theory}, establishing coverage guarantees for LOCO-AdaMP is highly nontrivial because the adaptive sampling distribution depends on the entire data set. Here, we provide a brief outline of the proof, focusing on how we address the dependence induced by adaptive sampling.
One major component of the proof of Theorem~\ref{thm:main} is to show that
\begin{equation}\label{eq:proof_sketch_CLT}
\frac{1}{\sigma_j\sqrt{N}}\sum_{i=1}^N \left(h_j(Z_i,\bZ_{\no i,:}, Q^{(T)})-\bbE[h_j(Z^*,\bZ,Q^{(T)})\mid \bZ]\right)\overset{d.}{\rightarrow} N(0,1),    
\end{equation}
where $h_j(Z_i,\bZ_{\no i,:},Q^{(T)})$, defined as in~\eqref{eq:h_j_def_main}, is the leave-one-out feature importance score evaluated at sample $i$ when the feature sampling distribution is $Q^{(T)}$; $h_j(Z^*,\bZ,Q^{(T)})$ is the feature importance score evaluated at an unseen test data $Z^*$. When $K$ is sufficiently large (Assumption~\ref{ass:adp_n_minipatches}), the LOCO-LOO scores $\hat{\Delta}_j(Z_i)$ in Algorithm~\ref{algo:adamp_FI} and our inferential target $\Delta_j^{(T)}$ are very close to $h_j(Z_i,\bZ_{\no i,:},Q^{(T)})$ and $\bbE[h_j(Z^*,\bZ,Q^{(T)})\mid \bZ]$, respectively, and hence \eqref{eq:proof_sketch_CLT} together with a variance consistency result leads to Theorem~\ref{thm:main}. 

\paragraph{Why the CLT for cross-validation is not sufficient.} At first glance, the interval center $\frac{1}{N}\sum_{i=1}^N h_j(Z_i,\bZ_{\no i,:}, Q^{(T)})$ seems to resemble a cross-validation statistic: the leave-one-out (LOO) predictors $\mu_{-i}$, $\mu_{-i}^{-j}$ are evaluated on the left-out sample $Z_i$ and we take an average over $i\in[N]$. Thus, one natural idea is to invoke the central limit theorem for cross-validation~\citep{bayle2020cross}, similar to the proof in \cite{gan2022model}. However, the LOO predictors in AdaMP depend not only on $\bZ_{\no i,:}$, but also on $Z_i$ through the feature sampling distribution $Q^{(T)}$ iteratively updated for $T$ iterations based on all $N$ samples. Such dependence between $Q^{(T)}$ and each left-out sample breaks the cross-validation structure of L.H.S. of \eqref{eq:proof_sketch_CLT}, making the arguments in \cite{gan2022model} not directly applicable. Furthermore, although \cite{Liu2026model} establish feature-selection guarantees for AdaMP, their analysis replaces \(Q^{(T)}\) with a nearby deterministic sampling distribution independent of \(\bZ\). This approximation may suffice for feature selection, but valid statistical inference would require the approximation error to be \(o(N^{-1/2})\), a much stronger accuracy requirement that is generally difficult to satisfy.
\begin{figure}[!htb]
    \centering
    \begin{subfigure}[t]{0.33\textwidth}
    \includegraphics[width=1\linewidth]{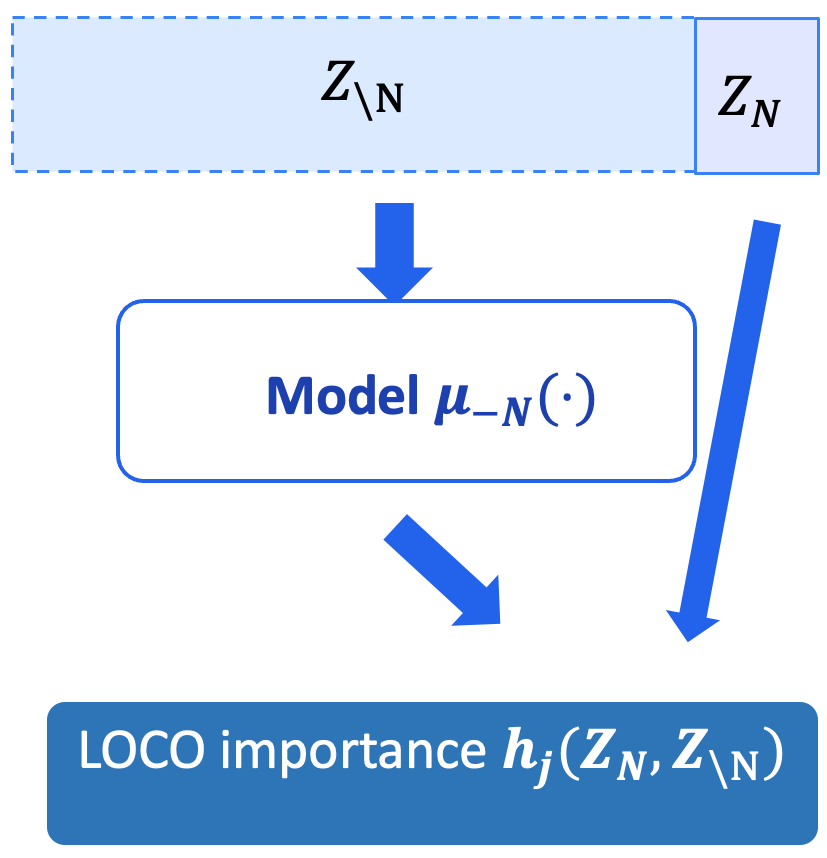}
    \caption{LOCO-MP}
    \end{subfigure}
    \begin{subfigure}[t]{0.4\textwidth}
    \includegraphics[width=1\linewidth]{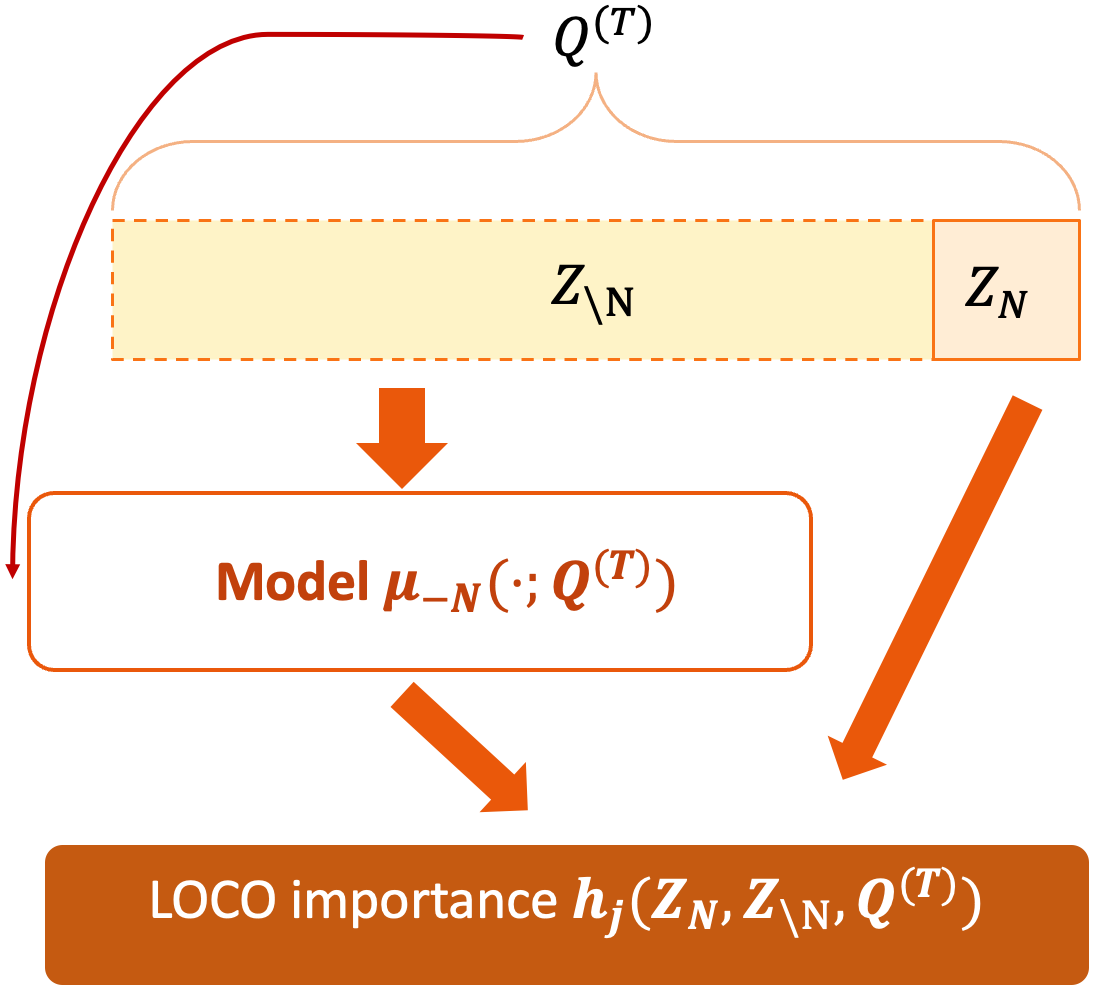}
    \caption{LOCO-AdaMP}
    \end{subfigure}
    \caption{\small Illustrative figure for the LOCO-LOO feature importance scores in LOCO-MP and LOCO-AdaMP. Different from LOCO-MP, the LOO predictor also depends on the held-out sample through the sampling distribution $Q^{(T)}$.}
    \label{fig:sketch1}
\end{figure}

\begin{figure}[!htb]
    \centering
    \includegraphics[width=1\linewidth]{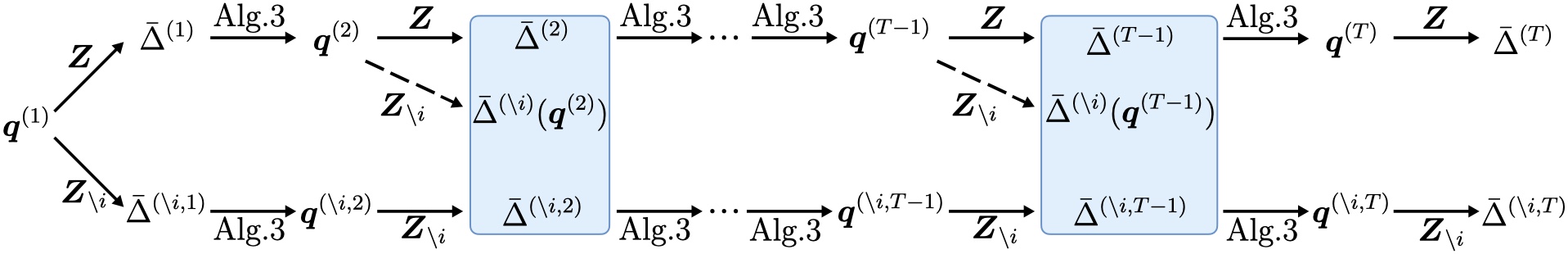}
    \caption{\small Illustration of the iterative updates for $\bq^{(T)}$ and $\bq^{(\no i,T)}$.}
    \label{fig:sketch2}
\end{figure}
\paragraph{Main idea: replacing $Q^{(T)}$ with $Q^{(\no i,T)}$.} To address the dependence between $Q^{(T)}$ and each $Z_i$, we introduce $Q^{(\no i, T)}$, the adaptive feature-sampling distribution obtained by running the same update procedure with sample \(i\) excluded throughout, and denote its sampling-probability vector by $\bq^{(\no i, T)}$. If we can replace $Q^{(T)}$ with $Q^{(\no i, T)}$ in $h_j(Z_i,\bZ_{\no i,:},Q^{(T)})$ with a negligible error, then the CLT for cross-validation and the proof ideas in \cite{gan2022model} are applicable. Therefore, the key here lies in controlling the difference between $\bq^{(T)}$ and $\bq^{(\no i,T)}$. As shown in Figure~\ref{fig:sketch2}, $\bq^{(T)}$ and $\bq^{(\no i,T)}$ are updated iteratively using training data $\bZ$ versus $\bZ_{\no i}$. Let $\bar{\Delta}^{(\no i)}(\bq)$ denote the LOCO importance estimates for minipatch ensembles trained on data $\bZ_{\no i}$ with feature sampling probability $\bq$. Define the distance metric $d(\cdot,\cdot)$ by $d(\bu,\bv)=\|\bu-\bv\|_1+M|\min_k u_k -\min_k v_k|$, where $\bu=(u_1,\dots,u_M)^\top$, $\bv=(v_1,\dots,v_M)^\top$. We establish a smoothness property for Alg.~\ref{algo:samp_prob} and show that
\begin{equation}\label{eq:sketch_q_loo}
\begin{split}
    \|\bq^{(T)}-\bq^{(\no i, T)}\|_1\leq &Cm^3 d(\bar{\Delta}^{(T-1)},\bar{\Delta}^{(\no i,T-1)})\\
    \leq &Cm^3\big(\underbrace{d(\bar{\Delta}^{(\no i)}(\bq^{(T-1)}),\bar{\Delta}^{(\no i)}(\bq^{(\no i, T-1)})}_{\text{controlled by $\bq^{(T-1)}-\bq^{(\no i, T-1)}$}}+\underbrace{d(\bar{\Delta}^{(T-1)}, \bar{\Delta}^{(\no i)}(\bq^{(T-1)}))}_{\text{LOO error in the $(T-1)$th iteration}}\big),
\end{split}
\end{equation}
where the first term captures the discrepancy in the sampling probabilities inherited from the previous iteration, while the second quantifies the effect of leaving out one sample on the LOCO importance estimates when $\bq^{(T-1)}$ is held fixed. We control the first term through a careful induction on $\|\bq^{(t)}-\bq^{(\no i, t)}\|_1$ over $1\leq t\leq T$. The main technical bottleneck lies in bounding the second term, discussed in greater detail below.

\paragraph{Bounding the bagging stability averaged over random $F\subset [M]$.} For the second term in \eqref{eq:sketch_q_loo}, we show that it can be roughly controlled on the scale of
\begin{equation}\label{eq:sketch_LOO}
m\left|\sum_{F\subset[M]}\big[\mu_F(X^*;\bZ)-\mu_F(X^*;\bZ_{\no i,:})\big]\big(Q^{(T-1)}(F)+\max_{j\in [M]}Q^{(T-1)}_j(F)+\max_{j\in [M]}Q^{(T-1)}_{\no j}(F)\big)\right|,    
\end{equation}
where $\mu_F(X^*;\bZ)$ (resp. $\mu_F(X^*;\bZ_{\no i,:})$) denotes the prediction at $X^*$ given by the bagged ensembles trained on $\bZ_{:,F}$ (resp. $\bZ_{\no i,F}$), and $Q^{(T-1)}_j$ (resp. $Q^{(T-1)}_{\no j}$) denotes the conditional distribution of $F\sim Q^{(T-1)}$ given that $F\owns j$ (resp. $F\not\owns j$). Thus, \eqref{eq:sketch_LOO} involves the average stability of the bagged ensembles over random feature subsets drawn from $Q^{(T-1)}, Q^{(T-1)}_j, Q^{(T-1)}_{\no j}$. Although bagging has been shown to enjoy assumption-free stability~\citep{soloff2024bagging}, here we require exponential-tail control to accommodate high-dimensional features. We therefore take a different route and use the sub-Gaussian stability condition in Assumption~\ref{ass:stb} to show that $\mu_F(X^*;\bZ)-\mu_F(X^*;\bZ_{\no i,:})$ is centered sub-Gaussian with parameter $n\stb(n)/N$. This still leaves one major challenge unsolved: this centered sub-Gaussian r.v. is averaged over $F$ drawn from $Q^{(T-1)}$, $Q^{(T-1)}j$, or $Q^{(T-1)}_{\no j}$, all three of which depend on the full sample set when $T>1$. Hence, \eqref{eq:sketch_LOO} is no longer centered or sub-Gaussian. A union bound over $F\subset [M]$ or a covering argument over $Q$ would both introduce an undesirable factor of $C^M$ in the tail probability.

\paragraph{Leave-two-out analysis.} To handle the dependence between $\mu_F(X^;\bZ)-\mu_F(X^;\bZ_{\no i,:})$ and $Q^{(T-1)}(F)$, we make the following observation:
\begin{equation}\label{eq:sketch_fixedF_loo}
    \begin{split}
        \bbE_{F\sim Q^{(T-1)}}\big|\mu_F(X^*;\bZ)-\mu_F(X^*;\bZ_{\no i,:})\big|\leq \frac{n}{N(N-1)}\sum_{i'\neq i}\bbE_{F\sim Q^{(T-1)}}\big|g_F(X^*,\bZ_{\no (i,i')},Z_i,Z_{i'})\big|,
    \end{split}
\end{equation}
where, conditional on $X^*$ and $\bZ_{\no(i,i')}$, $g_F(X^,\bZ_{\no(i,i')},Z_i,Z_{i'})$ is centered and $\stb(n)$-sub-Gaussian when the size of $F$ is controlled. If, for each $i'$ in the summation, we can replace $Q^{(T-1)}$ by its leave-two-out counterpart $Q^{(\no(i,i'),T-1)}$ with negligible error, then we can show that $\bbE_{F\sim Q^{(T-1)}}\big|\mu_F(X^;\bZ)-\mu_F(X^*;\bZ_{\no i,:})\big|$ can be approximated by a sub-Gaussian random variable, up to the leave-two-out approximation error for $Q$. This observation motivates us to establish the stability of $Q^{(t)}$ under the removal of an arbitrary subset $S\subset[N]$ of size bounded by two. Built on top of the leave-two-out analysis and a careful induction argument over $T$ iterations, we achieve the following key lemma: 
\begin{lem}[Difference between $\bq^{(T)}$ and $\bq^{(\no i, T)}$]\label{lem:Q_loo_err_brief}
    Suppose that Assumptions~\ref{ass:adp_bounded}-\ref{ass:minDelta_q_bnd} hold. With probability at least $1-(MN)^{-c}$, we have
    \begin{equation*}
        \begin{split}
            \|\bq^{(T)}-\bq^{(\no i, T)}\|_1\lesssim m^{4(T-1)}\left(\frac{n\stb(n)\sqrt{\log N+\log M}+1}{N} + \sqrt{\frac{M(\log N+\log M)}{m\min_{1\leq t\leq T}K_t}}\right).
        \end{split}
    \end{equation*}
\end{lem}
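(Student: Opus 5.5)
Our plan is to prove a stronger statement by induction over the iterations $t=1,\dots,T$. For every $S\subset S'\subset[N]$ with $|S'|\le 2$, we bound the discrepancy between the sampling vectors obtained from $\bZ_{\no S,:}$ and from $\bZ_{\no S',:}$. We do this both for the combinatorial versions $\bq^{*(\no S,t)}$ and for the random-minipatch versions $\bq^{(\no S,t)}$. The statement of Lemma~\ref{lem:Q_loo_err_brief} is the case $S=\emptyset$, $S'=\{i\}$, $t=T$.

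Define the error scale
\[
\epsilon_t:=\sup_{S\subset S',|S'|\le 2}\|\bq^{*(\no S,t)}-\bq^{*(\no S',t)}\|_1 .
\]
The base case is $\epsilon_1=0$, since $\bq^{(1)}$ is uniform and data-independent. The final step, from the combinatorial to the random version, is handled separately by concentration over the $K_t$ minipatches. This contributes the term $\sqrt{M(\log N+\log M)/(m\min_t K_t)}$, which propagates through the same Lipschitz recursion.

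\emph{Step 1: Lipschitz property of Algorithm~\ref{algo:samp_prob}.} We show that the map $\bar\Delta\mapsto\bq$ satisfies $\|\bq(\bar\Delta)-\bq(\bar\Delta')\|_1\le Cm^3\,d(\bar\Delta,\bar\Delta')$, with $d(\bu,\bv)=\|\bu-\bv\|_1+M|\min_k u_k-\min_k v_k|$.

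The argument goes through the thresholded weights $\omega_j(t^*)$. First, the shift by $\min_\ell\bar\Delta_\ell$ accounts for the $M|\min-\min|$ term. Next, the normalizer satisfies $\sum_l\omega_l(t^*)\ge c_0$, together with $\delta\ge c_2$. Finally, we use monotonicity of $t\mapsto m\max_l\omega_l(t)/\sum_l\omega_l(t)$ to bound the movement of $t^*$.

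Assumption~\ref{ass:minDelta_q_bnd} is used to control how $\bar\Delta$ for the least important feature enters. The bound $q_{j_{\min}}\le Cm/M$ keeps the denominator of the LOCO-LOO average for $j_{\min}$ comparable to $K(1-Cm/M)$, so the min-term is stable.

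\emph{Step 2: triangle decomposition as in \eqref{eq:sketch_q_loo}.} We write $\bar\Delta^{(\no S)}(\bq)$ for the importance vector computed from $\bZ_{\no S,:}$ at a fixed $\bq$, and split
\[
d\big(\bar\Delta^{(\no S)}(\bq^{(\no S,t)}),\bar\Delta^{(\no S')}(\bq^{(\no S',t)})\big)
\]
into two parts.

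The first part is $d\big(\bar\Delta^{(\no S')}(\bq^{(\no S,t)}),\bar\Delta^{(\no S')}(\bq^{(\no S',t)})\big)$. Here $\bar\Delta$ is a bounded functional of $Q$ (Assumption~\ref{ass:adp_bounded}). Both $Q$ and $Q_{\no j}$ are products of Bernoullis conditioned on nonemptiness, with $q_j\le\delta<1$. So this part is Lipschitz in $\|\bq-\bq'\|_1$ with a factor polynomial in $m$; we expect the factor $m$ to come from $1/(1-q_j)$ and the conditioning. Combined with Step 1, this gives $\epsilon_{t+1}\lesssim m^{4}\epsilon_t+(\text{LOO term})$, which yields $m^{4(T-1)}$ after unrolling.

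The second part is the LOO term at fixed $\bq^{(\no S,t)}$. Expanding the squared loss, it reduces to averages over $F$ of $\mu_F(X;\bZ_{\no S})-\mu_F(X;\bZ_{\no S'})$, in the form \eqref{eq:sketch_LOO}, plus an $O(1/N)$ term coming from the removed evaluation point itself. That $O(1/N)$ term is the "$+1$" in the bound.

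\emph{Step 3: the main obstacle — the LOO term with a data-dependent $Q$.} Following \eqref{eq:sketch_fixedF_loo}, we write the bagging difference as an average over $i'$ of $g_F(X^*,\bZ_{\no(i,i')},Z_i,Z_{i'})$. Each $g_F$ is centered and $\stb(n)$-sub-Gaussian for $|F|\le4\max\{m,\log n\}$; larger $F$ have probability at most $e^{-c\max(m,\log n)}$ by Chernoff, since $\sum q=m$.

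Inside the $i'$-th summand we replace $Q^{(\no S,t)}$ by $Q^{(\no (S\cup\{i,i'\}),t)}$. This is where the sets of size up to two are needed. The replacement costs $O(\|\bq-\bq'\|_1)$, which is bounded by the induction hypothesis at level $t$ with sets of size at most 2.

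After the replacement, $Q$ is independent of $(Z_i,Z_{i'})$. Conditional sub-Gaussianity then gives, with probability $1-(MN)^{-c}$ uniformly over $i,i',j$ (by a union bound over $N^2M$ events, not over $F$), the bound $\frac{n}{N}\stb(n)\sqrt{\log N+\log M}$. A factor $m$ from the $\max_j$ terms is absorbed into $m^4$.

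The delicate point is closure of the induction. Bounding sets of size 2 at step $t+1$ seems to require sets of size 4 at step $t$. We plan to avoid this by applying the replacement only to the outer pair and bounding the inner pair crudely by the $O(1)$ stability. This keeps $|S'|\le 2$ throughout; if it fails, the induction would instead run over sets of size $2(T-t)$, which is still harmless for bounded $T$.

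Finally, we sum the recursion, add the minipatch Monte Carlo error (Hoeffding over the $K_t$ i.i.d. minipatches, with LOCO-LOO denominators of order $K_t m/M$, union bounded over $i,j,t$), and take $S=\emptyset$, $S'=\{i\}$ at $t=T$.
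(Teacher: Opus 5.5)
Your overall architecture matches the paper's. That includes the Lipschitz bound for Algorithm~\ref{algo:samp_prob} in the metric $d$ (Lemma~\ref{lem:q_err_bnd}), the split into a propagated-$\bq$ error plus a leave-out error at fixed $\bq$, the leave-two-out replacement that decouples $Q$ from $(Z_{i_k},Z_{i'})$, the sub-Gaussian/Chernoff control of Lemma~\ref{lem:good_bad_decomposition}, and a separate Monte Carlo induction joined by the triangle inequality through $\bq^{*(T)}$ and $\bq^{*(\no i,T)}$.

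\textbf{The gap: your induction, which tracks only the $\ell_1$ quantity $\epsilon_t$, does not close.} In Step~2 you assert that $d(\bar\Delta^{(\no S')}(\bq),\bar\Delta^{(\no S')}(\bq'))$ is Lipschitz in $\|\bq-\bq'\|_1$ with a polynomial-in-$m$ constant. That holds for the $\ell_1$ component of $d$, since summing Lemma~\ref{lem:diffQ_err} over $j$ gives a factor of order $m$. It fails for the component $M|\min_k\bar\Delta_k-\min_k\bar\Delta'_k|$, which is at most $M\max\{|\varepsilon_j|,|\varepsilon_{j'}|\}$ over the two minimizers $j,j'$. For such a coordinate, Lemma~\ref{lem:diffQ_err} gives $|\varepsilon_j|\lesssim (q_j+q_j')\|\bq-\bq'\|_1+|q_j-q_j'|$. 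Assumption~\ref{ass:minDelta_q_bnd} reduces the first term to $(m/M)\|\bq-\bq'\|_1$. From $\ell_1$ information alone, however, the second term is only $\le\|\bq-\bq'\|_1$. So $M\varepsilon_{\min}$ can be $M$ times your $\epsilon_t$, and the recursion becomes $\epsilon_{t+1}\lesssim M\,\mathrm{poly}(m)\,\epsilon_t$. The same per-coordinate quantity $|q_j^{*(t)}-q_j^{*(\no\{i_k,i'\},t)}|$ appears in the leave-out term (Lemma~\ref{lem:Delta_leaveSout}), and the Monte Carlo induction has the identical problem.

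\textbf{What is missing: a second induction hypothesis carried alongside the $\ell_1$ one}, as in \eqref{eq:induction_Q_loo_j} and \eqref{eq:induction_Q_K_j}. It says: for every $j$ that is the least important feature at iteration $t$ or later (for any leave-out set of size at most two), $|\bar\Delta_j^{*(\no S,t)}-\bar\Delta_j^{*(t)}|$ is $1/M$ times the $\ell_1$ scale. This propagates through the per-coordinate case of Lemma~\ref{lem:q_err_bnd}, namely $|q_j-q_j'|\le Cm^2\big(q_j(\|\varepsilon\|_1+M\varepsilon_{\min})+|\varepsilon_j|+\varepsilon_{\min}\big)$ for $q_j<\delta$. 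That step needs $q_j\le Cm/M$ at every earlier iteration, which is exactly why Assumption~\ref{ass:minDelta_q_bnd} is imposed for all $t'\le t$ and across leave-out sets.

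Your stated use of that assumption, keeping LOCO-LOO denominators near $K(1-Cm/M)$, is not the relevant mechanism. Those denominators are of order $K$ for every $j$ because $q_j\le\delta$ and Assumption~\ref{ass:adp_prob_stb} hold. The Lipschitz property of the update map also does not use the assumption.

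\textbf{A smaller point on closure.} Your worry about pairs $S\subset S'$ disappears if, as in the paper, you compare every leave-out version only to the full data. The leave-out term is then evaluated at the full-data $Q^{*(t)}$, the replacement sets are $\{i_k,i'\}$, and $\|\bq^{*(t)}-\bq^{*(\no\{i_k,i'\},t)}\|_1$ is already covered by the size-two hypothesis. Your first fallback, bounding the inner pair by $O(1)$, would drop the $\stb(n)\sqrt{\log N+\log M}$ factor and give $n/N$, which is not the claimed rate.
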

Lemma~\ref{lem:Q_loo_err_brief} is a direct consequence of Lemmas~\ref{lem:Q_K_err} and \ref{lem:Q_loo_err} in the Appendix. Therefore, when $N$ and $K_t$ are sufficiently large (Assumptions~\ref{ass:adp_prob_stb}-\ref{ass:adp_n_minipatches}), we have $\|\bq^{(t)}-\bq^{(\no i, t)}\|_1=o(\frac{\sigma_j}{\sqrt{N}})$, implying that replacing $Q^{(T)}$ with $Q^{(\no i, T)}$ in \eqref{eq:proof_sketch_CLT} leads to negligible errors ($o_p(1)$) that does not affect the asymptotic normality. Combining Lemma~\ref{lem:Q_loo_err_brief} and the cross-validation CLT as well as the stability of the adaptive minipatch ensemble, we arrive at the asymptotic coverage guarantee in Theorem~\ref{thm:main}.

\section{Simulations}\label{sec:sims}

In this section, we evaluate our LOCO-AdaMP framework through simulation studies. We first validate the coverage of LOCO-AdaMP intervals; we further show that given the same sample budget for training and inference altogether, LOCO-AdaMP yields both stronger predictors and more efficient inference than prior LOCO methods. We consider a low-dimensional ($(M,\,N)=(50,200)$) and a high-dimensional ($(M,\,N)=(500,200)$) setting; for conciseness, we mainly report low-dimensional results in the main paper and defer the high-dimensional coverage, interval, and power results in Section~\ref{add_high}, which are qualitatively similar. The code for reproducing our results is available at \url{https://github.com/yinan-cc/LOCO-AdaMP}.

\paragraph{Simulation framework.} We consider a low-dimensional and a high-dimensional setting with $(M,\,N)=(50,200)$ and $(M,\,N)=(500,200)$, where $M$ and $N$ are feature and sample sizes, respectively. For each dimensionality, we generate samples independently from sparse linear and non-linear models with standard normal observational noise: $Y=f(X)+\varepsilon$, $\varepsilon\sim\mathcal{N}(0,1)$. Let $\bB=[\beta_1,\ldots,\beta_M]^\top$, where $(\beta_1,\dots,\beta_5)^\top=[2.5,2.1,1.7,1.3,1]^\top$ and $\beta_6=\cdots=\beta_M=0$. We set $f(X)=X^\top\bB$ for the linear setting and for the nonlinear setting, we let $f(X)=\sum_{j=1}^5\beta_jg_j(X_j)$ where $g_j(X_j)=\frac{h_j(X_j)}{{\rm sd}(h_j(X_j))}$ are standardized functions with unit variance;
$$(h_1(X_1),\dots,h_5(X_5))=(\max\{X_1,0\},\, \sin(X_2), \ind\{X_3<-1\}X_3,\, \cos(X_4),\, \mathrm{sign}(X_5)).$$ For both linear and nonlinear settings, the default feature joint distribution is standard multivariate normal distribution: $X\sim N(0,I_M)$. Additionally, to examine the effect of feature correlations, we consider a correlated structure under the linear setting where all features still follow independent standard normal distributions aside from one correlated pair: the weakest signal feature $X_5$ and noise feature $X_6$ are jointly normal with $\rho(X_5,\,X_6)=0.9$; we refer to this setting as linear correlated setting. We standardize the response to have zero mean and unit variance in all settings.

\paragraph{LAMPS Implementation Details.} Throughout this section, we set $\ell(\cdot)$ as the squared error loss, same as the setup studied in Section~\ref{sec:theory}. Additional simulation results based on the absolute loss function are provided in Appendix \ref{abs_loss}. We note that the prediction performance of AdaMP often improves quickly after a few iterations, so we set $T=5$. While for the minipatch size $(m,\,n)$, recall that our theory suggests that $m$ should not be too large; $n$ should also be smaller if the base model is not sufficiently stable. Hence, we set $m=\lfloor 4\log M\rfloor$; following the setup in \cite{gan2022model}, we set $n=\lfloor N^{0.8}\rfloor$. In addition, we set the number of minipatches within each iteration as $K_1=\cdots=K_4=2,000, K_5 = 10,000$, the probability upper bound as $\delta = 0.9$ in Algorithm \ref{algo:loco_adaptive}, and the constant buffer as $c_0 = 0.1$ in Algorithm \ref{algo:samp_prob}. We apply the LOCO-AdaMP framework with three base learners: (i) ridge regression with $\lambda = 0.001$, (ii) decision tree regression, and (iii) kernel ridge regression with $\lambda = 0.001$. More details are included in Appendix~\ref{add_sim_set}. Results for the high-dimensional setting are qualitatively similar and can be found in the Appendix.

\subsection{Validation of Coverage}
We validate the coverage of confidence intervals obtained in Algorithm~\ref{algo:loco_adaptive} for a signal feature (feature $5$) and a noise feature (feature $6$) across different base models and data generating processes, when the nominal level $1-\alpha$ is set as $0.9$. For each combination of settings, we set sample size $N\in\{200, 500, 1000, 2000\}$ to examine the asymptotic trend of coverage. Since the inference target~\eqref{eq:loco_adamp_target} involves an expectation and has no clean analytical form, we estimate the expectation by Monte Carlo approximation with $10,000$ independent test samples. We note that using this surrogate for target yields slight underestimation for the theoretical coverage especially for large $N$, since the surrogate has its own uncertainty and fluctuates around the target, hence is more difficult to cover than the true target; see more discussions on this in Appendix~\ref{adj_cover}. Figure~\ref{fig:cover50} presents the average coverage for the surrogate target over $100$ replicates in the low-dimensional setting; the error bars represent 90\% confidence intervals for the coverage. Figure~\ref{fig:cover50} shows that LOCO-AdaMP attains valid coverage most of the time, although the coverage drops below 0.9 in a few settings. To understand whether such coverage drop could be due to the variance associated with the surrogate target, we report adjusted coverage rates in Appendix~\ref{adj_cover} that take into account the surrogate randomness. The adjusted rates achieve nominal coverage in almost all settings
\begin{figure}[!htb]
    \centering
    \includegraphics[width=\linewidth]{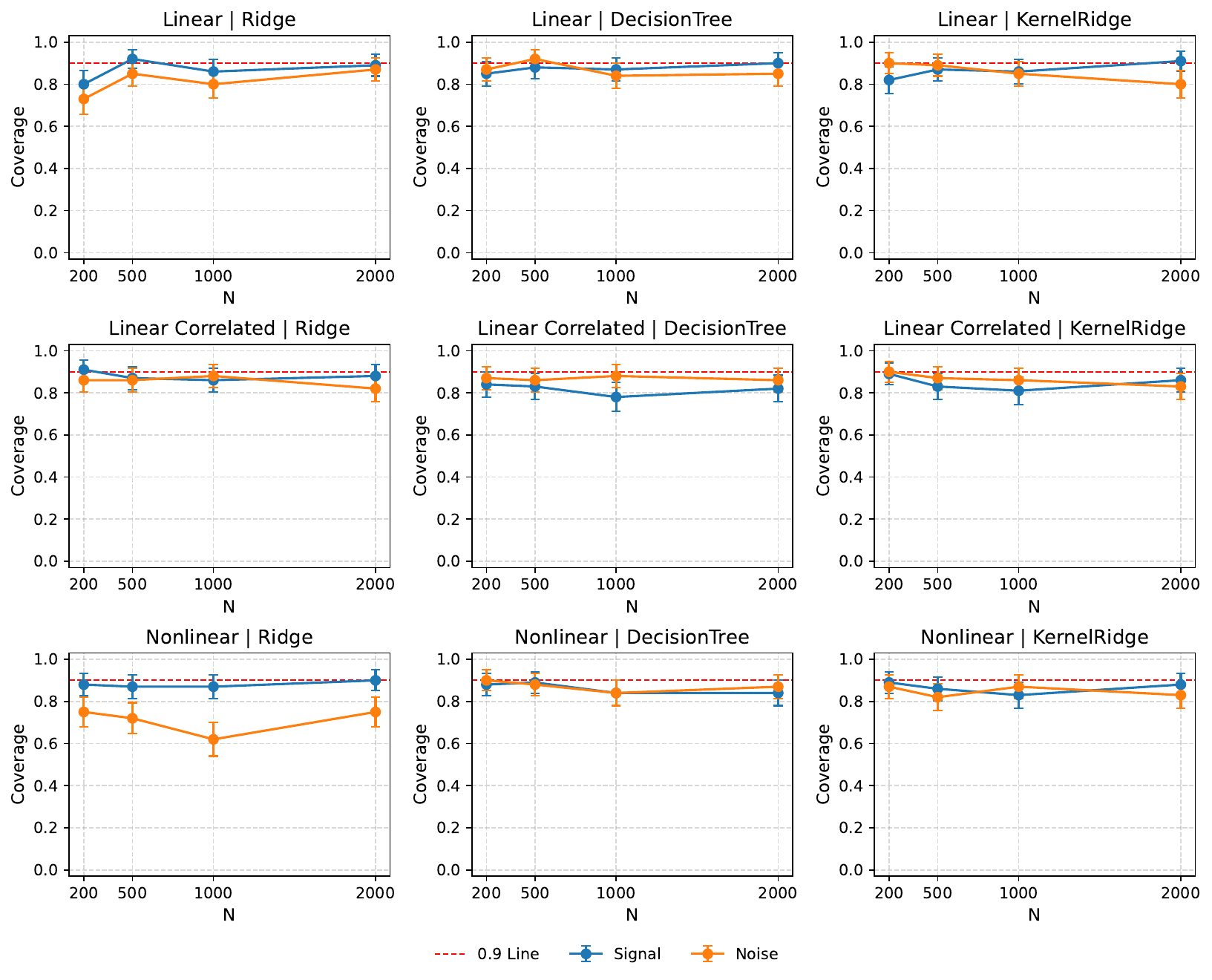}
    \caption{\small Coverage rates of $90\%$ confidence intervals for the targets in the low-dimensional setting $(M=50)$ over $100$ replicates. The first and third rows use the default setting with independent features, whereas the second uses the setting with one correlated feature pair. The blue and orange lines represent a signal feature (feature 5) and a noise feature (feature 6) respectively. The true target value is approximated via Monte Carlo method with $10,000$ test observations; hence the reported coverage rates slightly underestimate the true coverage. To address the underestimation effect due to Monte Carlo randomness, adjusted coverage rates are reported in Figure~\ref{fig:adjcover50}. Results for the high-dimensional setting ($M=500$) are qualitatively similar and can be found in Figures~\ref{fig:cover500} and \ref{fig:adjcover500}.}
    \label{fig:cover50}
\end{figure}

\subsection{Comparative Studies}
Here, we compare the performance of LOCO-AdaMP with LOCO-MP~\citep{gan2022model} and LOCO-Split~\citep{lei2018distribution}. In particular, we ask: given a fixed set of data that can be freely allocated for training and inference, how would these different frameworks perform in terms of training a strong predictor and delivering efficient LOCO inference? For our LOCO-AdaMP and LOCO-MP, all samples are used for training and inference simultaneously; for LOCO-Split, we consider two different allocations: (i) a $50\%$ - $50\%$ split, denoted as LOCO-Split$0.5$; and (ii) a $75\%$ (training) - $25\%$ (inference) split, denoted as LOCO-Split$0.75$. 

\paragraph{Implementation of baselines.} LOCO-MP is implemented without the variance barrier, with minipatch size set the same as LOCO-AdaMP; the number of minipatches is set as $18,000$, same as the total number of minipatches sampled by LOCO-AdaMP across all iterations. LOCO-MP is implemented with the same set of base models and regularization parameters as LOCO-AdaMP.
Note that when the base model is ridge or kernel ridge regression, we fix $\lambda$ as a small value $0.001$ for both LOCO-AdaMP and LOCO-MP due to the implicit regularization property of minipatch ensembles; while for LOCO-Split, we tune $\lambda$ via $5$-fold cross-validation over a grid of 50 logarithmically spaced values between $10^{-3}$ and $10^1$. In addition, when LOCO-AdaMP and LOCO-MP use decision tree as the base model, LOCO-Split uses random forest, which behaves similarly to minipatch ensemble of trees~\citep{gan2022model}.

\subsubsection{Comparison of Prediction Performance}\label{sec:sim_pred}
We compare the prediction errors of the predictive models trained by our LOCO-AdaMP with those by LOCO-MP, and LOCO-Split with 50\% and 75\% training splits. More precisely, the predictors are the intermediate products of the three LOCO-frameworks obtained before LOCO inference enters: LOCO-AdaMP yields the adaptive minipatch ensemble predictor ($\mu^{(T)}$ in Figure~\ref{fig:locoadamp}), LOCO-MP gives the minipatch ensemble built from uniform subsampling ($\mu$ in Figure~\ref{fig:locomp}), while LOCO-Split simply applies whatever base learner on the training split ($\mu$ in Figure~\ref{fig:locosplit}). Additionally, we train the same base learners adopted by LOCO-Split on the full data set, labeled as ``Full Data'', to understand what can be achieved if LOCO inference is not required. We evaluate the predictors on $10,000$ test data points from the same data generating process as the training data.
Figure~\ref{fig:pred} presents the prediction errors for each method and setting, averaged over $100$ replicates. LOCO-AdaMP always yields the lowest prediction error amongst all LOCO methods; it even gives better prediction than the full data model in many settings, even though the full data model spends all data on training and does not come with LOCO inference. The prediction advantage of LOCO-AdaMP is especially pronounced in high-dimensional settings, perhaps due to the implicit feature selection of the adaptive sampling scheme~\citep{Liu2026model}. 
\begin{figure}[!htb]
    \centering
    \includegraphics[width=1\linewidth]{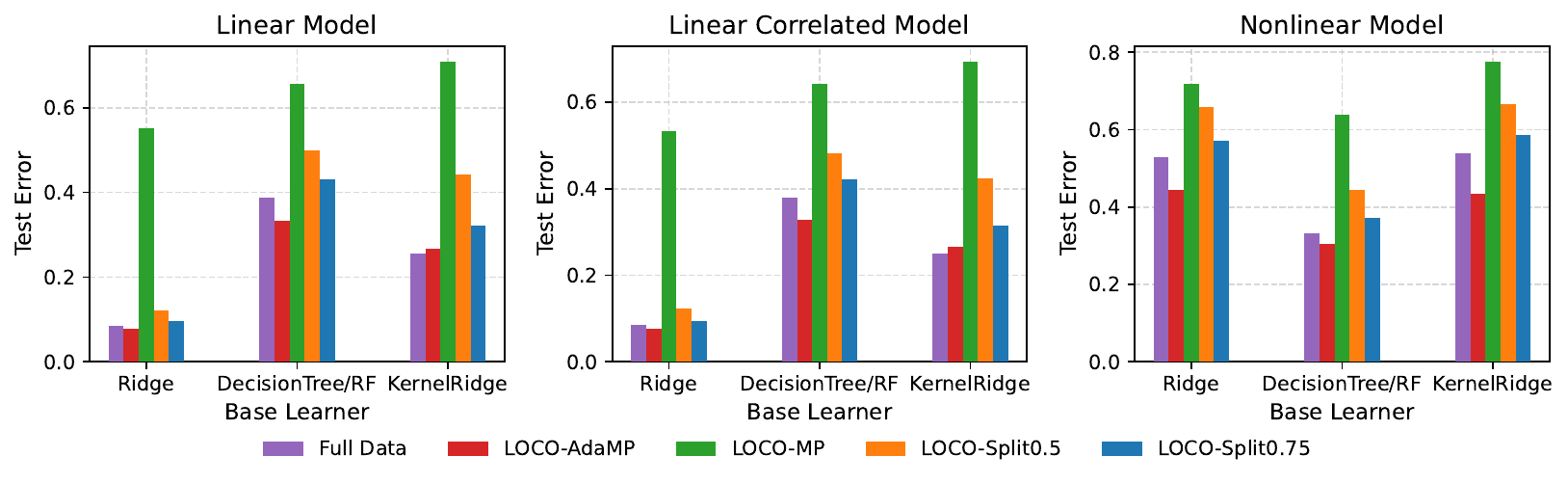}

    \includegraphics[width=1\linewidth]{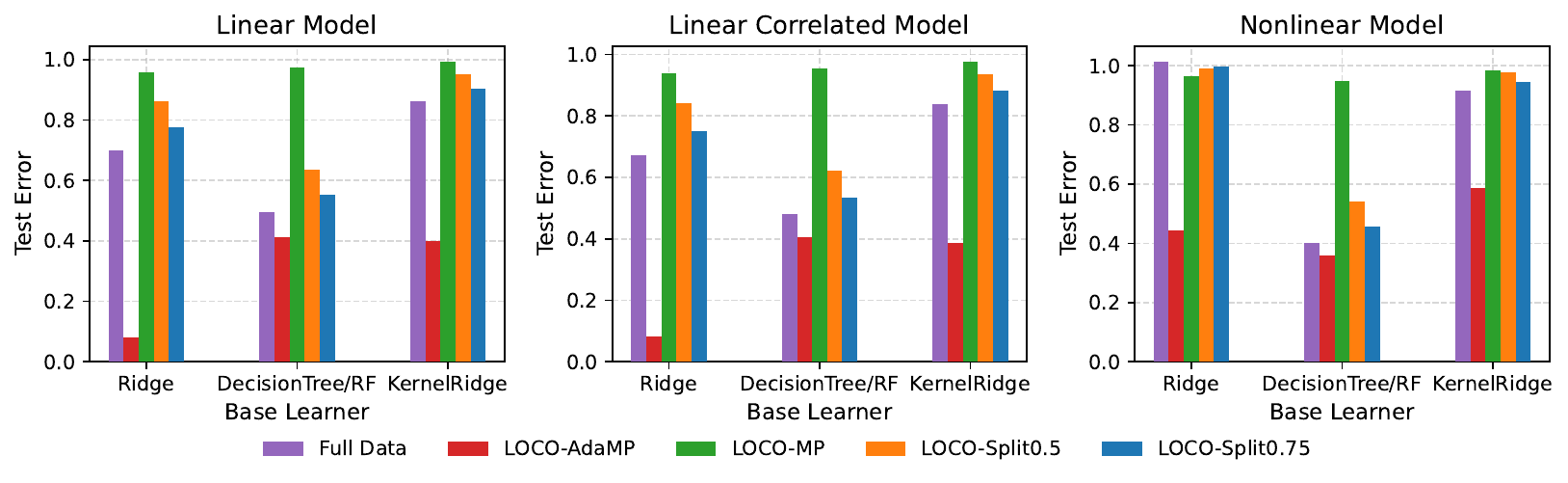}

    \captionsetup{font=normalsize}  
    \caption{\small Prediction error comparison between our LOCO-AdaMP and baseline LOCO methods; the top row is the low-dimensional setting ($M=50, N=200$) and the bottom row is the high-dimensional setting ($M=500, N=200$).
    ``Full Data'' denotes the corresponding base learner trained on the entire data set and is shown as reference for what can be achieved if LOCO inference is not required.  
    The first and third columns correspond to the default independent features, whereas the second considers one correlated feature pair.} 
    \label{fig:pred}
\end{figure}

\subsubsection{Comparison of Inference: Confidence Intervals and Statistical Power for Identifying Population Signal Features}\label{sec:sims_power}
We present the $90\%$ confidence intervals for the first 8 features given by LOCO-AdaMP, LOCO-MP, and LOCO-Split in Figure~\ref{fig:ci50} ($M=50, N=200$); the first 5 are signal features with decreasing signal strength. Results for the high-dimensional setting ($M=500$) are qualitatively similar and can be found in the Appendix. As discussed in Section~\ref{sec:disc_target}, the inferential target in each framework characterizes how much each feature contributes to its own predictive model, and hence the comparison is not straightforward. Nonetheless, we highlight three key observations. First, the interval centers of these LOCO frameworks all have different magnitudes, reflecting the magnitude of their inference targets. LOCO-MP has the smallest targets due to the massive uniform subsampling of features, where each feature only has limited contribution to the final predictor. On the contrary, the LOCO importance target of our LOCO-AdaMP framework has a similar or larger magnitude to LOCO-Split, since signal features are frequently sampled in the final iteration and contribute substantially to the final model; this also aligns the prediction advantage of LOCO-AdaMP. Second, the interval width of LOCO-AdaMP is often narrower compared to LOCO-Split even when their interval centers are similar. For the weakest signal feature $5$, LOCO-AdaMP and LOCO-MP often have positive lower confidence bounds while LOCO-Split intervals often cover zero. Third, when feature 5 is strongly correlated with feature 6, LOCO-AdaMP often identifies feature 5 with a confidence interval above zero, while the intervals from other three methods all cover zero.

\begin{figure}[!htb]
    \centering
    \includegraphics[width=\linewidth]{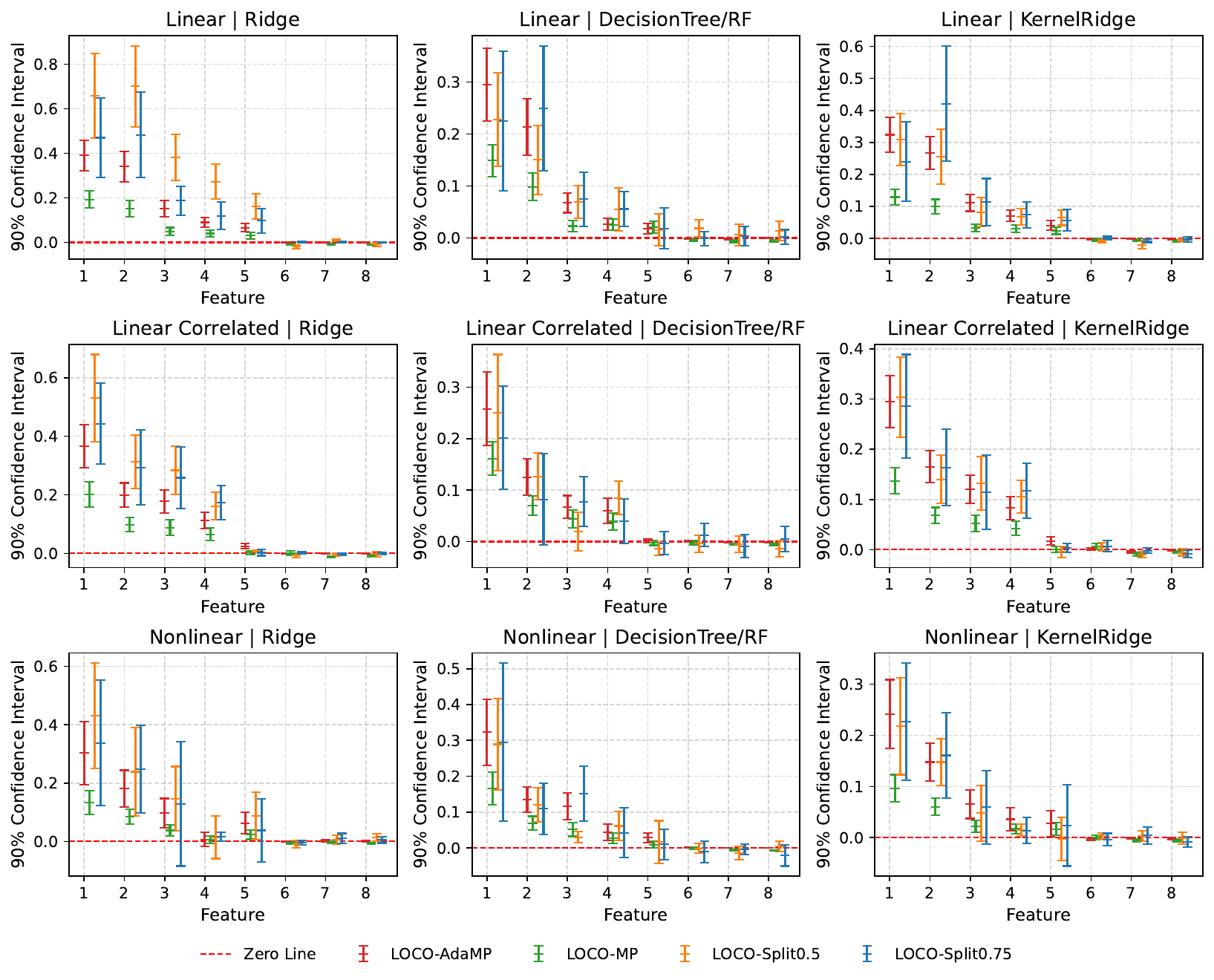}
    \caption{\small 90\% confidence intervals for the first 8 features in the low-dimensional setting ($M=50, N=200$). The first $5$ features are signal features with decreasing strengths. The first and third rows use the default setting with independent features, whereas the second uses the setting with one correlated feature pair. For the second column, decision trees are used for LOCO-MP and LOCO-AdaMP, and random forests are used for LOCO-Split. Results for the high-dimensional setting ($M=500$) are qualitatively similar and are presented in Figure~\ref{fig:ci500}.}
    \label{fig:ci50}
\end{figure}

In addition to confidence intervals, each LOCO framework naturally induces a one-sided test of whether each feature has positive LOCO importance for the corresponding predictive model. As discussed in Section~\ref{sec:disc_target}, however, the methods target different predictive models and therefore test different null hypotheses, so their statistical power is not directly comparable in the usual sense. Nevertheless, because LOCO importance is closely related to population feature importance~\citep{gan2022model,williamson2023general,verdinelli2024feature}, practitioners may also use these tests as proxies for identifying features that are important in the underlying data-generating process. We therefore compare how often each method rejects the null as the signal strength of a given feature varies. We view this as an assessment of signal-identification ability, rather than a formal power comparison under a common null hypothesis. Specifically, we focus on feature $5$ and vary $\beta_5\in\{0, 0.4, 1.0, 2.5, 5.0\}$, while fixing all other coefficients as specified in the beginning of Section~\ref{sec:sims}. The significance level of the test is set as 10\%. We report the rejection rate averaged over $100$ independent replicates in Figure~\ref{fig:power50}; the error bars represent 90\% confidence intervals for the unknown rejection probability. Compared with LOCO-MP and LOCO-Split, LOCO-AdaMP achieves higher rejection rates for weak but nonzero signals, while keeping the rejection rate below the nominal level of $0.1$ when $\beta_5=0$.

\begin{figure}[!htb]
    \centering
    \includegraphics[width=\linewidth]{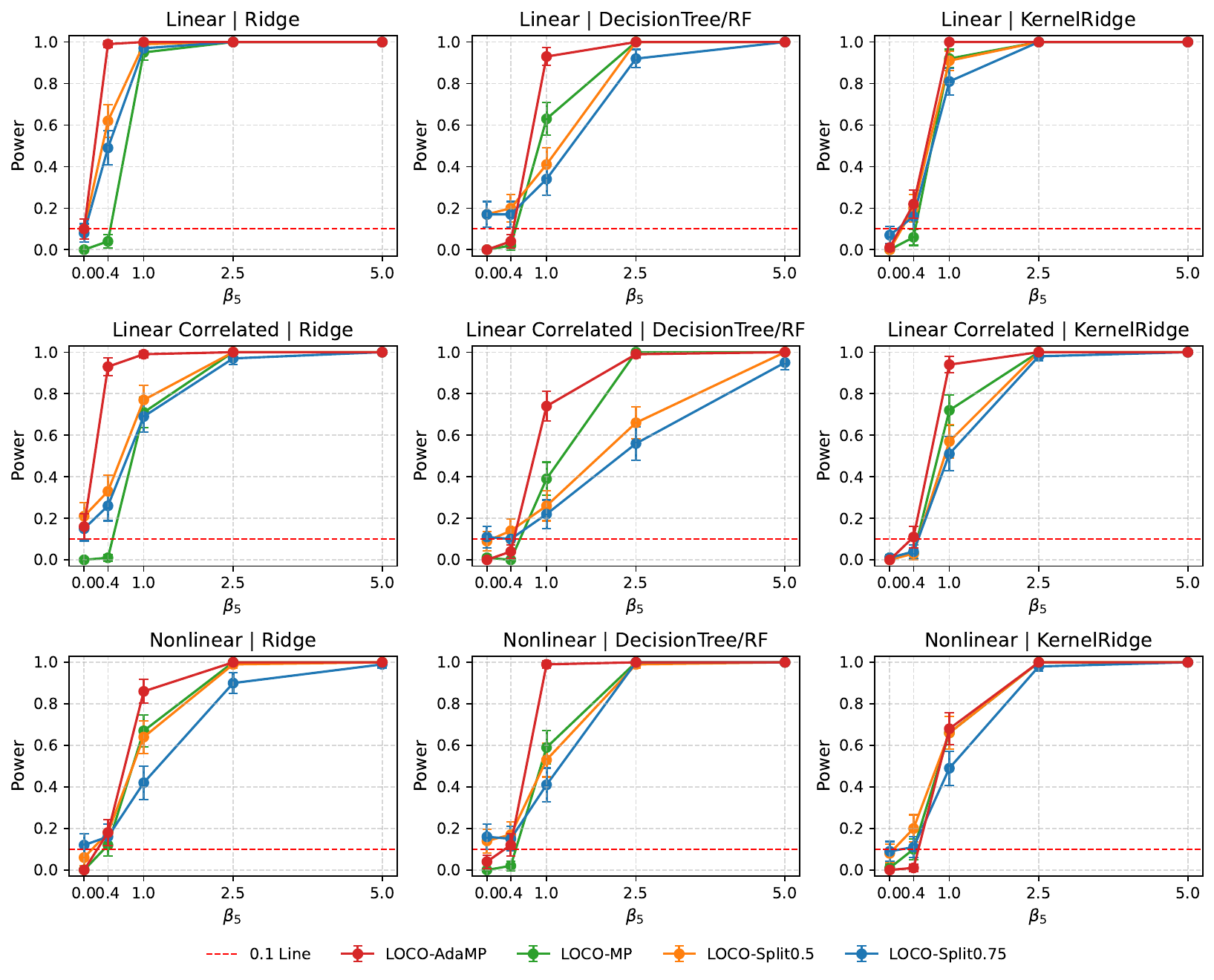}
    \caption{\small Rejection rates of one-sided hypothesis tests ($\alpha = 0.1$) for feature 5 in the low-dimensional setting $(M = 50, N = 200)$, averaged over 100 replicates. The x-axis represents values for $\beta_5$. LOCO-AdaMP quickly attains higher rejection rates when $\beta_5$ increases above zero. The first and third rows use the default setting with independent features, whereas the second uses the setting with one correlated feature pair. Results for the high-dimensional setting ($M=500$) are qualitatively similar and are presented in Figure~\ref{fig:power500}.}
    \label{fig:power50}
\end{figure}

\section{Case Study}\label{sec:realdata}

Alzheimer’s disease (AD) is a progressive neurodegenerative disorder characterized by cognitive decline and substantial heterogeneity across individuals. Understanding the molecular mechanisms underlying cognitive impairment is a central problem in AD research, and transcriptomic data from brain tissue provides valuable insights into gene expression patterns linked to disease progression~\citep{patel2019meta, wang2020deciphering}. In this section, we analyze the RNA sequencing data derived from postmortem brain tissue in the Religious Orders Study and Memory and Aging Project (ROSMAP), a well-established longitudinal cohort designed to investigate aging and neurodegenerative diseases \citep{bennett2018religious}. 
We use the global cognition score as our response variable and adopt the same variance-filtering preprocessing pipeline as in \cite{gan2022model}, resulting in $N=507$ samples and $M=86$ features (genes). 
Our objective is to compare various baseline methods in this regression task in terms of prediction and identifying important genes related to cognition. 

\paragraph{Comparison of prediction.} To ensure stable comparison, we generate 10 random splits for the ROSMAP data with $70\%$ training data and $30\%$ test data. For each split, we standardize the features using the training means and standard deviations and apply the same transformation to the test data. We compare the test errors of the predictors given by LOCO-AdaMP and LOCO baselines (LOCO-MP and LOCO-Split with 50\% and 75\% training splits), trained on each training fold and evaluated on the test fold. We set the base learner as decision tree regression for LOCO-AdaMP and LOCO-MP, and random forest for LOCO-Split, similar to Section~\ref{sec:sims}. The minipatch hyperparameters are set similarly to the simulation studies in Section~\ref{sec:sims}; the minipatch sample size is set as $n=\lfloor N^{0.8}\rfloor$, whereas the feature size $m$ is selected from $\{\lfloor \log M\rfloor,\, \lfloor 2\log M\rfloor,\,\lfloor 4\log M\rfloor,\,\lfloor 8\log M\rfloor\}$ by minimizing the LOO prediction error, tuned separately for LOCO-AdaMP and LOCO-MP and each data split. More implementation details are included in Section~\ref{add_sim_set}. Figure~\ref{fig:rosmap_pred} presents the average prediction errors of these LOCO methods across 10 random splits, where LOCO-AdaMP achieves the lowest test error, with approximately $20\%-30\%$ reduction compared to the baselines.

\paragraph{Comparison of inference: predictive efficacy of significant features.} We further compare the inference results of LOCO-AdaMP with feature importance inference baselines, including the LOCO baselines and four other feature importance inference methods, CPI~\citep{watson2021testing}, VIMP~\citep{williamson2023general}, Floodgate~\citep{zhang2020floodgate}, and GCM~\citep{shah2020hardness}. The latter four methods use the same base learner as LOCO-Split. Their target feature importance notions are different from LOCO importance and are therefore not included in our earlier comparative studies. Nevertheless, in this application, all of these methods provide reasonable approaches for identifying genes associated with subject cognition; we therefore compare their empirical findings for this purpose. We conduct inference with $\alpha = 0.1$, using a Bonferroni correction for multiple testing. In particular, for each training split, we obtain a $p$-value for each gene from each method’s built-in inference procedure. We then rank features from the most significant one to the least based on the p-value, and fit a random forest model with only the top $L$ $(L = 1,\dots,50)$ features on the training split. More implementation details are included in Section~\ref{add_sim_set}. We evaluate the test errors of these trained random forests on the test split and report the average test errors across 10 random splits in Figure~\ref{fig:rosmap_tmse}. LOCO-AdaMP achieves the smallest test error among all methods, using only two features. 
\begin{figure}[!htb]
\centering
\begin{subfigure}[t]{0.4\textwidth}
\centering
\includegraphics[height=5cm]{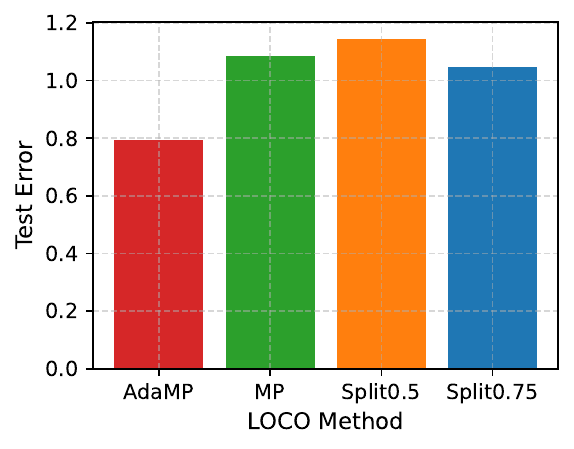}
\caption{Test error for LOCO methods}
\label{fig:rosmap_pred}
\end{subfigure}
\hfill
\begin{subfigure}[t]{0.56\textwidth}
\centering
\includegraphics[height=5cm]{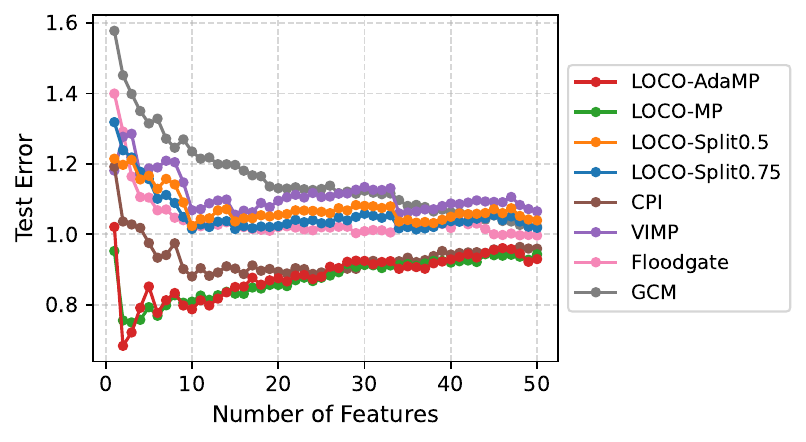}
\caption{Test error for top significant features}
\label{fig:rosmap_tmse}
\end{subfigure}
\caption{\small Comparison of predictive and inferential performance between LOCO-AdaMP and the baseline methods on the ROSMAP data. The left panel compares the predictive performance of the models constructed by different LOCO methods. The right panel compares the predictive usefulness of the top significant features identified by each method by training a separate random forest using the selected features and evaluating it on the test data. LOCO-MP and LOCO-AdaMP use decision trees as their base learners, whereas the other methods use random forests.}
\end{figure} 
\paragraph{Comparison of inference: stability of significant features.} Additionally, we compare the stability of the inference results given by each method across the 10 random splits in Figures~\ref{fig:rosmap_hist0} and \ref{fig:rosmap_hist}. A feature is selected if its corresponding p-value is less than $0.1$, and Figure~\ref{fig:rosmap_hist0} shows the distribution of selection frequencies across all features over the 10 splits. Features with zero selection frequency are plotted in blue to distinguish them from those identified as significant in at least one split. LOCO-Split, CPI, VIMP, Floodgate, and GCM identify most features as nonsignificant, with no feature selected in 5 or more splits. In contrast, LOCO-AdaMP and LOCO-MP consistently select a few features in 9 or 10 splits, while most features are never selected. Furthermore, Figure~\ref{fig:rosmap_hist} highlights the selection frequencies of features selected at least once by LOCO-MP and LOCO-AdaMP. LOCO-AdaMP yields more stable results, with two features (RP11-599B13.6 and AL162497.1) selected across all 10 splits and one additional feature (TTTY14) selected in 1 splits. Notably, the two consistently selected features also yield the lowest test error in Figure~\ref{fig:rosmap_tmse}. These three genes selected by LOCO-AdaMP are also reported as significant in \cite{gan2022model} and are supported by prior AD literature~\citep[see further discussion in][]{gan2022model}.  

\begin{figure}[!htb]
    \centering
    \includegraphics[width=\linewidth]{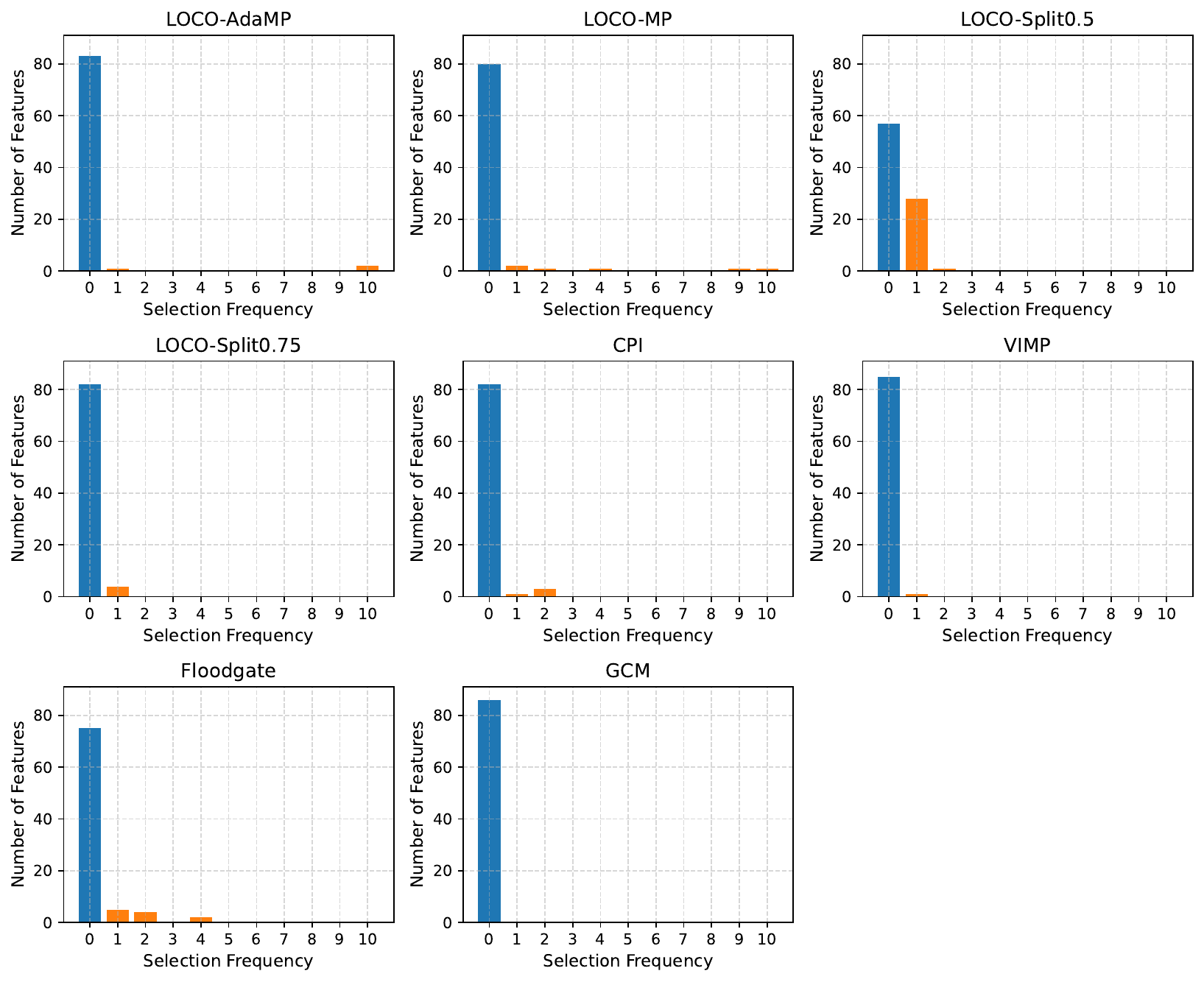}
    \caption{\small Histograms of the feature selection frequencies (selected if declared as significant with level 0.1) across the 10 random splits for each method. X-axis represents the number of times a feature is selected as significant. Y-axis shows the number of features corresponding to each selection count.}
    \label{fig:rosmap_hist0}
\end{figure}

\begin{figure}[!htb]
    \centering
    \includegraphics[width=0.7\linewidth]{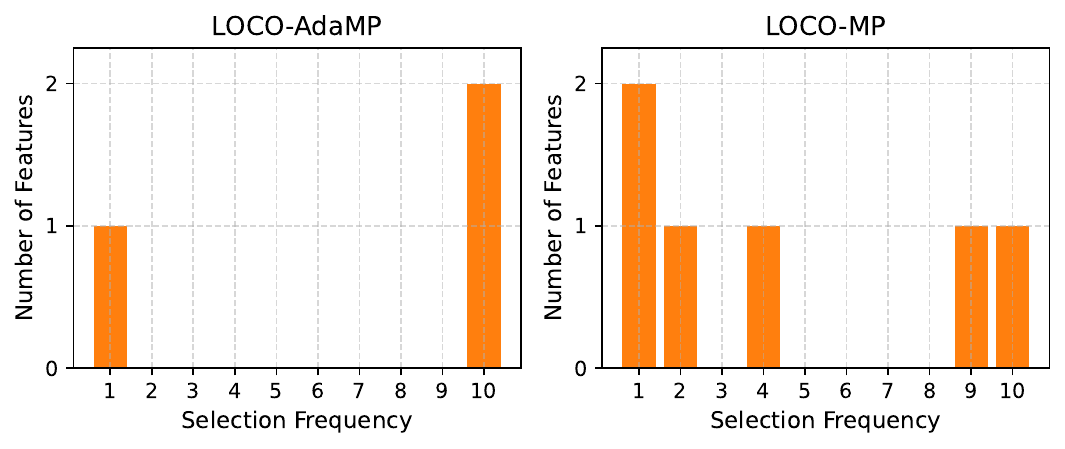}
    \caption{\small Histograms of the feature selection frequencies across the 10 random splits for LOCO-AdaMP and LOCO-MP. Only features selected by at least one split are included.}
    \label{fig:rosmap_hist}
\end{figure}

\section{Discussion}
In this work, we propose LOCO-AdaMP, a flexible minipatch ensemble framework for regression with built-in LOCO feature importance inference. LOCO-AdaMP is almost model-agnostic as it can be deployed with any base regression model; it requires no extra data or computation for inference after model training is complete. Notably, compared to the prior LOCO-MP framework~\citep{gan2022model}, our LOCO-AdaMP utilizes an adaptive feature subsampling scheme to achieve significantly improved prediction performance, while preserving the asymptotic validity of LOCO inference despite the involved dependency between adaptive sampling, minipatch training, and inference, all carried out on the same data set. 
We address the theoretical challenge brought by the non-trivial dependency by developing new leave-two-out analysis and establishing stability properties for the adaptive sampling procedure. Extensive empirical experiments on simulated and real data sets demonstrate that LOCO-AdaMP achieves enhanced predictive performance, more efficient confidence intervals, stronger signal-detection abilities, as well as more stable inference compared to various baselines. These results establish LOCO-AdaMP as a powerful and flexible framework that seamlessly integrates competitive predictive model training and efficient feature importance inference, allowing arbitrary black-box models as the base learner.

Although LOCO-AdaMP achieves efficient inference for individual feature importance, several extensions remain open. First, in high-dimensional applications, practitioners are often interested in simultaneous inference for multiple features or a set of features selected in a data-driven manner. Developing principled multiplicity adjustments and selective inference strategies within the LOCO-AdaMP framework is an important direction for future research. Second, another interesting extension of LOCO-AdaMP is to provide LOCO inference for groups of features formed through correlation or domain knowledge. Third, LOCO-AdaMP can be naturally extended to the classification setting if the squared loss is replaced by cross-entropy loss, although the inference validity theory remains open beyond the squared loss. Lastly, our empirical results demonstrate the superior predictive performance of adaptive minipatch ensembles, while a theoretical characterization of their predictive risk remains an interesting open problem. These developments could further enhance the applicability and our understanding of adaptive minipatch inference in modern machine learning applications.

\bibliographystyle{plainnat}
\bibliography{ref}

\appendix

\section{Additional Algorithmic Details}
Here, we present Algorithm~\ref{algo:find_threshold} for solving \eqref{eq:threshold} in Algorithm~\ref{algo:samp_prob}, i.e., finding the threshold for the shifted importance scores.

\begin{algorithm}[!htbp]
\caption{Solving \eqref{eq:threshold} for the Threshold}
\label{algo:find_threshold}
\noindent\textbf{Input}: Feature weights $\{\tilde{\Delta}_1, \dots, \tilde{\Delta}_M\}$; minipatch feature size parameter $m$; probability upper bound $\delta \in (0, 1)$.
\begin{enumerate}
    \item Sort $\{\tilde{\Delta}_1, \dots, \tilde{\Delta}_M\}$ in decreasing order: $\tilde{\Delta}_{(1)}\geq \tilde{\Delta}_{(2)}\geq \cdots \geq \tilde{\Delta}_{(M)}$.
    \item Find smallest index $0\leq k <\frac{m}{\delta}$ such that:
    \begin{equation}\label{eq:find_threshold}
        \sum_{\ell > k} \tilde{\Delta}_{(\ell)} > \left(\frac{m}{\delta} - k\right)\tilde{\Delta}_{(k+1)}.
    \end{equation}
    \item  Let $t^* := \frac{\sum_{\ell > k} \tilde{\Delta}_{(\ell)}}{m/\delta - k} \wedge \tilde{\Delta}_{(1)}$.
\end{enumerate}
\noindent\textbf{Output}: Threshold $t^*$.
\end{algorithm}

\section{Justification of the Stability Assumption}\label{sec:stb_examples}
We provide several examples of learning algorithms $H$ satisfying the stability condition in Assumption~\ref{ass:stb}. 
The following proposition gives a general sufficient condition for strongly convex empirical risk minimization. Throughout this section, let $d_{n,m}=4\max\{m,\log(n)\}$
$\mathcal F_{n,m}:=\{F\subset[M]:|F|\leq d_{n,m}\}$.

\begin{prop}[Strongly convex ERM stability]
\label{prop:erm_stability}
Suppose the learning algorithm $H$ satisfies $H(\bZ)(X^*)=h(\widehat\theta(\bZ),X^*)$ for some function $h$, where
$$
\widehat\theta(\bZ)\in
\arg\min_{\theta\in\Theta}\left\{
Q(\theta;\bZ):=\frac1n\sum_{i=1}^n\ell(\theta;Z_i)+r(\theta)\right\}.
$$
Suppose that the following assumptions hold.
\begin{enumerate}
\item[(A1)]
For any training data $\bZ$, the objective function $Q$ is $\mu$-strongly convex on $\Theta$ for any training data $\bZ$, i.e., for every $\theta,\theta'\in\Theta$ and
$g\in\partial Q(\theta;\bZ)$,
\[
Q(\theta';\bZ) \ge Q(\theta;\bZ)+\langle g,\theta'-\theta\rangle+\frac{\mu}{2}\|\theta'-\theta\|^2.
\]
\item[(A2)]
For every deterministic sample $\bZ$, the minimum of $Q(\theta;\bZ)$ is attained.
\item[(A3)]
There exists $G>0$ such that, for every pair of training data sets $(\bZ,\bZ')$, $|\ell(\hat\theta(\bZ);Z)
-\ell(\hat\theta(\bZ');Z)|\le G\|\hat\theta(\bZ)-\hat\theta(\bZ')\|$,
almost surely for $Z\sim \cP$.
\item[(A4)]
There exists $L>0$ such that, for every pair $(\theta,\theta')$,
$|h(\theta,X)-h(\theta',X)|\le L\|\theta-\theta'\|$. 
\end{enumerate}
Then, for any training set
$\mathbf Z=(Z_1,\ldots,Z_{n-1})$ consisting of $n-1$ i.i.d.
observations in $\cP$, independent copies $\widetilde Z,\widetilde Z'$ of $Z$,
and an independent test covariate $X^*$,
\[
\left\|
H([\mathbf Z;\widetilde Z])(X^*)
-
H([\mathbf Z;\widetilde Z'])(X^*)
\right\|_{\psi_2|\mathbf Z,X^*}
\le
\frac{2LG}{\mu n\sqrt{\log 2}},
\]
and
\[
\mathbb E
\left(H([\mathbf Z;\widetilde Z])(X^*)-H([\mathbf Z;\widetilde Z'])(X^*)\right)^2\le\frac{4L^2G^2}{\mu^2n^2}.
\]
\end{prop}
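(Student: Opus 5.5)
The plan is the classical uniform-stability argument for strongly convex ERM (Bousquet--Elisseeff style), followed by a transfer of the deterministic bound to the prediction and hence to the $\psi_2$ and second-moment bounds.

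Write $\bZ^{(1)}=[\bZ;\widetilde Z]$ and $\bZ^{(2)}=[\bZ;\widetilde Z']$. These two data sets of size $n$ differ in exactly one point. Let $\hat\theta_1=\hat\theta(\bZ^{(1)})$ and $\hat\theta_2=\hat\theta(\bZ^{(2)})$; both exist by (A2).

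\emph{Step 1: bound the parameter gap.} Since $\hat\theta_1$ minimizes $Q(\cdot;\bZ^{(1)})$, we have $0\in\partial Q(\hat\theta_1;\bZ^{(1)})$. Strong convexity (A1) with $g=0$ gives
\[
Q(\hat\theta_2;\bZ^{(1)})-Q(\hat\theta_1;\bZ^{(1)})\ge \tfrac{\mu}{2}\|\hat\theta_1-\hat\theta_2\|^2 .
\]
Symmetrically,
\[
Q(\hat\theta_1;\bZ^{(2)})-Q(\hat\theta_2;\bZ^{(2)})\ge\tfrac{\mu}{2}\|\hat\theta_1-\hat\theta_2\|^2 .
\]
Adding the two inequalities, the shared terms $\frac1n\sum_{i\le n-1}\ell(\cdot;Z_i)+r(\cdot)$ cancel. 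What remains is
\[
\mu\|\hat\theta_1-\hat\theta_2\|^2\le \tfrac1n\Big[\ell(\hat\theta_2;\widetilde Z)-\ell(\hat\theta_1;\widetilde Z)+\ell(\hat\theta_1;\widetilde Z')-\ell(\hat\theta_2;\widetilde Z')\Big].
\]
Apply (A3) to each difference; it holds almost surely for $\widetilde Z,\widetilde Z'\sim\cP$. This bounds the right side by $\frac{2G}{n}\|\hat\theta_1-\hat\theta_2\|$. Dividing through yields $\|\hat\theta_1-\hat\theta_2\|\le \frac{2G}{\mu n}$ almost surely. If the norm is zero the bound is trivial.

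\emph{Step 2: transfer to predictions.} By (A4),
\[
D:=H(\bZ^{(1)})(X^*)-H(\bZ^{(2)})(X^*)
\]
satisfies $|D|\le L\|\hat\theta_1-\hat\theta_2\|\le \frac{2LG}{\mu n}=:b$ almost surely. Squaring and taking expectations immediately gives the second claim, $\bbE D^2\le \frac{4L^2G^2}{\mu^2n^2}$.

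\emph{Step 3: the $\psi_2$ bound.} Since $|D|\le b$ almost surely, this also holds conditionally on $(\bZ,X^*)$. Using the definition $\|D\|_{\psi_2}=\inf\{s>0:\bbE e^{D^2/s^2}\le 2\}$, we get $\bbE[e^{D^2/s^2}\mid\bZ,X^*]\le e^{b^2/s^2}$. This is at most $2$ once $s=b/\sqrt{\log 2}$. Hence
\[
\|D\|_{\psi_2|\bZ,X^*}\le \frac{2LG}{\mu n\sqrt{\log 2}} .
\]

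\emph{Main obstacle.} The only delicate point is Step 1, specifically the use of the subgradient optimality condition under a possible constraint $\theta\in\Theta$ and a nonsmooth $r$. I will interpret $\partial Q$ as including the normal cone of $\Theta$, so that $0\in\partial Q$ at the minimizer. Alternatively, I can avoid subgradients altogether by using the quadratic-growth consequence of $\mu$-strong convexity at a minimizer, $Q(\theta)-Q(\hat\theta)\ge\frac\mu2\|\theta-\hat\theta\|^2$ for $\theta\in\Theta$.

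I also need (A3) to be applicable at the pair $(\bZ^{(1)},\bZ^{(2)})$ evaluated at the points $\widetilde Z,\widetilde Z'$, which are themselves members of the training sets. The assumption is stated almost surely for $Z\sim\cP$, and $\widetilde Z,\widetilde Z'$ are marginally distributed as $\cP$. The potential dependence between $\hat\theta_1$ and $\widetilde Z$ is handled because (A3) is quantified over every pair of data sets. I will state this reading explicitly.
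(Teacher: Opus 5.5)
Your proposal is correct and follows essentially the same route as the paper. Both add the two strong-convexity/optimality inequalities so that only the swapped-point losses survive, use (A3) to get $\|\hat\theta_1-\hat\theta_2\|\le 2G/(\mu n)$, transfer this to predictions via (A4), and read both the $\psi_2$ and second-moment bounds off the resulting almost-sure bound. Your explicit remarks on the constrained subgradient optimality condition and on applying (A3) at points that lie in the training sets make precise details the paper leaves implicit.
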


\begin{corollary}[Ridge regression]
\label{cor:ridge_stability}
Suppose the base learner $H$ is ridge regression with regularization parameter $\lambda$. Formally,
$$
H(\bZ_F)(X_F^*):=X_F^{*\top}\widehat\beta_F(\bZ_F),
$$
where  $\bZ_F:=(Z_{1,F},\ldots,Z_{n,F})$, $Z_{i,F}:=(X_{i,F},Y_i)$, and
\[
\widehat\beta_F(\bZ_F)= \arg\min_{\beta\in\mathbb R^{|F|}} \left\{\frac1n\sum_{i=1}^n
(Y_i-X_{i,F}^{\top}\beta)^2+\lambda\|\beta\|_2^2\right\}.
\]
If $|X_j|\le R$ for every $j\in[M]$, $|Y|\le B$, then, uniformly over $F\in\mathcal F_{n,m}$, $H$ satisfies Assumption~\ref{ass:stb} with
\[
\stb(n)
=
\frac{2R^2Bd_{n,m}}{\sqrt{\log 2}\lambda n}
\left(
1+R\sqrt{\frac{d_{n,m}}{\lambda}}
\right).
\]
\end{corollary}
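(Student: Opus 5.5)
Since the statement concerns ridge regression, I will derive it from Proposition~\ref{prop:erm_stability}. I apply that proposition separately for each fixed $F\in\mathcal F_{n,m}$, with parameter $\theta=\beta\in\bbR^{|F|}$, per-sample loss $\ell(\beta;Z)=(Y-X_F^\top\beta)^2$, penalty $r(\beta)=\lambda\|\beta\|_2^2$, and prediction map $h(\beta,X^*)=X_F^{*\top}\beta$. The constants $L,G,\mu$ will be made uniform over $|F|\le d_{n,m}$, so the bound holds uniformly in $F$.

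First I verify (A1)--(A2). The empirical squared loss is convex, so $Q$ is $2\lambda$-strongly convex, giving $\mu=2\lambda$. The minimizer exists and is unique in closed form.

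Next I need a priori bounds on the iterates, which I will obtain by comparing with $\beta=0$. This gives $\lambda\|\hat\beta_F\|_2^2\le Q(\hat\beta_F)\le Q(0)=\frac1n\sum Y_i^2\le B^2$, so $\|\hat\beta_F\|_2\le B/\sqrt\lambda$ for every training set. Since $|X_j|\le R$, we have $\|X_F\|_2\le R\sqrt{|F|}\le R\sqrt{d_{n,m}}$.

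These bounds give (A4) with $L=R\sqrt{d_{n,m}}$ by Cauchy--Schwarz. For (A3), I write $|\ell(\beta;Z)-\ell(\beta';Z)| = |X_F^\top(\beta'-\beta)|\,|2Y-X_F^\top(\beta+\beta')|$. This yields $G\le R\sqrt{d_{n,m}}\,(2B+2R\sqrt{d_{n,m}}\,B/\sqrt\lambda)=2R B\sqrt{d_{n,m}}(1+R\sqrt{d_{n,m}/\lambda})$. The restriction in (A3) to $\beta,\beta'$ being ERM outputs is exactly what allows the a priori norm bound to be used here.

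Plugging in, $\frac{2LG}{\mu n\sqrt{\log 2}}=\frac{2\cdot R\sqrt{d}\cdot 2RB\sqrt d(1+R\sqrt{d/\lambda})}{2\lambda n\sqrt{\log2}}=\frac{2R^2Bd(1+R\sqrt{d/\lambda})}{\sqrt{\log 2}\,\lambda n}$, which matches the claimed $\stb(n)$. The second-moment bound is $\frac{4L^2G^2}{\mu^2n^2}=\frac{(2LG)^2}{(\mu n)^2}$. This is at most $\stb^2(n)$ because $\sqrt{\log 2}<1$.

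The only delicate point is bookkeeping: I must make sure the Lipschitz constant $G$ is computed only along ERM outputs, using the data-independent bound on $\|\hat\beta_F\|$, and that all constants depend on $F$ only through $|F|\le d_{n,m}$. With those two checks, the uniformity over $\mathcal F_{n,m}$ is immediate.
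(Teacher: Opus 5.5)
Your proposal is correct and follows the paper's route. You verify (A1)--(A4) of Proposition~\ref{prop:erm_stability} with $\mu=2\lambda$, use the a priori bound $\lambda\|\widehat\beta_F\|_2^2\le B^2$ from comparison with $\beta=0$, take Lipschitz constants that depend on $F$ only through $|F|\le d_{n,m}$, and plug in. Your choices $L=R\sqrt{d_{n,m}}$ and $G=2RB\sqrt{d_{n,m}}\,(1+R\sqrt{d_{n,m}/\lambda})$ are the internally consistent version of the paper's computation and give exactly the stated $\stb(n)$; the paper's write-up instead names the squared-norm bound $d_{n,m}R^2$ as $R_{n,m}$ and then uses it where the norm bound $R\sqrt{d_{n,m}}$ is meant.
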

\begin{remark}
    Recall that $d_{n,m}=4\max\{m,\log n\}$. When $R,\,B$ are fixed constants (bounded features and responses), the stability parameter for ridge regression $\stb(n)\lesssim \big[1+\big(\frac{m+\log n}{\lambda}\big)^{3/2}\big]/n$. Therefore, when $m$ is bounded and when $\lambda$ is appropriately chosen, $\stb(n)=O(1/n)$.
\end{remark}
\begin{corollary}[Kernel ridge regression]
\label{cor:krr_stability}
Suppose the base learner $H$ is kernel ridge regression with regularization parameter $\lambda$. Formally,
$$
H(\bZ_F)(X_F^*):=\hat{f}_F(X_F^*),
$$
where
\[
\widehat f_F=\arg\min_{f\in\mathcal H_F}\left\{
\frac1n\sum_{i=1}^n(Y_i-f(X_{i,F}))^2+\lambda\|f\|_{\mathcal H_F}^2\right\}.
\]
Suppose uniformly over $F\in\mathcal F_{n,m}$,
\[
K_F(X_F,X_F)\le \kappa^2,\qquad|Y|\le B
\]
almost surely. Then, $H$ satisfies Assumption~\ref{ass:stb} with
\[
\stb(n)=\frac{2\kappa^2B}
{\sqrt{\log2}\lambda n}
\left(
1+\frac{\kappa}{\sqrt\lambda}
\right).
\]
\end{corollary}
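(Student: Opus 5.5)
The plan is to obtain the corollary as a direct application of Proposition~\ref{prop:erm_stability}, taking the parameter space to be the RKHS itself. Fix any $F\in\mathcal F_{n,m}$ and set $\Theta=\mathcal H_F$, $\theta=f$, $\ell(f;Z)=(Y-f(X_F))^2$, $r(f)=\lambda\|f\|_{\mathcal H_F}^2$ and $h(f,X)=f(X_F)$. Then $H(\bZ_F)(X^*_F)=h(\hat f_F,X^*)$, and it remains to check (A1)--(A4) with explicit constants $\mu,G,L$. All of these constants will depend only on $\kappa,B,\lambda$ and not on $F$, which gives the uniformity over $\mathcal F_{n,m}$.

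\textbf{Step 1: (A1), (A2), (A4).} For (A1), the empirical squared loss is convex in $f$ and $\lambda\|f\|^2$ is $2\lambda$-strongly convex in the Hilbert norm, so $\mu=2\lambda$. For (A2), existence and uniqueness of the minimizer follow from strong convexity and continuity of $Q$ on a Hilbert space; alternatively one can use the representer theorem to reduce to a finite-dimensional problem. For (A4), the reproducing property and Cauchy--Schwarz give
\[
|f(X_F)-f'(X_F)|=|\langle f-f',K_F(X_F,\cdot)\rangle|\le \kappa\|f-f'\|_{\mathcal H_F},
\]
so $L=\kappa$.

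\textbf{Step 2: (A3), the main step.} The squared loss is not globally Lipschitz in $f$, so the argument must use the fact that (A3) is only required at minimizers $\hat f(\bZ)$. Comparing with $f=0$ gives
\[
\lambda\|\hat f\|^2\le Q(\hat f;\bZ)\le Q(0;\bZ)\le B^2,
\]
hence $\|\hat f\|_{\mathcal H_F}\le B/\sqrt\lambda$ and $|\hat f(X_F)|\le \kappa B/\sqrt\lambda$ for every data set. Next factor the loss difference:
\[
|(Y-f(X_F))^2-(Y-f'(X_F))^2| = |f(X_F)-f'(X_F)|\,|2Y-f(X_F)-f'(X_F)|.
\]
For two minimizers $f,f'$ this is at most $\kappa\|f-f'\|\,(2B+2\kappa B/\sqrt\lambda)$. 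Therefore (A3) holds with $G=2\kappa B(1+\kappa/\sqrt\lambda)$.

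\textbf{Step 3: plug in the constants.} Proposition~\ref{prop:erm_stability} yields a conditional $\psi_2$ bound of
\[
\frac{2LG}{\mu n\sqrt{\log 2}}=\frac{2\kappa\cdot 2\kappa B(1+\kappa/\sqrt\lambda)}{2\lambda n\sqrt{\log2}}=\frac{2\kappa^2B}{\sqrt{\log 2}\,\lambda n}\Big(1+\frac{\kappa}{\sqrt\lambda}\Big)=\stb(n).
\]
The second-moment bound is $4L^2G^2/(\mu^2n^2)=\log 2\cdot\stb^2(n)\le \stb^2(n)$. Both inequalities in Assumption~\ref{ass:stb} therefore hold uniformly over $F\in\mathcal F_{n,m}$.

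The only real obstacle is Step 2, because the squared loss is not globally Lipschitz. It is resolved by the a priori norm bound on minimizers from $Q(\hat f)\le Q(0)$. One small point to confirm is that the proposition's proof only invokes (A3) at minimizers; this holds, since (A3) is stated for $\hat\theta(\bZ),\hat\theta(\bZ')$.
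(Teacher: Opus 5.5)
Your proposal is correct and follows the paper's proof: it applies Proposition~\ref{prop:erm_stability} with $\mu=2\lambda$, with $L=\kappa$ from the reproducing property, and with $G=2\kappa B(1+\kappa/\sqrt{\lambda})$ from the a priori bound $\lambda\|\hat f_F\|_{\mathcal H_F}^2\le B^2$, which comes from comparing with $f=0$. You also check (A2) and the second-moment inequality $4L^2G^2/(\mu^2n^2)=\log 2\cdot\stb^2(n)\le\stb^2(n)$ explicitly, two details the paper leaves implicit.
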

\begin{remark}
With bounded response, kernel ridge regression has stability parameter $\stb(n)\lesssim \big[1+\big(\frac{\kappa^2}{\lambda}\big)^{3/2}\big]/n$. For the Gaussian radial basis function (RBF) kernel, the Laplace, Mat\'ern, and rational-quadratic kernels, $\kappa$ is trivially bounded as a function of the kernel parameters.
\end{remark}

\begin{proof}[Proof of Proposition~\ref{prop:erm_stability}]
Fix $\mathbf Z=(Z_1,\ldots,Z_{n-1})$ and a realization $X^*=x^*$.
For two additional observations $z,z'$, let
\[
\bZ_z=[\mathbf Z;z], \qquad \bZ_{z'}=[\mathbf Z;z'],
\]
and write
\[
\widehat\theta_z=\widehat\theta(\bZ_z), \qquad
\widehat\theta_{z'}=\widehat\theta(\bZ_{z'}).
\]
Assumption (A2) ensures that these minimizers exist. By (A1) and their
optimality,
\[
Q(\widehat\theta_{z'};\bZ_z)
\ge
Q(\widehat\theta_z;\bZ_z)
+\frac{\mu}{2}
\|\widehat\theta_z-\widehat\theta_{z'}\|^2,
\]
\[
Q(\widehat\theta_z;\bZ_{z'})
\ge
Q(\widehat\theta_{z'};\bZ_{z'})
+\frac{\mu}{2}
\|\widehat\theta_z-\widehat\theta_{z'}\|^2.
\]
Adding the two inequalities yields
\[
\begin{aligned}
\mu\|\widehat\theta_z-\widehat\theta_{z'}\|^2
\le&
Q(\widehat\theta_{z'};\bZ_z)-Q(\widehat\theta_z;\bZ_z)+Q(\widehat\theta_z;\bZ_{z'})-Q(\widehat\theta_{z'};\bZ_{z'})\\
\le&\frac1n\Big\{\ell(\widehat\theta_{z'};z)-\ell(\widehat\theta_z;z)
+\ell(\widehat\theta_z;z')-\ell(\widehat\theta_{z'};z')
\Big\}\\
\le&\frac{2G}{n}\|\widehat\theta_z-\widehat\theta_{z'}\|,
\end{aligned}
\]
where the last inequality is due to (A3), and consequently,
$\|\widehat\theta_z-\widehat\theta_{z'}\|\le \frac{2G}{\mu n}$.
Therefore, by (A4), for every test covariate $x^*$,
\[
\begin{aligned}
|H(\bZ_z)(x^*)-H(\bZ_{z'})(x^*)|
&=
|h(\widehat\theta_z,x^*)-h(\widehat\theta_{z'},x^*)|
\\
&\le
L\|\widehat\theta_z-\widehat\theta_{z'}\|
\le
\frac{2LG}{\mu n}.
\end{aligned}
\]
Now set
\[
D=
H([\mathbf Z;\widetilde Z])(X^*)
-H([\mathbf Z;\widetilde Z'])(X^*).
\]
Conditional on $\mathbf Z$ and $X^*$, the preceding bound gives $|D|\le \frac{2LG}{\mu n}$ almost surely. Therefore,
\[
\|D\|_{\psi_2\mid\mathbf Z,X^*}\le \frac{2LG}{\mu n\sqrt{\log 2}},\quad
\mathbb E\left(D^2\right)\le \frac{4L^2G^2}{\mu^2n^2}.
\]
\end{proof}

\begin{proof}[Proof of Corollary~\ref{cor:ridge_stability}]
We verify the conditions of Proposition~\ref{prop:erm_stability}. For every $F\in\mathcal F_{n,m}$, coordinate-wise boundedness gives
\[
\|X_F\|_2^2
=
\sum_{j\in F}X_j^2
\le
|F|R^2
\le
d_{n,m}R^2=:R_{n,m}.
\]
The ridge objective is continuous and $2\lambda$-strongly convex, with minimum always attained.
Moreover, given any training data $\bZ_F$,
\[
\lambda\|\widehat\beta_F(\bZ_F)\|_2^2
\le
\frac1n\sum_{i=1}^nY_i^2
\le B^2.
\]
For any two ridge empirical minimizers $\beta=\hat{\beta}_F(\bZ_F)$ and $\beta'=\hat{\beta}_F(\bZ'_F)$, the preceding
bound gives
\[
\begin{aligned}
&
|(y-x_F^\top\beta)^2-(y-x_F^\top\beta')^2|
\\
&=
|x_F^\top(\beta-\beta')|
|2y-x_F^\top\beta-x_F^\top\beta'|
\\
&\le
2R_{n,m}B
\left(
1+\frac {R_{n,m}}{\sqrt\lambda}
\right)
\|\beta-\beta'\|_2 .
\end{aligned}
\]

Furthermore, noting that the prediction map
\[
h(\beta,x_F)=x_F^\top\beta
\]
is $R_{n,m}$-Lipschitz, Proposition~\ref{prop:erm_stability} immediately implies Corollary~\ref{cor:ridge_stability}.
\end{proof}

\begin{proof}[Proof of Corollary~\ref{cor:krr_stability}]
The RKHS objective is $2\lambda$-strongly convex. By optimality,
\[
\lambda
\|\widehat f_F\|_{\mathcal H_F}^2
\le B^2.
\]
For any two kernel-ridge empirical minimizers $f$ and $g$, the preceding
bound gives
\[
\begin{aligned}
|(y-f(x_F))^2-(y-g(x_F))^2|=&|f(x_F)-g(x_F)|
|2y-f(x_F)-g(x_F)|\\
\le&2\kappa B
\left(
1+\frac{\kappa}{\sqrt\lambda}
\right)\|f-g\|_{\mathcal H_F}.
\end{aligned}
\]
Furthermore, the prediction map $h(f,x_F)=f(x_F)$ satisfies
\[
|h(f,x_F)-h(g,x_F)|
\le
\kappa\|f-g\|_{\mathcal H_F},
\]
hence invoking Proposition~\ref{prop:erm_stability} immediately gives
\[
\stb(n)
=
\frac{2\kappa^2B}
{\sqrt{\log2}\lambda n}
\left(
1+\frac{\kappa}{\sqrt\lambda}
\right).
\]
\end{proof}

\section{Proofs of Main Theoretical Results}
\subsection{Definitions and Notations}\label{sec:app_def_not}
We define all notations needed for proofs in this section, although some of them are already defined in the main paper. We begin with the definition of some essential notations.
\begin{definition}[Uniform distribution $\cU_n^S$]
    Define $\cU_n^S$ as the uniform distribution on all size $n$ subsets of set $S$. More specifically, if $U\sim \cU_n^S$, then for any $u \subset S$ with $|u|=n$, we have
    \begin{align*}
        \bbP(U=u) = \frac{1}{{|S| \choose n}}.
    \end{align*}
\end{definition}

\begin{definition}[Probability mass function for $F$]
\label{def:prob_Q} Given a feature sampling probability vector $\bq\in [0,1]^M$ with $\sum_{j=1}^M q_j > 0$, we define its associated probability mass function of subsampled feature sets as $Q_{\bq}: 2^{[M]} \rightarrow [0,1]$, satisfying
\begin{align*}
Q_{\bq} (F) = \frac{\prod_{j\in F}q_j\prod_{k\notin F}(1-q_k)}{1-\prod_{k=1}^M(1-q_k)}\ind(F\neq\emptyset).
\end{align*}
As a special case, the p.m.f. $Q_{\bq^{(t)}}$ associated with the probability vector $\bq^{(t)}$ in iteration $t$ is sometimes denoted by $Q^{(t)}$; we also denote by $Q^{(t)}_{\no j}$ the conditional distribution for $F\sim Q^{(t)}$ given that $F\not\owns j$. Formally:
\begin{align*}
Q_{\no j}^{(t)}(F) = \frac{\ind(j\notin F)Q^{(t)}(F)}{\sum_{F' \subset [M]}\ind(j \notin F')Q^{(t)}(F')}.
\end{align*}

\end{definition}
\begin{table}[!htb]
    \centering
    \begin{tabular}{c|c}
    \hline
        $\mu_{I,F}(X^*;\bZ)$ & $H(\bZ_{I,F})(X^*_{F})$\\
        $\mu^{(t)}_k(X^*;\bZ)$ & $H(\bZ_{I_k^{(t)},F_k^{(t)}})(X^*_{F_k^{(t)}})$\\
        $\mu^{(t)}(X^*;\bZ)$ &  $\frac{1}{K}\sum_{k=1}^K\mu_k^{(t)}(X^*;\bZ)$\\
        $\mu_{\no j}^{(t)}(X^*;\bZ)$ & $\frac{1}{K}\sum_{k=1}^K\mu_{\tilde{I}_k^{(t)},\tilde{F}_k^{(t)}}(X^*;\bZ)$\\
        $\mu_{-i}^{(t)}(X^*;\bZ)$ & $\frac{1}{\sum_{k=1}^K \ind(i\notin I_k^{(t)})}\sum_{k=1}^K\ind(i\notin I_k^{(t)})\mu_k^{(t)}(X^*;\bZ)$\\
        $\mu_{-i}^{(t)-j}(X^*;\bZ)$ & $\frac{1}{\sum_{k=1}^K \ind(i\notin I_k^{(t)})\ind(j\notin F_k^{(t)})}\sum_{k=1}^K\ind(i\notin I_k^{(t)})\ind(j\notin F_k^{(t)})\mu_k(X^*;\bZ)$\\
        $\mu^*(X^*;Q,\bZ)$ & $\sum_{I\subset[N], F\subset[M]}\frac{1}{{N \choose n}}\ind(|I|=n)Q(F)\mu_{I,F}(X^*;\bZ)$\\
        $\tilde{\mu}_{-i}^{(t)}(X^*;\bZ)$ & $\frac{N}{N-n}\frac{1}{K}\sum_{k=1}^K\ind(i\notin I_k^{(t)})\mu_{I_k^{(t)},F_k^{(t)}}(X^*;\bZ)$\\
        $\tilde{\mu}_{-i}^{(t)-j}(X^*;\bZ)$ & $\frac{N}{N-n}\frac{1}{\bbP_{F\sim Q}(j\notin F)}\frac{1}{K}\sum_{k=1}^K\ind(i\notin I_k^{(t)})\ind(j\notin F_k^{(t)})\mu_{I_k^{(t)},F_k^{(t)}}(X^*;\bZ)$\\
    \hline
    \end{tabular}
    \caption{List of notations for predictions. In the fourth row, $\tilde{I_k}^{(t)}$'s are i.i.d. randomly sampled indices with the same uniform distribution as  $I_k^{(t)}$'s and $\tilde{F_k}^{(t)}$'s are i.i.d. randomly sampled indices with distribution $Q_{\no j}^{(t)}$. When the epoch $t$ is clear from context, the superscript $t$ can be omitted.}
    \label{tab:notations_mu}
\end{table}

\begin{table}[!htb]
    \centering
    \begin{tabular}{|p{3cm}|p{10cm}|}
    \hline
       $h_j(Z^*, \bZ, Q)$  &  $\ell(Z^*,\mu^*(X^*;Q_{\no j},\bZ)) - \ell(Z^*,\mu^*(X^*;Q,\bZ))$\\
       \hline
        $h_j^{(t)}(Z^*,\bZ)$ & $\ell(Z^*, \mu^{(t)}_{\no j}(X^*;\bZ)) - \ell(Z^*, \mu^{(t)}(X^*,\bZ))$\\
        \hline
        $h_j(Z^*,\bZ)$ & $h_j(Z^*, \bZ, Q^{*(T)}(\bZ))$ where $Q^{*(T)}(\bZ)$ is the final-iteration feature sampling mass function if applying Algorithm~\ref{algo:loco_adaptive_combinatorial} on $\bZ$. \\
        \hline
        $h_{j,N}(Z^*)$ & $\bbE_{\bZ}(h_j(Z^*, \bZ)|Z^*)$ where $Z^*$ is independent from $\bZ$, $\bZ = (Z_1,\dots,Z_N)$ with $Z_i\overset{i.i.d.}{\sim}\cP$. We sometimes omit the subscript $N$ when it is clear from the context.\\
        \hline
        $\til{h}_j(Z^*, \bZ)$  &  $h_j(Z^*,\bZ) - h_{j,N}(Z^*)$\\
        \hline
        $\hat{h}_j^{(t)}(Z_i,\bZ_{\no i,:})$ & $\hat{\Delta}^{(t)}_j(Z_i)$\\
        \hline
    $\hat{h}_j^{(t)}(Z_i,\bZ_{\no i,:}, Q)$& The LOCO-LOO score $\hat{\Delta}^{(t)}_j(Z_i)$ obtained by running Algorithms~\ref{algo:adamp_training} and \ref{algo:adamp_FI} sequentially, enforcing feature sampling distribution $Q$\\
    \hline
    \end{tabular}
    \caption{List of notations for feature importance functions.}
    \label{tab:notations_h}
\end{table}
For training data $\bZ=(\bX,\bY)$, we let $\bZ_{I,F}=(\bX_{I,F},\bY_I)$, $\bZ_{:,\no j}=(\bX_{:,\no j},\bY)$, $\bZ_{\no i,:} = (\bX_{\no i,:}, \bY_{\no i})$, and $\bZ_{\no i,\no j} = (\bX_{\no i,\no j}, \bY_{\no i})$. We summarize notations for predictions and notations for feature importance functions in Tables~\ref{tab:notations_mu} and \ref{tab:notations_h} respectively. In Table~\ref{tab:notations_mu}, for any new data point $Z^* = (X^*, Y^*)$, we define $\mu_{I,F}(X^*;\bZ)$ as the prediction of the base learner $H$ trained on minipatch $(I,F)$ at $X^*$. In the $t$th iteration, let $\mu^{(t)}_k(X^*;\bZ)$ be the prediction of the model trained on the $k$th minipatch $(I_k^{(t)},F_k^{(t)})$, and $\mu^{(t)}(X^*;\bZ)$ be the ensembled prediction. $\mu_{\no j}^{(t)}(X^*;\bZ)$ is the prediction of the reduced model without feature $j$. $\mu_{-i}^{(t)}(X^*;\bZ)$ and $\mu_{-i}^{(t)-j}(X^*;\bZ)$ are the LOO prediction and LOCO-LOO prediction respectively. $\mu^*(X^*;Q,\bZ)$ is the expectation taken over random subsample of minipatches under feature sampling distribution $Q$. We define $\tilde{\mu}_{-i}^{(t)}(X^*;\bZ)$ and $\tilde{\mu}_{-i}^{(t)-j}(X^*;\bZ)$ for technical purposes in the proof. In Table~\ref{tab:notations_h}, $h_j(Z^*, \bZ, Q)$ is the feature importance of feature $j$ evaluated at $X^*$, under $Q$, and $h_j^{(t)}(Z^*,\bZ)$ denotes the empirical counterpart in the $t$th iteration with finite sample. $h_j(Z^*,\bZ)$ denotes the feature importance when the feature sampling distribution is set to $Q^{*(T)}(\bZ)$.  In the main paper, we let $h_{j}(Z^*)=\bbE_{\bZ}(h_j(Z^*, \bZ|Z^*)$, while in the proof, we denote $\bbE_{\bZ}(h_j(Z^*, \bZ|Z^*)$ by $h_{j,N}(Z^*)$. $\hat{h}_j^{(t)}(Z_i,\bZ_{\no i,:})$ is the LOO feature occlusion in the $t$th iteration, and $\hat{h}_j^{(t)}(Z_i,\bZ_{\no i,:}, Q)$ denotes the same feature occlusion when enforcing feature sampling distribution $Q$. We also define $\til{h}_j(Z^*, \bZ)$ for technical purposes.

Recall the definition of the feature with the lowest estimated LOCO importance in the $t$th iteration in the main paper, i.e., $j_{\min}^{(t)} = \argmin_{j}\bar{\Delta}_j^{(t)}$. For any observation subset $S\subset[N]$, denote by $\{q^{*(\del S,t)}\}_{t=1}^T$ the output of Algorithm~\ref{algo:loco_adaptive_combinatorial} when applied to $\bZ_{\del S,:}$, and let their corresponding feature sampling probability mass function be $Q^{*(\del S,t)}$. In addition, let $\bar{\Delta}_j^{*(\del S,t)} = \frac{1}{N-|S|}\sum_{l\in S^c}h_j(Z_l,\bZ_{\del {S\cup \{l\}},:},Q^{*(\del S,t)})$ be the LOCO-LOO importance score of feature $j$ in the $t$th iteration, when infinitely many randomly sampled minipatches are available, and when we leave the observations in $S$ out. Let $j_{\min}^{*(\del S,t)} = \argmin_{j}\bar{\Delta}_j^{*(\del S,t)}$, the least important feature in the $t$th iteration, when leaving observations in $S$ out.
\begin{algorithm}[!htb]
\caption{Adaptive Minipatch LOCO Inference with Combinatorially Averaged Minipatch Ensemble}
\label{algo:loco_adaptive_combinatorial}
		\noindent{\textbf{Input}}: Training pairs $\bZ=(\bX,\bY)$ with $M$ features and sample size $N$, minipatch sizes $n$, $m$; number of training iterations $T$; base learner $H$; the highest sampling probability $\delta$.

            \begin{enumerate}
            \item Initialize feature sampling probability vector $\bq^{*(1)}=(\frac{m}{M},\dots,\frac{m}{M})^\top$.
            \item Given the base learner $H$, for $1\leq j\leq M$, compute $\bar{\Delta}_j^{*(1)} = \frac{1}{N}\sum_{i=1}^Nh_j(Z_i,\bZ_{\no i,:},Q_{\bq^{*(1)}})$.
            \item For $t = 2,\dots, T$,
                \begin{enumerate}     
                    \item Apply Algorithm \ref{algo:samp_prob} to $(\bar{\Delta}_1^{*(t-1)},\dots,\bar{\Delta}_M^{*(t-1)})$ to obtain the probability vector $\bq^{*(t)}$.
                    \item For $1\leq j\leq M$, compute $\bar{\Delta}_j^{*(t)} = \frac{1}{N}\sum_{i=1}^Nh_j(Z_i,\bZ_{\no i,:},Q_{\bq^{*(t)}})$.
                \end{enumerate}
		\end{enumerate}
		\textbf{Output}: Feature sampling probability vector $\bq^{*(t)},\,1\leq t\leq T$.
\end{algorithm}

\subsection{Proof of Theorem~\ref{thm:main}}
First, recall that our target takes the following form:
$\Delta_j^{(T)} = \bbE[h_j^{(T)}(Z^*,\bZ)|\bZ,\cA_{T}]$. Since $\hat{\Delta}_j^{(T)}(Z_i) = \hat{h}_j^{(T)}(Z_i,\bZ_{\backslash i,:})$, our goal is to show \eqref{eq:CLT_goal} and \eqref{eq:var}:
\begin{equation}\label{eq:CLT_goal}
        \frac{1}{\sigma_j\sqrt{N}}\sum_{i=1}^N(\hat{h}_j^{(T)}(Z_i,\bZ_{\backslash i,:})-\Delta_j^{(T)} ) \overset{d.}{\rightarrow} N(0,1),
\end{equation}
    where $\sigma_j^2 = \mathrm{Var}(h_{j,N}(Z^*))$;
\begin{equation}\label{eq:var}
    \frac{\hat{\sigma}_j^2}{\sigma_j^2} \overset{p.}{\rightarrow} 1.
\end{equation}

\subsubsection{Proving Asymptotic Normality in \eqref{eq:CLT_goal}}\label{thm_part1}
In the $t$th iteration of Algorithm \ref{algo:loco_adaptive}, we obtain the feature sampling probability vector $\bq^{(t)}$. Throughout this proof, we denote $Q_{\bq^{(t)}}$, the probability mass function for feature subset $F$ associated with $\bq^{(t)}$ (see formal definition in Def. \ref{def:prob_Q}), by $Q^{(t)}$. In addition, within this proof, we define an auxiliary algorithm (Algorithm~\ref{algo:loco_adaptive_combinatorial}) that updates $Q$ using combinatorially averaged minipatch ensembles; or equivalently, we let $\min_{1\leq t\leq T}K_t\rightarrow \infty$. Denote the output of this algorithm using the same training data by $\bq^{*(t)},\,1\leq t\leq T$, and their corresponding probability mass function by $Q^{*(t)}$. Furthermore, we denote the output of Algorithm~\ref{algo:loco_adaptive_combinatorial} applied to $\bZ_{\no i,:}$ by $\{\bq^{*(\no i,t)}\}_{t=1}^T$, and correspondingly, we denote $Q_{\bq^{(*(\no i,t)}}$ by $Q^{*(\no i,t)}$. 

Now we are ready to decompose $\frac{1}{\sigma_j\sqrt{N}}\sum_{i=1}^N(\hat{h}_j^{(T)}(Z_i,\bZ_{\backslash i,:})-\Delta_j^{(T)})$ into several error terms. We first define the following errors:
\begin{equation}\label{eq:main_decomp_def}
    \begin{split}
        \varepsilon_{i,1} = &\hat{h}_j^{(T)}(Z_i,\bZ_{\backslash i,:}) - h_j(Z_i,\bZ_{\backslash i,:},Q^{(T)}),\\
        \varepsilon_{i,2} = &h_j(Z_i,\bZ_{\backslash i,:},Q^{(T)})-h_j(Z_i,\bZ_{\backslash i,:},Q^{*(T)}),\\
        \varepsilon_{i,3} = &h_j(Z_i,\bZ_{\backslash i,:},Q^{*(T)})-h_j(Z_i,\bZ_{\backslash i,:},Q^{*(\no i, T)}),\\
        \varepsilon_{i,4} = &\til{h}_j(Z_i,\bZ_{\backslash i,:})-\bbE[\til{h}_j(Z^*,\bZ_{\backslash i,:})|\bZ_{\backslash i,:}],\\
        \varepsilon_{i,5} = &\bbE[h_j(Z_i,\bZ_{\backslash i,:},Q^{*(\no i, T)})|\bZ_{\backslash i,:}]-\bbE[h_j(Z^*,\bZ,Q^{*(T)})|\bZ],\\
        \varepsilon_{6} = 
        &\frac{\sqrt{N}}{\sigma_j}\left(\bbE[h_j(Z^*,\bZ,Q^{*(T)})|\bZ]-\bbE[h_j(Z^*,\bZ,Q^{(T)})|\bZ]\right),\\
        \varepsilon_{7} = 
        &\frac{\sqrt{N}}{\sigma_j}\left(\bbE[h_j(Z^*,\bZ,Q^{(T)})|\bZ]-\bbE[h_j^{(T)}(Z^*,\bZ)|\bZ]\right),
    \end{split}
\end{equation}
where $\til{h}_j(Z_i,\bZ_{\no i,:})$ is as defined in Table~\ref{tab:notations_h}:
$$
\til{h}_j(Z_i,\bZ_{\no i,:}) = h_j(Z_i,\bZ_{\no i,:},Q^{*(\no i,T)})-h_{j,N}(Z_i).
$$
Also define $\varepsilon_k = \frac{1}{\sigma_j\sqrt{N}}\sum_{i=1}^N\varepsilon_{i,k}$, for $1\leq k\leq 5$. With the notations above, we can then write
\begin{equation}
\label{eq:thm1_decomp}
    \begin{split}
        &\frac{1}{\sigma_j\sqrt{N}}\sum_{i=1}^N(\hat{h}_j^{(T)}(Z_i,\bZ_{\backslash i,:})-\Delta_j^{(T)})\\
        =&\frac{1}{\sigma_j\sqrt{N}}\sum_{i=1}^N(h_{j,N}(Z_i) - \bbE[h_{j,N}(Z_i)]) + \sum_{k=1}^7\varepsilon_k.
    \end{split}
\end{equation}
Note that $h_{j,N}(Z_i)$ was denoted by $h_j(Z_i)$ in the main paper. In this decomposition, $\varepsilon_1$, $\varepsilon_7$ are the errors in the feature importance scores arising from using finite average $\mu$ of minipatch predictors to approximate the combinatorial average $\mu^*$; $\varepsilon_2$ and $\varepsilon_6$ characterize the difference in the feature importance scores if the adaptive sampling distribution $Q^{(T)}$ is replaced by $Q^{*(T)}$, the updated sampling distribution obtained using the combinatorial average of minipatch predictors; $\varepsilon_3$ and $\varepsilon_5$ characterize the leave-one-out perturbation in the importance scores; while $\varepsilon_4$ is a double-centered, Hoeffding-degenerate remainder, and this form has been shown to be controlled by the stability of the feature importance score w.r.t. replacing one training sample~\citep{bayle2020cross}.

While for the first term in \eqref{eq:thm1_decomp}, Assumption~\ref{ass:3moment} implies that
$$
\frac{1}{(\sigma_j\sqrt{N})^3}\sum_{i=1}^N\bbE|h_{j,N}(Z_i) - \bbE[h_{j,N}(Z_i)]|^3\leq \frac{C}{\sqrt{N}}\rightarrow 0,
$$
and hence the Lyapunov's condition holds. Therefore, $\frac{1}{\sigma_j\sqrt{N}}\sum_{i=1}^N(h_{j,N}(Z_i) - \bbE[h_{j,N}(Z_i)])\overset{d.}{\rightarrow} N(0,1)$. In the following, we will show that $\sum_{k=1}^7\varepsilon_k\overset{p.}{\rightarrow}0$. 

\paragraph{Controlling $\varepsilon_1,\,\varepsilon_7$.} We first note that $\varepsilon_1$ and $\varepsilon_7$ arise from the finite sampling of minipatches. The following Lemma~\ref{lem:epsilon1_7} shows probabilistic bounds for these two error terms, conditioning on the data $\bZ$. Since $Q^{(T)}$ is a function of $\bZ$, the following statements also condition on $Q^{(T)}$, and the only randomness arises from the random sampling of minipatches at the $T$th iteration.
\begin{lem}\label{lem:epsilon1_7}
    Suppose that Assumptions~\ref{ass:adp_bounded} and \ref{ass:adp_n_minipatches} hold. There exist constants $c,\,C>0$, such that with probability at least $1-(MN)^{-c}$: for any $1\leq j\leq M$, $1\leq i\leq N$, and $1\leq t\leq MN$, the following holds: 
\begin{align}
     &\frac{1}{N}\sum_{i=1}^N\left|\hat{h}_j^{(t)}(Z_i,\bZ_{\backslash i,:}) - h_j(Z_i,\bZ_{\backslash i,:},Q^{(t)})\right|\notag\\
     &\quad\leq C\Big(\sqrt{\frac{q_j^{(t)}(\log M+\log N)}{K}}+\frac{\log M+\log N}{K}\Big),\label{eq:epsilon1}\\
     &\Big|h_j^{(t)}(Z^*,\bZ)-h_j(Z^*,\bZ,Q^{(t)})\Big|\leq C\Big(\sqrt{\frac{q_j^{(t)}(\log M+\log N)}{K}}+\frac{\log M+\log N}{K}\Big). \label{eq:epsilon7}
\end{align}
\end{lem}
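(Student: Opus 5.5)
Conditional on $\bZ$ (and hence on $Q^{(t)}$), the only randomness left is the i.i.d. draw of the $K$ minipatches $(I_k,F_k)$ in iteration $t$. So the plan is to reduce everything to concentration of a few empirical averages of bounded i.i.d. variables, and then take a union bound over $(i,j,t)$.

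\textbf{Step 1: reduce the losses to the predictors.} Under the squared loss and Assumption~\ref{ass:adp_bounded}, all predictions and responses lie in $[-C,C]$. The map $\hat y\mapsto (y-\hat y)^2$ is therefore $4C$-Lipschitz, so
\[
|\hat h_j^{(t)}(Z_i,\bZ_{\no i,:})-h_j(Z_i,\bZ_{\no i,:},Q^{(t)})|\le 4C\big(|\mu_{-i}^{-j}(X_i)-\mu^*(X_i;Q^{(t)}_{\no j},\bZ_{\no i,:})|+|\mu_{-i}(X_i)-\mu^*(X_i;Q^{(t)},\bZ_{\no i,:})|\big).
\]
A key identity is that $\mu^*(X_i;Q,\bZ_{\no i,:})$ equals the conditional expectation of $\mu_{I,F}(X_i;\bZ)$ given $i\notin I$, because $I$ uniform on $[N]$ conditioned on excluding $i$ is uniform on $[N]\setminus\{i\}$. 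Likewise, conditioning additionally on $j\notin F$ turns $Q$ into $Q_{\no j}$. The analogous reduction applies to \eqref{eq:epsilon7}, with $X^*$ in place of $X_i$ and no exclusion of $i$.

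\textbf{Step 2: write each predictor as a ratio and concentrate.} Each of $\mu_{-i}$ and $\mu_{-i}^{-j}$ is a ratio $A/B$, where
\[
A=\frac1K\sum_k \ind(i\notin I_k)\,\ind(j\notin F_k)\,\mu_k(X_i),\qquad B=\frac1K\sum_k \ind(i\notin I_k)\,\ind(j\notin F_k)
\]
(dropping the $j$-indicator for $\mu_{-i}$). Both are averages of i.i.d. bounded terms with means $pb\,\mu^*$ and $pb$, where $p=(N-n)/N\ge 1-c_1$ and $b=\bbP_{Q}(j\notin F)$. Rather than bounding the ratio directly, I would use the cleaner decomposition
\[
\frac AB-\mu^*=\frac{1}{B}\cdot\frac1K\sum_k \ind(i\notin I_k)\,\ind(j\notin F_k)\big(\mu_k(X_i)-\mu^*\big).
\]
The summand is centered, bounded by $2C$, and has variance $O(pb)$. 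Hence Bernstein's inequality gives the numerator as $O\big(\sqrt{pb\,L/K}+L/K\big)$ with $L=\log M+\log N$, with failure probability $(MN)^{-c'}$. Bernstein applied to $B$ gives $B\ge pb/2$ once $K\gg L/(pb)$; this holds under Assumption~\ref{ass:adp_n_minipatches}, since $b\ge 1-\delta\ge 1-c_1$ for the $j$-excluded term and $b=1$ otherwise. Dividing yields an error $O\big(\sqrt{L/K}+L/K\big)$ for each predictor individually.

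\textbf{Step 3: obtain the $\sqrt{q_j}$ factor.} Step~2 alone only gives $\sqrt{L/K}$, so the improved rate must come from a cancellation between the two predictors. I would exploit this in the difference of losses:
\[
h=(\mu_{-i}-\mu_{-i}^{-j})\,(2Y-\mu_{-i}-\mu_{-i}^{-j}).
\]
The estimation error of $\mu_{-i}-\mu_{-i}^{-j}$ is driven by the summands with $j\in F_k$, which occur with probability $q_j$. Concretely, I would write
\[
\mu_{-i}=\frac{B_0}{B_0+B_1}\,\mu^{(0)}+\frac{B_1}{B_0+B_1}\,\mu^{(1)},
\]
splitting the minipatches according to whether $j\notin F_k$ (weights $B_0$, predictor $\mu^{(0)}=\mu_{-i}^{-j}$) or $j\in F_k$ (weights $B_1$, predictor $\mu^{(1)}$). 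The expected weight $\bbE B_1$ is of order $q_j$ up to the nonempty-set conditioning. Then $\mu_{-i}-\mu_{-i}^{-j}=\frac{B_1}{B_0+B_1}(\mu^{(1)}-\mu^{(0)})$, and Bernstein on the centered $j\in F_k$ terms gives fluctuations of order $\sqrt{q_jL/K}+L/K$. Meanwhile, the second factor's fluctuation $\sqrt{L/K}$ multiplies a quantity of size $O(q_j)$ plus a term of order $\sqrt{q_jL/K}$, which is dominated. The extra exponentials from conditioning on $F\neq\emptyset$ are controlled by $\delta/(1-e^{-m})\le c_1$.

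\textbf{Step 4: union bound.} Take a union bound over $i\le N$, $j\le M$, $t\le MN$, and the finitely many Bernstein events. This costs a polynomial in $MN$ and is absorbed by choosing the Bernstein deviation as $C L$ with $C$ large, giving probability $1-(MN)^{-c}$. The averaged bound \eqref{eq:epsilon1} then follows from the uniform-in-$i$ bound. For \eqref{eq:epsilon7}, the finite reduced ensemble $\mu_{\no j}^{(t)}$ uses fresh i.i.d. draws from $Q_{\no j}$, which I would couple with the $j\notin F_k$ subfamily so that the same cancellation applies.

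I expect Step~3 to be the main obstacle: the careful variance-sensitive ratio decomposition needed to extract the $\sqrt{q_j}$ rate rather than the crude $\sqrt{L/K}$.
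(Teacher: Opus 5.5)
Your argument is correct, and it is more self-contained than the paper's. The paper does not prove \eqref{eq:epsilon1}; it cites Lemma S8 of \cite{Liu2026model}. For \eqref{eq:epsilon7} it only remarks that $h_j^{(t)}(Z^*,\bZ)-h_j(Z^*,\bZ,Q^{(t)})$ has ``the same form'', with $\mu^{(t)}$ and $\mu^{(t)}_{\no j}$ averaging minipatches drawn from $\cU_n^{[N]}\times Q$ and $\cU_n^{[N]}\times Q_{\no j}$, and asserts that the Lemma S8 argument carries over.

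You instead give the mechanism explicitly:
\begin{itemize}
\item conditional Bernstein bounds for the i.i.d.\ minipatch draws;
\item a centred ratio decomposition with a high-probability lower bound on the denominator;
\item the product form $\hat h=(\mu_{-i}-\mu_{-i}^{-j})(2Y_i-\mu_{-i}-\mu_{-i}^{-j})$. Here the first factor equals $(A_1-B_1\mu^{(0)})/(B_0+B_1)$ and involves only the $j\in F_k$ summands, which gives $\sqrt{q_jL/K}+L/K$. The $O(\sqrt{L/K})$ error of the second factor is multiplied by $|\mu^*(\cdot;Q)-\mu^*(\cdot;Q_{\no j})|\le Cq_j$.
\end{itemize}
This shows that the $\sqrt{q_j}$ rate comes entirely from the two predictors sharing the same minipatches.

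That is why your coupling remark for \eqref{eq:epsilon7} matters and should be made explicit. Suppose the $(\tilde I_k,\tilde F_k)$ defining $\mu^{(t)}_{\no j}$ were drawn independently of $(I_k,F_k)$. Then $\mu^{(t)}-\mu^{(t)}_{\no j}$ carries two independent Monte Carlo errors of order $K^{-1/2}$ that do not cancel. In that case \eqref{eq:epsilon7} with the $\sqrt{q_j}$ factor fails in general, for instance at $t=1$, where $q_j=m/M$ and $q_j(\log M+\log N)\to 0$ in high dimensions. Only $C\sqrt{(\log M+\log N)/K}$ survives. That weaker bound is all the proof of Theorem~\ref{thm:main} actually needs for $\varepsilon_7$, given Assumption~\ref{ass:adp_n_minipatches}.

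So state the coupling as part of the definition of $\mu^{(t)}_{\no j}$. This is consistent with Section~\ref{sec:disc_target}, where the reduced model ``aggregates the trained minipatch predictors that exclude feature $j$''. Make it concrete:
\begin{itemize}
\item keep the $K_0=\#\{k:j\notin F_k\}$ trained minipatches and add $K-K_0$ fresh $\cU_n^{[N]}\times Q_{\no j}$ draws;
\item these $K$ draws are i.i.d.\ with the required law;
\item then $\mu^{(t)}-\mu^{(t)}_{\no j}=\frac{1}{K}\big(\sum_{k:\,j\in F_k}\mu_k-\sum_{\mathrm{fresh}}\tilde\mu_k\big)$, a difference of two sums of roughly $\bbP_Q(j\in F)\,K$ bounded terms whose means differ by exactly $\mu^*(\cdot;Q)-\mu^*(\cdot;Q_{\no j})$, so Bernstein gives the rate directly.
\end{itemize}

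Three small fixes:
\begin{itemize}
\item The correct bound is $\bbP_Q(j\notin F)=1-q_j/(1-\prod_k(1-q_k))\ge 1-\delta/(1-e^{-m})\ge 1-c_1$, not $\ge 1-\delta$. This bound, like $(N-n)/N\ge 1-c_1$, uses Assumption~\ref{ass:adp_prob_stb}, which the lemma does not list; say so.
\item Condition on $\bZ$ together with the minipatches of iterations $1,\dots,t-1$, since $Q^{(t)}$ is not a function of $\bZ$ alone.
\item Work with $A_1-B_1\mu^{(0)}$ rather than $\mu^{(1)}=A_1/B_1$, since $B_1$ may vanish when $q_jK$ is small.
\end{itemize}
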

Eq. \eqref{eq:epsilon1} in Lemma~\ref{lem:epsilon1_7} has been shown before in \cite{Liu2026model} (see Lemma S8 therein). For completeness, we still present both \eqref{eq:epsilon1} and \eqref{eq:epsilon7} in Lemma~\ref{lem:epsilon1_7} and include a brief proof of \eqref{eq:epsilon7} in Section~\ref{sec:proof_technical}.
Assumption~\ref{ass:adp_n_minipatches} together with Lemma~\ref{lem:epsilon1_7} imply that $\varepsilon_1,\,\varepsilon_7\overset{p.}{\rightarrow} 0$.

\paragraph{Controlling $\varepsilon_2,\,\varepsilon_3,\,\varepsilon_6$.} We note that $\varepsilon_2$, $\varepsilon_3$, and $\varepsilon_6$ arise from the differences between feature sampling distributions $Q^{(T)},\,Q^{*(T)},\,Q^{*(\no i,t)}$. The following three lemmas show that (i) the LOCO importance scores associated with similar feature sampling distributions will also be similar; (ii) $Q^{(T)}$ and $Q^{*(T)}$ are close; (iii) $Q^{*(T)}$ and $Q^{*(\no i,t)}$ are close.
\begin{lem}[Lemma S9 in \cite{Liu2026model}]\label{lem:diffQ_err}
Under Assumption~\ref{ass:adp_bounded}, for any training data $\bZ$ and test data $Z^*$, given two feature sampling probability vectors $\bq^{(1)}$ and $\bq^{(2)}$, we have
\begin{equation*}
\begin{split}
    \left|h_j(Z^*,\bZ,Q_{\bq^{(1)}}) -h_j(Z^*,\bZ,Q_{\bq^{(2)}})\right|\leq C\Big(\big(q_j^{(1)}+q_j^{(2)}\big)\big\|q^{(1)}-q^{(2)}\big\|_1 + \big|q_j^{(1)} - q_j^{(2)}\big|\Big),
\end{split}
\end{equation*}
 where $C>0$ is a constant, $Q_{\bq^(1)},\,Q_{\bq^(2)}:2^{[M]}\rightarrow [0,1]$ are the feature sampling distribution functions corresponding to $\bq^{(1)}$ and $\bq^{(2)}$, as defined in Definition~\ref{def:prob_Q}.
\end{lem}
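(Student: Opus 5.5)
The plan is to rewrite $h_j$ so that the factor $q_j$ appears explicitly, then perturb each factor separately. Write $Q=Q_{\bq}$ and let $Q_j$ denote the law of $F\sim Q$ conditioned on $j\in F$, so that $Q_{\no j}$ is the law conditioned on $j\notin F$. Let $p_j(\bq)=\bbP_{F\sim Q}(j\in F)=q_j/(1-\prod_k(1-q_k))$. Since $\mu^*(X^*;\cdot,\bZ)$ is linear in the sampling measure, the mixture identity $Q=p_jQ_j+(1-p_j)Q_{\no j}$ gives
\begin{equation*}
\mu^*(X^*;Q,\bZ)-\mu^*(X^*;Q_{\no j},\bZ)=p_j(\bq)\,D(\bq),\qquad D(\bq):=\mu^*(X^*;Q_j,\bZ)-\mu^*(X^*;Q_{\no j},\bZ).
\end{equation*}
With the squared loss, $a^2-b^2=(a-b)(a+b)$, so
\begin{equation*}
h_j(Z^*,\bZ,Q)=p_j(\bq)\,D(\bq)\,A(\bq),\qquad A(\bq):=2Y^*-\mu^*(X^*;Q,\bZ)-\mu^*(X^*;Q_{\no j},\bZ).
\end{equation*}
By Assumption~\ref{ass:adp_bounded}, $|D|\le 2C$ and $|A|\le 4C$.

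Next I would telescope:
\begin{equation*}
|h_j^{(1)}-h_j^{(2)}|\le |p_j^{(1)}-p_j^{(2)}|\,|D^{(1)}A^{(1)}|+p_j^{(2)}\big(|A^{(1)}|\,|D^{(1)}-D^{(2)}|+|D^{(2)}|\,|A^{(1)}-A^{(2)}|\big).
\end{equation*}
The differences $|D^{(1)}-D^{(2)}|$ and $|A^{(1)}-A^{(2)}|$ are controlled by boundedness: for any two laws $P,P'$ on $2^{[M]}$, $|\mu^*(X^*;P,\bZ)-\mu^*(X^*;P',\bZ)|\le 2C\,\mathrm{TV}(P,P')$. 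So it suffices to show that $\mathrm{TV}(Q_{\bq^{(1)}},Q_{\bq^{(2)}})$, $\mathrm{TV}(Q^{(1)}_j,Q^{(2)}_j)$ and $\mathrm{TV}(Q^{(1)}_{\no j},Q^{(2)}_{\no j})$ are all $\lesssim\|\bq^{(1)}-\bq^{(2)}\|_1$. Then the second group of terms is $\lesssim p_j^{(2)}\|\bq^{(1)}-\bq^{(2)}\|_1\lesssim q_j^{(2)}\|\bq^{(1)}-\bq^{(2)}\|_1$.

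For these TV bounds I would use the standard coupling of independent Bernoullis, which gives $\mathrm{TV}$ between product measures $\le\sum_k|q^{(1)}_k-q^{(2)}_k|$. Each of the three laws is such a product measure (over all coordinates, or over $k\ne j$ with coordinate $j$ fixed), conditioned on $F\neq\emptyset$. Conditioning on an event $E$ inflates TV by at most a factor of order $1/\min\{P(E),P'(E)\}$, so each normalizer must be bounded below by a constant:
\begin{itemize}
\item For $Q$, $1-\prod_k(1-q_k)\ge 1-e^{-m}\ge 1-e^{-1}$, using $\sum_k q_k=m$.
\item For $Q_j$, the conditioning event $j\in F$ already forces $F\neq\emptyset$; $Q_j$ is exactly the product law on $k\neq j$ with coordinate $j$ set to one, so no normalizer is needed.
\item For $Q_{\no j}$, the normalizer is $1-\prod_{k\ne j}(1-q_k)\ge 1-e^{-(m-q_j)}$. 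This is bounded below because $q_j\le\delta\le c_1<1$ by Algorithm~\ref{algo:samp_prob} and Assumption~\ref{ass:adp_prob_stb}, and $m\ge1$.
\end{itemize}

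Finally, $|p_j^{(1)}-p_j^{(2)}|\lesssim |q_j^{(1)}-q_j^{(2)}|+q_j^{(1)}\|\bq^{(1)}-\bq^{(2)}\|_1$. This follows by differentiating $q_j/(1-\prod_k(1-q_k))$: the numerator contributes the first term, and the change in the lower-bounded normalizer is at most $\sum_k|q^{(1)}_k-q^{(2)}_k|$, multiplied by $q_j$. Combining the pieces gives the claimed bound $C\big((q_j^{(1)}+q_j^{(2)})\|\bq^{(1)}-\bq^{(2)}\|_1+|q_j^{(1)}-q_j^{(2)}|\big)$.

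I expect the main obstacle to be the $Q_{\no j}$ step: its normalizer must be bounded away from zero, which relies on the cap $q_j\le\delta<1$. The rest is routine bookkeeping.
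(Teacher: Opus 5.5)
Your argument is correct. It differs from the paper mainly in where the factor $q_j$ is extracted.

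The paper imports this lemma from \cite{Liu2026model}. Its own machinery for bounds of this type writes $h_j$ as in \eqref{eq:h_j} via the signed weights $\delta_j(F;Q)=Q(F)\big(\ind(j\in F)-\omega_j(Q)\ind(j\notin F)\big)$. It then controls $\sum_F g(F)\big(\delta_j(F;Q^{(1)})-\delta_j(F;Q^{(2)})\big)$ in Lemma~\ref{lem:expct_diff_q} using two ingredients:
\begin{itemize}
\item the coupling of Lemma~\ref{lem:F1_F2_diff}, whose $j$-specific estimates \eqref{eq:F1F2_diff3}--\eqref{eq:F1F2_diff4} produce the $q_j\|\bq^{(1)}-\bq^{(2)}\|_1+|q_j^{(1)}-q_j^{(2)}|$ structure;
\item the separate bound on $|\omega_j(Q^{(1)})-\omega_j(Q^{(2)})|$ in Lemma~\ref{lem:omega_j_diff}.
\end{itemize}

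Since $\sum_F g(F)\delta_j(F;Q)=p_j\big(\bbE_{Q_j}g-\bbE_{Q_{\no j}}g\big)$, your mixture identity is algebraically the same decomposition; the same split appears in \eqref{eq:delta_expt}. The difference is that you pull out the scalar $p_j$ first. This confines all $q_j$-sensitivity to one explicit function handled by calculus. The measure differences then need only crude total-variation bounds of order $\|\bq^{(1)}-\bq^{(2)}\|_1$, obtained from the coordinatewise Bernoulli coupling plus the conditioning inequality.

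What your route buys:
\begin{itemize}
\item It avoids the refined coupling and Lemma~\ref{lem:omega_j_diff}.
\item It avoids the side condition $\|\bq^{(1)}-\bq^{(2)}\|_\infty\le\frac12$ required by Lemma~\ref{lem:F1_F2_diff}.
\item It extends verbatim to an arbitrary bounded $g$, which is the form the paper reuses in Lemmas~\ref{lem:Delta_leaveSout} and \ref{lem:cond_mean_infK}.
\end{itemize}

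The obstacle you flagged is the right one. The identity $1-q_j-\prod_k(1-q_k)=(1-q_j)\big(1-\prod_{k\neq j}(1-q_k)\big)$ shows your $Q_{\no j}$ normalizer is, up to the factor $1-q_j$, the denominator of $\omega_j(Q)$. The paper likewise bounds it below using $q_j\le\delta$ and Assumption~\ref{ass:adp_prob_stb}.

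These conditions, together with $\sum_k q_k=m$, are genuinely needed beyond Assumption~\ref{ass:adp_bounded}, even though the lemma is stated only under the latter. If $q_j$ is allowed near $1$, an $O(\epsilon)$ change in the other coordinates can move $Q_{\no j}$ by a constant in total variation while $p_j\approx 1$.
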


\begin{lem}[Difference between $\bq^{(t)}$ and $\bq^{*(t)}$]\label{lem:Q_K_err}
    Suppose that Assumptions~\ref{ass:adp_bounded}, \ref{ass:adp_n_minipatches}, and \ref{ass:minDelta_q_bnd} hold. For $2\leq t\leq T\leq MN$, the feature sampling probability vectors $\bq^{(t)}$ and $\bq^{*(t)}$, given by Algorithms~\ref{algo:loco_adaptive} and \ref{algo:loco_adaptive_combinatorial}, satisfy the following difference bound:
    \begin{equation*}
        \begin{split}
            \|\bq^{(t)}-\bq^{*(t)}\|_1\leq Cm^{4t-5}\sqrt{\frac{mM(\log N+\log M)}{K}},
        \end{split}
    \end{equation*}
    with probability at least $1-(MN)^{-c}$, for some constants $c,\,C>0$.
\end{lem}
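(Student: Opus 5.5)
We prove Lemma~\ref{lem:Q_K_err} by induction on $t$, comparing the two recursions $\bq^{(t)}=\mathcal{G}(\bar{\Delta}^{(t-1)})$ and $\bq^{*(t)}=\mathcal{G}(\bar{\Delta}^{*(t-1)})$. Here $\mathcal{G}$ is the map of Algorithm~\ref{algo:samp_prob}. The plan has three ingredients.

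\emph{Ingredient 1: a Lipschitz property of $\mathcal{G}$.} We show
\[
\|\mathcal{G}(\bu)-\mathcal{G}(\bv)\|_1\le Cm^3\,d(\bu,\bv),
\qquad d(\bu,\bv)=\|\bu-\bv\|_1+M\bigl|\min_k u_k-\min_k v_k\bigr|,
\]
where $d$ is the metric of \eqref{eq:sketch_q_loo}.
\begin{itemize}
\item The shift $\tilde{\Delta}_j=\bar{\Delta}_j-\min_l\bar{\Delta}_l+c_0/M$ is $1$-Lipschitz from $d$ into $\ell_1$. The factor $M$ in $d$ is there to absorb the change of the minimum, which is added to all $M$ coordinates.
\item The threshold $t^*$ in \eqref{eq:threshold} is the root of a piecewise-linear monotone equation (Algorithm~\ref{algo:find_threshold}). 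Hence $t^*$ is Lipschitz in $\tilde{\Delta}$, with constant governed by the slope $m/\delta-k\ge m/\delta-\lfloor m/\delta\rfloor$ in the regime $\delta\ge c_2$ of Assumption~\ref{ass:adp_prob_stb}.
\item The normalization $q_j=m\,\omega_j/\sum_l\omega_l$ is Lipschitz once $\sum_l\omega_l(t^*)$ is bounded below. The buffer $c_0/M$ gives $\sum_l\omega_l\ge c_0$.
\end{itemize}
Collecting polynomial factors in $m$ (from $m/\delta$, and from $\max_l\tilde{\Delta}_l$ being at most of order $m$ times the average weight) should give the $m^3$.

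\emph{Ingredient 2: the one-step recursion.} Split with the triangle inequality:
\[
d\bigl(\bar{\Delta}^{(t-1)},\bar{\Delta}^{*(t-1)}\bigr)\le d\bigl(\bar{\Delta}^{(t-1)},\bar{\Delta}(Q^{(t-1)})\bigr)+d\bigl(\bar{\Delta}(Q^{(t-1)}),\bar{\Delta}(Q^{*(t-1)})\bigr),
\]
where $\bar{\Delta}_j(Q)=\frac1N\sum_i h_j(Z_i,\bZ_{\no i,:},Q)$.
\begin{itemize}
\item \emph{First term (finite-$K$ error).} By \eqref{eq:epsilon1} of Lemma~\ref{lem:epsilon1_7}, after a union bound over $j$ and $t\le MN$, the $\ell_1$ part is at most $\sum_j C\sqrt{q_j^{(t-1)}\log(MN)/K}\le C\sqrt{mM\log(MN)/K}$ by Cauchy--Schwarz and $\sum_j q_j=m$.
\item \emph{Min-gap part of the first term.} The difference of minima is bounded by the error at the minimizing coordinates, $|\bar{\Delta}_{j}-\bar{\Delta}_{j}(Q)|$ for $j\in\{j_{\min}^{(t-1)},j_{\min}^{*(t-1)}\}$. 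Assumption~\ref{ass:minDelta_q_bnd} gives $q_j\le Cm/M$ at these coordinates, so Lemma~\ref{lem:epsilon1_7} yields a bound of order $\sqrt{(m/M)\log/K}$. Multiplying by $M$ gives again $\sqrt{mM\log/K}$. This is exactly why Assumption~\ref{ass:minDelta_q_bnd} is needed.
\item \emph{Second term.} By Lemma~\ref{lem:diffQ_err}, summed over $j$ using $\sum_j q_j\le 2m$, the $\ell_1$ part is at most $Cm\|\bq^{(t-1)}-\bq^{*(t-1)}\|_1$. The min part is at most $M\cdot C(m/M)\|\cdot\|_1+M\max_{j\in\{j_{\min}\}}|q_j^{(t-1)}-q_j^{*(t-1)}|$.
\end{itemize}

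\emph{Main obstacle.} The last piece, $M|q_{j_{\min}}-q^*_{j_{\min}}|$, must not cost a factor $M$. We handle it through the explicit form $q_j=m\,\omega_j/\sum\omega$. For a feature with small weight (not thresholded), $|q_j-q_j^*|\lesssim (m/M)\,d(\cdot,\cdot)_{\text{prev}}/(\text{scale})+\ldots$, since $q_j$ and $q^*_j$ are both $O(m/M)$ by Assumption~\ref{ass:minDelta_q_bnd}. Their difference is relative, scaling like $(m/M)$ times the relative change in the weights. This yields an extra $m$ factor at worst, so
\[
\|\bq^{(t)}-\bq^{*(t)}\|_1\le Cm^3\bigl(\sqrt{mM\log(MN)/K}+m\|\bq^{(t-1)}-\bq^{*(t-1)}\|_1\bigr).
\]

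\emph{Closing the induction.} The base case $\bq^{(1)}=\bq^{*(1)}$ gives $\|\bq^{(2)}-\bq^{*(2)}\|_1\le Cm^3\sqrt{mM\log/K}=Cm^{4\cdot2-5}\sqrt{\cdots}$. Each further step multiplies by $Cm^4$ and adds a lower-order term, so we obtain $Cm^{4t-5}\sqrt{mM\log(MN)/K}$. Constants accumulate as $C^t$; we absorb them either by tracking the recursion more carefully (bounded $T$) or by reading the constant as depending on $T$. All probability events come from Lemma~\ref{lem:epsilon1_7}, which is already uniform over $t\le MN$, giving overall probability at least $1-(MN)^{-c}$. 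Assumption~\ref{ass:adp_n_minipatches} keeps the right-hand side $o(1)$, so the relative-perturbation step remains valid at every stage.
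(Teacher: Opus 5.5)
Your overall architecture matches the paper's: a Lipschitz bound for Algorithm~\ref{algo:samp_prob} in the metric $d$, a split into finite-$K$ error plus $Q^{(t-1)}$-versus-$Q^{*(t-1)}$ error, and an induction gaining $m^4$ per step. However, the induction as you set it up does not close, and it fails exactly at the term you call the main obstacle. Take $j=j_{\min}^{(t-1)}$, which is unthresholded because $q_j^{(t-1)}\le Cm/M<\delta$. The explicit form $q_j=m w_j/\sum_k w_k$ with $\sum_k w_k\ge c_0$ gives
\[
|q_j^{(t-1)}-q_j^{*(t-1)}|\lesssim q_j^{(t-1)}\|w-w^*\|_1+m\,|w_j-w_j^*|,\qquad |w_j-w_j^*|\le|\bar{\Delta}_j^{(t-2)}-\bar{\Delta}_j^{*(t-2)}|+\varepsilon_{\min}^{(t-2)}.
\]
Your min-gap term requires multiplying by $M$. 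After that, $Mq_j^{(t-1)}\|w-w^*\|_1$ and $Mm\,\varepsilon_{\min}^{(t-2)}$ are harmless, but $Mm\,|\bar{\Delta}_j^{(t-2)}-\bar{\Delta}_j^{*(t-2)}|$ is not. The coordinate $j$ is a minimizer at iteration $t-1$, not necessarily at $t-2$. So neither $d(\bar{\Delta}^{(t-2)},\bar{\Delta}^{*(t-2)})$ nor $\|\bq^{(t-2)}-\bq^{*(t-2)}\|_1$ controls it at the needed scale $\xi/M$. From an $\ell_1$-only hypothesis the best you get is $\xi$, which loses a factor $M$. Your ``relative change in the weights'' argument presupposes this per-coordinate control, since $w_j\asymp 1/M$, so it is circular. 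Hence the recursion $\|\bq^{(t)}-\bq^{*(t)}\|_1\le Cm^3(\xi+m\|\bq^{(t-1)}-\bq^{*(t-1)}\|_1)$ cannot be derived from your induction hypothesis.

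The missing idea is to strengthen the induction hypothesis, as the paper's Claim~\ref{claim:induction_Q_K} does. Alongside the $\ell_1$ bound, carry the per-coordinate bound $|\bar{\Delta}_j^{(s)}-\bar{\Delta}_j^{*(s)}|\le Cm^{4s-4}\xi/M$ for every $j$ that is a minimizer at some later iteration $s'\ge s$. Propagating this uses Assumption~\ref{ass:minDelta_q_bnd} at all earlier iterations $t'\le t$, not only the current one as in your proposal. It does two jobs:
\begin{itemize}
\item It keeps the finite-$K$ error $\sqrt{q_j^{(s)}\varepsilon_K}$ at those coordinates of order $\xi/M$.
\item It places those coordinates in the unthresholded case of the per-coordinate bound of Lemma~\ref{lem:q_err_bnd}, namely $|q_j-q_j^*|\le Cm^2\bigl(q_j d+|\varepsilon_j|+\varepsilon_{\min}\bigr)$. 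This yields $|q_j^{(s)}-q_j^{*(s)}|=O(m^{4s-5}\xi/M)$, and \eqref{eq:lemma3_j} then closes the loop.
\end{itemize}
This is why the assumption is phrased in terms of past sampling probabilities of future minimizers.

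A smaller point: your slope bound $m/\delta-k\ge m/\delta-\lfloor m/\delta\rfloor$ is not bounded away from zero. A valid lower bound comes instead from the unthresholded mass $\sum_{l>k}\tilde{\Delta}_{(l)}\ge c_0(1-m/(\delta M))$ together with $t^*\le\max_l\tilde{\Delta}_l=O(1)$. The paper's Claim~\ref{clm:t1_t2_err} relies on the same unthresholded-mass lower bound.
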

\begin{lem}[Difference between $\bq^{*(t)}$ and $\bq^{*(\no i, t)}$]\label{lem:Q_loo_err}
    Suppose that Assumptions~\ref{ass:stb}-\ref{ass:adp_prob_stb} and Assumption~\ref{ass:minDelta_q_bnd} hold. For any $1\leq t\leq T\leq MN$, the feature sampling probability vectors $\bq^{*(t)}$ and $\bq^{*(\no i, t)}$, the outputs of Algorithm \ref{algo:loco_adaptive_combinatorial} when applied to $\bZ$ and $\bZ_{\no i,:}$, respectively, satisfy the following difference bound:
    \begin{equation*}
        \begin{split}
            \|\bq^{*(t)}-\bq^{*(\no i, t)}\|_1\leq Cm^{4(t-1)}\frac{n\stb(n)\sqrt{\log N+\log M}+1}{N},
        \end{split}
    \end{equation*}
    with probability at least $1-(MN)^{-c}$, for some constants $c,\,C>0$.
\end{lem}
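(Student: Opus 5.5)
The bound will follow from an induction on $t$ that tracks simultaneously every leave-out set $S\subset[N]$ with $|S|\le 2$. The hypothesis at level $t$ is that, with probability at least $1-(MN)^{-c}$ and uniformly over all $i$ and all pairs $S\subset S'$ with $|S'|\le 2$ and $|S'\setminus S|=1$,
\[
\|\bq^{*(\no S,t)}-\bq^{*(\no S',t)}\|_1\le C_t\, m^{4(t-1)}\frac{n\stb(n)\sqrt{\log N+\log M}+1}{N}.
\]
The lemma is the case $S=\emptyset$, $S'=\{i\}$. The base case $t=1$ is trivial because $\bq^{*(1)}=\frac{m}{M}\mathbf 1_M$ does not depend on the data. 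For the step, I use the decomposition sketched in \eqref{eq:sketch_q_loo}, carried out for Algorithm~\ref{algo:loco_adaptive_combinatorial}. It has three parts.

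\textbf{Smoothness of Algorithm~\ref{algo:samp_prob}.} I first show $\|\bq(\bar\Delta)-\bq(\bar\Delta')\|_1\le Cm^3 d(\bar\Delta,\bar\Delta')$. The shift by $\min_\ell\bar\Delta_\ell$ accounts for the $M|\min-\min|$ term in $d$. The threshold $t^*$ in \eqref{eq:threshold} is Lipschitz in $\tilde\Delta$; I prove this from the characterization in Algorithm~\ref{algo:find_threshold}, where at most $m/\delta$ coordinates are capped. Normalizing by $\sum_l\omega_l(t^*)$ is then Lipschitz as well, because this sum is bounded below. The lower bound needs Assumption~\ref{ass:minDelta_q_bnd}, $\delta\ge c_2$, and the buffer $c_0/M$, which together keep $\sum_l\omega_l$ of order $M\min_l\tilde\Delta_l+c_0$ relative to the scale of the scores. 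The $m^3$ factor comes from $q_j\le\delta$ together with at most $m/\delta$ capped coordinates.

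\textbf{Splitting the score difference.} By the triangle inequality,
\[
d(\bar\Delta^{*(\no S,t-1)},\bar\Delta^{*(\no S',t-1)})\le d\big(\bar\Delta^{(\no S')}(\bq^{*(\no S,t-1)}),\bar\Delta^{(\no S')}(\bq^{*(\no S',t-1)})\big)+d\big(\bar\Delta^{*(\no S,t-1)},\bar\Delta^{(\no S')}(\bq^{*(\no S,t-1)})\big).
\]
Lemma~\ref{lem:diffQ_err}, summed over $j$, bounds the first term by $Cm\|\bq^{*(\no S,t-1)}-\bq^{*(\no S',t-1)}\|_1$, using $\sum_j q_j=m$ and $\max_j q_j\le\delta$. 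The minimum coordinate contributes another $M\cdot\max_j|\cdot|$, which I control with Assumption~\ref{ass:minDelta_q_bnd}, since $q_{j_{\min}}\lesssim m/M$. The induction hypothesis then finishes this term.

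\textbf{The second term (the main obstacle).} It is the change in the averaged LOCO-LOO scores when one more observation $i$ is removed, with the sampling distribution $Q=Q^{*(\no S,t-1)}$ held fixed. Expanding the squared loss with Assumption~\ref{ass:adp_bounded} reduces it to quantities of the form \eqref{eq:sketch_LOO}: averages over $F\sim Q,Q_j,Q_{\no j}$ of $\mu_F(X^*;\bZ_{\no S})-\mu_F(X^*;\bZ_{\no S\cup\{i\}})$, plus an $O(1/N)$ term from the $1/(N-|S|)$ normalization. The difficulty is that $Q$ depends on $Z_i$. I apply \eqref{eq:sketch_fixedF_loo}, writing the bagging difference as $\frac{n}{N^2}\sum_{i'}g_F$ with each $g_F$ conditionally centered and $\stb(n)$-sub-Gaussian. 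In the $i'$-th summand I replace $Q$ by $Q^{*(\no S\cup\{i,i'\},t-1)}$. This needs leave-set size up to 3 relative to $\bZ$ in general, so I will run the whole induction with $S=\emptyset$ only at the top level, i.e. $|S'|\le 2$. The replacement is valid by the induction hypothesis at level $t-1$, and it costs $O(m\cdot\text{hyp}_{t-1})$, since the map $Q\mapsto\bbE_{F\sim Q}g_F$ is Lipschitz in $\bq$ with constant $Cm$ when $|g_F|\le C$.

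After the replacement, $Q$ is independent of $(Z_i,Z_{i'})$. The sum over $i'$ is then a sum of conditionally centered sub-Gaussian variables, and I restrict to $|F|\le 4\max\{m,\log n\}$, which holds with high probability under the Bernoulli sampling with $\sum q_j=m$. Hoeffding/Azuma then bounds the sum by $\frac{n\stb(n)\sqrt{\log N+\log M}}{N}$. A union bound over $i$, $S'$, $j$ and $t\le MN$ costs only polynomially many events.

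Collecting the pieces gives the recursion $\text{hyp}_t\le Cm^3\big(m\,\text{hyp}_{t-1}+\frac{n\stb(n)\sqrt{\log}+1}{N}\big)$. This yields the factor $m^{4(t-1)}$. I expect the constants $C_t$ to compound only geometrically, so they remain bounded for $T$ bounded. If $T$ is allowed to grow, the constant would have to be absorbed more carefully, for example by checking that the $Cm^4$ per-step factor already contains the constant.
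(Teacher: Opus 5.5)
Your architecture matches the paper's: smoothness of Algorithm~\ref{algo:samp_prob}, induction in $t$, and decoupling the bagging difference through $g_F$ together with a leave-two-out sampling distribution. However, the induction as you state it does not close, because of the $M|\min_k u_k-\min_k v_k|$ part of $d$.

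Bounding $M|\min_l\bar\Delta_l-\min_l\bar\Delta'_l|$ requires $|\bar\Delta_j-\bar\Delta'_j|$ to be of order $\mathrm{rate}/M$ at the two argmin features. In your first term, Lemma~\ref{lem:diffQ_err} gives $C\big((q_j+q'_j)\|\bq-\bq'\|_1+|q_j-q'_j|\big)$. Assumption~\ref{ass:minDelta_q_bnd} handles the first summand. For the second, an $\ell_1$-only hypothesis yields only $M|q_j-q'_j|\le M\|\bq-\bq'\|_1$, which loses a full factor $M$. Using $q_j,q'_j\lesssim m/M$ alone gives only $M|q_j-q'_j|\lesssim m$, which does not vanish. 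The same obstruction appears in your second term: the cost of swapping in the leave-two-out distribution at coordinate $j_{\min}$ contains $\frac{n}{N}|q_j-q'_j|$, and $\frac{nM}{N}\|\bq-\bq'\|_1$ is not controlled by Assumption~\ref{ass:adp_prob_stb}.

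The missing idea is a second tier in the induction, as in Claim~\ref{claim:induction_Q_loo}. Let $\varepsilon(N)=\frac{n\stb(n)\sqrt{\log N+\log M}+1}{N}$. For every $j$ equal to $j_{\min}^{*(\no S',t')}$ for some $t'\ge t$ and some $|S'|\le 2$, you must carry two coordinatewise bounds:
\[
|\bar\Delta_j^{*(\no S,t)}-\bar\Delta_j^{*(t)}|\le \frac{Cm^{4t-3}}{M}\varepsilon(N),\qquad |q_j^{*(t)}-q_j^{*(\no S,t)}|\le \frac{Cm^{4t-4}}{M}\varepsilon(N).
\]
This is why Assumption~\ref{ass:minDelta_q_bnd} is imposed for all earlier iterations $t'\le t$ and all pairs of leave-out sets: it gives $q_j\le Cm/M<\delta$ along the whole path. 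The \emph{first} case of Lemma~\ref{lem:q_err_bnd} then applies, giving $|q_j-q'_j|\le Cm^2\big(q_j(\|\varepsilon\|_1+M\varepsilon_{\min})+|\varepsilon_j|+\varepsilon_{\min}\big)$, where $\varepsilon$ is the previous iteration's score difference. Every piece of this is $O(1/M)$ once the previous iteration's coordinatewise score bound at the same $j$ is available. Conversely, that score bound is $O(1/M)$ because each stochastic term in Lemma~\ref{lem:Delta_leaveSout} carries a factor $q_j$.

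There are also some smaller points. Your stated hypothesis includes pairs $(\{i\},\{i,i'\})$, whose decoupling would require leave-three-out distributions. Your fallback is the right one and is the paper's route. Compare only $\bZ_{\no S,:}$ with $\bZ$ for $|S|\le 2$, and keep the full-data $Q^{*(t)}$ in the data-perturbation term. Then telescope the removal of $S$ via Lemma~\ref{lem:mu_leaveSout_err}. The $k$-th piece $g_F(X_l,\bZ_{\no (S_k\cup\{l,i'\})},Z_{i_k},Z_{i'})$ is then decoupled by $Q^{*(\no\{i_k,i'\},t)}$, which is only leave-two-out relative to $\bZ$.

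Hoeffding/Azuma over $i'$ is not available, because the summands share $Z_{i_k}$ and use $i'$-dependent distributions. Also, $\{|F|\le 4\max\{m,\log n\}\}$ is not an event of probability $1-(MN)^{-c}$. Instead, bound each $F$-average separately with Lemma~\ref{lem:good_bad_decomposition}, where the large-$|F|$ mass $\lesssim 1/n$ enters as a bias term. A union bound over $(l,i_k,i')$ then gives exactly your rate.

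Finally, Assumption~\ref{ass:minDelta_q_bnd} plays no role in the smoothness of Algorithm~\ref{algo:samp_prob}, since $\sum_l\omega_l(t^*)\ge c_0$ follows from the buffer alone. Lemma~\ref{lem:q_err_bnd} does, however, require the smallness condition $\|\varepsilon\|_1+M\varepsilon_{\min}\le c$, which your induction must check at each step.
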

Combining Lemmas~\ref{lem:diffQ_err}-\ref{lem:Q_loo_err} and recalling the correspondence between $Q^{(t)},\,Q^{*(t)},\,Q^{*(\no i,t)}$ and $\bq^{(t)},\,\bq^{*(t)},\,\bq^{*(\no i,t)}$, we are now ready to bound $\varepsilon_2,\,\varepsilon_3,\,\varepsilon_6$. By applying Lemma~\ref{lem:diffQ_err}, Lemma~\ref{lem:Q_K_err} and the fact that $q_j^{(T)},q_j^{*(T)}\le 1$, with probability at least $1-(MN)^{-c}$, we have
\begin{equation}\label{eq:epsilon2_bound}
    \begin{split}
        |\varepsilon_{2}| \le& \frac{1}{\sigma_j\sqrt{N}}\sum_{i=1}^N|h_j(Z_i,\bZ_{\backslash i,:},Q^{(T)})-h_j(Z_i,\bZ_{\backslash i,:},Q^{*(T)})|\\
        \le& C\frac{\sqrt{N}}{\sigma_j}\Big(\big(q_j^{(T)}+q_j^{*(T)}\big)\big\|\bq^{(T)}-\bq^{*(T)}\big\|_1 + \big|q_j^{(T)} - q_j^{*(T)}\big|\Big)\\
        \le& Cm^{4T-5}\sqrt{\frac{mMN(\log N+\log M)}{\sigma_j^2 K}},
    \end{split}
\end{equation}
and
\begin{equation}\label{eq:epsilon7_bound}
    \begin{split}
        |\varepsilon_6| = &\frac{\sqrt{N}}{\sigma_j}\bbE[h_j(Z^*,\bZ,Q^{*(T)})|\bZ]-\bbE[h_j(Z^*,\bZ,Q^{(T)})|\bZ]\\
        \le& C\frac{\sqrt{N}}{\sigma_j}\bbE\left[\Big(\big(q_j^{(T)}+q_j^{*(T)}\big)\big\|\bq^{(T)}-\bq^{*(T)}\big\|_1 + \big|q_j^{(T)} - q_j^{*(T)}\big|\Big)\Big|\bZ\right]\\
        \le& Cm^{4T-5}\sqrt{\frac{mMN(\log N+\log M)}{\sigma_j^2 K}}.
    \end{split}
\end{equation}
By applying Lemma~\ref{lem:diffQ_err}, Lemma~\ref{lem:Q_loo_err} and the fact that $q_j^{*(T)},q_j^{*(\no i,T)}\le 1$, with probability at least $1-(MN)^{-c}$, we have
\begin{equation}\label{eq:epsilon3_bound}
    \begin{split}
        |\varepsilon_{3}| \le &\frac{1}{\sigma_j\sqrt{N}}\sum_{i=1}^N|h_j(Z_i,\bZ_{\backslash i,:},Q^{*(T)})-h_j(Z_i,\bZ_{\backslash i,:},Q^{*(\no i, T)})|\\
        \le &C\frac{\sqrt{N}}{\sigma_j}\Big(\big(q_j^{*(T)}+q_j^{*(\no i,T)}\big)\big\|\bq^{*(T)}-\bq^{*(\no i,T)}\big\|_1 + \big|q_j^{*(T)} - q_j^{*(\no i,T)}\big|\Big)\\
        \le & Cm^{4(T-1)}\frac{n\stb(n)\sqrt{\log N+\log M}+1}{\sigma_j\sqrt{N}}.
    \end{split}
\end{equation}
Thus, under Assumption~\ref{ass:adp_n_minipatches}, we have $\varepsilon_2,\,\varepsilon_6\overset{p.}{\rightarrow} 0$ according to Lemma~\ref{lem:diffQ_err} and Lemma~\ref{lem:Q_K_err}. Under Assumption~\ref{ass:adp_prob_stb}, we have $\varepsilon_3\overset{p.}{\rightarrow} 0$ according to Lemma~\ref{lem:diffQ_err} and Lemma~\ref{lem:Q_loo_err}.

\paragraph{Controlling $\varepsilon_4,\,\varepsilon_5$.} Furthermore, the following two lemmas yield bounds for the remaining two error terms $\varepsilon_4$ and $\varepsilon_5$.

\begin{lem}\label{lem:epsilon4_l2}
    Suppose that Assumptions~\ref{ass:stb}-\ref{ass:adp_prob_stb} and Assumption~\ref{ass:minDelta_q_bnd} hold. Then
    $$
    \bbE\big(\varepsilon_4^2\big)\leq \frac{Cm^{8(T-1)}\big(n^2\stb^2(n)+1\big)}{\sigma_j^2N}.
    $$
\end{lem}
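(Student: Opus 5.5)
We will bound $\bbE(\varepsilon_4^2)$ by expanding the square. Write $\varepsilon_4=\frac{1}{\sigma_j\sqrt N}\sum_i \varepsilon_{i,4}$. Then
\[
\bbE(\varepsilon_4^2)=\frac{1}{\sigma_j^2 N}\Big(\sum_i \bbE\varepsilon_{i,4}^2+\sum_{i\neq i'}\bbE[\varepsilon_{i,4}\varepsilon_{i',4}]\Big).
\]
The diagonal terms are harmless. By Assumption~\ref{ass:adp_bounded}, $|h_j|\le C$, so each $\bbE\varepsilon_{i,4}^2\le C$, and their total contribution is $O(1/\sigma_j^2)$ times $1/N\cdot N$. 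This already needs a better bound, so we instead use that $\varepsilon_{i,4}$ is double-centered, following \cite{bayle2020cross}.

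Conditional on $\bZ_{\no i,:}$, $\varepsilon_{i,4}$ has mean zero in $Z_i$. This holds because $Q^{*(\no i,T)}$ depends only on $\bZ_{\no i,:}$, so $\til h_j(Z_i,\bZ_{\no i,:})$ is a fixed function of $Z_i$ given the other data. Moreover, $\bbE[\til h_j(Z,\bZ')\mid Z]=0$ for a fresh $\bZ'$ of size $N-1$, up to the harmless size discrepancy between $N$ and $N-1$ in $h_{j,N}$, which we treat the same way.

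For the cross terms with $i\neq i'$, we apply the standard swap trick. Let $\bZ^{(i')}$ denote $\bZ$ with $Z_{i'}$ replaced by an independent copy $Z_{i'}'$, and define $\varepsilon_{i,4}^{(i')}$ accordingly. Then $\bbE[\varepsilon_{i,4}^{(i')}\varepsilon_{i',4}]=0$, since integrating out $Z_{i'}$ with everything else fixed kills $\varepsilon_{i',4}$. Symmetrizing gives
\[
|\bbE[\varepsilon_{i,4}\varepsilon_{i',4}]|\le \|\varepsilon_{i,4}-\varepsilon_{i,4}^{(i')}\|_2\,\|\varepsilon_{i',4}-\varepsilon_{i',4}^{(i)}\|_2.
\]
The bound then reduces to the replace-one stability
\[
\gamma:=\max_{i\ne i'}\|h_j(Z_i,\bZ_{\no i,:},Q^{*(\no i,T)})-h_j(Z_i,\bZ^{(i')}_{\no i,:},Q^{*(\no i,T)}(\bZ^{(i')}))\|_2.
\]
The conditional-expectation part of $\varepsilon_{i,4}$ is controlled by the same quantity via Jensen, and $h_{j,N}(Z_i)$ does not change. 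We will conclude
\[
\bbE\varepsilon_4^2\le \frac{C}{\sigma_j^2N}\big(N + N^2\gamma^2\big)/N\cdot\ldots,
\]
that is, the target bound follows once $\gamma\lesssim m^{4(T-1)}(n\stb(n)+1)/N$. The diagonal terms then fit into the same bound, since we keep them at order $1/(\sigma_j^2 N)$ times a constant, which is dominated by the stated $m^{8(T-1)}(n^2\stb^2(n)+1)/(\sigma_j^2N)$.

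To bound $\gamma$, we split the perturbation into two parts. The first is the change of the data at fixed $Q$: for a fixed $Q$, $\mu^*(X_i;Q,\cdot)$ is a bagged average in which observation $i'$ enters only a fraction $n/N$ of the subsets. With squared loss and bounded predictions, Assumption~\ref{ass:stb} gives an $L^2$ change of order $\frac{n}{N}\stb(n)$, provided feature sets are small; large $|F|$ has exponentially small mass under $Q$ because $\sum q=m$, which covers $|F|>4\max\{m,\log n\}$. The second is the change of $Q^{*(\no i,T)}$ to its swapped version. By Lemma~\ref{lem:diffQ_err} this reduces to $\|\bq^{*(\no i,T)}-\bq^{*(\no i,T)}(\bZ^{(i')})\|_1$. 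By the triangle inequality through the leave-two-out vector $\bq^{*(\no\{i,i'\},T)}$, this is at most twice the bound of Lemma~\ref{lem:Q_loo_err} applied to the data sets $\bZ_{\no i,:}$ and $\bZ^{(i')}_{\no i,:}$, namely $Cm^{4(T-1)}(n\stb(n)\sqrt{\log N+\log M}+1)/N$ with high probability, with the failure event contributing $O((MN)^{-c})$ by boundedness.

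The main obstacle is that this high-probability route carries a $\sqrt{\log}$ factor absent from the claimed bound, and the stated result wants an $L^2$ bound. We would therefore first try to re-run the induction from Lemma~\ref{lem:Q_loo_err} in $L^2$ rather than with high probability, using the second (mean-square) inequality of Assumption~\ref{ass:stb}. The leave-two-out step handles the dependence of $Q$ on the swapped sample, and Assumption~\ref{ass:minDelta_q_bnd} controls the Lipschitz constant $m^3$ of Algorithm~\ref{algo:samp_prob} in each of the $T-1$ iterations. If that fails, we accept the log factor, which Assumption~\ref{ass:adp_prob_stb} still absorbs.
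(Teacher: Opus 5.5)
\textbf{Gap: the diagonal terms.} Your cross-term argument is sound: the swap trick only needs $\bbE[\varepsilon_{i,4}\mid \bZ_{\no i,:}]=0$. Your route to $\gamma$ also matches the paper: pass through the leave-two-out vector $\bq^{*(\no\{i,i'\},T)}$, then rerun the induction behind Lemma~\ref{lem:Q_loo_err} in $L^2$. But the diagonal terms are never actually bounded. You note yourself that boundedness only gives $\sum_i\bbE\varepsilon_{i,4}^2\le CN$. That is a contribution $C/\sigma_j^2$ to $\bbE(\varepsilon_4^2)$, which does not vanish. You then assert, without argument, that the diagonal is ``of order $1/(\sigma_j^2N)$ times a constant''; the display ending in ``$/N\cdot\ldots$'' has no justification.

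The missing step is Efron--Stein, which is part of the argument in \cite{bayle2020cross}. Write $\bbE\varepsilon_{i,4}^2=\bbE\,\mathrm{Var}(\varepsilon_{i,4}\mid Z_i)+\bbE\big(\bbE[\varepsilon_{i,4}\mid Z_i]\big)^2$. Efron--Stein in the independent variables $\bZ_{\no i,:}$ gives $\bbE\,\mathrm{Var}(\varepsilon_{i,4}\mid Z_i)\le\frac12\sum_{l\neq i}\bbE(\varepsilon_{i,4}-\varepsilon_{i,4}^{(l)})^2\lesssim N\gamma^2$. So the diagonal contributes at the same order $N\gamma^2/\sigma_j^2$ as the cross terms, not less. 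The second piece is where the $N$ versus $N-1$ discrepancy you call harmless actually matters. One has $\bbE[\varepsilon_{i,4}\mid Z_i]=\delta_j(Z_i)-\bbE\delta_j(Z_i)$ with $\delta_j:=h_{j,N-1}-h_{j,N}$. This is not zero, and your replace-one quantity $\gamma$ does not control it.

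\textbf{How the paper handles it.} It writes $\varepsilon_{i,4}=\bar h_j(Z_i,\bZ_{\no i,:})+\delta_j(Z_i)-\bbE\delta_j(Z_i)$, where $\bar h_j$ is centered with $h_{j,N-1}$ and is therefore exactly double-centered. Theorem~2 of \cite{bayle2020cross} (swap trick plus Efron--Stein) then gives $\bbE(\varepsilon_4^{(1)})^2\le\frac{3(N-1)}{2}\gamma_{loss}(\bar h_j)$. The remainder $\varepsilon_4^{(2)}$ is a normalized sum of i.i.d.\ centered terms, so $\bbE(\varepsilon_4^{(2)})^2=\mathrm{Var}(\delta_j(Z^*))\le\bbE\big(h_j(Z^*,\bZ_{\no 1,:},Q^{*(T)}(\bZ_{\no 1,:}))-h_j(Z^*,\bZ,Q^{*(T)}(\bZ))\big)^2$. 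Bounding this needs a separate leave-one-out (not replace-one) $L^2$ stability estimate. It must cover both the data change and the change of $Q^{*(T)}$, at order $m^{8(T-1)}(n^2\stb^2(n)+1)/N^2$. You need to add both pieces.

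\textbf{Two smaller points.}
\begin{itemize}
\item In your $L^2$ induction, the unconditional mean-square inequality of Assumption~\ref{ass:stb} is not enough. The distribution $Q^{*(\no\{i,i'\},t)}$ is random, so controlling its average of the $F$-indexed differences requires their second moments conditional on $\bZ_{\no\{i,i'\},:}$. That is what the conditional $\psi_2$ bound and Lemma~\ref{lem:good_bad_decomposition} supply.
\item Your fallback of keeping the $\sqrt{\log N+\log M}$ factor only yields a weaker bound than the lemma states. That would suffice for Theorem~\ref{thm:main}, but not for this statement.
\end{itemize}
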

By Assumption~\ref{ass:adp_prob_stb} and Lemma~\ref{lem:epsilon4_l2}, we have $\lim_{N\rightarrow \infty}\bbE\big(\varepsilon_4^2\big)=0$, which implies that $\varepsilon_4\overset{p.}{\rightarrow} 0$.
\begin{lem}\label{lem:epsilon5}
    Suppose that Assumptions~\ref{ass:stb}-\ref{ass:adp_prob_stb} and Assumption~\ref{ass:minDelta_q_bnd} hold. As long as $T\leq MN$, with probability at least $1-(MN)^{-c}$,
    \begin{equation*}
        \begin{split}
            |\varepsilon_5|\leq \frac{Cm^{4(T-1)}(n\mathrm{stb}(n)\sqrt{\log N + \log M}+1)}{\sigma_j\sqrt{N}}.
        \end{split}
    \end{equation*}
\end{lem}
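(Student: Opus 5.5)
We need to bound $\varepsilon_5=\frac{1}{\sigma_j\sqrt N}\sum_{i=1}^N\varepsilon_{i,5}$, where
$$\varepsilon_{i,5}=\bbE[h_j(Z_i,\bZ_{\no i,:},Q^{*(\no i,T)})\mid \bZ_{\no i,:}]-\bbE[h_j(Z^*,\bZ,Q^{*(T)})\mid\bZ].$$
Since $Z_i$ is independent of $\bZ_{\no i,:}$ and $Q^{*(\no i,T)}$ depends only on $\bZ_{\no i,:}$, the first conditional expectation equals $\bbE[h_j(Z^*,\bZ_{\no i,:},Q^{*(\no i,T)})\mid \bZ_{\no i,:}]$. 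So each $\varepsilon_{i,5}$ is the expected (over a fresh test point) change in the LOCO score when observation $i$ is removed. The plan is to show $|\varepsilon_{i,5}|\lesssim m^{4(T-1)}\,(n\stb(n)\sqrt{\log N+\log M}+1)/N$ uniformly in $i$ on a high-probability event. Summing $N$ such terms and dividing by $\sigma_j\sqrt N$ then gives the claimed bound.

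The first step is to split $\varepsilon_{i,5}$ into two pieces.
\begin{itemize}
\item[(a)] $\bbE[h_j(Z^*,\bZ_{\no i,:},Q^{*(\no i,T)})-h_j(Z^*,\bZ,Q^{*(\no i,T)})\mid\bZ]$, which changes the data with the sampling law held fixed.
\item[(b)] $\bbE[h_j(Z^*,\bZ,Q^{*(\no i,T)})-h_j(Z^*,\bZ,Q^{*(T)})\mid\bZ]$, which changes the sampling law with the data held fixed.
\end{itemize}
Term (b) is handled directly by Lemma~\ref{lem:diffQ_err} combined with Lemma~\ref{lem:Q_loo_err}. Since $q_j\le 1$, it is at most $C\|\bq^{*(T)}-\bq^{*(\no i,T)}\|_1\le Cm^{4(T-1)}(n\stb(n)\sqrt{\log N+\log M}+1)/N$, with probability at least $1-(MN)^{-c}$. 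A union bound over $i\in[N]$ preserves this probability after adjusting $c$; the condition $T\le MN$ lets us also union over iterations, as in the lemma.

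For term (a), expand the squared loss: $h_j=(\mu^*_{\no j}-\mu)(2Y^*-\mu^*_{\no j}-\mu^*)$. By Assumption~\ref{ass:adp_bounded}, $|h_j(Z^*,\bZ,Q)-h_j(Z^*,\bZ_{\no i,:},Q)|$ is then at most $C(|\Delta\mu^*(Q)|+|\Delta\mu^*(Q_{\no j})|)$. Here $\Delta\mu^*(Q)=\mu^*(X^*;Q,\bZ)-\mu^*(X^*;Q,\bZ_{\no i,:})$, and the ensemble on $\bZ_{\no i,:}$ uses size-$n$ subsets of $[N]\setminus\{i\}$. Writing $\mu^*$ as an average over $F\sim Q$ of bagged predictors $\mu_F$, this reduces to bounding $\bbE_{F\sim Q}|\mu_F(X^*;\bZ)-\mu_F(X^*;\bZ_{\no i,:})|$ for $Q=Q^{*(\no i,T)}$ and for its conditional $Q_{\no j}$. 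The following steps carry this out.
\begin{itemize}
\item[1.] Apply the leave-two-out representation \eqref{eq:sketch_fixedF_loo}, which writes the difference as an $\frac{n}{N(N-1)}$-weighted sum over $i'\neq i$ of terms $g_F(X^*,\bZ_{\no(i,i')},Z_i,Z_{i'})$. Each such term is centered and sub-Gaussian with parameter $\stb(n)$ by Assumption~\ref{ass:stb}, provided $|F|\le 4\max\{m,\log n\}$.
\item[2.] The event $|F|>4\max\{m,\log n\}$ has exponentially small probability, since $\sum_k q_k=m$ and Bernoulli sums concentrate. On that event we use the $O(1)$ bound from boundedness, which contributes an $O(1/N)$ term; this is the ``$+1$'' in the bound.
\item[3.] Inside the sum, replace $Q^{*(\no i,T)}$ by the leave-two-out law $Q^{*(\no\{i,i'\},T)}$. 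The cost is $n/N$ times $\|\bq^{*(\no i,T)}-\bq^{*(\no\{i,i'\},T)}\|_1$, which the leave-two-out version of Lemma~\ref{lem:Q_loo_err} bounds by $m^{4(T-1)}(\cdots)/N$; this is a lower-order term.
\item[4.] After the replacement, the weights depend only on $\bZ_{\no(i,i')}$. The average $\bbE_{F}\,g_F$ is therefore a conditionally centered sub-Gaussian mixture, and the sum over $i'$ is a sum of terms with a martingale/U-statistic structure. Hoeffding/Azuma-type bounds plus a union bound over $i$ give the size $\frac{n}{N}\stb(n)\sqrt{\log N+\log M}$.
\end{itemize}

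The main obstacle is step 4. The summands over $i'$ are not independent, because $Z_i$ appears in all of them. I would condition on $Z_i$ and $X^*$, treat $g_F$ as a function of $Z_{i'}$ given the rest, and use a bounded-differences/sub-Gaussian concentration over $i'$. If that coupling proves too delicate, I would fall back to bounding the expectation of the absolute value through the second-moment condition in Assumption~\ref{ass:stb}, using Jensen over $F$, and then applying Markov's inequality.

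Combining (a) and (b), on an event of probability at least $1-(MN)^{-c}$ we get $\max_i|\varepsilon_{i,5}|\le Cm^{4(T-1)}(n\stb(n)\sqrt{\log N+\log M}+1)/N$. Multiplying by $N/(\sigma_j\sqrt N)$ gives the stated bound on $|\varepsilon_5|$.
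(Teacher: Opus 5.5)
Your reduction of $\varepsilon_{i,5}$ to $\bbE[\,|h_j(Z^*,\bZ_{\no i,:},Q^{*(\no i,T)})-h_j(Z^*,\bZ,Q^{*(T)})|\,\mid\bZ]$, the split into a law change and a data change, and steps 1--3 match the paper's proof via Lemma~\ref{lem:cond_mean_infK}. The paper pivots at data $\bZ_{\no i,:}$ with law $Q^{*(T)}$ and swaps $Q^{*(T)}\to Q^{*(\no\{i,i'\},T)}$. You pivot at data $\bZ$ with law $Q^{*(\no i,T)}$ and swap $Q^{*(\no i,T)}\to Q^{*(\no\{i,i'\},T)}$. Both swaps are controlled by Claim~\ref{claim:induction_Q_loo} and the triangle inequality, so this difference is immaterial.

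The gap is step 4. You treat concentration of the sum over $i'$ as the crux, but no cancellation over $i'$ is needed for the rate you state. Since $\frac{n}{N(N-1)}\sum_{i'\neq i}|\bbE_{F\sim R}g_F|\le\frac{n}{N}\max_{i'\neq i}|\bbE_{F\sim R}g_F|$, it suffices that each single term, with $R$ a leave-two-out law, satisfies $|\bbE_{F\sim R}g_F|\le C(\stb(n)\sqrt{\log N+\log M}+1/n)$. This is exactly Lemma~\ref{lem:good_bad_decomposition}, applied conditionally on $\bZ_{\no\{i,i'\},:}$ and $X^*$ with $x\asymp\log N+\log M$. It is followed by a union bound over the $O(N^2)$ pairs $(i,i')$, over $j$, and over $t\le T\le MN$; this is where $T\le MN$ enters.

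The mechanism you propose instead does not go through:
\begin{itemize}
\item The summands are not martingale differences in any natural filtration. Each depends on all of $\bZ_{\no\{i,i'\},:}$, through both the training subsets and the weights $Q^{*(\no\{i,i'\},T)}$.
\item Conditioning on $Z_i$ destroys the centering of $g_F$. That centering comes only from the exchangeability of $(Z_i,Z_{i'})$ given $\bZ_{\no\{i,i'\},:}$.
\item Assumption~\ref{ass:stb} gives $\psi_2$ control only jointly over the pair $(\tilde Z,\tilde Z')$, not over $Z_{i'}$ with $Z_i$ held fixed.
\item Your second-moment/Markov fallback gives only a bound in probability with a polynomial loss. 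It does not give the stated $1-(MN)^{-c}$ guarantee with the $\sqrt{\log N+\log M}$ factor.
\end{itemize}

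You also skip one further point. $\varepsilon_{i,5}$ is an average over $X^*$, so tail bounds that hold for each fixed $X^*$ must be combined into a single event in $\bZ$. The paper handles this by splitting $\bbE_{F\sim R}g_F=A+B$ into the $|F|\le 4\max\{m,\log n\}$ part and the rest. The rest satisfies $|B|\le C/n$ deterministically. The average $\bbE_{X^*}|A|$ still has $\psi_2$-norm at most $\stb(n)$ conditionally on $\bZ_{\no\{i,i'\},:}$, by the triangle inequality for the Orlicz norm, since $X^*$ is independent of $(Z_i,Z_{i'})$.
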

By Assumption~\ref{ass:adp_prob_stb} and Lemma~\ref{lem:epsilon5}, we have $\varepsilon_5\overset{p.}{\rightarrow} 0$. Therefore, Assumptions~\ref{ass:stb}-\ref{ass:minDelta_q_bnd} together imply $\sum_{k=1}^7\varepsilon_k\overset{p.}{\rightarrow}0$. Combining this with the fact that $\frac{1}{\sigma_j\sqrt{N}}\sum_{i=1}^N(h_{j,N}(Z_i) - \bbE[h_{j,N}(Z_i)])\overset{d.}{\rightarrow} N(0,1)$, we establish~\eqref{eq:CLT_goal}.

\subsubsection{Consistency of the Variance Estimate}\label{thm_part2}
Let $\tilde{\Delta}_j(Z_i)= h_j(Z_i, \bZ_{\no i,:})$. By the definition of $h_{j,N-1}(Z)$, we can write
\begin{equation}
    h_{j,N-1}(Z_i) = \mathbb{E}_{\bZ_{\no i,:}}[h_j(Z_i, \bZ_{\no i,:}) \mid Z_i].
\end{equation}
Let $\bar{\tilde{\Delta}}_j = \frac{1}{N} \sum_{i=1}^N \tilde{\Delta}_j(Z_i)$
and
\[
\tilde{\sigma}_j^2 = \mathrm{Var}_Z(h_{j,N-1}(Z^*)), \quad
\hat{\tilde{\sigma}}_j^2 = \frac{1}{N-1} \sum_{i=1}^N \left( \tilde{\Delta}_j(Z_i) - \bar{\tilde{\Delta}}_j \right)^2.
\]
To establish the consistency of the variance estimator, we decompose the argument into three parts: (i) $\lim_{N\to\infty} \frac{{\sigma}_j^2}{\tilde{\sigma}_j^2} = 1$; (ii) $\frac{\hat{\tilde{\sigma}}_j^2}{\tilde{\sigma}_j^2} \xrightarrow{p} 1$; and (iii) $\frac{\hat{\sigma}_j^2}{\hat{\tilde{\sigma}}_j^2} \xrightarrow{p} 1$. Together, these three results imply $\frac{\hat{\sigma}_j^2}{\sigma_j^2}\xrightarrow{p} 1$.
\begin{enumerate}[label=(\roman*)]
\item To show the closeness between the variances of $h_{j,N-1}(Z^*)$ and $h_j(Z^*)$, we write
\[
\left| \frac{\tilde{\sigma}_j^2}{\sigma_j^2} - 1 \right|
= \sigma_j^{-2} \left|\mathbb{E}[(h_{j,N-1}(Z^*) - \mathbb{E}[h_{j,N-1}(Z^*)])^2 - (h_j(Z^*) - \mathbb{E}[h_j(Z^*)])^2]\right|.
\]
Using the inequality~\eqref{eq:ineq_for_var}
\begin{equation}\label{eq:ineq_for_var}
    |\mathbb{E}(\xi_1^2 - \xi_2^2)| \le \mathbb{E}(\xi_1 - \xi_2)^2 + 2 \sqrt{\mathbb{E}\xi_2^2} \sqrt{\mathbb{E}(\xi_1 - \xi_2)^2}
\end{equation}
which holds for any random variables $\xi_1$ and $\xi_2$, we obtain
\[
\left| \frac{\tilde{\sigma}_j^2}{\sigma_j^2} - 1 \right|
\le \sigma_j^{-2} \mathbb{E}[h_{j,N-1}(Z^*) - h_j(Z^*)]^2
+ 2 \sigma_j^{-1} \left( \mathbb{E}[h_{j,N-1}(Z^*) - h_j(Z^*)]^2 \right)^{1/2}.
\]

According to the inequality \eqref{eq:h_jN_stability} in the proof of Lemma~\ref{lem:epsilon4_l2} and Jensen's inequality, we have $\mathbb{E}[h_{j,N-1}(Z^*) - h_{j,N}(Z^*)]^2
\le \frac{Cm^{8(T-1)}(n^2\stb^2(n)+1)}{N^2}$.
Therefore, by Assumption~\ref{ass:adp_prob_stb},
\[
\left| \frac{\tilde{\sigma}_j^2}{\sigma_j^2} - 1 \right|
\le \frac{Cm^{8(T-1)}(n^2\stb^2(n)+1)}{\sigma_j^2 N^2}
+ \frac{Cm^{4(T-1)}\sqrt{n^2\stb^2(n)+1}}{\sigma_j N}
= o(1).
\]

\item For this part, we apply the variance consistency theorem in \cite{bayle2020cross} (see Theorem 5 therein). The $h_n(Z_i,Z_{B_j})$ in their notation is equivalent to $\tilde{\Delta}_j(Z_i)$ in our setting. According to the definition of the mean-squared stability $\gamma_{ms}$ in \cite{bayle2020cross}, we have
\[
\gamma_{ms}(h_j)
= \frac{1}{N-1} \sum_{l \ne i} \mathbb{E}\Big[
(h_j(Z_i, \bZ_{\no i,:}) 
- h_j(Z_i, \bZ_{\no i,:}^{\del l})^2
\Big].
\]
By inequalities \eqref{eq:gamma_loss_bnd2}, \eqref{eq:mean_qstar_leave_one}, \eqref{eq:mean_h_qstar_stability} and Lemma~\ref{lem:q_loo_l2} in the proof of Lemma~\ref{lem:epsilon4_l2}, we have $\gamma_{loss}(h_j) \le \gamma_{ms}(h_j) \le \frac{Cm^{8(T-1)}(n^2\stb^2(n)+1)}{N^2}=o(\frac{\sigma_j^2}{N})$ under Assumption~\ref{ass:adp_prob_stb}.

We then validate the uniform integrability of $\frac{[h_j(Z_i) - \mathbb{E}(h_j(Z_i))]^2}{\sigma_j^2}$ based on the bounded third-moment condition described in Assumption~\ref{ass:3moment}.
Let $\xi_{N,i} = \frac{[h_j(Z_i) - \mathbb{E}(h_j(Z_i))]^2}{\sigma_j^2}$, we can write
\[
\sup_N \mathbb{E}[|\xi_{N,i}| \ind(|\xi_{N,i}| > t)] 
\le \sup_N \big(\mathbb{E}|\xi_{N,i}^{3/2}|\big)^{2/3} \big(\mathbb{P}(|\xi_{N,i}| > t)\big)^{1/3}
\le C \left( \frac{\mathbb{E}|\xi_{N,i}|}{t} \right)^{1/3}
= C t^{-1/3},
\]
which converges to zero as $t \to \infty$. Then Theorem 5 in \cite{bayle2020cross} implies $\frac{\hat{\tilde{\sigma}}_j^2}{\tilde{\sigma}_j^2} \xrightarrow{p} 1$.

\item Recall the definition of $\hat{\sigma}^2_j=\frac{1}{N-1}\sum_{i=1}^N(\hat{\Delta}_j(Z_i)-\bar{\Delta}_j)^2$, where $\bar{\Delta}_j = \frac{1}{N}\sum_{i=1}^N\hat{\Delta}_j(Z_i)$.

By the definition of $\hat{\sigma}^2_j$ and $\hat{\tilde{\sigma}}_j^2$ and applying \eqref{eq:ineq_for_var} to $\frac{N-1}{N}|\hat{\sigma}_j^2 - \hat{\tilde{\sigma}}_j^2|$, we have
\[
|\hat{\sigma}_j^2 - \hat{\tilde{\sigma}}_j^2|
\le \frac{1}{N-1} \sum_{i=1}^N (\hat{\Delta}_j(Z_i) - \tilde{\Delta}_j(Z_i))^2
+ 2\hat{\tilde{\sigma}}_j \sqrt{ \frac{1}{N-1} \sum_{i=1}^N (\hat{\Delta}_j(Z_i) - \tilde{\Delta}_j(Z_i))^2 }.
\]
Recall the definition of $\varepsilon_{i,1}$, $\varepsilon_{i,2}$, and $\varepsilon_{i,3}$ in \eqref{eq:main_decomp_def}. We can write $\hat{\Delta}_j(Z_i) - \tilde{\Delta}_j(Z_i) = \varepsilon_{i,1} + \varepsilon_{i,2} + \varepsilon_{i,3}$, and hence 
\begin{equation*}
    \begin{split}
       |\hat{\sigma}_j^2 - \hat{\tilde{\sigma}}_j^2| & \le \frac{1}{N-1} \sum_{i=1}^N \left(\sum_{l=1}^3 \varepsilon_{i,l}\right)^2 + 2\hat{\tilde{\sigma}}_j \sqrt{ \frac{1}{N-1} \sum_{i=1}^N \left(\sum_{l=1}^3 \varepsilon_{i,l}\right)^2 }\\
       & \le \frac{1}{N-1} \left(\sum_{i=1}^N \sum_{l=1}^3 |\varepsilon_{i,l}|\right)^2 + 2\hat{\tilde{\sigma}}_j \frac{1}{\sqrt{N-1}} \left(\sum_{i=1}^N\sum_{l=1}^3 |\varepsilon_{i,l}|\right) \\
       & \le \frac{\sigma_j^2 N}{N-1} \left( \sum_{l=1}^3 \frac{1}{\sigma_j \sqrt{N}} \sum_{i=1}^N |\varepsilon_{i,l}| \right)^2 +  \frac{2\hat{\tilde{\sigma}}_j\sigma_j \sqrt{N}}{\sqrt{N-1}} \left( \sum_{l=1}^3 \frac{1}{\sigma_j \sqrt{N}} \sum_{i=1}^N |\varepsilon_{i,l}| \right) 
    \end{split}
\end{equation*}

Lemma~\ref{lem:epsilon1_7} shows that $\frac{1}{\sigma_j \sqrt{N}} \sum_{i=1}^N |\varepsilon_{i,1}| \xrightarrow{p} 0$. We have also shown $\frac{1}{\sigma_j \sqrt{N}} \sum_{i=1}^N |\varepsilon_{i,k}| \xrightarrow{p} 0$ for $k=2,\,3$ in \eqref{eq:epsilon2_bound} and \eqref{eq:epsilon3_bound}, respectively. Furthermore, note that in (i) and (ii) we already established that $\frac{\sigma_j^2}{\hat{\tilde{\sigma}}_j^2} \xrightarrow{p} 1$. 
Therefore, we conclude $\frac{\hat{\sigma}_j^2}{\hat{\tilde{\sigma}}_j^2} \xrightarrow{p} 1$.
\end{enumerate}
Now, combining (i), (ii), and (iii), we establish that $\frac{\hat{\sigma}_j^2}{\sigma_j^2} \xrightarrow{p} 1$.
\subsubsection{Combining Normality and Consistency of Variance Estimate}
In Section~\ref{thm_part1}, we have shown $\frac{1}{\sigma_j\sqrt{N}}\sum_{i=1}^N(\hat{h}_j^{(T)}(Z_i,\bZ_{\backslash i,:})-\Delta_j^{(T)} ) \overset{d.}{\rightarrow} N(0,1)$, and
in Section~\ref{thm_part2}, we have shown $\hat{\sigma}_j^2 \overset{p.}{\rightarrow} \sigma_j^2$. Combining these two conclusions together and applying Slutsky's theorem, we have $\frac{1}{\hat{\sigma}_j\sqrt{N}}\sum_{i=1}^N(\hat{h}_j^{(T)}(Z_i,\bZ_{\backslash i,:})-\Delta_j^{(T)}) \overset{d.}{\rightarrow} N(0,1)$, and consequently,
\begin{equation*}
    \lim_{N\rightarrow \infty}\bbP(\Delta_j^{(T)}\in \hat{\bbC}_j)=\lim_{N\rightarrow \infty}\bbP\left(\frac{1}{\hat{\sigma}_j\sqrt{N}}\left|\sum_{i=1}^N(\hat{h}_j^{(T)}(Z_i,\bZ_{\backslash i,:})-\Delta_j^{(T)} )\right|\le z_{\alpha/2}\right)=1-\alpha.
\end{equation*}
Therefore, the asymptotic coverage in Theorem~\ref{thm:main} is valid.

\subsection{Proofs of Key Lemmas}\label{sec:proof_technical}
\subsubsection{Proof of Lemma~\ref{lem:epsilon1_7}}
We note that \eqref{eq:epsilon1} is a restatement of Lemma S8 in \cite{Liu2026model}, adapted to our notation. While for \eqref{eq:epsilon7}, we will show in the following that its L.H.S. takes a very similar form to that of \eqref{eq:epsilon1}, and hence can be shown following similar arguments to those in \cite{Liu2026model}. Note that
\begin{equation*}
    \begin{split}
        h_j^{(t)}(Z^*,\bZ) =&\big(\mu(X^*;\bZ)-\mu_{\no j}(X^*;\bZ)\big)\big(2Y^*-\mu_{\no j}(X^*;\bZ)-\mu(X^*;\bZ)\big),\\
        h_j(Z^*,\bZ,Q)=&\big(\mu^*(X^*;Q,\bZ)-\mu^*(X^*;Q_{\no j},\bZ)\big)\big(2Y^*-\mu^*(X^*;Q_{\no j},\bZ)-\mu^*(X^*;Q,\bZ)\big),
    \end{split}
\end{equation*}
where we have omitted the superscript in $\mu^{(t)}$ and $Q^{(t)}$ for notational simplicity. Here, $\mu(X^*;\bZ) = \frac{1}{K}\sum_{k=1}^K\mu_{I_k,F_k}(X^*;\bZ)$, $\mu_{\no j}(X^*;\bZ) = \frac{1}{K}\sum_{k=1}^K\mu_{\til{I}_k,\til{F}_k}(X^*;\bZ)$, $(I_k, F_k)\sim \cU^{[N]}_n \times Q$, and $(\til{I}_k,\til{F}_k)\sim \cU^{[N]}_n\times Q_{\no j}$; $\mu^*(X^*;Q,\bZ) = \bbE_{I\sim\cU^{[N]}_n}\bbE_{F\sim Q} \mu_{I,F}(X^*)$, $\mu^*(X^*;Q_{\no j},\bZ) = \bbE_{I\sim\cU^{[N]}_n}\bbE_{F\sim Q_{\no j}^{(t)}} \mu_{I,F}(X^*)$. Therefore, $h_j^{(t)}(Z^*,\bZ)-h_j(Z^*,\bZ,Q)$ takes a very similar form to $\hat{h}_j^{(t)}(Z_i,\bZ_{\no i,:}) - h_j(Z_i,\bZ_{\no i,:}, Q)$, where the only differences are: (i) we do not leave sample $i$ out when computing the ensembled predictor and we evaluate the predictors on a new test sample $Z^*=(X^*,Y^*)$; and (ii) when computing the Monte-Carlo approximation of the ensembled predictor, we directly average all $K$ minipatch predictors trained on $(I_k,F_k)\sim \cU^{[N]}_n\times Q$ or $(\til{I}_k,\til{F}_k)\sim \cU^{[N]}_n\times Q_{\no j}$. Therefore, the arguments in the proof of Lemma S8 in \cite{Liu2026model} for showing \eqref{eq:epsilon1} all still hold for $h_j^{(t)}(Z^*,\bZ)-h_j(Z^*,\bZ,Q)$. Hence, with probability at least $1-(MN)^{-c}$, for any $1\leq j\leq M$,
\begin{equation*}
    \Big|h_j^{(t)}(Z^*,\bZ)-h_j(Z^*,\bZ,Q)\Big|\leq C\Big(\sqrt{\frac{q_j(\log M+\log N)}{K}}+\frac{\log M+\log N}{K}\Big).
\end{equation*}
The proof of Lemma~\ref{lem:epsilon1_7} is now complete.

\subsubsection{Proof of Lemma~\ref{lem:Q_K_err}}

Recall that we update $\bq^{(t+1)}$ and $\bq^{*(t+1)}$ based on $\bar{\Delta}_j^{(t)}$ in Algorithm~\ref{algo:samp_prob} and $\bar{\Delta}_j^{*(t)}$ in Algorithm~\ref{algo:loco_adaptive_combinatorial}, $j = 1,\dots,M$, respectively. Lemma~\ref{lem:q_err_bnd} shows an upper bound for $\big|q^{(t+1)}_j-q^{*(t+1)}_j\big|$ as a function of $\bar{\Delta}^{(t)}-\Delta^{*(t)}$.
\begin{lem}\label{lem:q_err_bnd}
    Suppose that we apply Algorithm~\ref{algo:samp_prob} to $\{\Delta_j^{(1)}\}_{j=1}^M$ and $\{\Delta_j^{(2)}\}_{j=1}^M$ to obtain $\bq^{(1)},\,\bq^{(2)}\in (0,\delta]^{M}$, respectively. Let $\varepsilon = \Delta^{(1)}-\Delta^{(2)}\in \bbR^{M}$, and $\varepsilon_{\min} = \Big|\min_l \Delta_l^{(1)} - \min_l \Delta_l^{(2)}\Big|$. There exist constants $c,\,C>0$ depending on $c_0$ such that as long as $\|\varepsilon\|_1+M\varepsilon_{\min}\leq c$, then 
    \begin{equation*}
        \begin{split}
            |q_j^{(1)} - q_j^{(2)}|&\leq \begin{cases}
                Cm^2\Big(q^{(1)}_j\big(\|\varepsilon\|_1 + M\varepsilon_{\min}\big) + |\varepsilon_j| + \varepsilon_{\min}\Big), &\text{if }q_j^{(1)},\,q^{(2)}_j<\delta,\\
                Cm^2\big(\|\varepsilon\|_1 + M\varepsilon_{\min}\big), &\text{otherwise};
            \end{cases}\\
            \|\bq^{(1)} - \bq^{(2)}\|_1&\leq \frac{Cm^3}{\delta}\big(\|\varepsilon\|_1 + M\varepsilon_{\min}\big).
        \end{split}
    \end{equation*}
\end{lem}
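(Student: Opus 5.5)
We prove Lemma~\ref{lem:q_err_bnd} by treating Algorithm~\ref{algo:samp_prob} as a composition of three maps and showing each is Lipschitz in the right norm. The maps are (a) the shift $\Delta\mapsto\tilde{\Delta}$, (b) the threshold selection $\tilde{\Delta}\mapsto t^*$ from \eqref{eq:threshold}, and (c) the normalization $q_j=m\,\omega_j(t^*)/\sum_k\omega_k(t^*)$.

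\textbf{Step (a).} Since $\tilde{\Delta}_j=\Delta_j-\min_l\Delta_l+c_0/M$, we get $|\tilde{\Delta}^{(1)}_j-\tilde{\Delta}^{(2)}_j|\le|\varepsilon_j|+\varepsilon_{\min}$. Summing over $j$ gives $\|\tilde{\Delta}^{(1)}-\tilde{\Delta}^{(2)}\|_1\le\|\varepsilon\|_1+M\varepsilon_{\min}$. This explains the $M\varepsilon_{\min}$ term in the distance $d$.

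We also need the denominator $W:=\sum_l\omega_l(t^*)$ bounded below. Every $\tilde{\Delta}_l\ge c_0/M$, so if $t^*\ge c_0/M$ then $W\ge c_0$. We first show $t^*\ge c_0/M$: at $t=c_0/M$ all weights equal $c_0/M$, and the ratio $m\max_l\omega_l/\sum_l\omega_l=m/M\le\delta$. Hence $W\ge c_0$ for both inputs.

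\textbf{Step (b).} Algorithm~\ref{algo:find_threshold} gives an explicit form. Let $k$ be the number of capped coordinates. Then $t^*=\sum_{\ell>k}\tilde{\Delta}_{(\ell)}/(m/\delta-k)$ with $k<m/\delta$. In the capped case $W=(m/\delta)t^*$, so every capped coordinate has $q_j=\delta$.

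On a region where the capped set is the same for both inputs, we have
\[
|t^{*(1)}-t^{*(2)}|\le\|\tilde{\Delta}^{(1)}-\tilde{\Delta}^{(2)}\|_1/(m/\delta-k).
\]
To handle a change in the capped set, we use that $t^*$, characterized as the supremum in \eqref{eq:threshold}, is a continuous piecewise-linear function of $\tilde{\Delta}$. Its slope in each piece is at most $1$ in $\ell_1$. Interpolating linearly between the two inputs and integrating the slope gives the same Lipschitz bound globally.

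\textbf{Step (c).} For uncapped $j$ (the case $q_j^{(1)},q_j^{(2)}<\delta$), $q_j=m\tilde{\Delta}_j/W$. We expand
\[
q^{(1)}_j-q^{(2)}_j=\frac{m(\tilde{\Delta}^{(1)}_j-\tilde{\Delta}^{(2)}_j)}{W^{(1)}}+m\tilde{\Delta}^{(2)}_j\Big(\frac{1}{W^{(1)}}-\frac{1}{W^{(2)}}\Big).
\]
Using $W\ge c_0$, the first term is at most $C m(|\varepsilon_j|+\varepsilon_{\min})$. The second term is $q^{(2)}_j\,|W^{(1)}-W^{(2)}|/W^{(1)}$.

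Next, $|W^{(1)}-W^{(2)}|\le\|\tilde{\Delta}^{(1)}-\tilde{\Delta}^{(2)}\|_1+k|t^{*(1)}-t^{*(2)}|$, and $k\le m/\delta$. So the second term is at most $Cm\,q^{(2)}_j(\|\varepsilon\|_1+M\varepsilon_{\min})$. Replacing $q^{(2)}_j$ by $q^{(1)}_j$ costs $|q_j^{(1)}-q_j^{(2)}|$ times a factor that is small under $\|\varepsilon\|_1+M\varepsilon_{\min}\le c$, which we absorb into the left-hand side. This gives the first case, with $m^2$ to spare.

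For coordinates capped under one input but not the other, we use $q_j=m\,\omega_j(t^*)/W$, where $\omega_j$ is $1$-Lipschitz in both $\tilde{\Delta}_j$ and $t^*$. We bound the change in $\omega_j$ by $|\tilde{\Delta}^{(1)}_j-\tilde{\Delta}^{(2)}_j|+|t^{*(1)}-t^{*(2)}|$, and the change in $1/W$ as before. This gives the crude bound $Cm^2(\|\varepsilon\|_1+M\varepsilon_{\min})$ in the ``otherwise'' case.

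\textbf{$\ell_1$ bound.} We sum over $j$. The uncapped coordinates contribute
\[
Cm^2\Big(\textstyle\sum_j q^{(1)}_j\Big)(\cdots)+Cm^2\textstyle\sum_j(|\varepsilon_j|+\varepsilon_{\min})\le Cm^3(\|\varepsilon\|_1+M\varepsilon_{\min}),
\]
using $\sum_jq_j=m$. At most $m/\delta$ coordinates are capped in either input, and each contributes $Cm^2(\cdots)$. Together this yields $Cm^3\delta^{-1}(\|\varepsilon\|_1+M\varepsilon_{\min})$.

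\textbf{Main obstacle.} The hardest step is the Lipschitz continuity of $t^*$ across changes in the capped index $k$. The sorted order and the smallest $k$ satisfying \eqref{eq:find_threshold} can jump. I would first try proving monotonicity and continuity of the defining function $t\mapsto m\max_l\omega_l(t)/\sum_l\omega_l(t)$, which is nondecreasing in $t$. That would show $t^*$ solves a continuous equation $mt=\delta\sum_l(\tilde{\Delta}_l\wedge t)$, or equals $\max_l\tilde{\Delta}_l$. Implicit differentiation of this equation gives a slope at most $\delta/(m-\delta k)\cdot 1\le C$, uniformly over pieces, because $m/\delta-k\ge$ a positive gap. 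If that gap can be tiny, the smallness condition $\|\varepsilon\|_1+M\varepsilon_{\min}\le c$ together with $\delta\ge c_2$ would be used to ensure it.
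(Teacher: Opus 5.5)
Your architecture (shift, threshold, normalization; $W\ge c_0$; at most $m/\delta$ capped coordinates; summing via $\sum_j q_j=m$) matches the paper's. However, the step you flag as the main obstacle is not closed, and the remedy you propose cannot close it. On a piece with $k$ strictly capped coordinates and uncapped set $U$, $t^*=\sum_{l\in U}\tilde\Delta_l/(m/\delta-k)$. So the Lipschitz constant of $t^*$ with respect to $\|\cdot\|_1$ on that piece is $1/(m/\delta-k)$, which is also what your implicit differentiation gives. It is not at most $1$: $m/\delta$ need not be an integer, so $m/\delta-k$ can be an arbitrarily small fractional part (e.g.\ $m/\delta=10.001$, $k=10$). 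That regime occurs as soon as the $k$ largest scores exceed $\sum_{l\in U}\tilde\Delta_l/(m/\delta-k)$. Neither the smallness of $\|\varepsilon\|_1+M\varepsilon_{\min}$ nor $\delta\ge c_2$ says anything about this quantity. It is a property of the point $\tilde\Delta$, at every point of your interpolation segment, not of the size of the perturbation. So your global Lipschitz bound for $t^*$ is unsupported as written, and with it your bounds on $|W^{(1)}-W^{(2)}|$ and on coordinates capped under only one input.

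The missing ingredient is an upper bound $\max_l\tilde\Delta_l\le C$. It holds in the paper's setting because the LOCO scores are bounded under Assumption~\ref{ass:adp_bounded}, and the paper's proof uses it too, as $t_2^\star\le\max_l\tilde\Delta^{(2)}_l\le C$. Given it, $C\ge t^*\ge c_0(1-c_1)/(m/\delta-k)$, since $|U|\ge M-m/\delta$, each $\tilde\Delta_l\ge c_0/M$, and $m/(\delta M)\le c_1$. Convex combinations preserve both $\tilde\Delta_l\ge c_0/M$ and $\max_l\tilde\Delta_l\le C$, so the gap is bounded below uniformly along the segment. Your integration then yields $|t^{*(1)}-t^{*(2)}|\le C\|\tilde\Delta^{(1)}-\tilde\Delta^{(2)}\|_1$.

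This is sharper by a factor of $m$ than the paper's Claim~\ref{clm:t1_t2_err}. The paper instead compares the non-increasing function $f(t)=\sum_j(\tilde\Delta_j\wedge t)/t$ at $t_1^\star$ and $t_2^\star$ and lower-bounds its decrease through the mass of uncapped coordinates. It then pays $t_2^{\star2}/t_1^\star\lesssim m$ via $t_1^\star\ge c_0\delta/m$. The paper's route avoids tracking pieces; yours, once repaired, gives the better constant.

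Separately, your absorption step in (c) fails as stated. The factor multiplying $|q_j^{(1)}-q_j^{(2)}|$ is of order $m\|\tilde\Delta^{(1)}-\tilde\Delta^{(2)}\|_1$, which is not small when $c$ is independent of $m$. Expand instead as $q^{(1)}_j-q^{(2)}_j=m(\tilde\Delta^{(1)}_j-\tilde\Delta^{(2)}_j)/W^{(2)}+q^{(1)}_j(W^{(2)}-W^{(1)})/W^{(2)}$, as the paper does. Then $q^{(1)}_j$ appears directly and no absorption is needed.
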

Furthermore, in the following, we will show a probabilistic bound for $\bar{\Delta}_j^{(t)} - \bar{\Delta}_j^{*(t)}$ as a function of the difference between $\bq^{(t)}$ and $\bq^{*(t)}$. Note that
\begin{equation*}
\begin{split}
    \Big|\bar{\Delta}_j^{(t)} - \bar{\Delta}_j^{*(t)}\Big| &= \Big|\frac{1}{N}\sum_{i=1}^N(\hat{h}_j^{(t)}(Z_i,\bZ_{\backslash i,:}) - \frac{1}{N}\sum_{i=1}^Nh_j(Z_i,\bZ_{\backslash i,:},Q^{*(t)}))\Big|\\
    &\le \Big|\frac{1}{N}\sum_{i=1}^N(\hat{h}_j^{(t)}(Z_i,\bZ_{\backslash i,:}) - \frac{1}{N}\sum_{i=1}^Nh_j(Z_i,\bZ_{\backslash i,:},Q^{(t)}))\Big| \\ & \quad + \Big|\frac{1}{N}\sum_{i=1}^Nh_j(Z_i,\bZ_{\backslash i,:},Q^{(t)})) - \frac{1}{N}\sum_{i=1}^Nh_j(Z_i,\bZ_{\backslash i,:},Q^{*(t)}))\Big|
\end{split}   
\end{equation*}
By Lemma~\ref{lem:epsilon1_7}, we have that with probability at least $1-(MN)^{-c}$, 
\begin{equation*}
    \left|\frac{1}{N}\sum_{i=1}^N\hat{h}_j^{(t)}(Z_i,\bZ_{\backslash i,:}) - \frac{1}{N}\sum_{i=1}^Nh_j(Z_i,\bZ_{\backslash i,:},Q^{(t)})\right|\le C\Big(\sqrt{q_j^{(t)}\varepsilon_K} + \varepsilon_K\Big),
\end{equation*}
where we have denoted $\frac{\log M+\log N}{K}$ by $\varepsilon_K$. By Lemma~\ref{lem:diffQ_err}, we have
\begin{equation*}
    \Big|\frac{1}{N}\sum_{i=1}^Nh_j(Z_i,\bZ_{\backslash i,:},Q^{(t)})) - \frac{1}{N}\sum_{i=1}^Nh_j(Z_i,\bZ_{\backslash i,:},Q^{*(t)}))\Big|\le C\Big(\big(q_j^{(t)}+q_j^{*(t)}\big)\big\|\bq^{(t)}-\bq^{*(t)}\big\|_1 + \big|q_j^{(t)} - q_j^{*(t)}\big|\Big).
\end{equation*}
Therefore, combining the two inequalities above, we have
\begin{equation}\label{eq:lemma3_j}
    \Big|\bar{\Delta}_j^{(t)} - \bar{\Delta}_j^{*(t)}\Big| \le C\Big(C(\sqrt{q_j^{(t)}\varepsilon_K} + \varepsilon_K) + \Big(\big(q_j^{(t)}+q_j^{*(t)}\big)\big\|\bq^{(t)}-\bq^{*(t)}\big\|_1 + \big|q_j^{(t)} - q_j^{*(t)}\big|\Big)\Big).
\end{equation}
Since $\sum_{j=1}^M (\sqrt{q_j^{(t)}\varepsilon_K} + \varepsilon_K)\leq \sqrt{M(\sum_{j=1}^M q_j^{(t)})\varepsilon_K} + M\varepsilon_K \leq 2\sqrt{mM\varepsilon_K}$, where the last inequality is due to Assumption~\ref{ass:adp_n_minipatches}, we have
\begin{equation}\label{eq:lemma3_l1}
    \Big\|\bar{\Delta}^{(t)} - \bar{\Delta}^{*(t)}\Big\|_1 \le C\Big(\sqrt{mM\varepsilon_K} + m\big\|\bq^{(t)}-\bq^{*(t)}\big\|_1\Big),
\end{equation}
with probability at least $1-(MN)^{-c}$. Given Lemma~\ref{lem:q_err_bnd}, \eqref{eq:lemma3_j}, and \eqref{eq:lemma3_l1}, we are ready to prove the following claim via an induction argument.
\begin{claim}
    \label{claim:induction_Q_K}
Suppose that Assumptions~\ref{ass:adp_bounded}-\ref{ass:minDelta_q_bnd} hold. With probability at least $1-(MN)^{-c}$, we have
\begin{equation}\label{eq:induction_Q_K_l1}
        \Big\|\bar{\Delta}^{(t)}-\bar{\Delta}^{*(t)}\Big\|_1\leq Cm^{4t-4}\sqrt{\frac{mM(\log M+\log N)}{K}}.
\end{equation}
If there exists $t\le t' \le T$ such that $j=j_{\min}^{(t')}$ for some $t\le t' \le T$, then 
\begin{equation}\label{eq:induction_Q_K_j}
    \Big|\bar{\Delta}_j^{(t)}-\bar{\Delta}_j^{*(t)}\Big|\leq Cm^{4t-4}\sqrt{\frac{m(\log M+\log N)}{MK}}.
\end{equation}
Furthermore,
\begin{equation}\label{eq:induction_Q_K_q}
    \|\bq^{(t+1)}-\bq^{*(t+1)}\|_1 \le Cm^{4t-1}\sqrt{\frac{mM(\log M+\log N)}{K}}.
\end{equation}
\end{claim}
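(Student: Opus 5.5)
Induction on $t=1,\dots,T-1$, carrying the three displays \eqref{eq:induction_Q_K_l1}, \eqref{eq:induction_Q_K_j}, \eqref{eq:induction_Q_K_q} together. All probabilistic statements are taken on the single event of Lemma~\ref{lem:epsilon1_7}. That lemma holds simultaneously for all $j$, all $i$ and all $t\le MN$ with probability at least $1-(MN)^{-c}$. Since $T\le MN$, no further union bound over iterations is needed. Write $\varepsilon_K=(\log M+\log N)/K$.

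\emph{Base case $t=1$.} Here $\bq^{(1)}=\bq^{*(1)}=\frac mM\mathbf 1_M$, so in \eqref{eq:lemma3_j} the $Q$-difference terms vanish and
\[
|\bar\Delta^{(1)}_j-\bar\Delta^{*(1)}_j|\le C\bigl(\sqrt{m\varepsilon_K/M}+\varepsilon_K\bigr)\le C\sqrt{m\varepsilon_K/M}.
\]
The last step uses Assumption~\ref{ass:adp_n_minipatches}, which gives $\varepsilon_K\ll m/M$. Summing over $j$ yields \eqref{eq:induction_Q_K_l1}, and the per-coordinate bound is \eqref{eq:induction_Q_K_j}.

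\emph{From \eqref{eq:induction_Q_K_l1}--\eqref{eq:induction_Q_K_j} at step $t$ to \eqref{eq:induction_Q_K_q}.} Apply Lemma~\ref{lem:q_err_bnd} with $\Delta^{(1)}=\bar\Delta^{(t)}$ and $\Delta^{(2)}=\bar\Delta^{*(t)}$. Bounding $\|\varepsilon\|_1$ by \eqref{eq:induction_Q_K_l1} is immediate, so the crux is $M\varepsilon_{\min}$. For this, use
\[
\varepsilon_{\min}\le \max\bigl\{|\bar\Delta^{(t)}_{j_{\min}^{(t)}}-\bar\Delta^{*(t)}_{j_{\min}^{(t)}}|,\;|\bar\Delta^{(t)}_{j_{\min}^{*(t)}}-\bar\Delta^{*(t)}_{j_{\min}^{*(t)}}|\bigr\},
\]
and bound each term by the coordinatewise estimate \eqref{eq:induction_Q_K_j}. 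Multiplying by $M$ then gives $M\varepsilon_{\min}\lesssim m^{4t-4}\sqrt{mM\varepsilon_K}$, the same order as $\|\varepsilon\|_1$.

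This step is where Assumption~\ref{ass:minDelta_q_bnd} is essential, and it is the main obstacle. To get \eqref{eq:induction_Q_K_j} for the minimizing coordinates, I need $q_j^{(t)},q_j^{*(t)}\le Cm/M$ for them. Then in \eqref{eq:lemma3_j}:
\begin{itemize}
\item the sampling error is $\sqrt{q_j\varepsilon_K}\lesssim\sqrt{m\varepsilon_K/M}$;
\item the cross term is $(q_j^{(t)}+q_j^{*(t)})\|\bq^{(t)}-\bq^{*(t)}\|_1\lesssim \frac mM\|\bq^{(t)}-\bq^{*(t)}\|_1$;
\item the term $|q_j^{(t)}-q_j^{*(t)}|$ is controlled by the first case of Lemma~\ref{lem:q_err_bnd} at the previous step (both probabilities are below $\delta$), giving $Cm^2\bigl(\frac mM(\|\varepsilon\|_1+M\varepsilon_{\min})+|\varepsilon_j|+\varepsilon_{\min}\bigr)$ from step $t-1$.
\end{itemize}
Here the induction hypothesis \eqref{eq:induction_Q_K_j} at $t-1$ is needed for \emph{this same} $j$. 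That is exactly why the claim is stated for any $j$ that is a minimizer at some later iteration $t'\ge t$: the property propagates backward, and Assumption~\ref{ass:minDelta_q_bnd} covers every $t'\le t$. The $\varepsilon_{\min}$ from step $t-1$ is in turn controlled via the minimizers at $t-1$, which are also covered. So all of these terms are of order $\frac{1}{M}m^{O(1)}\cdot m^{4t-8}\sqrt{mM\varepsilon_K}$. I would check that the accumulated power of $m$ per step is at most $m^4$: roughly $m^3$ from Lemma~\ref{lem:q_err_bnd}'s $\ell_1$ bound and one more $m$ from \eqref{eq:lemma3_l1}.

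With $\|\varepsilon\|_1+M\varepsilon_{\min}\lesssim m^{4t-4}\sqrt{mM\varepsilon_K}$ in hand, I first verify the smallness condition $\|\varepsilon\|_1+M\varepsilon_{\min}\le c$ of Lemma~\ref{lem:q_err_bnd}. This follows from Assumption~\ref{ass:adp_n_minipatches}, since $K\gg m^{8T-9}M\log(MN)$. The $\ell_1$ part of Lemma~\ref{lem:q_err_bnd}, with $\delta\ge c_2$, then gives
\[
\|\bq^{(t+1)}-\bq^{*(t+1)}\|_1\le Cm^3\cdot m^{4t-4}\sqrt{mM\varepsilon_K}= Cm^{4t-1}\sqrt{mM\varepsilon_K},
\]
which is \eqref{eq:induction_Q_K_q}.

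\emph{From step $t$ to $t+1$ for \eqref{eq:induction_Q_K_l1}.} Plug \eqref{eq:induction_Q_K_q} into \eqref{eq:lemma3_l1}:
\[
\|\bar\Delta^{(t+1)}-\bar\Delta^{*(t+1)}\|_1\le C\bigl(\sqrt{mM\varepsilon_K}+m\cdot m^{4t-1}\sqrt{mM\varepsilon_K}\bigr)\lesssim m^{4(t+1)-4}\sqrt{mM\varepsilon_K}.
\]
\emph{From step $t$ to $t+1$ for \eqref{eq:induction_Q_K_j}.} Use \eqref{eq:lemma3_j} together with the per-coordinate bound from Lemma~\ref{lem:q_err_bnd} as described above; each term carries the factor $m/M$ or $1/M$, giving $Cm^{4t}\sqrt{m\varepsilon_K/M}$.

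Constants grow by a fixed factor at each step. For bounded $T$ this is harmless; otherwise it would be absorbed by the $\gg$ in Assumption~\ref{ass:adp_n_minipatches}.
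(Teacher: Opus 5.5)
Your proposal is correct and follows essentially the same induction as the paper. The base case is the same, using $\bq^{(1)}=\bq^{*(1)}$, and $M\varepsilon_{\min}$ is controlled through the two minimizing coordinates. Lemma~\ref{lem:q_err_bnd} contributes $m^3$ and \eqref{eq:lemma3_l1} one more $m$ per step, while Assumption~\ref{ass:minDelta_q_bnd} keeps the minimizers' coordinates at the $1/M$ scale. You additionally verify, via Assumption~\ref{ass:adp_n_minipatches}, the smallness condition of Lemma~\ref{lem:q_err_bnd}, which the paper leaves implicit. Note that, exactly as in the paper, applying \eqref{eq:induction_Q_K_j} to $j_{\min}^{*(t)}$ tacitly requires running the induction over the minimizers $j_{\min}^{*(t')}$ as well, since the claim as stated only covers $j_{\min}^{(t')}$.
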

Given Claim~\ref{claim:induction_Q_K}, Lemma~\ref{lem:Q_K_err} is immediately implied by Eq.~\ref{eq:induction_Q_K_q}.
\begin{proof}[Proof of Claim~\ref{claim:induction_Q_K}]
In the first iteration, $\bq^{(1)} = \bq^{*(1)} = \frac{m}{M}\mathbf{1}_M$. We can then invoke \eqref{eq:lemma3_j} and \eqref{eq:lemma3_l1}, and since $\sqrt{\frac{m}{M}\varepsilon_K}+\varepsilon_K\leq C\sqrt{\frac{m}{M}\varepsilon_K}$ by Assumption~\ref{ass:adp_n_minipatches}, we immediately obtain \eqref{eq:induction_Q_K_j} and \eqref{eq:induction_Q_K_l1} with $t=1$.
Now suppose that there exists $2\leq t_0\leq T$ such that Claim~\ref{claim:induction_Q_K} holds for $1\leq t\leq t_0-1$. In the following, we will show that this implies that Claim~\ref{claim:induction_Q_K} also holds for $t=t_0$. For simplicity, we define the following notations that will be used only throughout this proof:
    \begin{equation*}
        \begin{split}            \varepsilon^{(t)}:=&\Big\|\bar{\Delta}^{(t)}-\bar{\Delta}^{*(t)}\Big\|_1,\\         \varepsilon_j^{(t)}:=&\Big|\bar{\Delta}_j^{(t)}-\bar{\Delta}_j^{*(t)}\Big|,\\       \varepsilon_{\min}^{(t)}:=&\Big|\min_l\bar{\Delta}_l^{(t)}-\min_l\bar{\Delta}_l^{*(t)}\Big|,\\
        \xi:=&\sqrt{mM\varepsilon_K} = \sqrt{\frac{mM(\log M+\log N)}{K}}.
        \end{split}
    \end{equation*}
    We can then write  \begin{equation}\label{eq:induction_Q_K_l1_t0}
        \begin{split}
           \varepsilon^{(t_0-1)}\leq Cm^{4t_0-8}\xi;
        \end{split}
    \end{equation}
    and $\forall t\in [t_0-1,T]$, if $j=j_{\min}^{(t)}$, then
    \begin{equation}\label{eq:induction_Q_K_j_t0}
         \varepsilon_j^{(t_0-1)}\leq \frac{Cm^{4t_0-8}}{M}\xi.
    \end{equation}
    Thus, we have 
    \begin{equation}\label{eq:induction_Q_K_min_t0}
        \begin{split}
            \varepsilon_{\min}^{(t_0-1)}\leq&\varepsilon^{(t_0-1)}_{j_{\min}^{(t_0-1)}} + \varepsilon^{(t_0-1)}_{j_{\min}^{*(t_0-1)}}\\
            \leq&\frac{Cm^{4t_0-8}}{M}\xi.
        \end{split}
    \end{equation}
    The bounds above and Lemma~\ref{lem:q_err_bnd} together imply that
    \begin{equation*}
        \begin{split}
            \Big\|\bq^{(t_0)}-\bq^{*(t_0)}\Big\|_1\leq &Cm^3\Big(\varepsilon^{(t_0-1)}+M\varepsilon_{\min}^{(t_0-1)}\Big)\\
            \leq&Cm^{4t_0-5}\xi.
        \end{split}
    \end{equation*}
    In addition, if $j=j_{\min}^{(t)}$ for some $t\in [t_0,T]$, then Assumption~\ref{ass:minDelta_q_bnd} implies that $q_j^{(t_0)}\leq \frac{Cm}{M}<\delta$. Lemma~\ref{lem:q_err_bnd} then yields
    \begin{equation*}
        \begin{split}
            \Big|q_j^{(t_0)}-q_j^{*(t_0)}\Big|\leq &Cm^2\left(q_j^{(t_0)}\big(\varepsilon^{(t_0-1)}+M\varepsilon_{\min}^{(t_0-1)}\big)+\varepsilon_j^{(t_0-1)}+\varepsilon_{\min}^{(t_0-1)}\right)\\
            \leq&\frac{Cm^{4t_0-5}}{M}\xi. 
        \end{split}
    \end{equation*}
By \eqref{eq:lemma3_j}, if $j = j_{\min}^{(t)}$ for some $t_0\le t \le T$, we have
\begin{equation*}
    |\bar{\Delta}_j^{(t_0)} - \bar{\Delta}_j^{*(t_0)}| \le \frac{C\xi}{M} + \frac{Cm^{4t_0-4}\xi}{M}+\frac{Cm^{4t_0-5}\xi}{M} \le \frac{Cm^{4t_0-4}\xi}{M},
\end{equation*}
where we use $q_j^{(t_0)}, q_j^{*(t_0)}\leq \frac{Cm}{M}$. By \eqref{eq:lemma3_l1}, we can get
\begin{equation*}
    \Big\|\bar{\Delta}^{(t_0)} - \bar{\Delta}^{*(t_0)}\Big\|_1 \le C\xi+ Cm^{4t_0-4}\xi \le Cm^{4t_0-4}\xi,
\end{equation*}
where we use $\sum_{j=1}^Mq_j^{(t_0)} = \sum_{j=1}^Mq_j^{*(t_0)} = m$. Invoking Lemma~\ref{lem:q_err_bnd} again, we arrive at \eqref{eq:induction_Q_K_q} with $t=t_0$.
Thus, Claim~\ref{claim:induction_Q_K} is true with $t = t_0$. By induction, the proof of Claim~\ref{claim:induction_Q_K} is now complete.
\end{proof}
 
\subsubsection{Proof of Lemma~\ref{lem:Q_loo_err}}
Recall that in Algorithm~\ref{algo:loco_adaptive_combinatorial}, we iteratively update $q^{*(t)}$ based on $\bar{\Delta}_j^{*(t-1)} = \frac{1}{N}\sum_{l=1}^Nh_j(Z_l,\bZ_{\del l,:},Q^{*(t-1)})$, $j=1,\dots,M$. The key is to show that the computation of $\bar{\Delta}_j^{*(t-1)}$ is stable against the leaving-one-observation-out operation through a proof by induction. 

Recall our definitions of $\bar{\Delta}_j^{*(\del S,t)}$, $\bq^{*(\del S,t)}$, and $Q^{*(\del S,t)}$ before Assumption~\ref{ass:minDelta_q_bnd}. For any subset $S\subset[N]$, suppose we assign an arbitrary order to the elements in $S$ so that we can write $S=(i_1,\dots,i_k,\dots,i_{|S|})$; also let $S_k=(i_1,\dots,i_k)$, a subset of $S$ consisting of the first $k$ elements. The following lemma shows a key bound for $\bar{\Delta}_j^{*(\del S,t)}-\bar{\Delta}_j^{*(t)}$ in each iteration $t\geq 1$.
\begin{lem}\label{lem:Delta_leaveSout}
    With probability at least $1-(MN)^{-c}$, for each iteration $1\leq t\leq T$ and for each observation subset $S\subset[N]$ with size $|S|\leq C$,
    \begin{equation*}
        \begin{split}
            &\Big|\bar{\Delta}_j^{*(\del S,t)}-\bar{\Delta}_j^{*(t)}\Big|\\
            &\quad \leq \frac{C(n\stb(n)\sqrt{\log N+\log M}+1)}{N}\sum_{k=1}^{|S|}\frac{1}{N-k}\sum_{i'\notin S_k}(q_j^{*(\del S,t)} + q_j^{*(\del \{i_k,i'\},t)})\\
            &\quad \quad + \frac{Cn}{N}\sum_{k=1}^{|S|}\frac{1}{N-k}\sum_{i'\notin S_k}\Big\{\big(q_j^{*(t)}+q_j^{*(\del S,t)}+q_j^{*(\del\{i_k,i'\},t)}\big)\big\|\bq^{*(t)}-\bq^{*(\del\{i_k,i'\},t)}\big\|_1\\
            &\qquad\qquad\qquad\qquad\qquad\qquad+\big|q_j^{*(t)} - q_j^{*(\del \{i_k,i'\},t)}\big|\Big\}\\
            &\quad\quad+C(q_j^{*(t)}+q_j^{*(\del S, t)})\big\|\bq^{*(t)}-\bq^{*(\del S,t)}\big\|_1 + C\big|q_j^{*(t)} - q_j^{*(\del S,t)}\big| + \frac{C|S|q_j^{*(t)}}{N}.
        \end{split}
    \end{equation*}
\end{lem}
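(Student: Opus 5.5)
I would prove Lemma~\ref{lem:Delta_leaveSout} by telescoping over the elements of $S$ and isolating, at each step, one "sampling-distribution" perturbation and one "data" perturbation. Writing $S=(i_1,\dots,i_{|S|})$ and $S_0=\emptyset$, I start from the decomposition
\begin{equation*}
\bar{\Delta}_j^{*(\del S,t)}-\bar{\Delta}_j^{*(t)} = \Big[\bar{\Delta}_j^{*(\del S,t)}-\bar{\Delta}_j(\del S;Q^{*(t)})\Big] + \sum_{k=1}^{|S|}\Big[\bar{\Delta}_j(\del S_k;Q^{*(t)})-\bar{\Delta}_j(\del S_{k-1};Q^{*(t)})\Big],
\end{equation*}
where $\bar{\Delta}_j(\del S';Q)=\frac{1}{N-|S'|}\sum_{l\notin S'}h_j(Z_l,\bZ_{\del S'\cup\{l\},:},Q)$ is the LOCO-LOO average computed on $\bZ_{\del S',:}$ with a \emph{fixed} distribution $Q$. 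The first bracket changes only the sampling distribution. Applying Lemma~\ref{lem:diffQ_err} termwise gives $C(q_j^{*(t)}+q_j^{*(\del S,t)})\|\bq^{*(t)}-\bq^{*(\del S,t)}\|_1+C|q_j^{*(t)}-q_j^{*(\del S,t)}|$, which are the last two displayed terms.

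For the $k$th telescoping term I would first split off the effect of changing the normalization $\frac{1}{N-k}$ versus $\frac{1}{N-k+1}$ and of dropping the summand $l=i_k$. Since $|h_j(\cdot,\cdot,Q)|\lesssim \bbP_{F\sim Q}(j\in F)\lesssim q_j$ under Assumption~\ref{ass:adp_bounded} (the squared-loss difference factorizes through $\mu^*(Q)-\mu^*(Q_{\no j})$, and these two averages differ only on minipatches containing $j$), these pieces contribute $C q_j^{*(t)}/N$ each, giving the term $C|S|q_j^{*(t)}/N$. What remains is $\frac{1}{N-k}\sum_{l\notin S_k}[h_j(Z_l,\bZ_{\del S_k\cup\{l\}},Q^{*(t)})-h_j(Z_l,\bZ_{\del S_{k-1}\cup\{l\}},Q^{*(t)})]$, namely the change of each LOO score when the extra training point $Z_{i_k}$ is removed. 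Using the same factorization, this is bounded by $C\,|\bbE_{F}[\,\mu_F(X_l;\bZ')-\mu_F(X_l;\bZ'_{\del i_k})\,](\ldots)|$ with weights $Q^{*(t)}$ and $Q^{*(t)}_{\no j}$ restricted to $F\ni j$, in the form of \eqref{eq:sketch_LOO}. I then apply the leave-two-out identity \eqref{eq:sketch_fixedF_loo}. The bagged difference equals $\frac{n}{N}$ times an average over $i'$ of the replace-one differences $g_F(X_l,\bZ_{\del(i_k,i')},Z_{i_k},Z_{i'})$, where I take $i'=l$ among others, which gives the index set $i'\notin S_k$.

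The main obstacle is that $Q^{*(t)}$ depends on $Z_{i_k},Z_{i'}$, so $\bbE_{F\sim Q^{*(t)}}g_F$ is not centered sub-Gaussian. I would therefore swap $Q^{*(t)}$ for $Q^{*(\del\{i_k,i'\},t)}$. The swap error is controlled by the Lemma~\ref{lem:diffQ_err}-type bound with $|g_F|\le C$, multiplied by $n/N$, and this produces exactly the second displayed group, involving $\|\bq^{*(t)}-\bq^{*(\del\{i_k,i'\},t)}\|_1$ and $|q_j^{*(t)}-q_j^{*(\del\{i_k,i'\},t)}|$. The swapped quantity $\bbE_{F\sim Q^{*(\del\{i_k,i'\},t)}}[g_F\ind(j\in F)]$ is, conditionally on $\bZ_{\del\{i_k,i'\}}$ and $X_l$, a mixture over $F$ of centered $\stb(n)$-sub-Gaussian variables. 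To apply Assumption~\ref{ass:stb}, I restrict to $|F|\le 4\max\{m,\log n\}$; the complement has Chernoff probability $\lesssim n^{-c}$ because $\sum q=m$. The mixture is therefore sub-Gaussian with parameter $\stb(n)\,q_j^{*(\del\{i_k,i'\},t)}$, up to the analogous $q_j^{*(\del S,t)}$ weight arising from $Q_{\no j}$.

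Finally, a union bound over $l,i_k,i',j,t$ and the $O(N^{|S|})$ subsets $S$ yields the $\sqrt{\log N+\log M}$ factor with probability $1-(MN)^{-c}$. The Chernoff truncation residue adds the "$+1$". Summing the pieces gives the first displayed term and completes the bound.
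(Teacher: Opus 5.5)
Your proposal is correct and follows the paper's proof in all essentials. The pieces match one for one: the $C|S|q_j^{*(t)}/N$ normalization terms come from $|h_j|\lesssim q_j$; the sampling-distribution change is handled by Lemma~\ref{lem:diffQ_err}; the bagged difference is written via the leave-two-out representation with $Q^{*(t)}$ swapped for $Q^{*(\del\{i_k,i'\},t)}$; the truncated conditional sub-Gaussian bound is the paper's Lemma~\ref{lem:good_bad_decomposition}; and a union bound finishes. The only structural difference is that you telescope over $k$ at the level of $\bar{\Delta}$, whereas the paper telescopes through Lemma~\ref{lem:mu_leaveSout_err} at the level of $\mu_F$.

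Two bookkeeping slips to fix:
\begin{itemize}
\item $i'$ never equals $l$. The identity only produces $i'\notin S_k\cup\{l\}$, and you pass to $i'\notin S_k$ by nonnegativity of the summands.
\item In your ordering, the coefficient multiplying the $Q^{*(t)}$-weighted data difference is $q_j^{*(t)}$. It does not come from a $q_j^{*(\del S,t)}$ weight attached to $Q_{\no j}$; that part carries weight $q_j^{*(\del\{i_k,i'\},t)}$. You reach the stated form via $q_j^{*(t)}\le q_j^{*(\del S,t)}+|q_j^{*(t)}-q_j^{*(\del S,t)}|$.
\end{itemize}
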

Lemma~\ref{lem:Delta_leaveSout} shows that the error in the $t$th iteration due to leaving some observations out depends on the probability vectors' error in previous iterations. We are now ready to show the following claim about $\Big|\bar{\Delta}_j^{*(\del S,t)}-\bar{\Delta}_j^{*(t)}\Big|$ via a proof of induction:
\begin{claim}\label{claim:induction_Q_loo}
With probability at least $1-(MN)^{-c}$, $\forall 1\leq t\leq T$, $\forall\,S\subset[N]$ with size $|S|\leq 2$, we have
\begin{equation}\label{eq:induction_Q_loo_l1}
        \Big\|\bar{\Delta}^{*(\del S,t)}-\bar{\Delta}^{*(t)}\Big\|_1\leq Cm^{4t-3}\frac{n\stb(n)\sqrt{\log N+\log M}+1}{N}.
\end{equation}
 If there exist $S'\subset[N]\text{ with size }|S'|\leq 2$, and $j=j_{\min}^{*(\del S',t')}\text{ for some }t\leq t'\leq T$, then we have
\begin{equation}\label{eq:induction_Q_loo_j}
    \Big|\bar{\Delta}_j^{*(\del S,t)}-\bar{\Delta}_j^{*(t)}\Big|\leq \frac{Cm^{4t-3}}{M}\frac{n\stb(n)\sqrt{\log N+\log M}+1}{N}.
\end{equation}
Furthermore, 
\begin{equation}\label{eq:induction_Q_loo}
    \Big\|\bq^{*(\del S,t+1)}-\bq^{*(t+1)}\Big\|_1\leq Cm^{4t}\frac{n\stb(n)\sqrt{\log N+\log M}+1}{N}.
\end{equation}
\end{claim}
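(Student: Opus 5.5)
We prove Claim~\ref{claim:induction_Q_loo} by induction on $t$, following the template of Claim~\ref{claim:induction_Q_K} with Lemma~\ref{lem:Delta_leaveSout} in place of \eqref{eq:lemma3_j}--\eqref{eq:lemma3_l1}. Write $\eta:=\frac{n\stb(n)\sqrt{\log N+\log M}+1}{N}$. We work on the single event of probability $1-(MN)^{-c}$ on which Lemma~\ref{lem:Delta_leaveSout} holds simultaneously for all $t\le T$ and all $|S|\le 2$. Since there are only $O(N^2)$ such sets and $T\le MN$, a union bound costs only a polynomial factor, which is absorbed into $c$.

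For the base case $t=1$, all $\bq^{*(\del S,1)}=\bq^{*(1)}=\frac mM\mathbf 1_M$, so every $\|\cdot\|_1$ and $|q_j-q_j|$ difference in Lemma~\ref{lem:Delta_leaveSout} vanishes. The lemma then gives $|\bar\Delta_j^{*(\del S,1)}-\bar\Delta_j^{*(1)}|\le C\eta\cdot\frac mM+\frac{C|S|m}{MN}\le \frac{Cm}{M}\eta$. This yields \eqref{eq:induction_Q_loo_j} for every $j$ at $t=1$, and summing over $j$ gives \eqref{eq:induction_Q_loo_l1}. Next, $\varepsilon_{\min}:=|\min_l\bar\Delta_l^{*(\del S,1)}-\min_l\bar\Delta_l^{*(1)}|$ is at most the sum of the coordinate errors at $j_{\min}^{*(1)}$ and $j_{\min}^{*(\del S,1)}$. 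Both indices are minimizers covered by \eqref{eq:induction_Q_loo_j} (take $S'=\emptyset$ or $S'=S$), so $M\varepsilon_{\min}\lesssim m\eta$. Lemma~\ref{lem:q_err_bnd} then gives \eqref{eq:induction_Q_loo} at $t=1$ with factor $m^3\cdot m=m^4$. Its smallness hypothesis $\|\varepsilon\|_1+M\varepsilon_{\min}\le c$ follows from Assumption~\ref{ass:adp_prob_stb}.

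For the inductive step, assume the claim holds at $t_0-1$ for all $|S|\le2$. Then $\|\bq^{*(\del S,t_0)}-\bq^{*(t_0)}\|_1\le Cm^{4t_0-4}\eta$ for every $|S|\le 2$, in particular for $S=\{i_k,i'\}$. To get the coordinate bound for a designated $j$ (a future minimizer $j_{\min}^{*(\del S',t')}$), we first note that Assumption~\ref{ass:minDelta_q_bnd} gives $q_j^{*(\del S'',t_0)}\le Cm/M<\delta$ for all $|S''|\le 2$. The first case of Lemma~\ref{lem:q_err_bnd} then applies, using the inductive coordinate bound and the $\varepsilon_{\min}$ bound at $t_0-1$ as in the base case, and gives $|q_j^{*(t_0)}-q_j^{*(\del S'',t_0)}|\le \frac{Cm^{4t_0-4}}{M}\eta$.

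Substituting these into Lemma~\ref{lem:Delta_leaveSout} controls each group of terms. The first line is $\lesssim \frac mM\eta$. The averages over $i'$ of $\frac{n}{N}(\cdots)$ are $\lesssim \frac mM\cdot\frac nN m^{4t_0-4}\eta+\frac nN\frac{m^{4t_0-4}}{M}\eta$, and the fixed-$S$ terms are bounded similarly. Since $n/N\le c_1$, the total is $\le\frac{Cm^{4t_0-3}}{M}\eta$, which is \eqref{eq:induction_Q_loo_j}. For \eqref{eq:induction_Q_loo_l1}, we sum Lemma~\ref{lem:Delta_leaveSout} over $j$, using $\sum_jq_j=m$ and $\sum_j|q_j-q'_j|=\|\bq-\bq'\|_1$, which gives $\lesssim m\eta+m\cdot m^{4t_0-4}\eta$. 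The step to \eqref{eq:induction_Q_loo} at $t_0$ again uses Lemma~\ref{lem:q_err_bnd} with the $\varepsilon_{\min}$ bound from the minimizer coordinates, producing the extra factor $m^3$, so the exponent becomes $4t_0$.

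The main obstacle is bookkeeping in the inductive step. The bound for set $S$ at time $t_0$ must use the bounds at $t_0-1$ for the different sets $\{i_k,i'\}$, so the hypothesis has to be uniform over all $|S|\le 2$, which is why leave-two-out sets are built into the claim. In addition, the coordinate bound must be propagated only for indices that are future minimizers for some leave-$S'$ path; that set is exactly what Assumption~\ref{ass:minDelta_q_bnd} covers. I would check carefully that $\varepsilon_{\min}$ at each step only involves such indices, namely $j_{\min}^{*(t)}$ and $j_{\min}^{*(\del S,t)}$. I would also check that the constants $C$ do not compound with $t$ beyond a factor $C^T$. That factor needs care since $T\le MN$; I would absorb it by choosing constants adaptively or by treating $T$ as bounded in the polynomial factors, as the paper's exponents implicitly do.
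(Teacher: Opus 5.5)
Your proposal is correct and follows essentially the same induction as the paper: the base case uses $\bq^{*(\del S,1)}=\frac{m}{M}\mathbf{1}_M$ in Lemma~\ref{lem:Delta_leaveSout}, $\varepsilon_{\min}$ is controlled through the two minimizer coordinates, Assumption~\ref{ass:minDelta_q_bnd} is used to enter the first case of Lemma~\ref{lem:q_err_bnd}, and that lemma's $m^3$ factor drives the exponents $4t_0-7\to 4t_0-4\to 4t_0-3\to 4t_0$ uniformly over all $|S|\le 2$. The compounding of constants across iterations that you flag is equally present, and left implicit, in the paper's own argument, so the claim is best read with $T$ bounded or with $C$ allowed to depend on $T$.
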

\begin{proof}[Proof of Claim~\ref{claim:induction_Q_loo}]
    We start by considering the first iteration. Since $\bq^{*(\del S, 1)} = \frac{m}{M}\mathbf{1}_M$ for any $S\subset[N]$, Lemma~\ref{lem:Delta_leaveSout} suggests that with probability at least $1-(MN)^{-c}$,
    \begin{equation}\label{eq:induction_Q_loo_j_1st}
        \begin{split}
            &\Big|\bar{\Delta}_j^{*(\del S,1)}-\bar{\Delta}_j^{*(1)}\Big|\\
            &\quad \leq \frac{Cn\stb(n)\sqrt{\log N+\log M}}{N}\sum_{k=1}^{|S|}\frac{1}{N-k}\sum_{i'\notin S_k}(q_j^{*(\del S,1)} + q_j^{*(\del \{i_k,i'\},1)})+\frac{Cq_j^{*(1)}}{N}\\
            &\quad \leq \frac{Cm}{M}\frac{n\stb(n)\sqrt{\log N+\log M}+1}{N},
        \end{split}
    \end{equation}
    for any $S\subset[N]$ with size $|S|\leq 2$. \eqref{eq:induction_Q_loo_j_1st} immediately implies that
    \begin{equation}\label{eq:induction_Q_loo_l1_1st}
        \Big\|\bar{\Delta}^{*(\del S,1)}-\bar{\Delta}^{*(1)}\Big\|_1\leq Cm\frac{n\stb(n)\sqrt{\log N+\log M}+1}{N}.
    \end{equation}
    \eqref{eq:induction_Q_loo_j_1st} and \eqref{eq:induction_Q_loo_l1_1st} together imply that \eqref{eq:induction_Q_loo_j} and \eqref{eq:induction_Q_loo_l1} in Claim~\ref{claim:induction_Q_loo} hold when $t=1$. 

    Now suppose that there exists $2\leq t_0\leq T$ such that \eqref{eq:induction_Q_loo_j} and \eqref{eq:induction_Q_loo_l1} in Claim~\ref{claim:induction_Q_loo} hold for $1\leq t\leq t_0-1$. In the following, we will show that this implies that \eqref{eq:induction_Q_loo_j} and \eqref{eq:induction_Q_loo_l1} in Claim~\ref{claim:induction_Q_loo} also hold for $t=t_0$. For simplicity, we define the following notations that will be used only throughout this proof:
    \begin{equation*}
        \begin{split}
            \varepsilon(N):=&\frac{n\stb(n)\sqrt{\log N+\log M}+1}{N},\\
            \varepsilon^{(S,t)}:=&\Big\|\bar{\Delta}^{*(\del S,t)}-\bar{\Delta}^{*(t)}\Big\|_1,\\
            \varepsilon_j^{(S,t)}:=&\Big|\bar{\Delta}_j^{*(\del S,t)}-\bar{\Delta}_j^{*(t)}\Big|,\\
            \varepsilon_{\min}^{(S,t)}:=&\Big|\min_l\bar{\Delta}_l^{*(\del S,t)}-\min_l\bar{\Delta}_l^{*(t)}\Big|.
        \end{split}
    \end{equation*}
    We can then write 
    \begin{equation}\label{eq:induction_Q_loo_l1_t0}
        \begin{split}
           \varepsilon^{(S,t_0-1)}\leq Cm^{4t_0-7}\varepsilon(N),\quad\forall S\subset[N]\text{ with size }|S|\leq 2;
        \end{split}
    \end{equation}
    and $\forall t\in [t_0-1,T]$, $S'\subset[N]$ with $|S'|\leq 2$, if $j=j_{\min}^{*(\del S',t)}$, then
    \begin{equation}\label{eq:induction_Q_loo_j_t0}
         \varepsilon_j^{(S,t_0-1)}\leq \frac{Cm^{4t_0-7}}{M}\varepsilon(N).
    \end{equation}
    Therefore, we further have 
    \begin{equation}\label{eq:induction_Q_loo_min_t0}
        \begin{split}
            \varepsilon_{\min}^{(S,t_0-1)}\leq&\varepsilon^{(S,t_0-1)}_{j_{\min}^{*(\del S,t_0-1)}} + \varepsilon^{(S,t_0-1)}_{j_{\min}^{*(t_0-1)}}\\
            \leq&\frac{Cm^{4t_0-7}}{M}\varepsilon(N).
        \end{split}
    \end{equation}
    The bounds above, Assumption~\ref{ass:adp_prob_stb}, and Lemma~\ref{lem:q_err_bnd} together imply that for any $S\subset [N]$ with $|S|\leq 2$,
    \begin{equation}\label{eq:q_l1_err}
        \begin{split}
            \Big\|\bq^{*(t_0)}-\bq^{*(\del S,t_0)}\Big\|_1\leq &Cm^3\Big(\varepsilon^{(S,t_0-1)}+M\varepsilon_{\min}^{(S,t_0-1)}\Big)\\
            \leq&Cm^{4t_0-4}\varepsilon(N).
        \end{split}
    \end{equation}
    In addition, if $j=j_{\min}^{*(\del S',t)}$ for some $t\in [t_0,T]$, $S'\subset[N]$ with $|S'|\leq 2$, then Assumption~\ref{ass:minDelta_q_bnd} implies that $q_j^{(\del S,t_0)}\leq \frac{Cm}{M}<\delta$. Lemma~\ref{lem:q_err_bnd} and \eqref{eq:induction_Q_loo_j_t0}-\eqref{eq:induction_Q_loo_min_t0} then yield
    \begin{equation*}
        \begin{split}
            \Big|q_j^{*(t_0)}-q_j^{*(\del S,t_0)}\Big|\leq &Cm^2\left(q_j^{*(\del S,t_0)}\big(\varepsilon^{(S,t_0-1)}+M\varepsilon_{\min}^{(S,t_0-1)}\big)+\varepsilon_j^{(S,t_0-1)}+\varepsilon_{\min}^{(S,t_0-1)}\right)\\
            \leq&\frac{Cm^{4t_0-4}}{M}\varepsilon(N). 
        \end{split}
    \end{equation*}
    Now we are ready to show that Claim~\ref{claim:induction_Q_loo} holds for $t=t_0$. By Lemma~\ref{lem:Delta_leaveSout}, for any $S\subset[N]$ with $|S|\leq 2$, we can write
    \begin{equation*}
        \begin{split}
            &\Big\|\bar{\Delta}^{*(\del S,t_0)}-\bar{\Delta}^{*(t_0)}\Big\|_1\\
            &\quad\leq Cm\varepsilon(N)+ \frac{Cmn}{N}\sum_{k=1}^{|S|}\frac{1}{N-k}\sum_{i'\notin S_k}\big\|\bq^{*(t_0)}-\bq^{*(\del\{i_k,i'\},t_0)}\big\|_1\\
            &\quad\quad+Cm\big\|\bq^{*(t_0)}-\bq^{*(\del S,t_0)}\big\|_1\\
            &\quad\leq Cm^{4t_0-3}\varepsilon(N),
        \end{split}
    \end{equation*}
    where we have utilized the fact that $\sum_{j=1}^Mq_j^{*(\del S,t)}=m$.
    If $j=j_{\min}^{*(S',t')}$ for some $t_0\leq t'\leq T$, $S'\subset[N]$ with $|S'|\leq 2$, then 
    \begin{equation*}
        \begin{split}
            &\Big|\bar{\Delta}_j^{*(\del S,t_0)}-\bar{\Delta}_j^{*(t_0)}\Big|\\
            &\quad \leq C\varepsilon(N)\sum_{k=1}^{|S|}\frac{1}{N-k}\sum_{i'\notin S_k}(q_j^{*(\del S,t_0)} + q_j^{*(\del \{i_k,i'\},t_0)})\\
            &\quad \quad + \frac{Cn}{N}\sum_{k=1}^{|S|}\frac{1}{N-k}\sum_{i'\notin S_k}\Big\{\big(q_j^{*(t_0)}+q_j^{*(\del S,t_0)}+q_j^{*(\del\{i_k,i'\},t_0)}\big)\big\|\bq^{*(t_0)}-\bq^{*(\del\{i_k,i'\},t_0)}\big\|_1\\
            &\qquad\qquad\qquad\qquad\qquad\qquad+\big|q_j^{*(t_0)} - q_j^{*(\del \{i_k,i'\},t_0)}\big|\Big\}\\
            &\quad\quad+C(q_j^{*(t_0)}+q_j^{*(\del S, t_0)})\big\|\bq^{*(t_0)}-\bq^{*(\del S,t_0)}\big\|_1 + C\big|q_j^{*(t_0)} - q_j^{*(\del S,t_0)}\big| + \frac{C|S|q_j^{*(t_0)}}{N}\\
            &\quad\leq \frac{Cm}{M}\varepsilon(N)+\frac{Cm^{4t_0-3}n}{MN}\varepsilon(N)+\frac{Cm^{4t_0-4}n}{MN}\varepsilon(N)+\frac{Cm^{4t_0-3}}{M}\varepsilon(N) + \frac{Cm^{4t_0-4}}{M}\varepsilon(N) + \frac{Cm}{MN}\\
            &\quad\leq \frac{Cm^{4t_0-3}}{M}\varepsilon(N).
        \end{split}
    \end{equation*}
    Therefore, \eqref{eq:induction_Q_loo_j} and \eqref{eq:induction_Q_loo_l1} in Claim~\ref{claim:induction_Q_loo} with $1\leq t\leq t_0-1$ implies that they also hold with $t=t_0$. By induction, the proof of \eqref{eq:induction_Q_loo_j} and \eqref{eq:induction_Q_loo_l1} in Claim~\ref{claim:induction_Q_loo} is now complete. In addition, reusing our arguments in \eqref{eq:q_l1_err}, we immediately have \eqref{eq:induction_Q_loo}.
\end{proof}
Applying our Claim~\ref{claim:induction_Q_loo} with $S=\{i\}$ and consider the $(t-1)$th iteration, \eqref{eq:induction_Q_loo} immediately implies Lemma~\ref{lem:Q_loo_err}.

\subsubsection{Proof of Lemma~\ref{lem:epsilon4_l2}}
Within this proof, we define 
\begin{align*}
    \bar{h}_j(Z_i,\bZ_{\no i,:}) =& h_j(Z_i,\bZ_{\no i,:}) - h_{j,N-1}(Z_i)\\
    &-\bbE\big[h_j(Z_i,\bZ_{\no i,:})|\bZ_{\no i,:}] + \bbE\big[h_{j,N-1}(Z_i)],
\end{align*}
and let $\delta_j(Z_i) = h_{j,N-1}(Z_i) - h_{j,N}(Z_i)$.
We can then write
\begin{equation*}
    \begin{split}
        \varepsilon_{i,4} = &\til{h}_j(Z_i,\bZ_{\no i,:}) - \bbE\big[\til{h}_j(Z_i,\bZ_{\no i,:})|\bZ_{\no i,:}\big]\\
        =&\bar{h}_j(Z_i,\bZ_{\no i,:}) + \delta_j(Z_i)-\bbE\big[\delta_j(Z_i)\big].
    \end{split}
\end{equation*}
Let $\varepsilon_4^{(1)} = \frac{1}{\sqrt{N}}\sum_{i=1}^N\bar{h}_j(Z_i,\bZ_{\no i,:})$, $\varepsilon_4^{(2)} = \frac{1}{\sqrt{N}}\sum_{i=1}^N\big(\delta_j(Z_i)-\bbE\big[\delta_j(Z_i)\big]\big)$. Hence, $\varepsilon_4=\big(\varepsilon_4^{(1)} + \varepsilon_4^{(2)}\big)/\sigma_j$, and $\bbE\big(\varepsilon_4^2\big)\leq 2\bbE\big(\varepsilon_4^{(1)2}\big)/\sigma_j^2 + 2\bbE\big(\varepsilon_4^{(2)2}\big)/\sigma_j^2$. In the following, we will bound $\bbE\big(\varepsilon_4^{(1)2}\big)$ and $\bbE\big(\varepsilon_4^{(2)2}\big)$ separately.
\paragraph{Bounding $\bbE\big(\varepsilon_4^{(1)2}\big)$.} We first note that $\varepsilon_4^{(1)}$ takes a very similar form to the double-centered $N$-fold cross-validation error in \cite{bayle2020cross}, except that we replace the error function typically used in cross-validation with a LOCO score. Hence, we can apply Theorem 2 in \cite{bayle2020cross}, and immediately arrive at the following:
    $$
    \bbE(\varepsilon_4^{(1)2}) \leq \frac{3(N-1)}{2}\gamma_{loss}(\bar{h}_j),$$
    where $\gamma_{loss}(\bar{h}_j): = \bbE[\bar{h}_j(Z_i,\bZ_{\no i,:}) - \bar{h}_j(Z_i,\bZ_{\no i,:}^{\no l})]^2$ measures the stability of the function $\bar{h}_j$ when one training sample is being replaced; $\bZ^{\no l}$ is the training data we can get after replacing $Z_l$ in $\bZ$ with an i.i.d. copy $Z'$. By the definition of $\bar{h}_j(Z_i,\bZ_{\no i,:})$ in the beginning of this proof, we can write
    \begin{equation}\label{eq:gamma_loss_bnd}
        \begin{split}
            \gamma_{loss}(\bar{h}_j) = &\bbE[\bar{h}_j(Z_i,\bZ_{\no i,:}) - \bar{h}_j(Z_i,\bZ_{\no i,:}^{\no l})]^2\\
            =&\bbE_{\bZ_{\no i,:}}{\rm Var}\big(h_j(Z_i,\bZ_{\no i,:}) - h_j(Z_i,\bZ_{\no i,:}^{\no l})|\bZ_{\no i,:}\big)\\
            \leq&\bbE_{\bZ_{\no i,:}}\bbE_{Z_i}\Big[\big(h_j(Z_i,\bZ_{\no i,:}) - h_j(Z_i,\bZ_{\no i,:}^{\no l})\big)^2|\bZ_{\no i,:}\Big]\\
            = &\bbE\big[h_j(Z_i,\bZ_{\no i,:}) - h_j(Z_i,\bZ_{\no i,:}^{\no l})\big]^2.
        \end{split}
    \end{equation}
    
Recall the definition $h_j(Z_i,\bZ_{\no i,:}) = h_j(Z_i,\bZ_{\no i,:},Q^{*(T)}(\bZ_{\no i,:})$ in Table~\ref{tab:notations_h}. Let $\bQ{i}{l}{t}$ denote the feature sampling probability function given by Algorithm~\ref{algo:loco_adaptive_combinatorial} when applied to $\bZ_{\no \{i,l\},:}$ with $t$ iterations. We then have
\begin{equation*}
    \begin{split}
        &\left|h_j(Z_i,\bZ_{\no i,:}) - h_j(Z_i,\bZ_{\no i,:}^{\no l})\right|\\
        &\quad\leq \left|h_j(Z_i,\bZ_{\no i,:},\bQ{i}{l}{T}) - h_j(Z_i,\bZ_{\no i,:}^{\no l}, \bQ{i}{l}{T})\right|\\
        &\quad\hspace{1em} +\left|h_j(Z_i,\bZ_{\no i,:},\bQ{i}{l}{T}) - h_j(Z_i,\bZ_{\no i,:},Q^{*(T)}(\bZ_{\no i,:})\right|\\
        &\quad\hspace{1em} +\left|h_j(Z_i,\bZ_{\no i,:}^{\no l}, \bQ{i}{l}{T}) - h_j(Z_i,\bZ_{\no i,:},Q^{*(T)}(\bZ^{\no l}_{\no i,:})\right|.
    \end{split}
\end{equation*}
Applying Lemma~\ref{lem:diffQ_err} to the last two terms above, we have 
\begin{equation}\label{eq:gamma_loss_bnd2}
    \begin{split}
        &\left|h_j(Z_i,\bZ_{\no i,:}) - h_j(Z_i,\bZ_{\no i,:}^{\no l})\right|\\
        &\quad\leq \left|h_j(Z_i,\bZ_{\no i,:},\bQ{i}{l}{T}) - h_j(Z_i,\bZ_{\no i,:}^{\no l}, \bQ{i}{l}{T})\right|\\
        &\quad\hspace{1em} +C\|\qvec{\{i,l\}}{T} - \qvec{i}{T}\|_1 + C\|\qvec{\{i,l\}}{T} - \til{\qvec{i}{T}}\|_1,
    \end{split}
\end{equation}
where $\qvec{\{i,l\}}{T},\,\qvec{i}{T},\,\til{\qvec{i}{T}}$ are the feature sampling probability vectors given by Algorithm~\ref{algo:loco_adaptive_combinatorial} when applied to $\bZ_{\no \{i,l\},:}$, $\bZ_{\no i,:}$, $\bZ_{\no i,:}^{\no l}$, respectively. Since $\bZ_{\no i,:}^{\no l}$ has the same distribution as $\bZ_{\no i,:}$ conditioning on $\bZ_{\no \{i,l\},:}$, we know that 
\begin{equation}\label{eq:mean_qstar_leave_one}
    \bbE\|\qvec{\{i,l\}}{T} - \til{\qvec{i}{T}}\|_1^2 = \bbE\|\qvec{\{i,l\}}{T} - \qvec{i}{T}\|_1^2.
\end{equation}

Therefore, \eqref{eq:gamma_loss_bnd} and \eqref{eq:gamma_loss_bnd2} imply that
\begin{equation}\label{eq:gamma_loss_bnd3}
        \gamma_{loss}(\bar{h}_j)\leq \bbE\left|h_j(Z_i,\bZ_{\no i,:},\bQ{i}{l}{T}) - h_j(Z_i,\bZ_{\no i,:}^{\no l}, \bQ{i}{l}{T})\right|^2 +C\bbE\big\|\qvec{\{i,l\}}{T} - \qvec{i}{T}\big\|_1^2.
\end{equation}
The following lemma gives an upper bound for $\bbE\big\|\qvec{\{i,l\}}{T} - \qvec{i}{T}\big\|_1^2$.
\begin{lem}\label{lem:q_loo_l2}
    Suppose that Assumptions~\ref{ass:stb}-\ref{ass:adp_prob_stb} and \ref{ass:minDelta_q_bnd} hold. Then
    $$
    \bbE\big\|\qvec{\{i,l\}}{T} - \qvec{i}{T}\big\|_1^2\leq \frac{Cm^{8(T-1)}(n^2\stb^2(n)+1)}{N^2}.
    $$
\end{lem}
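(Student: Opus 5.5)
The plan is to derive the $L^2$ bound in Lemma~\ref{lem:q_loo_l2} from the high-probability bound already available in Claim~\ref{claim:induction_Q_loo}, applied to the data set $\bZ_{\no i,:}$ of size $N-1$ with $S=\{l\}$. On that data set, \eqref{eq:induction_Q_loo} at iteration $T-1$ gives, on an event $\mathcal{E}$ of probability at least $1-(MN)^{-c}$,
\[
\big\|\qvec{\{i,l\}}{T}-\qvec{i}{T}\big\|_1\le Cm^{4(T-1)}\,\frac{n\stb(n)\sqrt{\log N+\log M}+1}{N-1}.
\]
Assumption~\ref{ass:minDelta_q_bnd} covers leave-$S$ sets of size up to two, so running the induction on $\bZ_{\no i,:}$ with one further removal is admissible. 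On $\mathcal{E}^c$ I would use the trivial bound $\|\bq-\bq'\|_1\le 2m$. This yields
\[
\bbE\big\|\qvec{\{i,l\}}{T}-\qvec{i}{T}\big\|_1^2\le Cm^{8(T-1)}\frac{n^2\stb^2(n)(\log N+\log M)+1}{N^2}+4m^2(MN)^{-c}.
\]
Taking $c$ large makes the tail term negligible relative to $N^{-2}$.

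This crude route leaves an extra factor $\log N+\log M$ on the $n^2\stb^2(n)$ term. It would be harmless for the downstream use, because Assumption~\ref{ass:adp_prob_stb} already contains that log factor. To obtain the lemma exactly as stated, however, I would instead rerun the induction of Claim~\ref{claim:induction_Q_loo} in $L^2$ norm rather than with high probability. In Lemma~\ref{lem:Delta_leaveSout}, the factor $\sqrt{\log N+\log M}$ arises only from a union bound over $X^*\in\{Z_l\}$, features $j$, and pairs $(i_k,i')$, applied to the centered sub-Gaussian differences $g_F$ from the leave-two-out step \eqref{eq:sketch_fixedF_loo}.

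In mean square no union bound is needed. Each leave-two-out term $\bbE_{F\sim Q^{*(\no\{i_k,i'\},t)}}|g_F|$ has a sampling distribution independent of $(Z_{i_k},Z_{i'})$. Its second moment is therefore at most $\stb^2(n)$ times $q$-weights, directly by the second inequality in Assumption~\ref{ass:stb}. The replacement error from $Q^{*(t)}$ to $Q^{*(\no\{i_k,i'\},t)}$ is then controlled by Lemma~\ref{lem:diffQ_err} together with the inductive $L^2$ hypothesis. The step from $\bar\Delta$ to $\bq$ uses Lemma~\ref{lem:q_err_bnd}, which is deterministic and Lipschitz in $d(\cdot,\cdot)$, so squaring and taking expectations costs only $m^6$ per iteration, as before. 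The $\varepsilon_{\min}$ term is handled as in \eqref{eq:induction_Q_loo_min_t0} via Assumption~\ref{ass:minDelta_q_bnd}. Across $T-1$ iterations this accumulates the factor $m^{8(T-1)}$.

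The main obstacle I expect is that Lemma~\ref{lem:q_err_bnd} only applies when $\|\varepsilon\|_1+M\varepsilon_{\min}\le c$. This forces a split onto a good event in any case. On its complement I would again use the trivial bound $2m$, which requires a tail probability of order $N^{-2}$. I would obtain that tail probability from the high-probability version in Claim~\ref{claim:induction_Q_loo}, and the assumption $N\gg m^{8(T-1)}(\cdots)$ in Assumption~\ref{ass:adp_prob_stb} ensures the good event holds there.
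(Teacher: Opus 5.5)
Your refined mean-square route is essentially the paper's proof: Claim~\ref{claim:induction_Q_loo_expt} reruns the induction of Claim~\ref{claim:induction_Q_loo} in mean square, controls the leave-two-out averages of $g_F$ over $Q^{*(\no\{i_k,i'\},t)}$, $Q^{*(\no\{i_k,i'\},t)}_j$, $Q^{*(\no\{i_k,i'\},t)}_{\no j}$ through the second-moment part of Lemma~\ref{lem:good_bad_decomposition}, and squares Lemma~\ref{lem:q_err_bnd} at a cost of $m^6$ per iteration. Your good-event treatment of that lemma's precondition $\|\varepsilon\|_1+M\varepsilon_{\min}\le c$ fills in a step the paper skips.

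Two small fixes are needed.
\begin{itemize}
\item The per-$F$ second moment must come from the conditional $\psi_2$ bound in Assumption~\ref{ass:stb}, not from the unconditional second inequality. The unconditional bound does not survive weighting by $Q^{*(\no\{i_k,i'\},t)}$, because that distribution depends on the remaining training points and on $X_l$. In addition, the mass of $F$ with size beyond $4\max\{m,\log n\}$, which is at most $C/n$, has to be handled separately by boundedness.
\item Running the claim on $\bZ_{\no i,:}$ with $S=\{l\}$ brings in leave-three-out vectors $\bq^{*(\no\{i,l,i'\},t)}$, which Assumption~\ref{ass:minDelta_q_bnd} as stated does not cover. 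Instead, bound $\|\qvec{\{i,l\}}{T}-\bq^{*(T)}\|_1+\|\bq^{*(T)}-\qvec{i}{T}\|_1$ using $S=\{i,l\}$ and $S=\{i\}$ on $\bZ$ itself.
\end{itemize}
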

Now, we focus on bounding the first term $\bbE\left|h_j(Z_i,\bZ_{\no i,:},\bQ{i}{l}{T}) - h_j(Z_i,\bZ_{\no i,:}^{\no l}, \bQ{i}{l}{T})\right|^2$ in \eqref{eq:gamma_loss_bnd3}. For simplicity, we abbreviate $\bQ{i}{l}{T}$ as $Q$ within this proof. Let $\mu_F(X^*;\bZ) = \frac{1}{\binom{N}{n}}\sum_{I\subset[N], |I|=n}\mu_{I,F}(X^*;\bZ)$, $\varepsilon(Z^*,\bZ, Q) = Y^* -\sum_{F\subset[M]}\mu_F(X^*;\bZ)Q(F)$, $\delta_j(F;Q) = Q(F)-Q_{\backslash j}(F) = Q(F)\left(\ind(j\in F) - \omega_j(Q)\ind(j\notin F)\right)$, $\omega_j(Q) = \frac{\bbP_{F\sim Q}(j\in F)}{\bbP_{F\sim Q}(j\notin F)} = \frac{q_j}{1-q_j-\prod_{k=1}^M(1-q_k)}$. We can then write
\begin{equation}\label{eq:h_j}
    h_j(Z^*,\bZ,Q) = 2\varepsilon(Z^*,\bZ,Q) \sum_F\mu_F(X^*;\bZ)\delta_j(F;Q) + \left(\sum_F\mu_F(X^*;\bZ)\delta_j(F;Q)\right)^2.
\end{equation}
Due to Assumption~\ref{ass:adp_bounded}, we have
\begin{equation*}
\begin{split}
    &\left|h_j(Z_i,\bZ_{\no i,:},Q) - h_j(Z_i,\bZ_{\no i,:}^{\no l}, Q)\right|\\
    &\quad\leq 2\Big|\sum_F\big(\mu_F(X_i;\bZ_{\no i,:}) - \mu_F(X_i;\bZ_{\no i,:}^{\no l}\big) Q(F)\Big|\Big|\sum_F\mu_F(X_i;\bZ_{\no i,:})\delta_j(F;Q)\Big|\\
    &\quad\hspace{1em}+2\Big|\varepsilon(Z_i,\bZ_{\no i,:}^{\no l},Q)\Big|\sum_F\big(\mu_F(X_i;\bZ_{\no i,:}) - \mu_F(X_i;\bZ_{\no i,:}^{\no l}\big) \delta_j(F;Q)\Big|\\
    &\quad\hspace{1em}+\Big|\sum_F\big(\mu_F(X_i;\bZ_{\no i,:}) - \mu_F(X_i;\bZ_{\no i,:}^{\no l}\big) \delta_j(F;Q)\Big|\Big|\sum_F\mu_F(X_i;\bZ_{\no i,:})\delta_j(F;Q)\Big|\\
    &\quad\hspace{1em}+\Big|\sum_F\big(\mu_F(X_i;\bZ_{\no i,:}) - \mu_F(X_i;\bZ_{\no i,:}^{\no l}\big) \delta_j(F;Q)\Big|\Big|\sum_F\mu_F(X_i;\bZ_{\no i,:}^{\no l})\delta_j(F;Q)\Big|\\
    &\quad \leq C\Big|\sum_F\big(\mu_F(X_i;\bZ_{\no i,:}) - \mu_F(X_i;\bZ_{\no i,:}^{\no l}\big) Q(F)\Big|\\
    &\quad\hspace{1em}+C\Big|\sum_F\big(\mu_F(X_i;\bZ_{\no i,:}) - \mu_F(X_i;\bZ_{\no i,:}^{\no l}\big) \delta_j(F;Q)\Big|\\
    &\quad \leq C\Big|\sum_F\big(\mu_F(X_i;\bZ_{\no i,:}) - \mu_F(X_i;\bZ_{\no i,:}^{\no l}\big) Q(F)\Big|+C\Big|\sum_F\big(\mu_F(X_i;\bZ_{\no i,:}) - \mu_F(X_i;\bZ_{\no i,:}^{\no l}\big) Q_{\no j}(F)\Big|,
\end{split}
\end{equation*}
where the last line is due to the definition of $\delta_j(Q;F)$ above. Furthermore, note that
\begin{equation*}
    \begin{split}
       \mu_F(X_i;\bZ_{\no i,:}) - \mu_F(X_i;\bZ_{\no i,:}^{\no l}) = &\frac{1}{\binom{N-1}{n}}\sum_{I\subset [N]\no i}\mu_{I,F}(X_i;\bZ) -  \frac{1}{\binom{N-1}{n}}\sum_{I\subset [N]\no i}\mu_{I,F}(X_i;\bZ^{\no l})\\
       = &\frac{1}{\binom{N-1}{n}}\sum_{I\subset [N]\no i: I\owns l}\left(\mu_{I,F}(X_i;\bZ) -  \mu_{I,F}(X_i;\bZ^{\no l})\right)\\
       =&\frac{n}{N-1}\bbE_{I_0\sim \cU^{[N]\no \{i,l\}}_{n-1}}\left(H([\bZ_{I_0,F};Z_{l,F}])(X_i) -  H([\bZ_{I_0,F};Z'_{F}])(X_i)\right).
    \end{split}
\end{equation*}
For $F$ with size $|F|\leq 4\max\{m,\log(n)\}$, Assumption~\ref{ass:stb} implies that, conditional on $\bZ_{\no l,:}$, $\mu_F(X_i;\bZ_{\no i,:}) - \mu_F(X_i;\bZ_{\no i,:}^{\no l})$ is sub-Gaussian with parameter $n\stb(n)/(N-1)$.  For every $F$, Assumption~\ref{ass:adp_bounded} bounds the same difference by $Cn/(N-1)$.  Because $Q^{*(\del\{i,l\},T)}$ is independent of $Z_l$ and $Z'$, Lemma~\ref{lem:good_bad_decomposition}, applied to the $Q^{*(\del\{i,l\},T)}$ and $Q^{*(\del\{i,l\},T)}_{\no j}$ averages in \eqref{eq:h_j}, gives the following second-moment bound: 
\begin{equation}\label{eq:mean_h_qstar_stability}
    \begin{split}
        &\bbE\left|h_j(Z_i,\bZ_{\no i,:},\bQ{i}{l}{T}) - h_j(Z_i,\bZ_{\no i,:}^{\no l}, \bQ{i}{l}{T})\right|^2\\
        &\quad\leq \bbE\left[\bbE\left(\left|h_j(Z_i,\bZ_{\no i,:},\bQ{i}{l}{T}) - h_j(Z_i,\bZ_{\no i,:}^{\no l}, \bQ{i}{l}{T})\right|^2|\bZ_{\no l,:}\right)\right]\\
        &\quad\leq \frac{C(n^2\stb^2(n)+1)}{N^2}.
    \end{split}
\end{equation}
Therefore, $\bbE\big(\varepsilon_4^{(1)2}\big)\leq \frac{Cm^{8(T-1)}(n^2\stb^2(n)+1)}{N}$.

\paragraph{Bounding $\bbE\big(\varepsilon_4^{(2)2}\big)$.}
While for $\varepsilon_4^{(2)}$, we note that $\bbE(\varepsilon_4^{(2)2})={\rm Var}(\delta_j(Z^*))$, where $Z^*\sim \cP$ has the same distribution as $Z_i$. By the definition of $\delta_j(Z^*)$, we have
\begin{equation*}
    \begin{split}
      {\rm Var}(\delta_j(Z^*))\leq &\bbE\big(\delta_j^2(Z^*)\big)\\
      \leq &\bbE\Big(h_j(Z^*,\bZ_{\del 1,:},Q^{*(T)}(\bZ_{\del 1,:}))-h_j(Z^*,\bZ,Q^{*(T)}(\bZ))\Big)^2.
    \end{split}
\end{equation*}
In Claim~\ref{claim:induction_Q_loo_expt} in the proof of Lemma~\ref{lem:q_loo_l2}, we have shown that
$$
\bbE(h_j(Z_i,\bZ_{\no i,:},Q^{*(T)}(\bZ))-h_j(Z_i,\bZ_{\del \{i,l\},:},Q^{*(T)}(\bZ_{\no l,:})))^2\leq \frac{Cm^{8(T-1)}(n^2\stb^2(n)+1)}{N^2}.
$$
Applying the same arguments for showing the bound above, we have that 
\begin{equation}\label{eq:h_jN_stability}
    \bbE\Big(h_j(Z^*,\bZ_{\del 1,:},Q^{*(T)}(\bZ_{\del 1,:}))-h_j(Z^*,\bZ,Q^{*(T)}(\bZ))\Big)^2\leq \frac{Cm^{8(T-1)}(n^2\stb^2(n)+1)}{N^2}
\end{equation}

Therefore, 
\begin{equation*}
    \begin{split}
        \bbE(\varepsilon_4^{2})\le&2\bbE(\varepsilon_4^{(1)2})/\sigma_j^2 + 2\bbE(\varepsilon_4^{(2)2})/\sigma_j^2\\
        \leq &\frac{Cm^{8(T-1)}(n^2\stb^2(n)+1)}{\sigma_j^2N}.
    \end{split}
\end{equation*}
The proof of Lemma~\ref{lem:epsilon4_l2} is now complete.
 
\subsubsection{Proof of Lemma~\ref{lem:epsilon5}}
\begin{equation}
\begin{split}
    |\varepsilon_{i,5}| &= |\bbE[h_j(Z_i,\bZ_{\backslash i,:},Q^{*(\no i, T)})|\bZ_{\backslash i,:}]-\bbE[h_j(Z^*,\bZ,Q^{*(T)})|\bZ]|\\
    &=|\bbE[h_j(Z^*,\bZ_{\backslash i,:},Q^{*(\no i, T)})|\bZ]-\bbE[h_j(Z^*,\bZ,Q^{*(T)})|\bZ]|\\
    &\le \bbE[|h_j(Z^*,\bZ_{\backslash i,:},Q^{*(\no i, T)})-h_j(Z^*,\bZ,Q^{*(T)})||\bZ]
\end{split}
\end{equation}
The following lemma shows a bound for $\bbE[|h_j(Z^*,\bZ_{\backslash i,:},Q^{*(\no i, T)})-h_j(Z^*,\bZ,Q^{*(T)})||\bZ]$ in terms of $Q^{*(\no i, T)}$ and $Q^{*(T)}$.
\begin{lem}\label{lem:cond_mean_infK}
With probability at least $1-(MN)^{-c}$, for each iteration $1 \le t \le T$ and for each observation $i$, we have
    \begin{equation}
    \begin{split}
        &\bbE\left[|h_j(Z^*,\bZ_{\backslash i,:},Q^{*(\no i, t)})-h_j(Z^*,\bZ,Q^{*(t)})||\bZ\right]\\
        &\hspace{2em}\leq \frac{C(n\mathrm{stb}(n)\sqrt{\log N + \log M}+1)}{N(N-1)}\sum_{i'\neq i}(q_j^{*(\no i,t)} + q_j^{*(\del \{i,i'\},t)})\\
        &\hspace{3em}+\frac{Cn}{N(N-1)}\sum_{i'\neq i}\left((q_j^{*(\del \{i,i'\}, t)} + q_j^{*(\no i,t)} + q_j^{*(t)})\|\bq^{*(\del \{i,i'\}, t)} - \bq^{*(t)}\|_1 +|q_j^{*(\del \{i,i'\}, t)} - q_j^{*(t)}|\right)\\
        &\hspace{3em}+C(q_j^{*(t)}+q_j^{*(\no i, t)})\|\bq^{*(t)}-\bq^{*(\no i,t)}\|_1 + C\big|q_j^{*(t)}-q_j^{*(\no i, t)}\big|.
    \end{split}
\end{equation}
\end{lem}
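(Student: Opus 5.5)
We bound the left-hand side by inserting the intermediate quantity $h_j(Z^*,\bZ_{\no i,:},Q^{*(t)})$, i.e. the LOO-data score evaluated at the full-data sampling distribution. This gives two pieces:
\[
\bbE\big[|h_j(Z^*,\bZ_{\no i,:},Q^{*(\no i,t)})-h_j(Z^*,\bZ_{\no i,:},Q^{*(t)})|\,\big|\,\bZ\big]
+\bbE\big[|h_j(Z^*,\bZ_{\no i,:},Q^{*(t)})-h_j(Z^*,\bZ,Q^{*(t)})|\,\big|\,\bZ\big].
\]

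\textbf{First piece.} This compares two sampling distributions on the same data. Lemma~\ref{lem:diffQ_err} applies pointwise in $Z^*$ and gives directly the last line of the bound,
\[
C(q_j^{*(t)}+q_j^{*(\no i,t)})\|\bq^{*(t)}-\bq^{*(\no i,t)}\|_1+C|q_j^{*(t)}-q_j^{*(\no i,t)}|.
\]

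\textbf{Second piece: reduction to averaged differences.} This is a data perturbation at fixed $Q=Q^{*(t)}$. Use the representation \eqref{eq:h_j} together with Assumption~\ref{ass:adp_bounded}, exactly as in the proof of Lemma~\ref{lem:epsilon4_l2}. The difference is then bounded by $C$ times
\[
\Big|\sum_F D_F(X^*)Q(F)\Big|+\Big|\sum_F D_F(X^*)Q_{\no j}(F)\Big|,
\qquad D_F:=\mu_F(X^*;\bZ_{\no i,:})-\mu_F(X^*;\bZ).
\]
In fact, using the $\delta_j(F;Q)$ form and the factor $\bbP(j\in F)$, the $Q_{\no j}$-contribution is controlled by terms weighted by $q_j$.

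\textbf{Second piece: the combinatorial identity.} Write $D_F$ in the form \eqref{eq:sketch_fixedF_loo}. Subsets $I\subset[N]\no i$ versus $I\subset[N]$ differ by those $I$ containing $i$. Averaging over which $i'\ne i$ gets swapped in gives
\[
D_F=\frac{n}{N(N-1)}\sum_{i'\ne i} g_F(X^*,\bZ_{\no\{i,i'\}},Z_i,Z_{i'}),
\]
where each $g_F$ is a difference of base-learner fits on $[\bZ_{I_0};Z_{i'}]$ versus $[\bZ_{I_0};Z_i]$, averaged over $I_0\sim\cU_{n-1}^{[N]\no\{i,i'\}}$. Conditional on $\bZ_{\no\{i,i'\}}$ and $X^*$, each $g_F$ is centered by exchangeability of $Z_i,Z_{i'}$. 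For $|F|\le 4\max\{m,\log n\}$ it is $\stb(n)$-sub-Gaussian by Assumption~\ref{ass:stb}, and it is always bounded by $C$.

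\textbf{Second piece: leave-two-out replacement.} For each $i'$, replace $Q^{*(t)}$ by the leave-two-out $Q^{*(\no\{i,i'\},t)}$, which is independent of $(Z_i,Z_{i'})$. By Lemma~\ref{lem:diffQ_err}-type bounds (the $Q$-dependence of a bounded average is Lipschitz in $\bq$), the replacement error is
\[
(q_j^{*(\no\{i,i'\},t)}+q_j^{*(\no i,t)}+q_j^{*(t)})\|\bq^{*(\no\{i,i'\},t)}-\bq^{*(t)}\|_1+|q_j^{*(\no\{i,i'\},t)}-q_j^{*(t)}|.
\]
Multiplied by the prefactor $n/(N(N-1))$ and summed over $i'$, this is the second line of the claimed bound.

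\textbf{Second piece: the main obstacle.} It remains to control
\[
\sum_F g_F\,Q^{*(\no\{i,i'\},t)}(F)
\]
and its $Q_{\no j}$ analogue, uniformly over $i,i'$ and $t$, with probability $1-(MN)^{-c}$. Conditional on $\bZ_{\no\{i,i'\}}$ the weights are fixed, so this is a weighted average of centered variables. Split $F$ into small sets ($|F|\le 4\max\{m,\log n\}$) and large ones. The large sets have Bernoulli-sum probability at most $(MN)^{-c}$ by a Chernoff bound, since $\sum_k q_k=m$; with boundedness they contribute $O(1)\cdot(MN)^{-c}$, which is absorbed into the "$+1$". On small sets, average the sub-Gaussian tails over $F$: the weighted average of $\stb(n)$-sub-Gaussian variables is $\stb(n)$-sub-Gaussian (convexity of the $\psi_2$ norm), with no union over $F$ needed.

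Apply a sub-Gaussian tail bound and a union bound over $i,i',t\le MN$ and $j$. This costs $\sqrt{\log N+\log M}$, giving $n\stb(n)\sqrt{\log N+\log M}/(N(N-1))$ per $i'$. For the $j$-weighting to produce the factor $(q_j^{*(\no i,t)}+q_j^{*(\no\{i,i'\},t)})$, apply the argument to the $\delta_j(F;Q)$-weighted sum, whose total mass is $O(q_j)$. I expect this to be the delicate point, along with making the tail bound hold uniformly in $Z^*$ conditional on $\bZ$. If this fails I would integrate over $X^*$ first and bound the conditional expectation via Markov on an exponential moment. Summing the two pieces yields Lemma~\ref{lem:cond_mean_infK}.
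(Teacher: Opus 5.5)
Your outline follows the paper's proof step for step. It uses a $\bq$-Lipschitz bound for the change of sampling law at the leave-one-out data, the $i'$-swap identity for $\mu_F(X^*;\bZ)-\mu_F(X^*;\bZ_{\no i,:})$, replacement of $Q^{*(t)}$ by $Q^{*(\no\{i,i'\},t)}$ at a Lipschitz cost, and a small/large-$F$ split with convexity of the $\psi_2$-norm (Lemma~\ref{lem:good_bad_decomposition}).

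\textbf{The gap.} The point you flagged as delicate is not resolved by what you wrote. As written, the argument yields only the unweighted bound $C(n\stb(n)\sqrt{\log N+\log M}+1)/N$ in place of the first line.

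\begin{itemize}
\item The reduction you import from the proof of Lemma~\ref{lem:epsilon4_l2} discards every $q_j$ weight.
\item Switching to the $\delta_j(F;Q)$-weighted sum, which equals $\bbP_Q(j\in F)$ times a difference of $Q_j$- and $Q_{\no j}$-averages, restores the weight for only one of the two terms.
\item The missing weight comes from the product structure of \eqref{eq:h_j}. With $h=2\varepsilon A+A^2$ and $A=\sum_F\mu_F\delta_j(F;Q)$,
\[
h_1-h_2=2(\varepsilon_1-\varepsilon_2)A_1+(A_1-A_2)(2\varepsilon_2+A_1+A_2),\qquad |A_1|\le C\sum_F|\delta_j(F;Q_1)|\le Cq_j^{(1)} .
\]
So the $Q$-averaged residual difference $\varepsilon_1-\varepsilon_2$ enters multiplied by $q_j$, while $A_1-A_2$ is the $\delta_j$-weighted sum.
\end{itemize}

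The paper expands directly between $(\bZ_{\no i,:},Q^{*(\no i,t)})$ and $(\bZ,Q^{*(t)})$, which is why the coefficient is $q_j^{*(\no i,t)}$. Only afterwards does it split each difference into a $Q$-change at $\bZ_{\no i,:}$ and a data change at $Q^{*(t)}$. With your intermediate point the coefficient becomes $q_j^{*(t)}$ instead. This is repairable: the averaged term is also $O(1)$, and $q_j^{*(t)}\min\{1,x\}\le q_j^{*(\no i,t)}x+|q_j^{*(t)}-q_j^{*(\no i,t)}|$, so the excess is absorbed by the last line.

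\textbf{Two smaller corrections.}

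First, the large-$F$ threshold $4\max\{m,\log n\}$ does not involve $M$, so the large-$F$ mass need not be $(MN)^{-c}$. What the Chernoff bound gives is $e^{-(4\log 4-3)\max\{m,\log n\}}\le C/n$ under $Q$, and hence under $Q_{\no j}$. A separate bound on $\sum_{k\ne j}B_k$ is needed under $Q_j$, which your $\delta_j$-weighted sums require. It is exactly this $O(1/n)$, times the prefactor $n/(N(N-1))$, that produces the ``$+1$''; the exponent matters here.

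Second, no tail bound uniform in $Z^*$ is available, and none is needed. Your fallback is in fact the paper's closing step. Since $X^*$ is independent of $\bZ$, Jensen's inequality for the $\psi_2$-norm shows that the $X^*$-average of the absolute small-$F$ average is still $\stb(n)$-sub-Gaussian given $\bZ_{\no\{i,i'\}}$. The union bound over $(i,i',t)$ then involves events in $\bZ$ only.
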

 Due to  \eqref{eq:induction_Q_loo} in Claim~\ref{claim:induction_Q_loo}, we have that with probability at least $1-C_0(MN)^{-c}$,
\begin{equation}
    \begin{split}
        &\bbE\left[|h_j(Z^*,\bZ_{\backslash i,:},Q^{*(\no i, t)})-h_j(Z^*,\bZ,Q^{*(t)})||\bZ\right]\\
        &\hspace{2em}\leq \frac{C(n\mathrm{stb}(n)\sqrt{\log N + \log M}+1)}{N}\\
        &\hspace{3em}+Cnm^{4(t-1)}\frac{n\mathrm{stb}(n)\sqrt{\log N + \log M}+1}{N^2}\\
        &\hspace{3em}+Cm^{4(t-1)}\frac{n\mathrm{stb}(n)\sqrt{\log N + \log M}+1}{N}\\
        &\hspace{2em}\le Cm^{4(t-1)}\frac{n\mathrm{stb}(n)\sqrt{\log N + \log M}+1}{N},
    \end{split}
\end{equation}
where we utilize the fact $\big|q_j^{*(t)}-q_j^{*(\no i, t)}\big| \le \|\bq^{*(t)}-\bq^{*(\no i,t)}\|_1$, $|q_j^{*(\del \{i,i'\}, t)} - q_j^{*(t)}|\le \|\bq^{*(\del \{i,i'\}, t)} - \bq^{*(t)}\|_1$, and $q_j^{*(t)}, q_j^{*(\no i, t)}, q_j^{*(\del \{i,i'\}} \le 1$. Therefore,
\begin{equation}
    \begin{split}
        |\varepsilon_5| &\le \frac{1}{\sigma_j\sqrt{N}}\sum_{i=1}^N|\varepsilon_{i,5}|\\
        &\le \frac{Cm^{4(T-1)}(n\mathrm{stb}(n)\sqrt{\log N + \log M}+1)}{\sigma_j\sqrt{N}},
    \end{split}
\end{equation}
with probability at least $1-C_0(MN)^{-c}$.
The proof of Lemma~\ref{lem:epsilon5} is now complete.

\subsection{Proof of Supporting Lemmas}
\begin{proof}[Proof of Lemma~\ref{lem:Delta_leaveSout}]
    Recall the definition of $\bar{\Delta}_j^{*(\del S, t)}$ and $\bar{\Delta}_j^{*(t)}$, we can write
    \begin{equation*}
        \begin{split}
            \bar{\Delta}_j^{*(\del S, t)} - \bar{\Delta}_j^{*(t)}=&\frac{1}{N-|S|}\sum_{l\notin S}h_j(Z_l,\bZ_{\del S\cup\{l\}}, Q^{*(\del S, t)}) - \frac{1}{N}\sum_{l=1}^Nh_j(Z_l,\bZ_{\del l}, Q^{*(t)})\\
            =&\frac{1}{N-|S|}\sum_{l\notin S}\left(h_j(Z_l,\bZ_{\del S\cup\{l\}}, Q^{*(\del S, t)})-h_j(Z_l,\bZ_{\del l}, Q^{*(t)})\right)\\
            &+\frac{|S|}{N(N-|S|)}\sum_{l\notin S}h_j(Z_l,\bZ_{\del l}, Q^{*(t)}) - \frac{1}{N}\sum_{l \in S}h_j(Z_l,\bZ_{\del l}, Q^{*(t)}).
        \end{split}
    \end{equation*}
    As has been shown in \eqref{eq:h_j}, the feature importance score $h_j(Z,\bZ,Q)$ has the following decomposition:
    $$
    h_j(Z^*,\bZ,Q) = 2\varepsilon(Z^*,\bZ,Q)\sum_{F}\mu_F(X^*;\bZ)\delta_j(F;Q) + \Big(\sum_{F}\mu_F(X^*;\bZ)\delta_j(F;Q)\Big)^2,
    $$
    where $\delta_j(F,Q) = Q(F)(\ind(j\in F) - \omega_j(Q)\ind(j\notin F))$,  and $\omega_j(Q) = \frac{q_j}{1-q_j-\prod_{k=1}^M(1-q_k)}\leq \frac{q_j}{1-\delta-e^{-m}}\leq Cq_j$. Here in the bound for $\omega_j(Q)$, we have applied the fact that $\sum_kq_k=m$ in the second inequality and we utilized Assumption~\ref{ass:adp_prob_stb} in the last inequality. Therefore, Assumption~\ref{ass:adp_bounded} implies that 
    $$
     \big|h_j(Z^*,\bZ,Q)\big| \leq C\sum_F\big|\delta_j(F,Q)\big| + \Big(\sum_F\big|\delta_j(F,Q)\big| \Big)^2\leq   Cq_j,$$
     where the last inequality is due to the fact that $\sum_F\big|\delta_j(F,Q)\big| \leq \frac{q_j}{1-e^{-m}} + \omega_j(Q)\leq Cq_j.$ Therefore, we can further bound $\bar{\Delta}_j^{*(\del S, t)} - \bar{\Delta}_j^{*(t)}$ as follows:
     \begin{equation}\label{eq:Delta_leaveSout}
         \begin{split}
             \Big|\bar{\Delta}_j^{*(\del S, t)} - \bar{\Delta}_j^{*(t)}\Big|\leq &\frac{1}{N-|S|}\sum_{l\notin S}\left|h_j(Z_l,\bZ_{\del S\cup\{l\}}, Q^{*(\del S, t)})-h_j(Z_l,\bZ_{\del l}, Q^{*(t)})\right|+ \frac{C|S|q_j^{*(t)}}{N}.
         \end{split}
     \end{equation}
     In the following, we will focus on bounding $\left|h_j(Z_l,\bZ_{\del S\cup\{l\}}, Q^{*(\del S, t)})-h_j(Z_l,\bZ_{\del l}, Q^{*(t)})\right|$ with high probability. Invoking \eqref{eq:h_j} again, we have
     \begin{equation}\label{eq:h_leaveSout_err}
         \begin{split}
             &\left|h_j(Z_l,\bZ_{\del S\cup\{l\}}, Q^{*(\del S, t)})-h_j(Z_l,\bZ_{\del l}, Q^{*(t)})\right|\\
             &\hspace{2em}\leq 2\Big|\sum_F\big(\mu_F(X_l;\bZ_{\del l}) Q^{*(t)}(F) - \mu_F(X_l;\bZ_{\del S\cup\{l\}})Q^{*(\del S,t)}(F))\Big| \Big|\sum_F\mu_F(X_l;\bZ_{\del S\cup\{l\}})\delta_j(F,Q^{*(\del S, t)})\Big|\\
             &\hspace{3em}+C(1+q_j^{*(t)}+q_j^{*(\no S,t)})\Big|\sum_F\big(\mu_F(X_l;\bZ_{\del l})\delta_j(F;Q^{*(t)}) - \mu_F(X_l;\bZ_{\del S\cup\{l\}})\delta_j(F;Q^{*(\del S,t)})\Big|\\
            &\hspace{2em} \leq Cq_j^{*(\del S,t)}\Big|\sum_F\big(\mu_F(X_l;\bZ_{\del l}) Q^{*(t)}(F) - \mu_F(X_l;\bZ_{\del S\cup\{l\}})Q^{*(\del S,t)}(F))\Big| \\
            &\hspace{3em} + C\Big|\sum_F\big(\mu_F(X_l;\bZ_{\del l}) \delta_j(F;Q^{*(t)}) - \mu_F(X_l;\bZ_{\del S\cup\{l\}})\delta_j(F;Q^{*(\del S,t)})\Big| \\
            &\hspace{2em} \leq Cq_j^{*(\del S,t)}\Big|\sum_F\mu_F(X_l;\bZ_{\del S\cup\{l\}}) \big(Q^{*(t)}(F) -Q^{*(\del S,t)}(F)\big)\Big| \\
            &\hspace{3em} + C\Big|\sum_F\mu_F(X_l;\bZ_{\del S\cup\{l\}})\big(\delta_j(F;Q^{*(t)}) - \delta_j(F;Q^{*(\del S,t)}\big)\Big|\\
             &\hspace{3em} + Cq_j^{*(\del S,t)}\Big|\sum_F\big(\mu_F(X_l;\bZ_{\del l})  - \mu_F(X_l;\bZ_{\del S\cup\{l\}})\big)Q^{*(t)}(F)\Big| \\
            &\hspace{3em} + C\Big|\sum_F\big(\mu_F(X_l;\bZ_{\del l}) - \mu_F(X_l;\bZ_{\del S\cup\{l\}})\big)\delta_j(F;Q^{*(t)})\Big|,
         \end{split}
     \end{equation}
     where the first and second inequalities are due to Assumption~\ref{ass:adp_bounded} and the fact that $\sum_F|\delta_j(F;Q^{*(\del S,t)})|\leq Cq_j^{*(\del S,t)}$. The following lemma helps us establish upper bounds for the first two terms above.
     \begin{lem}\label{lem:expct_diff_q}
         Consider a bounded function $g:2^{[M]}\rightarrow \bbR$ which satisfies $|g(F)|\leq 1$. In addition, suppose that we are given two feature sampling probability vectors $\bq^{(1)},\bq^{(2)}\in [0,1]^M$ satisfying $\sum_{j=1}^Mq_j^{(1)} = \sum_{j=1}^Mq_j^{(2)}=m $ and $\|\bq^{(1)}-\bq^{(2)}\|_{\infty}\leq \frac{1}{2}$. We then have
         \begin{equation*}
             \begin{split}
                 \Big|\sum_F g(F)\big(Q_{q^{(1)}}(F) - Q_{q^{(2)}}(F)\big)\Big|\leq &C\|\bq^{(1)}-\bq^{(2)}\|_1,\\
                 \Big|\sum_F g(F)\big(\delta_j(F;Q_{q^{(1)}}) - \delta_j(F;Q_{q^{(2)}})\big)\Big|\leq &C(q_j^{(1)}+q_j^{(2)})\|\bq^{(1)}-\bq^{(2)}\|_1 + C\big|q_j^{(1)}-q_j^{(2)}\big|,
             \end{split}
         \end{equation*}
     \end{lem}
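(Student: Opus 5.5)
We need to prove Lemma~\ref{lem:expct_diff_q}. The plan is to move from $\bq^{(1)}$ to $\bq^{(2)}$ one coordinate at a time and bound each single-coordinate change. Write $Q_{\bq}(F)=P_{\bq}(F)\ind(F\neq\emptyset)/Z(\bq)$, where $P_{\bq}(F)=\prod_{j\in F}q_j\prod_{k\notin F}(1-q_k)$ is the product Bernoulli law and $Z(\bq)=1-\prod_k(1-q_k)$. Since $\sum_k q_k=m$, we have $\prod_k(1-q_k)\le e^{-m}$, so $Z(\bq)\ge 1-e^{-m}$ is bounded below by a constant under the standing assumption that $e^{-m}$ is bounded away from one.

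\textbf{First bound.} Define the interpolating vectors $\bq^{[k]}$ that agree with $\bq^{(2)}$ on the first $k$ coordinates and with $\bq^{(1)}$ on the rest. Changing the single coordinate $k$ changes $P$ in total variation by exactly $|q^{(1)}_k-q^{(2)}_k|$, because the other coordinates are independent. Telescoping gives $\sum_F|P_{\bq^{(1)}}(F)-P_{\bq^{(2)}}(F)|\le 2\|\bq^{(1)}-\bq^{(2)}\|_1$. The difference of normalizers is also controlled: $|Z(\bq^{(1)})-Z(\bq^{(2)})|=|P_{\bq^{(1)}}(\emptyset)-P_{\bq^{(2)}}(\emptyset)|\le\|\bq^{(1)}-\bq^{(2)}\|_1$. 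Writing $Q_1-Q_2=(P_1-P_2)/Z_1+P_2(1/Z_1-1/Z_2)$ and using $|g|\le1$ with the lower bound on $Z$ yields the first inequality. Note that the intermediate $\bq^{[k]}$ need not sum to $m$, but this is harmless: the single-coordinate bound never uses the sum constraint, and $Z$ enters only through the endpoints.

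\textbf{Second bound.} Use $\delta_j(F;Q)=Q(F)\ind(j\in F)-\omega_j(Q)Q(F)\ind(j\notin F)$. The term $\sum_F g(F)\ind(j\in F)(Q_1-Q_2)(F)$ must be sharpened, since the first bound alone gives only $\|\cdot\|_1$ without the factor $q_j$. Condition on coordinate $j$: $P_{\bq}(F)\ind(j\in F)=q_j P^{(-j)}_{\bq}(F\setminus j)$. Then
$$q^{(1)}_jP^{(-j)}_1-q^{(2)}_jP^{(-j)}_2=(q^{(1)}_j-q^{(2)}_j)P^{(-j)}_1+q^{(2)}_j\big(P^{(-j)}_1-P^{(-j)}_2\big),$$
and the telescoping argument on the remaining coordinates gives a bound of $|q^{(1)}_j-q^{(2)}_j|+2q^{(2)}_j\|\bq^{(1)}-\bq^{(2)}\|_1$. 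The normalizer correction contributes $\sum_{F\ni j}P_2\,|1/Z_1-1/Z_2|\le Cq^{(2)}_j\|\bq^{(1)}-\bq^{(2)}\|_1$.

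For the $\ind(j\notin F)$ part, split
$$\omega_1 Q_1-\omega_2 Q_2=(\omega_1-\omega_2)Q_1+\omega_2(Q_1-Q_2).$$
The second piece is at most $C\omega_2\|\bq^{(1)}-\bq^{(2)}\|_1\le Cq^{(2)}_j\|\bq^{(1)}-\bq^{(2)}\|_1$ by the first bound, using $\omega_j\le Cq_j$ as established above via Assumption~\ref{ass:adp_prob_stb}. For the first piece, differentiate $\omega_j(\bq)=q_j/(1-q_j-\prod_k(1-q_k))$. The denominator is bounded below by a constant, since $q_j\le\delta$ and the product is at most $e^{-m}$. Moving $q_j$ changes $\omega$ by $C|q^{(1)}_j-q^{(2)}_j|$, and moving the product changes it by $Cq_j\|\bq^{(1)}-\bq^{(2)}\|_1$. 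This gives $|\omega_1-\omega_2|\le C|q^{(1)}_j-q^{(2)}_j|+C(q^{(1)}_j+q^{(2)}_j)\|\bq^{(1)}-\bq^{(2)}\|_1$.

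\textbf{Main obstacle.} The hard part is keeping the denominators $1-q_j-\prod_k(1-q_k)$ uniformly bounded below along the comparison. The hypothesis $\|\bq^{(1)}-\bq^{(2)}\|_\infty\le 1/2$, together with $q_j\le\delta$ from Algorithm~\ref{algo:samp_prob} and Assumption~\ref{ass:adp_prob_stb}, is what we rely on here. If it turns out not to suffice for a direct argument, the fallback is to restrict to the case $\|\bq^{(1)}-\bq^{(2)}\|_1\le c$. In the complementary case the claimed bounds follow trivially from $\sum_F|\delta_j(F;Q)|\le Cq_j$ and $\sum_F Q(F)=1$.
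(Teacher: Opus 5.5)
Your proof is correct, but it takes a different route from the paper. The paper argues through a coupling. It invokes Lemma S12 of \cite{Liu2026model} (Lemma~\ref{lem:F1_F2_diff}) to build $(F^{(1)},F^{(2)})$ with marginals $Q_{\bq^{(1)}},Q_{\bq^{(2)}}$. It then bounds the first sum by $2\bbP(F^{(1)}\neq F^{(2)})$, and bounds the $\ind(j\in F)$ and $\ind(j\notin F)$ parts of the $\delta_j$ difference through the localized disagreement probabilities $\bbP(j\in F^{(i)},F^{(1)}\neq F^{(2)})$. Finally, it cites Lemma S13 (Lemma~\ref{lem:omega_j_diff}) for $|\omega_j(Q^{(1)})-\omega_j(Q^{(2)})|$.

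You instead work directly with the unnormalized product law. Coordinatewise telescoping gives the $\ell_1$ bound on $P_{\bq^{(1)}}-P_{\bq^{(2)}}$, and you handle the normalizer through $Z(\bq)=1-P_{\bq}(\emptyset)$. You extract the factor $q_j$ exactly by factoring out coordinate $j$, and you get the $\omega_j$ bound from the algebraic identity for $q_j/D$. Since an optimal coupling realizes total variation, the content is the same.

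Your argument is self-contained and gives explicit constants. It also never uses the hypothesis $\|\bq^{(1)}-\bq^{(2)}\|_\infty\le 1/2$ or the condition $m\ge\log(1+\frac{1}{1-\delta})$, which the paper needs only to apply the borrowed coupling lemma. The paper's route is shorter once the cited lemmas are granted.

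Your ``main obstacle'' is not really an obstacle, and the fallback is unnecessary. The lower bounds on $Z$ and on $1-q_j-\prod_k(1-q_k)$ come only from $\sum_k q_k=m$ and $q_j\le\delta$. Both are preserved along the linear interpolation, if you prefer to differentiate. Note that $q_j\le\delta$ is not among the lemma's stated hypotheses. You and the paper both use it implicitly, through $\omega_j\le Cq_j$. That is legitimate in context, since every probability vector produced by Algorithm~\ref{algo:samp_prob} satisfies it.
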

     With Lemma~\ref{lem:expct_diff_q}, we can now bound the first two terms in \eqref{eq:h_leaveSout_err} via the difference between $\bq^{*(t)}$ and $\bq^{*(\del S,t)}$:
     \begin{equation}\label{eq:h_leaveSout_12}
         \begin{split}
            & q_j^{*(\del S,t)}\Big|\sum_F\mu_F(X_l;\bZ_{\del S\cup\{l\}}) \big(Q^{*(t)}(F) -Q^{*(\del S,t)}(F)\big)\Big|\leq Cq_j^{*(\del S,t)}\|\bq^{*(t)}-\bq^{*(\del S,t)}\|_1,\\
             &\Big|\sum_F\mu_F(X_l;\bZ_{\del S\cup\{l\}})\big(\delta_j(F;Q^{*(t)}) - \delta_j(F;Q^{*(\del S,t)}\big)\Big|\\
            &\hspace{2em} \leq C(q_j^{*(t)}+q_j^{*(\del S, t)})\|\bq^{*(t)}-\bq^{*(\del S,t)}\|_1 + C\big|q_j^{*(t)}-q_j^{*(\del S, t)}\big|.
         \end{split}
     \end{equation}
    Now, it remains to bound the last two terms in \eqref{eq:h_leaveSout_err}, which stem from the difference in the trained predictors due to leaving $S$ out. The following lemma characterizes $\mu_F(X_l;\bZ_{\del l}) - \mu_F(X_l;\bZ_{\del S\cup\{l\}})$.
\begin{lem}\label{lem:mu_leaveSout_err}
    For any training data $\bZ$, test feature $X^*$, and subset $S=\{i_1,\dots,i_{|S|}\}\subset[N]$, we have
    \begin{equation*}
    \begin{split}
        &\mu_F(X^*;\bZ)-\mu_F(X^*;\bZ_{\del S}) \\
        =&\sum_{k=1}^{|S|}\frac{n}{(N-k+1)(N-k)}\sum_{i'\notin S_k}\Big\{\bbE_{I_0\sim \cU_{n-1}^{[N]\del (S_k\cup\{i'\})}}\big[\mu_{I_0\cup \{i_k\},F}(X^*;\bZ) - \mu_{I_0\cup\{i'\},F}(X^*;\bZ)\big]\Big\},
    \end{split}
    \end{equation*}
    where $S_k = \{i_1,\dots,i_k\}$.
\end{lem}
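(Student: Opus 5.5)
The plan is to prove the identity by telescoping over the elements of $S$. Write $S_0=\emptyset$ and $\bZ_{\del S_0}=\bZ$. Then
$$\mu_F(X^*;\bZ)-\mu_F(X^*;\bZ_{\del S})=\sum_{k=1}^{|S|}\big[\mu_F(X^*;\bZ_{\del S_{k-1}})-\mu_F(X^*;\bZ_{\del S_k})\big],$$
so it suffices to prove a single-removal identity. Let $\bZ'$ be a data set indexed by a set $A$ of size $N'$ (here $A=[N]\del S_{k-1}$, $N'=N-k+1$), and let $i\in A$ (here $i=i_k$). The claim to establish is
$$\mu_F(X^*;\bZ')-\mu_F(X^*;\bZ'_{\del i})=\frac{n}{N'(N'-1)}\sum_{i'\in A\del\{i\}}\bbE_{I_0\sim\cU_{n-1}^{A\del\{i,i'\}}}\big[\mu_{I_0\cup\{i\},F}-\mu_{I_0\cup\{i'\},F}\big](X^*).$$
Substituting $N'=N-k+1$ and $A\del\{i_k\}=[N]\del S_k$ then gives exactly the stated formula. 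Throughout, $\mu_{I,F}$ depends only on $\bZ_{I,F}$, so all predictors can be written with respect to the original $\bZ$.

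For the single-removal step, I will write $\mu_F(X^*;\bZ')=\binom{N'}{n}^{-1}\sum_{I\subset A,|I|=n}\mu_{I,F}$ and split the subsets $I$ according to whether $i\in I$. A uniform size-$n$ subset contains $i$ with probability $p=n/N'$. Conditional on $i\notin I$, the subset is uniform over size-$n$ subsets of $A\del\{i\}$, which is exactly the distribution defining $\mu_F(X^*;\bZ'_{\del i})$. Hence
$$\mu_F(\bZ')-\mu_F(\bZ'_{\del i})=p\big(\bbE[\mu_{I,F}\mid i\in I]-\bbE[\mu_{I,F}\mid i\notin I]\big).$$

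Next I will write both conditional expectations as averages over $i'\in A\del\{i\}$ against the common base $I_0\sim\cU_{n-1}^{A\del\{i,i'\}}$.

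Given $i\in I$, we have $I=\{i\}\cup J$ with $J$ uniform of size $n-1$ in $A\del\{i\}$. I will average over a uniformly chosen $i'\in A\del\{i\}$, splitting on whether $i'\in J$:
$$\bbE[\mu_{\{i\}\cup J}]=\frac{1}{N'-1}\sum_{i'}\Big(\bbE_{I_0}\mu_{\{i\}\cup I_0}\Big)\cdot c+\text{(terms with } i'\in J).$$
This is the step I expect to be most delicate: getting the combinatorial weights to match so that no leftover terms remain.

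The cleaner route is a double-counting argument. Count pairs $(I_0,i')$ with $I_0$ of size $n-1$ in $A\del\{i\}$ and $i'\in A\del(\{i\}\cup I_0)$. On one side, each $I\ni i$ arises from $N'-n$ such pairs, namely $I_0=I\del\{i\}$ together with any $i'$ outside $I$. On the other side, each $I\not\ni i$ of size $n$ arises as $I_0\cup\{i'\}$ in exactly $n$ ways.

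Accordingly, I will evaluate
$$\frac{1}{N'-1}\sum_{i'}\bbE_{I_0\sim\cU_{n-1}^{A\del\{i,i'\}}}\big[\mu_{I_0\cup\{i\}}-\mu_{I_0\cup\{i'\}}\big].$$
The pairs $(i',I_0)$ are uniform over a set of size $(N'-1)\binom{N'-2}{n-1}$.
\begin{itemize}
\item In the first term, each $I\ni i$ appears $N'-n$ times, so the first average equals $\bbE[\mu\mid i\in I]$.
\item In the second term, each $I\not\ni i$ appears $n$ times, and $n\binom{N'-1}{n}=(N'-1)\binom{N'-2}{n-1}$, so the second average equals $\bbE[\mu\mid i\notin I]$.
\end{itemize}
Multiplying by $p=n/N'$ and by $\frac{1}{N'-1}$ from the uniform average over $i'$ yields the constant $\frac{n}{N'(N'-1)}$. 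Checking this count, in particular the identity $(N'-n)\binom{N'-1}{n-1}=(N'-1)\binom{N'-2}{n-1}$, is the main thing to verify carefully. Summing over $k$ then completes the proof.
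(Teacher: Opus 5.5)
Your proposal is correct and is essentially the paper's proof: the telescoping over $S_0\subset S_1\subset\cdots\subset S$ is the paper's induction on $|S|$, which applies the one-point identity to $\bZ_{\del S_{k-1}}$. The single-removal step is also the paper's (split on whether $i\in I$, then rewrite both conditional means as averages over $i'$), and your double-counting ($N'-n$ pairs per $I\ni i$, $n$ per $I\not\ni i$) just justifies the two averaging identities the paper states without comment.
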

    Denote $\bbE_{I_0\sim \cU_{n-1}^{[N]\del (S_k\cup\{l,i'\})}}\big[\mu_{I_0\cup \{i_k\},F}(X_l;\bZ) - \mu_{I_0\cup\{i'\},F}(X_l;\bZ)\big]$ by $g_F(X_l, \bZ_{\del(S_k\cup\{l,i'\})},Z_{i_k},Z_{i'})$, which is bounded by Assumption~\ref{ass:adp_bounded}. Applying Lemma~\ref{lem:mu_leaveSout_err} to $\mu_F(X_l;\bZ_{\del l}) - \mu_F(X_l;\bZ_{\del S\cup\{l\}})$, we have
    \begin{equation*}
        \begin{split}
            &\left|\sum_F\big(\mu_F(X_l;\bZ_{\del l}) - \mu_F(X_l;\bZ_{\del S\cup\{l\}})\big)Q^{*(t)}(F)\right|\\
            &\hspace{2em}\leq \sum_{k=1}^{|S|}\frac{n}{(N-k)(N-k-1)}\sum_{i'\notin S_k\cup\{l\}}\left|\bbE_{F\sim Q^{*(t)}}g_F(X_l, \bZ_{\del(S_k\cup\{l,i'\})},Z_{i_k},Z_{i'})\right|\\
            &\hspace{2em}\leq \frac{Cn}{N^2}\sum_{k=1}^{|S|}\sum_{i'\notin S_k\cup\{l\}}\left|\bbE_{F\sim Q^{*(\del \{i_k,i'\},t)}}g_F(X_l, \bZ_{\del(S_k\cup\{l,i'\})},Z_{i_k},Z_{i'})\right|\\
            &\hspace{3em}+\frac{Cn}{N}\sum_{k=1}^{|S|}\frac{1}{N-k}\sum_{i'\notin S_k\cup\{l\}}\left|\sum_Fg_F(X_l, \bZ_{\del(S_k\cup\{l,i'\})},Z_{i_k},Z_{i'})\left(Q^{*(\del \{i_k,i'\},t)}(F)- Q^{*(t)}(F)\right)\right|\\
            &\hspace{2em}\leq \frac{Cn}{N^2}\sum_{k=1}^{|S|}\sum_{i'\notin S_k\cup\{l\}}\left|\bbE_{F\sim Q^{*(\del \{i_k,i'\},t)}}g_F(X_l, \bZ_{\del(S_k\cup\{l,i'\})},Z_{i_k},Z_{i'})\right|\\
            &\hspace{3em}+\frac{Cn}{N}\sum_{k=1}^{|S|}\frac{1}{N-k}\sum_{i'\notin S_k\cup\{l\}}\Big\|\bq^{*(\del \{i_k,i'\},t)} - \bq^{*(t)}\Big\|_1,
        \end{split}
    \end{equation*}
    where the second inequality is due to that $|S|\leq C$, the third inequality is due to Lemma~\ref{lem:expct_diff_q}. The following lemma is helpful for bounding the first term in the inequality.
    \begin{lem}\label{lem:good_bad_decomposition}
    	Suppose Assumption~\ref{ass:adp_prob_stb} holds.  Let $\mathcal H$ be a $\sigma$-field and let $\bq=(q_1,\ldots,q_M)$ be $\mathcal H$-measurable with $\sum_{j=1}^M q_j=m$ and $0\leq q_j\leq\delta$ for every $j$, almost surely.  Let $R$ be any of $Q_{\bq}$, $Q_{\bq,j}$, or $Q_{\bq,\no j}$, whenever the corresponding conditional law is well defined. Suppose that, conditionally on $\mathcal H$, for some $s,b>0$,
    	\begin{equation*}
    			\|V_F\|_{\psi_2}\leq s
			\quad\text{for every }F\text{ such that }|F|\leq4\max\{m,\log(n)\},
    			\qquad
    			|V_F|\leq b \quad\text{for every }F\subset[M].
    	\end{equation*}
    	Then there exist constants $c,C>0$ such that, for every $x>0$, almost surely,
    	\begin{equation*}
    			\bbP\left(\left|\bbE_{F\sim R}V_F\right|>
    			C\left\{s\sqrt{x}+\frac{b}{n}\right\}\,\middle|\,\mathcal H\right)
    			\leq2e^{-cx},
    			\qquad
    			\bbE\left(\left|\bbE_{F\sim R}V_F\right|^2\,\middle|\,\mathcal H\right)
    			\leq C\left\{s^2+\frac{b^2}{n^2}\right\}.
    	\end{equation*}
    \end{lem}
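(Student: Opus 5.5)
The plan is to split the average $\bbE_{F\sim R}V_F$ according to the size of $F$. Set $d:=4\max\{m,\log(n)\}$ and write
\begin{equation*}
\bbE_{F\sim R}V_F=\sum_{F:|F|\leq d}R(F)V_F+\sum_{F:|F|>d}R(F)V_F=:A+B .
\end{equation*}
The weights $R(F)$ are $\mathcal H$-measurable, because $\bq$ is. The argument therefore runs entirely conditionally on $\mathcal H$: the weights are fixed, and only the $V_F$ are random. The ``good'' term $A$ will carry the sub-Gaussian part $s$, and the ``bad'' term $B$ will carry the deterministic remainder $b/n$.

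\textbf{Good term $A$.} The key observation is that the conditional $\psi_2$-norm is a norm. The triangle inequality therefore gives
\begin{equation*}
\|A\|_{\psi_2\mid\mathcal H}\leq\sum_{|F|\leq d}R(F)\,\|V_F\|_{\psi_2\mid\mathcal H}\leq s\sum_{F}R(F)\leq s .
\end{equation*}
This needs no independence across $F$ and no union bound over $2^{[M]}$, so no $C^M$ factor appears; this is exactly why we condition on $\mathcal H$, so that the sampling law is frozen. The standard $\psi_2$ tail and moment bounds then give $\bbP(|A|>Cs\sqrt{x}\mid\mathcal H)\leq 2e^{-cx}$ and $\bbE(A^2\mid\mathcal H)\leq Cs^2$.

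\textbf{Bad term $B$.} Here $|B|\leq b\,R(|F|>d)$, so the main obstacle is to show $R(|F|>d)\leq C/n$ uniformly for the three choices of $R$. We would do this with a Chernoff bound.
\begin{itemize}
\item \emph{Case $R=Q_{\bq}$.} Under the unconditioned product-Bernoulli law, $|F|$ is a sum of independent Bernoulli$(q_k)$ with mean $m$. For $d\geq 4m$, the multiplicative Chernoff bound gives $\bbP(|F|>d)\leq (e/4)^{d}\leq e^{-0.6\cdot 4\log n}\leq n^{-2}$. The conditioning on $F\neq\emptyset$ costs a factor $1/(1-\prod_k(1-q_k))\leq 1/(1-e^{-m})$. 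By Assumption~\ref{ass:adp_prob_stb}, $1-e^{-m}\geq \delta/c_1\geq c_2/c_1$, so this factor is bounded.
\item \emph{Case $R=Q_{\bq,j}$.} Conditional on $j\in F$, the set $F\setminus\{j\}$ is again a product of independent Bernoullis with total mean at most $m$ (and $F\neq\emptyset$ holds automatically). Hence $|F|>d$ requires $|F\setminus\{j\}|\geq d-1\geq 3m'$ with $m'=\max\{m,\log n\}$, and the same Chernoff bound applies.
\item \emph{Case $R=Q_{\bq,\no j}$.} Here we condition on $\{j\notin F,\,F\neq\emptyset\}$. This event has probability at least $(1-\delta)-e^{-m}$ up to the normalization. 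That quantity is bounded below by a constant under Assumption~\ref{ass:adp_prob_stb}, which is the same fact used earlier to show $\omega_j(Q)\leq Cq_j$. Dividing the unconditional tail by it again gives $O(n^{-2})$.
\end{itemize}
In every case $|B|\leq Cb/n$ almost surely.

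\textbf{Combining.} Putting the two pieces together, $|\bbE_{F\sim R}V_F|\leq |A|+Cb/n$. This yields the tail bound $\bbP\big(|\bbE_{F\sim R}V_F|>C\{s\sqrt{x}+b/n\}\mid\mathcal H\big)\leq 2e^{-cx}$. Using $(a+b)^2\leq 2a^2+2b^2$ then gives $\bbE\big(|\bbE_{F\sim R}V_F|^2\mid\mathcal H\big)\leq C\{s^2+b^2/n^2\}$. The only delicate step is the constant-level lower bound on the conditioning probabilities, so we would check carefully that Assumption~\ref{ass:adp_prob_stb} really bounds $1-\delta-e^{-m}$ away from zero.
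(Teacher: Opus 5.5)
Your proposal is correct and follows essentially the same route as the paper. You split $\bbE_{F\sim R}V_F$ at $|F|\le 4\max\{m,\log n\}$, bound the good part by the triangle inequality for the conditional $\psi_2$-norm with $\mathcal H$-measurable weights, and bound the bad part by $b\,R(|F|>d)\le Cb/n$ via a Chernoff bound, with the normalizations $(1-e^{-m})^{-1}$ and $Q_{\bq}(j\notin F)^{-1}$ controlled by Assumption~\ref{ass:adp_prob_stb} just as the paper does.

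One bookkeeping fix is needed: $(e/4)^d=e^{-d\log(4/e)}\approx e^{-0.39d}$ is not $\le e^{-0.6d}$. You should keep the $e^{-\mu}$ factor of the Chernoff bound and replace $\mu$ by $a=\max\{m,\log n\}$, which is legitimate because $\exp\{\mu(e^\lambda-1)\}$ is increasing in $\mu$. This gives exponents $(4\log 4-3)a$ and, for $Q_{\bq,j}$ at threshold $3a$, $(3\log 3-2)a$, both with coefficient above $1$. It matters in the $Q_{\bq,j}$ case, where the crude $(e/3)^{3a}$ would only yield about $n^{-0.3}$; the corrected version is exactly the paper's choice $\lambda=\log 4$ and $\lambda=\log 3$.
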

    We then apply Lemma~\ref{lem:good_bad_decomposition} with $V_F$ being $g_F(X_l, \bZ_{\del(S_k\cup\{l,i'\})},Z_{i_k},Z_{i'})$ and $\mathcal{H}$ being the $\sigma$-field associated with $\bZ_{\del(i',i_k)}$. Assumptions~\ref{ass:stb} and \ref{ass:adp_bounded} imply the conditions of Lemma~\ref{lem:good_bad_decomposition} with $b\leq C$. Therefore, with probability at least $1-\exp\{-c(\log M+\log N)\}$, the following holds:
    \begin{equation}\label{eq:h_leaveSout_err3}
        \begin{split}
            &\left|\sum_F\big(\mu_F(X_l;\bZ_{\del l}) - \mu_F(X_l;\bZ_{\del S\cup\{l\}})\big)Q^{*(t)}(F)\right|\\
            &\hspace{2em}\leq \frac{Cn\stb(n)}{N}\sqrt{\log M+\log N}+\frac{C}{N}+\frac{Cn}{N}\sum_{k=1}^{|S|}\frac{1}{N-k}\sum_{i'\notin S_k\cup\{l\}}\Big\|\bq^{*(\del \{i_k,i'\},t)} - \bq^{*(t)}\Big\|_1.
        \end{split}
    \end{equation}
    While for the last term in \eqref{eq:h_leaveSout_err}, we use a similar argument to bounding the third term. Formally, we can write
    \begin{equation*}
        \begin{split}
            &\left|\sum_F\big(\mu_F(X_l;\bZ_{\del l}) - \mu_F(X_l;\bZ_{\del S\cup\{l\}})\big)\delta_j(F;Q^{*(t)})\right|\\
            &\hspace{2em}\leq \frac{Cn}{N}\sum_{k=1}^{|S|}\frac{1}{N-k}\sum_{i'\notin S_k\cup\{l\}}\left|\sum_F \delta_j(F;Q^{*(\del\{i_k,i'\},t)})g_F(X_l, \bZ_{\del(S_k\cup\{l,i'\})},Z_{i_k},Z_{i'})\right|\\
           &\hspace{3em}+\frac{Cn}{N}\sum_{k=1}^{|S|}\frac{1}{N-k}\sum_{i'\notin S_k\cup\{l\}}\left|\sum_Fg_F(X_l, \bZ_{\del(S_k\cup\{l,i'\})},Z_{i_k},Z_{i'})\left(\delta_j(F;Q^{*(\del\{i_k,i'\},t)})- \delta_j(F;Q^{*(t)})\right)\right|\\
            &\hspace{2em}\leq \frac{Cn}{N}\sum_{k=1}^{|S|}\frac{1}{N-k}\sum_{i'\notin S_k\cup\{l\}}\left|\sum_F \delta_j(F;Q^{*(\del\{i_k,i'\},t)})g_F(X_l, \bZ_{\del(S_k\cup\{l,i'\})},Z_{i_k},Z_{i'})\right|\\
           &\hspace{3em}+\frac{Cn}{N}\sum_{k=1}^{|S|}\frac{1}{N-k}\sum_{i'\notin S_k\cup\{l\}}\left((q_j^{*(\del \{i_k,i'\}, t)} + q_j^{*(t)})\|\bq^{*(\del \{i_k,i'\}, t)} - \bq^{*(t)}\|_1 +|q_j^{*(\del \{i_k,i'\}, t)} - q_j^{*(t)}|\right),
        \end{split}
    \end{equation*}
where the last line is due to Lemma~\ref{lem:expct_diff_q}. Furthermore, note that for any feature sampling distribution $Q$ and any function $g(F)$ of the feature subset $F$, we can write 
\begin{equation}\label{eq:delta_expt}
    \begin{split}
        \left|\sum_F \delta_j(F;Q)g(F)\right| \leq &\left|\sum_Fg(F)\ind(j\in F)Q(F)\right| + \omega_j(Q)\left|\sum_Fg(F)\ind(j\notin F)Q(F)\right|\\
        \leq &Cq_j\Big|\bbE_{F\sim Q_j}g(F)\Big| + Cq_j\Big|\bbE_{F\sim Q_{\no j}}g(F)\Big|,
    \end{split}
\end{equation}
where $Q_j$ and $Q_{\no j}$ denote the distributions of $F\sim Q$ conditional on $j\in F$ and $j\notin F$, respectively. Therefore, we can still bound $\left|\sum_F \delta_j(F;Q^{*(\del\{i_k,i'\},t)})g_F(X_l, \bZ_{\del(S_k\cup\{l,i'\})},Z_{i_k},Z_{i'})\right|$ by viewing it as $q_j^{*(\del \{i_k,i'\},t)}$ times a sum of two expectations of $g_F$, taken over distributions $Q^{*(\del\{i_k,i'\},t)}_j$ and $Q^{*(\del\{i_k,i'\},t)}_{\no j}$. Therefore, the arguments above for bounding the third term in \eqref{eq:h_leaveSout_err} is still applicable here, and we have
 \begin{equation}\label{eq:h_leaveSout_err4}
        \begin{split}
            &\left|\sum_F\big(\mu_F(X_l;\bZ_{\del l}) - \mu_F(X_l;\bZ_{\del S\cup\{l\}})\big)\delta_j(F;Q^{*(t)})\right|\\
            &\hspace{2em}\leq \frac{Cn\stb(n)}{N}\sqrt{\log M+\log N}\sum_{k=1}^{|S|}\frac{1}{N-k}\sum_{i'\notin S_k\cup\{l\}} q_j^{*(\del \{i_k,i'\},t)}\\
           &\hspace{3em}+\frac{Cn}{N}\sum_{k=1}^{|S|}\frac{1}{N-k}\sum_{i'\notin S_k\cup\{l\}}\left((q_j^{*(\del \{i_k,i'\}, t)} + q_j^{*(t)})\|\bq^{*(\del \{i_k,i'\}, t)} - \bq^{*(t)}\|_1 +|q_j^{*(\del \{i_k,i'\}, t)} - q_j^{*(t)}|\right).
        \end{split}
    \end{equation}
Therefore, combining \eqref{eq:Delta_leaveSout},  \eqref{eq:h_leaveSout_err},  \eqref{eq:h_leaveSout_12}, \eqref{eq:h_leaveSout_err3} and \eqref{eq:h_leaveSout_err4}, we arrive at the results stated in Lemma~\ref{lem:Delta_leaveSout}. Note that we have taken a union bound over all subsets $S$ of bounded size and $1\leq t\leq T\leq MN$.
    \end{proof}
\begin{proof}[Proof of Lemma~\ref{lem:q_loo_l2}]
    We follow similar arguments to the proof of Lemma~\ref{lem:q_err_bnd}, except that we consider a second-moment bound instead of a probabilistic bound. 

Similar to Claim~\ref{claim:induction_Q_loo}, we can show a moment bound for $\bar{\Delta}_j^{*(\del S,t)} - \bar{\Delta}_j^{*(t)}$ and $\|\bq^{*(\del S,t)}-\bq^{*(t)}\Big\|_1$ for all $t$ via induction. Here, $\bar{\Delta}_j^{*(\del S,t)}$ and $q^{*(\del S,t)}$ were defined before Assumption~\ref{ass:minDelta_q_bnd}.

\begin{claim}\label{claim:induction_Q_loo_expt}
Suppose that Assumptions~\ref{ass:stb}-\ref{ass:adp_prob_stb} and \ref{ass:minDelta_q_bnd} hold. $\forall 1\leq t\leq T$, $\forall\,S\subset[N]$ with size $|S|\leq 2$, we have
\begin{equation}\label{eq:induction_Dsum_expt}
        \bbE\Big\|\bar{\Delta}^{*(\del S,t)}-\bar{\Delta}^{*(t)}\Big\|_1^2\leq Cm^{8t-6}\frac{n^2\stb^2(n)+1}{N^2}.
\end{equation}
For any $1\leq j\leq M$, 
\begin{equation}\label{eq:induction_D_expt}
    \bbE\Big|\bar{\Delta}_j^{*(\del S,t)}-\bar{\Delta}_j^{*(t)}\Big|^2\leq Cm^{8t-8}\frac{n^2\stb^2(n)+1}{N^2}.
\end{equation}
 If there exist $S'\subset[N]\text{ with size }|S'|\leq 2$, and $j=j_{\min}^{*(\del S',t')}\text{ for some }t\leq t'\leq T$, then we have
\begin{equation}\label{eq:induction_Dmin_expt}
    \bbE\Big|\bar{\Delta}_j^{*(\del S,t)}-\bar{\Delta}_j^{*(t)}\Big|^2\leq \frac{Cm^{8t-6}}{M^2}\frac{n^2\stb^2(n)+1}{N^2}.
\end{equation}
Furthermore, 
\begin{equation}\label{eq:induction_q_expt}
    \bbE\Big\|\bq^{*(\del S,t+1)}-\bq^{*(t+1)}\Big\|_1^2\leq Cm^{8t}\frac{n^2\stb^2(n)+1}{N^2}.
\end{equation}
\end{claim}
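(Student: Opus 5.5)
We prove Claim~\ref{claim:induction_Q_loo_expt} by induction on $t$, closely following the proof of Claim~\ref{claim:induction_Q_loo} but with second-moment bounds in place of high-probability bounds. The per-step ingredient is a second-moment analogue of Lemma~\ref{lem:Delta_leaveSout}. We start from the same decomposition \eqref{eq:Delta_leaveSout}--\eqref{eq:h_leaveSout_err} and bound the two terms involving $Q^{*(t)}$ versus $Q^{*(\del S,t)}$ deterministically by Lemma~\ref{lem:expct_diff_q}. For the two terms caused by leaving $S$ out of the predictors, we use Lemma~\ref{lem:mu_leaveSout_err} to rewrite them as leave-two-out averages of $g_F(X_l,\bZ_{\del(S_k\cup\{l,i'\})},Z_{i_k},Z_{i'})$. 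In each average we replace $Q^{*(t)}$ by $Q^{*(\del\{i_k,i'\},t)}$ (and similarly $Q_j$, $Q_{\no j}$ via \eqref{eq:delta_expt}). The replacement error is again controlled by Lemma~\ref{lem:expct_diff_q}.

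The remaining leave-two-out-weighted average is bounded using the second-moment part of Lemma~\ref{lem:good_bad_decomposition}, with $s=n\stb(n)/N$, $b=Cn/N$ after scaling, and $\mathcal H$ generated by $\bZ_{\del\{i_k,i'\}}$. This gives $\bbE|\cdot|^2\le C(n^2\stb^2(n)+1)/N^2$, with a factor $(q_j^{*(\del\{i_k,i'\},t)})^2$ for the $\delta_j$ term. Squaring, applying Cauchy--Schwarz/Jensen over the averages in $l$ and $i'$, and using $|S|\le 2$ yields
\[
\bbE|\bar\Delta_j^{*(\del S,t)}-\bar\Delta_j^{*(t)}|^2\lesssim \frac{n^2\stb^2(n)+1}{N^2}\,\bbE(\bar q_j)^2+\bbE\big[(\bar q_j\,\|\Delta\bq\|_1+|\Delta q_j|)^2\big]+\frac{(q_j^{*(t)})^2}{N^2},
\]
where $\bar q_j$ collects the relevant $q_j$'s and $\Delta\bq$ ranges over the differences $\bq^{*(t)}-\bq^{*(\del S,t)}$ and $\bq^{*(t)}-\bq^{*(\del\{i_k,i'\},t)}$.

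Base case ($t=1$): all $\bq$'s equal $\frac{m}{M}\mathbf 1$, so the display gives \eqref{eq:induction_D_expt} and \eqref{eq:induction_Dmin_expt} with factor $m^2/M^2$ per coordinate. Summing the coordinate bounds via $\bbE\|v\|_1^2\le M\sum_j\bbE v_j^2$ gives \eqref{eq:induction_Dsum_expt} with the factor $m^2$. Alternatively, one can bound $\|\cdot\|_1$ directly by $\sum_j$ before squaring, using $\sum_j q_j=m$; this gives $m^2$ cleanly and is the approach I would use for the $\ell_1$ bound throughout.

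Inductive step: assume \eqref{eq:induction_Dsum_expt}--\eqref{eq:induction_Dmin_expt} hold at $t_0-1$. For \eqref{eq:induction_q_expt}, we apply Lemma~\ref{lem:q_err_bnd} pathwise. This requires bounding $\varepsilon_{\min}^{(S,t_0-1)}$ by the two coordinates $j_{\min}^{*(\del S,t_0-1)}$ and $j_{\min}^{*(t_0-1)}$, for which \eqref{eq:induction_Dmin_expt} gives $m^{8t_0-14}/M^2$ scale. Squaring then gives $\bbE\|\Delta\bq\|_1^2\lesssim m^6(m^{8t_0-14}+M^2\cdot m^{8t_0-14}/M^2)\,\epsilon^2\approx m^{8t_0-8}\epsilon^2$, where $\epsilon^2=(n^2\stb^2(n)+1)/N^2$.

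For the coordinatewise bounds at $t_0$, we use the first case of Lemma~\ref{lem:q_err_bnd} together with Assumption~\ref{ass:minDelta_q_bnd} ($q_j\le Cm/M$ for least-important coordinates). Plugging these into the per-step display gives $m^{8t_0-6}$ for the $\ell_1$ bound and $m^{8t_0-6}/M^2$ for the min coordinates, since $q_j\le 1$ or $\le Cm/M$ respectively, and $n/N\le c_1$ absorbs the leave-two-out sums.

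Main obstacle: Lemma~\ref{lem:q_err_bnd} holds only on the event $\|\varepsilon\|_1+M\varepsilon_{\min}\le c$, and the ``least important'' identity $j=j_{\min}^{*(\del S',t')}$ is itself random. To pass from pathwise to second-moment bounds, I would split on this event. Off the event, $\|\Delta\bq\|_1\le 2m$ trivially, and Markov/Chebyshev with the inductive moment bounds, or better the high-probability bound of Claim~\ref{claim:induction_Q_loo} with probability $(MN)^{-c}$ for large $c$, makes the contribution $O(m^2(MN)^{-c})$. This is negligible relative to $\epsilon^2$ when $c\ge 2$. The random index $j_{\min}$ is handled by taking a maximum over the at most $O(N^2T)$ candidate indices, which costs nothing since Assumption~\ref{ass:minDelta_q_bnd} holds uniformly.
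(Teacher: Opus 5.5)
Your outline matches the paper's proof step for step. The paper likewise starts from the deterministic leave-$S$-out bound produced in the proof of Lemma~\ref{lem:Delta_leaveSout}, takes second moments via the second-moment part of Lemma~\ref{lem:good_bad_decomposition} with $\mathcal H=\sigma(\bZ_{\del\{i_k,i'\}})$, and checks $t=1$ using $\bq^{*(1)}=\frac{m}{M}\mathbf 1$. It then squares Lemma~\ref{lem:q_err_bnd} to pass to \eqref{eq:induction_q_expt} and closes the induction with the same exponent bookkeeping you give. Your splitting on the event $\|\varepsilon\|_1+M\varepsilon_{\min}\le c$ is sound. It fills a step the paper leaves implicit, since the paper applies Lemma~\ref{lem:q_err_bnd} inside the expectation without comment.

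The step that does not work as written is your last sentence, on the random index. Assumption~\ref{ass:minDelta_q_bnd} controls the \emph{weight} $q_J^{*(\del\{i_k,i'\},t)}\le Cm/M$ uniformly over candidate indices $J$. It does not control the \emph{fluctuation} factors $\bbE_{F\sim Q^{*(\del\{i_k,i'\},t)}_J}g_F$ and $\bbE_{F\sim Q^{*(\del\{i_k,i'\},t)}_{\no J}}g_F$.

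The second-moment bound in Lemma~\ref{lem:good_bad_decomposition} is for an $\mathcal H$-measurable law $R$, i.e.\ a fixed or $\mathcal H$-measurable index. But $J=j_{\min}^{*(\del S',t')}$ depends on $Z_{i_k},Z_{i'}$ through the whole adaptive procedure. Taking a maximum over the $O(N^2T)$ candidates does not help, because the candidates are themselves data-dependent; and even a maximum over a fixed family of size $K$ costs $\log K$.

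You must instead maximize over $j\in[M]$. The sub-Gaussian maximal inequality then gives $\frac{m^2}{M^2}\big(\stb^2(n)\log M+n^{-2}\big)$ instead of $\frac{m^2}{M^2}\big(\stb^2(n)+n^{-2}\big)$. This already affects \eqref{eq:induction_Dmin_expt} at $t=1$, and it propagates through the factor $M\varepsilon_{\min}$ into \eqref{eq:induction_q_expt} and all later iterations.

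The paper's proof has the same silent step: it applies the fixed-$j$ moment bound to $j=j_{\min}$ and says the induction is ``the same''. So you are not missing an idea the paper supplies. What the argument actually establishes is the claim with $n^2\stb^2(n)$ replaced by $n^2\stb^2(n)(\log M+\log N)$. One way to get this is to derive the min-coordinate moments from the high-probability Claim~\ref{claim:induction_Q_loo} plus an $(MN)^{-c}$ tail term. That weaker version still suffices downstream, in Lemmas~\ref{lem:q_loo_l2}, \ref{lem:epsilon4_l2} and the variance-consistency step, because Assumption~\ref{ass:adp_prob_stb} already carries the factor $\log M+\log N$ on $n^2\stb^2(n)$.
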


\begin{proof}[Proof of Claim~\ref{claim:induction_Q_loo_expt}]
    We first note that, in the proof of Lemma~\ref{lem:Delta_leaveSout}, we already established a deterministic bound for $|\bar{\Delta}_j^{*(\del S,t)} - \bar{\Delta}_j^{*(t)}|$: $\forall S\subset[N]$ such that $|S|\leq C$,
    \begin{equation*}
        \begin{split}
            &\Big|\bar{\Delta}_j^{*(\del S,t)}-\bar{\Delta}_j^{*(t)}\Big|\\
            &\quad \leq \frac{Cn}{N^3}\sum_{l\notin S}\sum_{k=1}^{|S|}\sum_{i'\notin S_k,i'\neq l} \Big|q_j^{*(\del S, t)}\bbE_{F\sim Q^{*(\del \{i_k,i'\},t)}}g_F(X_l,\bZ_{\no (S_k\cup \{l,i'\}),:},Z_{i_k},Z_{i'})\Big|\\
            &\quad \hspace{1em} + \frac{Cn}{N^3}\sum_{l\notin S}\sum_{k=1}^{|S|}\sum_{i'\notin S_k,i'\neq l} \Big|q_j^{*(\no \{i_k,i'\}, t)}\bbE_{F\sim Q_j^{*(\del \{i_k,i'\},t)}}g_F(X_l,\bZ_{\no (S_k\cup \{l,i'\}),:},Z_{i_k},Z_{i'})\Big|\\
             &\quad \hspace{1em} + \frac{Cn}{N^3}\sum_{l\notin S}\sum_{k=1}^{|S|}\sum_{i'\notin S_k,i'\neq l} \Big|q_j^{*(\del \{i_k,i'\}, t)}\bbE_{F\sim Q_{\no j}^{*(\del \{i_k,i'\},t)}}g_F(X_l,\bZ_{\no (S_k\cup \{l,i'\}),:},Z_{i_k},Z_{i'})\Big|\\
            &\quad \quad + \frac{Cn}{N}\sum_{k=1}^{|S|}\frac{1}{N-k}\sum_{i'\notin S_k}\Big\{\big(q_j^{*(t)}+q_j^{*(\del S,t)}+q_j^{*(\del\{i_k,i'\},t)}\big)\big\|\bq^{*(t)}-\bq^{*(\del\{i_k,i'\},t)}\big\|_1\\
            &\qquad\qquad\qquad\qquad\qquad\qquad+\big|q_j^{*(t)} - q_j^{*(\del \{i_k,i'\},t)}\big|\Big\}\\
            &\quad\quad+C(q_j^{*(t)}+q_j^{*(\del S, t)})\big\|\bq^{*(t)}-\bq^{*(\del S,t)}\big\|_1 + C\big|q_j^{*(t)} - q_j^{*(\del S,t)}\big| + \frac{Cq_j^{*(t)}}{N},
        \end{split}
    \end{equation*}
    where $g_F(X_l,\bZ_{\no (S_k\cup \{l,i'\}),:},Z_{i_k},Z_{i'})$ was as defined in the proof of Lemma~\ref{lem:Delta_leaveSout}. Therefore, we can further write
    \begin{equation}\label{eq:Dsum_bnd}
        \begin{split}
            &\Big\|\bar{\Delta}^{*(\del S,t)}-\bar{\Delta}^{*(t)}\Big\|_1\\
            &\quad \leq \frac{Cmn}{N^3}\sum_{l\notin S}\sum_{k=1}^{|S|}\sum_{i'\notin S_k,i'\neq l} \Big|\bbE_{F\sim Q^{*(\del \{i_k,i'\},t)}}g_F(X_l,\bZ_{\no (S_k\cup \{l,i'\}),:},Z_{i_k},Z_{i'})\Big|\\
            &\quad \hspace{1em} + \frac{Cmn}{N^3}\sum_{l\notin S}\sum_{k=1}^{|S|}\sum_{i'\notin S_k,i'\neq l} \Big|\bbE_{F\sim Q_j^{*(\del \{i_k,i'\},t)}}g_F(X_l,\bZ_{\no (S_k\cup \{l,i'\}),:},Z_{i_k},Z_{i'})\Big|\\
             &\quad \hspace{1em} + \frac{Cmn}{N^3}\sum_{l\notin S}\sum_{k=1}^{|S|}\sum_{i'\notin S_k,i'\neq l} \Big|\bbE_{F\sim Q_{\no j}^{*(\del \{i_k,i'\},t)}}g_F(X_l,\bZ_{\no (S_k\cup \{l,i'\}),:},Z_{i_k},Z_{i'})\Big|\\
            &\quad \quad + \frac{Cmn}{N}\sum_{k=1}^{|S|}\frac{1}{N-k}\sum_{i'\notin S_k}\big\|\bq^{*(t)}-\bq^{*(\del\{i_k,i'\},t)}\big\|_1+Cm\big\|\bq^{*(t)}-\bq^{*(\del S,t)}\big\|_1 + \frac{Cm}{N},
        \end{split}
    \end{equation}
    and for any $1\leq j\leq M$, we have
    \begin{equation}\label{eq:Dj_bnd}
        \begin{split}
            &\Big|\bar{\Delta}_j^{*(\del S,t)}-\bar{\Delta}_j^{*(t)}\Big|\\
            &\quad \leq \frac{Cn}{N^3}\sum_{l\notin S}\sum_{k=1}^{|S|}\sum_{i'\notin S_k,i'\neq l} \Big|\bbE_{F\sim Q^{*(\del \{i_k,i'\},t)}}g_F(X_l,\bZ_{\no (S_k\cup \{l,i'\}),:},Z_{i_k},Z_{i'})\Big|\\
            &\quad \hspace{1em} + \frac{Cn}{N^3}\sum_{l\notin S}\sum_{k=1}^{|S|}\sum_{i'\notin S_k,i'\neq l} \Big|\bbE_{F\sim Q_j^{*(\del \{i_k,i'\},t)}}g_F(X_l,\bZ_{\no (S_k\cup \{l,i'\}),:},Z_{i_k},Z_{i'})\Big|\\
             &\quad \hspace{1em} + \frac{Cn}{N^3}\sum_{l\notin S}\sum_{k=1}^{|S|}\sum_{i'\notin S_k,i'\neq l} \Big|\bbE_{F\sim Q_{\no j}^{*(\del \{i_k,i'\},t)}}g_F(X_l,\bZ_{\no (S_k\cup \{l,i'\}),:},Z_{i_k},Z_{i'})\Big|\\
            &\quad \quad + \frac{Cn}{N}\sum_{k=1}^{|S|}\frac{1}{N-k}\sum_{i'\notin S_k}\big\|\bq^{*(t)}-\bq^{*(\del\{i_k,i'\},t)}\big\|_1+C\big\|\bq^{*(t)}-\bq^{*(\del S,t)}\big\|_1 + \frac{C}{N}.
        \end{split}
    \end{equation}
    If there exist $S'\subset[N]\text{ with size }|S'|\leq 2$, and $j=j_{\min}^{*(\del S',t')}\text{ for some }t\leq t'\leq T$, then by Assumption~\ref{ass:minDelta_q_bnd}, we have
   \begin{equation}\label{eq:Dmin_bnd}
        \begin{split}
            &\Big|\bar{\Delta}_j^{*(\del S,t)}-\bar{\Delta}_j^{*(t)}\Big|\\
            &\quad \leq \frac{Cmn}{MN^3}\sum_{l\notin S}\sum_{k=1}^{|S|}\sum_{i'\notin S_k,i'\neq l} \Big|\bbE_{F\sim Q^{*(\del \{i_k,i'\},t)}}g_F(X_l,\bZ_{\no (S_k\cup \{l,i'\}),:},Z_{i_k},Z_{i'})\Big|\\
            &\quad \hspace{1em} + \frac{Cmn}{MN^3}\sum_{l\notin S}\sum_{k=1}^{|S|}\sum_{i'\notin S_k,i'\neq l} \Big|\bbE_{F\sim Q_j^{*(\del \{i_k,i'\},t)}}g_F(X_l,\bZ_{\no (S_k\cup \{l,i'\}),:},Z_{i_k},Z_{i'})\Big|\\
             &\quad \hspace{1em} + \frac{Cmn}{MN^3}\sum_{l\notin S}\sum_{k=1}^{|S|}\sum_{i'\notin S_k,i'\neq l} \Big|\bbE_{F\sim Q_{\no j}^{*(\del \{i_k,i'\},t)}}g_F(X_l,\bZ_{\no (S_k\cup \{l,i'\}),:},Z_{i_k},Z_{i'})\Big|\\
            &\quad \quad + \frac{Cn}{N}\sum_{k=1}^{|S|}\frac{1}{N-k}\sum_{i'\notin S_k}\Big\{\frac{m}{M}\big\|\bq^{*(t)}-\bq^{*(\del\{i_k,i'\},t)}\big\|_1+\big|q_j^{*(t)} - q_j^{*(\del \{i_k,i'\},t)}\big|\Big\}\\
            &\quad\quad+\frac{Cm}{M}\big\|\bq^{*(t)}-\bq^{*(\del S,t)}\big\|_1 + C\big|q_j^{*(t)} - q_j^{*(\del S,t)}\big| + \frac{Cm}{MN}.
        \end{split}
    \end{equation}

    We invoke Lemma~\ref{lem:good_bad_decomposition} again with $V_F$ being $g_F(X_l,\bZ_{\no (S_k\cup \{l,i'\}),:},Z_{i_k},Z_{i'})$ and $\mathcal{H}$ being the $\sigma$-field associated with $\bZ_{\no\{i_k,i'\},:}$. Assumptions~\ref{ass:stb}, \ref{ass:adp_bounded} and Lemma~\ref{lem:good_bad_decomposition} together gives:
    \begin{equation*}
        \begin{split}
            \bbE\Big|\bbE_{F\sim Q^{*(\del \{i_k,i'\},t)}}g_F(X_l,\bZ_{\no (S_k\cup \{l,i'\}),:},Z_{i_k},Z_{i'})\Big|^2\leq \stb^2(n)+\frac{C}{n^2},\\
            \bbE\Big|\bbE_{F\sim Q_j^{*(\del \{i_k,i'\},t)}}g_F(X_l,\bZ_{\no (S_k\cup \{l,i'\}),:},Z_{i_k},Z_{i'})\Big|^2\leq \stb^2(n)+\frac{C}{n^2},\\
            \bbE\Big|\bbE_{F\sim Q_{\no j}^{*(\del \{i_k,i'\},t)}}g_F(X_l,\bZ_{\no (S_k\cup \{l,i'\}),:},Z_{i_k},Z_{i'})\Big|^2\leq \stb^2(n)+\frac{C}{n^2}.
        \end{split}
    \end{equation*}
    Therefore, \eqref{eq:Dsum_bnd}-\eqref{eq:Dmin_bnd} imply that
    \begin{equation}\label{eq:Dsum_bnd2}
        \begin{split}
            \bbE\Big\|\bar{\Delta}^{*(\del S,t)}-\bar{\Delta}^{*(t)}\Big\|_1^2\leq &\frac{Cm^2(n^2\stb^2(n)+1)}{N^2} +Cm^2\bbE\big\|\bq^{*(t)}-\bq^{*(\del S,t)}\big\|_1^2\\
            &+ \frac{Cm^2n^2}{N^2}\sum_{k=1}^{|S|}\frac{1}{N-k}\sum_{i'\notin S_k}\bbE\big\|\bq^{*(t)}-\bq^{*(\del\{i_k,i'\},t)}\big\|_1^2,
        \end{split}
    \end{equation}
    \begin{equation}\label{eq:Dj_bnd2}
        \begin{split}
            \bbE\Big|\bar{\Delta}_j^{*(\del S,t)}-\bar{\Delta}_j^{*(t)}\Big|^2\leq &\frac{C(n^2\stb^2(n)+1)}{N^2} +C\bbE\big\|\bq^{*(t)}-\bq^{*(\del S,t)}\big\|_1^2\\
            &+ \frac{Cn^2}{N^2}\sum_{k=1}^{|S|}\frac{1}{N-k}\sum_{i'\notin S_k}\bbE\big\|\bq^{*(t)}-\bq^{*(\del\{i_k,i'\},t)}\big\|_1^2,
        \end{split}
    \end{equation}
    and if there exist $S'\subset[N]\text{ with size }|S'|\leq 2$, and $j=j_{\min}^{*(\del S',t')}\text{ for some }t\leq t'\leq T$, we have
    \begin{equation}\label{eq:Dmin_bnd2}
        \begin{split}
            &\bbE\Big|\bar{\Delta}_j^{*(\del S,t)}-\bar{\Delta}_j^{*(t)}\Big|^2\\
            &\quad \leq \frac{Cm^2(n^2\stb^2(n)+1)}{M^2N^2}+\frac{Cm^2}{M^2}\bbE\big\|\bq^{*(t)}-\bq^{*(\del S,t)}\big\|_1^2 + C\bbE\big|q_j^{*(t)} - q_j^{*(\del S,t)}\big|^2\\
            &\quad \quad + \frac{Cn^2}{N^2}\sum_{k=1}^{|S|}\frac{1}{N-k}\sum_{i'\notin S_k}\Big\{\frac{m^2}{M^2}\bbE\big\|\bq^{*(t)}-\bq^{*(\del\{i_k,i'\},t)}\big\|_1^2+\bbE\big|q_j^{*(t)} - q_j^{*(\del \{i_k,i'\},t)}\big|^2\Big\}.
        \end{split}
    \end{equation}
Therefore, setting $t=1$ and noticing that the initial feature sampling probability is always set as $\frac{m}{M}$, \eqref{eq:Dsum_bnd2}-\eqref{eq:Dmin_bnd2} immediately imply \eqref{eq:induction_Dsum_expt}-\eqref{eq:induction_Dmin_expt} in Claim~\ref{claim:induction_Q_loo_expt} with $t=1$. Invoking Lemma~\ref{lem:q_err_bnd}, we have
$$
\bbE\|\bq^{*(\no S, t+1)}-\bq^{*(t+1)}\|_1^2\leq Cm^6\big(\bbE\|\bar{\Delta}^{*(t)}-\bar{\Delta}^{*(\no S,t)}\|_1^2 + M^2\bbE|\bar{\Delta}_{\min}^{*(t)} - \bar{\Delta}_{\min}^{*(\no S, t)}|^2\big),
$$
and hence \eqref{eq:induction_q_expt} is a direct consequence of \eqref{eq:induction_Dsum_expt}-\eqref{eq:induction_Dmin_expt}. Following the same induction arguments as in the proof of Claim~\ref{claim:induction_Q_loo}, we arrive at the conclusion stated in Claim~\ref{claim:induction_Q_loo_expt}.
\end{proof}
Applying our Claim~\ref{claim:induction_Q_loo_expt} with $S=\{i\}$ and consider the $(t-1)$th iteration, \eqref{eq:induction_q_expt} immediately implies Lemma~\ref{lem:q_loo_l2}.
\end{proof}
\begin{proof}[Proof of Lemma~\ref{lem:cond_mean_infK}]
Following the same argument as in~\eqref{eq:h_leaveSout_err}, we can write 
\begin{equation}\label{eq:diff_infK_leavei}
    \begin{split}
        |h_j(Z^*,\bZ_{\backslash i,:},Q^{*(\no i, t)})-h_j(Z^*,\bZ,Q^{*(t)})| 
        &\leq Cq_j^{*(\no i,t)}\Big|\sum_F\mu_F(X^*;\bZ_{\no i,:}) \big(Q^{*(t)}(F) -Q^{*(\no i,t)}(F)\big)\Big| \\
        &\hspace{1em} + C\Big|\sum_F\mu_F(X^*;\bZ_{\no i,:})\big(\delta_j(F;Q^{*(t)}) - \delta_j(F;Q^{*(\no i,t)}\big)\Big|\\
        &\hspace{1em} + Cq_j^{*(\no i,t)}\Big|\sum_F\big(\mu_F(X^*;\bZ)  - \mu_F(X^*;\bZ_{\no i,:})\big)Q^{*(t)}(F)\Big| \\
        &\hspace{1em} + C\Big|\sum_F\big(\mu_F(X^*;\bZ) - \mu_F(X^*;\bZ_{\no i,:})\big)\delta_j(F;Q^{*(t)})\Big|
    \end{split}
\end{equation}
Applying Lemma~\ref{lem:expct_diff_q}, we can bound the first two terms in \eqref{eq:diff_infK_leavei} via the difference between $\bq^{*(t)}$ and $\bq^{*(\no i,t)}$:
     \begin{equation}\label{eq:diff_infK_eq1}
         \begin{split}
            & q_j^{*(\no i,t)}\Big|\sum_F\mu_F(X^*;\bZ_{\no i,:}) \big(Q^{*(t)}(F) -Q^{*(\no i,t)}(F)\big)\Big|\leq Cq_j^{*(\no i,t)}\|\bq^{*(t)}-\bq^{*(\no i,t)}\|_1,\\
             &\Big|\sum_F\mu_F(X^*;\bZ_{\no i,:})\big(\delta_j(F;Q^{*(t)}) - \delta_j(F;Q^{*(\no i,t)}\big)\Big|\\
            &\hspace{2em} \leq C(q_j^{*(t)}+q_j^{*(\no i, t)})\|\bq^{*(t)}-\bq^{*(\no i,t)}\|_1 + C\big|q_j^{*(t)}-q_j^{*(\no i, t)}\big|.
         \end{split}
     \end{equation}
As in the proof of Lemma~\ref{lem:Delta_leaveSout},  we denote $\bbE_{I_0\sim \cU_{n-1}^{[N]\del (\{i,i'\})}}\big[\mu_{I_0\cup \{i\},F}(X^*;\bZ) - \mu_{I_0\cup\{i'\},F}(X^*;\bZ)\big]$ by $g_F(X^*, \bZ_{\del(\{i,i'\})},Z_{i},Z_{i'})$, which is bounded by Assumption~\ref{ass:adp_bounded}. Applying Lemma~\ref{lem:mu_leaveSout_err} to $\mu_F(X^*;\bZ) - \mu_F(X^*;\bZ_{\no i,:})$, we have
    \begin{equation}\label{eq:diff_infK_eq2}
        \begin{split}
            &\left|\sum_F\big(\mu_F(X^*;\bZ) - \mu_F(X^*;\bZ_{\no i,:})\big)Q^{*(t)}(F)\right|\\
            &\hspace{2em}\leq \frac{n}{N(N-1)}\sum_{i'\neq i}\left|\bbE_{F\sim Q^{*(t)}}g_F(X^*, \bZ_{\del\{i,i'\}},Z_{i},Z_{i'})\right|\\
            &\hspace{2em}\leq \frac{n}{N(N-1)}\sum_{i'\neq i}\left|\bbE_{F\sim Q^{*(\del \{i,i'\},t)}}g_F(X^*, \bZ_{\del\{i,i'\}},Z_{i},Z_{i'})\right|\\
            &\hspace{3em}+\frac{n}{N(N-1)}\sum_{i'\neq i}\left|\sum_Fg_F(X^*, \bZ_{\del\{i,i'\}},Z_{i},Z_{i'})\left(Q^{*(\del \{i,i'\},t)}(F)- Q^{*(t)}(F)\right)\right|\\
            &\hspace{2em}\leq \frac{n}{N(N-1)}\sum_{i'\neq i}\left|\bbE_{F\sim Q^{*(\del \{i,i'\},t)}}g_F(X^*, \bZ_{\del\{i,i'\}},Z_{i},Z_{i'})\right|\\
            &\hspace{3em}+\frac{Cn}{N(N-1)}\sum_{i'\neq i}\Big\|\bq^{*(\del \{i,i'\},t)} - \bq^{*(t)}\Big\|_1,
        \end{split}
    \end{equation}
where the third inequality is due to Lemma~\ref{lem:expct_diff_q}.
While for the last term in \eqref{eq:diff_infK_leavei}, we use a similar argument to bounding the third term. Formally, we can write
    \begin{equation*}
        \begin{split}
            &\left|\sum_F\big(\mu_F(X^*;\bZ) - \mu_F(X^*;\bZ_{\no i,:})\big)\delta_j(F;Q^{*(t)})\right|\\
            &\hspace{2em}\leq \frac{n}{N(N-1)}\sum_{i'\neq i}\left|\sum_F \delta_j(F;Q^{*(\del\{i,i'\},t)})g_F(X^*, \bZ_{\del\{i,i'\}},Z_{i},Z_{i'})\right|\\
           &\hspace{3em}+\frac{n}{N(N-1)}\sum_{i'\neq i}\left|\sum_Fg_F(X^*, \bZ_{\del\{i,i'\}},Z_{i},Z_{i'})\left(\delta_j(F;Q^{*(\del\{i,i'\},t)})- \delta_j(F;Q^{*(t)})\right)\right|\\
            &\hspace{2em}\leq \frac{n}{N(N-1)}\sum_{i'\neq i}\left|\sum_F \delta_j(F;Q^{*(\del\{i,i'\},t)})g_F(X^*, \bZ_{\del\{i,i'\}},Z_{i},Z_{i'})\right|\\
           &\hspace{3em}+\frac{Cn}{N(N-1)}\sum_{i'\neq i}\left((q_j^{*(\del \{i,i'\}, t)} + q_j^{*(t)})\|\bq^{*(\del \{i,i'\}, t)} - \bq^{*(t)}\|_1 +|q_j^{*(\del \{i,i'\}, t)} - q_j^{*(t)}|\right),
        \end{split}
    \end{equation*}
where the last line is due to Lemma~\ref{lem:expct_diff_q}. Furthermore, as in \eqref{eq:delta_expt}, we can write 
\begin{equation*}
    \begin{split}
        \left|\sum_F \delta_j(F;Q)g(F)\right| \leq Cq_j\Big|\bbE_{F\sim Q_j}g(F)\Big| + Cq_j\Big|\bbE_{F\sim Q_{\no j}}g(F)\Big|,
    \end{split}
\end{equation*}
where $Q_j$ and $Q_{\no j}$ denote the distributions of $F\sim Q$ conditional on $j\in F$ and $j\notin F$, respectively. Therefore, following the same arguments as those in the proof of Lemma~\ref{lem:Delta_leaveSout}, we have
\begin{equation}\label{eq:diff_infK_eq3}
        \begin{split}
            &\left|\sum_F\big(\mu_F(X^*;\bZ) - \mu_F(X^*;\bZ_{\no i,:})\big)\delta_j(F;Q^{*(t)})\right|\\
            &\hspace{2em}\leq \frac{n}{N(N-1)}\sum_{i'\neq i} q_j^{*(\del \{i,i'\},t)}\Big(\left|\bbE_{F\sim Q_{j}^{*(\del \{i,i'\},t)}}g_F(X^*, \bZ_{\del\{i,i'\}},Z_{i},Z_{i'})\right|\\
            &\hspace{13em}+\left|\bbE_{F\sim Q_{\no j}^{*(\del \{i,i'\},t)}}g_F(X^*, \bZ_{\del\{i,i'\}},Z_{i},Z_{i'})\right|\Big)\\
           &\hspace{3em}+\frac{Cn}{N(N-1)}\sum_{i'\neq i}\left((q_j^{*(\del \{i,i'\}, t)} + q_j^{*(t)})\|\bq^{*(\del \{i,i'\}, t)} - \bq^{*(t)}\|_1 +|q_j^{*(\del \{i,i'\}, t)} - q_j^{*(t)}|\right).
        \end{split}
    \end{equation}
Therefore, by \eqref{eq:diff_infK_eq1}, \eqref{eq:diff_infK_eq2}, and \eqref{eq:diff_infK_eq3}, we obtain
\begin{equation}\label{eq:diff_infK_cond_mean}
    \begin{split}
        &\bbE\left[|h_j(Z^*,\bZ_{\backslash i,:},Q^{*(\no i, t)})-h_j(Z^*,\bZ,Q^{*(t)})||\bZ\right]\\
        &\hspace{2em}\leq \frac{Cnq_j^{*(\no i,t)}}{N(N-1)}\sum_{i'\neq i}\bbE\left[\left|\bbE_{F\sim Q^{*(\del \{i,i'\},t)}}g_F(X^*, \bZ_{\del\{i,i'\}},Z_{i},Z_{i'})\right||\bZ\right]\\
        &\hspace{3em}+\frac{Cn}{N(N-1)}\sum_{i'\neq i} q_j^{*(\del \{i,i'\},t)}\Big(\bbE\left[\big|\bbE_{F\sim Q_{j}^{*(\del \{i,i'\},t)}}g_F(X^*, \bZ_{\del\{i,i'\}},Z_{i},Z_{i'})\big||\bZ\right]\\
        &\hspace{13em}+\bbE\left[\big|\bbE_{F\sim Q_{\no j}^{*(\del \{i,i'\},t)}}g_F(X^*, \bZ_{\del\{i,i'\}},Z_{i},Z_{i'})\big||\bZ\right]\Big)\\
        &\hspace{3em}+\frac{Cn}{N(N-1)}\sum_{i'\neq i}\left((q_j^{*(\del \{i,i'\}, t)} + q_j^{*(\no i,t)} + q_j^{*(t)})\|\bq^{*(\del \{i,i'\}, t)} - \bq^{*(t)}\|_1 +|q_j^{*(\del \{i,i'\}, t)} - q_j^{*(t)}|\right)\\
        &\hspace{3em}+C(q_j^{*(t)}+q_j^{*(\no i, t)})\|\bq^{*(t)}-\bq^{*(\no i,t)}\|_1 + C\big|q_j^{*(t)}-q_j^{*(\no i, t)}\big|
    \end{split}
\end{equation}
For $F$ with size $|F|\leq 4\max\{m,\log(n)\}$, Assumption~\ref{ass:stb} implies that, conditional on $X^*$ and $\bZ_{\del \{i,i'\}}$, $g_F(X^*, \bZ_{\del\{i,i'\}},Z_{i},Z_{i'})$ is sub-Gaussian with parameter bounded by $\mathrm{stb}(n)$ as a function of $Z_i,Z_{i'}$; for every $F$, it is bounded by Assumption~\ref{ass:adp_bounded}.  Each of $Q^{*(\del \{i,i'\},t)}$, $Q_{j}^{*(\del \{i,i'\},t)}$, and $Q_{\no j}^{*(\del \{i,i'\},t)}$ depends only on $\bZ_{\del \{i,i'\}}$.  Thus, Lemma~\ref{lem:good_bad_decomposition} shows that the corresponding $F$-average is bounded with high probability by $C\{\mathrm{stb}(n)\sqrt{\log N+\log M}+C/n\}$. Applying Jensen's inequality on the Banach space defined by the sub-Gaussian norm, we can then bound the first two terms of \eqref{eq:diff_infK_cond_mean} and conclude the proof of Lemma~\ref{lem:cond_mean_infK}.
\end{proof}
\begin{proof}[Proof of Lemma~\ref{lem:expct_diff_q}]
      To show Lemma~\ref{lem:expct_diff_q}, we first note that $  \Big|\sum_F g(F)\big(Q_{\bq^{(1)}}(F) - Q_{\bq^{(2)}}(F)\big)\Big|$ can be viewed as the expectation difference of $g(F)$ under two probability measures $Q_{\bq^{(1)}}$ and $Q_{\bq^{(2)}}$:
      $$
      \sum_F g(F)\big(Q_{\bq^{(1)}}(F) - Q_{\bq^{(2)}}(F)\big) = \bbE_{F\sim Q_{\bq^{(1)}}}g(F) - \bbE_{F\sim Q_{\bq^{(2)}}}g(F).
      $$
      The following lemma from \cite{Liu2026model} shows that we can construct a joint distribution of two random feature subsets $(F^{(1)},F^{(2)})$ such that $F^{(1)}\sim Q_{\bq^{(1)}}$,  $F^{(2)}\sim Q_{\bq^{(2)}}$, and $F^{(1)}$ is close to $F^{(2)}$ if $\bq^{(1)}$ is close to $\bq^{(2)}$. For simplicity, throughout this proof, we denote $Q_{\bq^{(1)}}$ and $Q_{\bq^{(2)}}$ by $Q^{(1)}$ and $Q^{(2)}$, respectively, and use these two sets of notations interchangeably.
      \begin{lem}[Lemma S12 in \cite{Liu2026model}]\label{lem:F1_F2_diff}
        If $\|\bq^{(1)}-\bq^{(2)}\|_{\infty}\leq \frac{1}{2}$, $m\geq\log(1+\frac{1}{1-\delta})$, then there exists a joint distribution $Q^{(1,2)}$ on $2^{[M]}\times 2^{[M]}$ such that when $(F^{(1)},\,F^{(2)})\sim Q^{(1,2)}$, their marginal distributions are $Q^{(1)}$ and $Q^{(2)}$, i.e., $F^{(1)}\sim Q^{(1)}$, $F^{(2)}\sim Q^{(2)}$, and 
        \begin{align}
            \bbP(F^{(1)}\neq F^{(2)})\leq &\frac{2\|\bq^{(1)}-\bq^{(2)}\|_1}{1-e^{-m}},\label{eq:F1F2_diff1}\\
                \bbE\left||F^{(1)}|-|F^{(2)}|\right|\leq &\frac{2m+1}{(1-e^{-m})^2}\|\bq^{(1)} - \bq^{(2)}\|_1,\label{eq:F1F2_diff2}\\
                \bbP\left(j\in F^{(1)},\,F^{(1)}\neq F^{(2)}\right)\leq &\frac{3q_j^{(1)}\|\bq^{(1)}-\bq^{(2)}\|_1 + |q_j^{(2)} - q_j^{(1)}|}{1-e^{-m}},\label{eq:F1F2_diff3}\\
                \bbP\left(j\in F^{(2)},\,F^{(1)}\neq F^{(2)}\right)\leq &\frac{3q_j^{(2)}\|\bq^{(1)}-\bq^{(2)}\|_1 + |q_j^{(2)} - q_j^{(1)}|}{1-e^{-m}}.\label{eq:F1F2_diff4}
        \end{align}
    \end{lem}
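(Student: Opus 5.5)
The plan is to build the coupling in two layers. First couple the \emph{unconditioned} Bernoulli vectors coordinatewise. Then pass to the conditional laws given $F\neq\emptyset$ by taking a maximal coupling of $Q^{(1)}$ and $Q^{(2)}$. For that coupling, every event of the form $\{F^{(1)}\neq F^{(2)},\,F^{(1)}\in\mathcal{E}\}$ has probability $\sum_{F\in\mathcal{E}}(Q^{(1)}(F)-Q^{(2)}(F))_+$. So all four bounds reduce to controlling restricted total-variation sums of the form $\sum_{F\in\mathcal{E}}|Q^{(1)}(F)-Q^{(2)}(F)|$.

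\textbf{Unconditioned layer.} Let $P^{(a)}$ denote the product Bernoulli$(q^{(a)}_k)$ law on $2^{[M]}$, so that $Q^{(a)}(F)=P^{(a)}(F)\ind(F\neq\emptyset)/Z_a$ with $Z_a=1-\prod_k(1-q^{(a)}_k)$.
\begin{itemize}
\item Since $\sum_k q_k^{(a)}=m$, we have $Z_a\geq 1-e^{-m}$.
\item Telescoping the product gives $|Z_1-Z_2|\leq\|\bq^{(1)}-\bq^{(2)}\|_1$.
\item With $U_1,\dots,U_M$ i.i.d.\ uniform, set $S^{(a)}_k=\ind(U_k<q^{(a)}_k)$. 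This coupling gives $\sum_{F\in\mathcal{E}}|P^{(1)}(F)-P^{(2)}(F)|\leq 2\,\bbP(S^{(1)}\neq S^{(2)},\,S^{(1)}\in\mathcal{E}\text{ or }S^{(2)}\in\mathcal{E})$.
\end{itemize}
For $\mathcal{E}$ the whole space this is at most $2\|\bq^{(1)}-\bq^{(2)}\|_1$.

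For $\mathcal{E}=\{F\owns j\}$, split on whether coordinate $j$ itself disagrees. Disagreement at $j$ costs $|q^{(1)}_j-q^{(2)}_j|$. If both have $j$ and some other coordinate disagrees, independence of $U_j$ from the remaining $U_k$ gives probability at most $\min_a q^{(a)}_j\cdot\|\bq^{(1)}-\bq^{(2)}\|_1$.

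\textbf{Normalization.} Write
$$Q^{(1)}(F)-Q^{(2)}(F)=\frac{P^{(1)}(F)-P^{(2)}(F)}{Z_1}+P^{(2)}(F)\Big(\frac1{Z_1}-\frac1{Z_2}\Big).$$
Summing over $\mathcal{E}$:
\begin{itemize}
\item The first term is handled by the unconditioned bounds above.
\item The second term contributes at most $P^{(2)}(\mathcal{E})\,|Z_1-Z_2|/(Z_1Z_2)$. For $\mathcal{E}=\{F\owns j\}$ this is $q^{(2)}_j\|\bq^{(1)}-\bq^{(2)}\|_1/(Z_1Z_2)$.
\end{itemize}
Now use $q^{(2)}_j\leq q^{(1)}_j+|q^{(1)}_j-q^{(2)}_j|$ and $\|\bq^{(1)}-\bq^{(2)}\|_\infty\leq\frac12$. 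This lets the cross term be absorbed, and it gives \eqref{eq:F1F2_diff3} with the factor $3q_j^{(1)}$. The bound \eqref{eq:F1F2_diff4} follows by symmetry, and \eqref{eq:F1F2_diff1} follows from the full-space version.

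The leftover $1/(Z_1Z_2)$ must be reduced to a single $1/(1-e^{-m})$. To do this I would use $m\geq\log(1+\frac1{1-\delta})$ together with $q_k\leq\delta$ to bound $Z_a$ from below by an absolute constant. If that fails, I would accept an extra $(1-e^{-m})^{-1}$ factor, which is harmless downstream since $\delta/(1-e^{-m})\leq c_1$.

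\textbf{The size bound \eqref{eq:F1F2_diff2}.} Write
$$\bbE\big||F^{(1)}|-|F^{(2)}|\big|\leq\bbE\big[(|F^{(1)}|+|F^{(2)}|)\ind(F^{(1)}\neq F^{(2)})\big]=\sum_j\Big[\bbP(j\in F^{(1)},F^{(1)}\neq F^{(2)})+\bbP(j\in F^{(2)},F^{(1)}\neq F^{(2)})\Big].$$
Summing \eqref{eq:F1F2_diff3}--\eqref{eq:F1F2_diff4} over $j$ and using $\sum_j q_j^{(a)}=m$ gives a bound of order $(m+1)\|\bq^{(1)}-\bq^{(2)}\|_1/(1-e^{-m})^{c}$. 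Matching the exact constant $2m+1$ would need a slightly more careful direct count of $|S^{(1)}\triangle S^{(2)}|$ under the coordinatewise coupling, combined with the normalization error.

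I expect the main obstacle to be the localized bound \eqref{eq:F1F2_diff3}. It has to carry the factor $q_j$ rather than $1$, so the normalization correction and the disagreement of the other coordinates must both be charged to $q_j$. This only works because we use the independence of coordinate $j$ from the rest under the coupling, and because of the specific structure of a maximal coupling.
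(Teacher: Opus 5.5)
The paper imports this lemma from \cite{Liu2026model} without proof, so I am judging your argument on its own. The maximal-coupling reduction is sound. It is even optimal for \eqref{eq:F1F2_diff1}, \eqref{eq:F1F2_diff3} and \eqref{eq:F1F2_diff4}, because for any coupling $\pi$ one has $(Q^{(1)}(F)-Q^{(2)}(F))_{+}\le\pi(F^{(1)}=F\neq F^{(2)})$.

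However, your estimates do not give the stated constants, and the absorption step is wrong. With your decomposition the correction term contributes $q_j^{(2)}\|\bq^{(1)}-\bq^{(2)}\|_1/(Z_1Z_2)$. After $q_j^{(2)}\le q_j^{(1)}+|q_j^{(1)}-q_j^{(2)}|$ you are left with $|q_j^{(1)}-q_j^{(2)}|\,\|\bq^{(1)}-\bq^{(2)}\|_1$. The hypothesis $\|\bq^{(1)}-\bq^{(2)}\|_\infty\le\frac12$ does not absorb this, because the $\ell_1$ norm can be as large as $2m$. Separately, replacing $(\cdot)_{+}$ by $|\cdot|$ and using $\sum_{F\in\mathcal E}|P^{(1)}(F)-P^{(2)}(F)|\le 2\bbP(\cdots)$ puts coefficient $2$ on $|q_j^{(2)}-q_j^{(1)}|$, whereas \eqref{eq:F1F2_diff3} has coefficient $1$.

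Both problems are fixed by keeping positive parts and normalizing the other way:
\begin{equation*}
Q^{(1)}(F)-Q^{(2)}(F)=\frac{P^{(1)}(F)-P^{(2)}(F)}{Z_2}+P^{(1)}(F)\Big(\frac{1}{Z_1}-\frac{1}{Z_2}\Big),\qquad F\neq\emptyset .
\end{equation*}
Combine this with $\sum_{F\owns j}(P^{(1)}(F)-P^{(2)}(F))_{+}\le\bbP(S^{(1)}_j=1,\,S^{(1)}\neq S^{(2)})\le(q_j^{(1)}-q_j^{(2)})_{+}+q_j^{(1)}\|\bq^{(1)}-\bq^{(2)}\|_1$. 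The normalization you were unsure about is immediate: $e^{m}\ge\frac{2-\delta}{1-\delta}$ gives $Z_a\ge 1-e^{-m}\ge\frac{1}{2-\delta}>\frac12$, so $1/Z_1\le 2$. This yields \eqref{eq:F1F2_diff3} exactly, with the $3$ arising as $1+2$. It gives \eqref{eq:F1F2_diff4} by symmetry, and summing over all $F\neq\emptyset$ gives \eqref{eq:F1F2_diff1} with constant $2$. None of this needs the $\ell_\infty$ hypothesis.

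The size bound \eqref{eq:F1F2_diff2} is a real gap, not just a matter of matching constants. All four bounds must hold for the single coupling you output, which is the maximal one. There, the disagreement branch draws $F^{(1)}$ and $F^{(2)}$ from $(Q^{(1)}-Q^{(2)})_{+}$ and $(Q^{(2)}-Q^{(1)})_{+}$ (independently, in the standard construction). So a count of $|S^{(1)}\triangle S^{(2)}|$ under the auxiliary product coupling says nothing about $\big||F^{(1)}|-|F^{(2)}|\big|$. Summing \eqref{eq:F1F2_diff3}--\eqref{eq:F1F2_diff4} over $j$ gives only $(6m+2)\|\bq^{(1)}-\bq^{(2)}\|_1/(1-e^{-m})$. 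This is strictly weaker than the target because $1-e^{-m}>\frac12$.

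The simplest repair is rejection sampling on your coordinatewise coupling. Draw i.i.d.\ copies $(S^{(1),r},S^{(2),r})_{r\ge1}$ of the uniform-threshold pair, and let $F^{(a)}$ be the first nonempty $S^{(a),r}$. At the first round where not both sets are empty, $F^{(1)}\neq F^{(2)}$ forces $S^{(1)}\neq S^{(2)}$. If exactly one set is empty, it is redrawn independently from its $Q^{(a)}$. Conditioning on that round gives:
\begin{itemize}
\item \eqref{eq:F1F2_diff1} with constant $1$;
\item \eqref{eq:F1F2_diff3}--\eqref{eq:F1F2_diff4} exactly, since the redraw costs at most $q_j^{(1)}\|\bq^{(1)}-\bq^{(2)}\|_1/Z_1\le 2q_j^{(1)}\|\bq^{(1)}-\bq^{(2)}\|_1$;
\item \eqref{eq:F1F2_diff2} with $m+1$ in place of $2m+1$, from $\bbE|S^{(1)}\triangle S^{(2)}|=\|\bq^{(1)}-\bq^{(2)}\|_1$ plus $\bbE|F^{(a)}|=m/Z_a$ on the event that exactly one set is empty.
\end{itemize}
Alternatively, you can keep the maximal coupling but pair the two disagreement masses monotonically in $|F|$. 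Then $\bbE\big||F^{(1)}|-|F^{(2)}|\big|$ equals the $W_1$ distance between the size laws of $Q^{(1)}$ and $Q^{(2)}$, so it is no larger than under the rejection coupling.

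A version with unspecified constants would suffice for the lemma's only use, in Lemma~\ref{lem:expct_diff_q}, but that is not what is stated.
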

Let $F^{(1)}$ and $F^{(2)}$ be the random feature subsets that satisfy \eqref{eq:F1F2_diff1}-\eqref{eq:F1F2_diff4} in Lemma \ref{lem:F1_F2_diff}. We can then write
    \begin{equation*}
        \begin{split}
           \left|\sum_F g(F)\big(Q_{\bq^{(1)}}(F) - Q_{\bq^{(2)}}(F)\big)\right| = &\Big|\bbE_{F\sim Q_{\bq^{(1)}}}g(F) - \bbE_{F\sim Q_{\bq^{(2)}}}g(F)\Big|\\
            = &\left|\bbE_{F^{(1)}, F^{(2)}}\left(g(F^{(1)})-g(F^{(2)})\right)\right|\\
            \leq&\bbE_{F^{(1)}, F^{(2)}}\big(2\ind(F^{(1)}\neq F^{(2)})\big)\\
            \leq &2\bbP(F^{(1)}\neq F^{(2)})\\
            \leq& \frac{4\|\bq^{(1)}-\bq^{(2)}\|_1}{1-e^{-m}}\\
            \leq&C\|\bq^{(1)} - \bq^{(2)}\|_1,
        \end{split}
    \end{equation*}
    where the last line is due to Assumption~\ref{ass:adp_prob_stb}.    

Furthermore, by the definition $\delta_j(F;Q) = Q(F)\big(\ind(j\in F) - \omega_j(Q)\ind(j\notin F)\big)$, we have 
\begin{equation}\label{eq:eta2_bnd1}
    \begin{split}
        &\sum_F g(F)\left(\delta_j(F;Q^{(1)}) - \delta_j(F;Q^{(2)})\right)\\
        =&\sum_Fg(F)\ind(j\in F)\left(Q^{(1)}(F) - Q^{(2)}(F))\right) \\&+\sum_Fg(F)\ind(j\notin F)\left(\omega_j(Q^{(2)})Q^{(2)}(F) - \omega_j(Q^{(1)})Q^{(1)}(F))\right)\\
        =&\bbE_{F^{(1)},F^{(2)}}\left(g(F^{(1)})\ind(j\in F^{(1)}) - g(F^{(2)})\ind(j\in F^{(2)})\right)\\
        &-\omega_j(Q^{(1)})\bbE_{F^{(1)},F^{(2)}}\left(g(F^{(1)})\ind(j\notin F^{(1)}) - g(F^{(2)})\ind(j\notin F^{(2)})\right)\\
        &+\left(\omega_j(Q^{(2)})-\omega_j(Q^{(1)})\right)\bbE_{F^{(2)}}\left(g(F^{(2)})\ind(j\notin F^{(2)})\right).
    \end{split}
\end{equation}
Here, $\omega_j(Q) = \frac{q_j}{1-q_j-\prod_{k=1}^M(1-q_k)}$. The following lemma shows an upper bound for $\Big|\omega_j(Q^{(2)})-\omega_j(Q^{(1)})\Big|$. 
\begin{lem}[Lemma S13 in \cite{Liu2026model}]\label{lem:omega_j_diff}
    $$
    |\omega_j(Q^{(1)}) - \omega_j(Q^{(2)})|\leq C\big|q_j^{(1)} - q_j^{(2)}\big| + C(q_j^{(1)}+ q_j^{(2)})\|\bq^{(1)}-\bq^{(2)}\|_1.
    $$
\end{lem}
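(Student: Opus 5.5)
The plan is a direct computation from the closed form $\omega_j(Q_{\bq}) = \frac{q_j}{D(\bq)}$, where
$$D(\bq) := 1-q_j-P(\bq), \qquad P(\bq) := \prod_{k=1}^M(1-q_k).$$
Write $\bq^{(1)},\bq^{(2)}$ for the two probability vectors, $D_i = D(\bq^{(i)})$ and $P_i = P(\bq^{(i)})$. We work in the setting where the lemma is used: $\sum_k q_k^{(i)} = m$ and $q_k^{(i)}\leq \delta$ for all $k$. This holds because the initial vector $\frac{m}{M}\mathbf{1}_M$ satisfies $m/M\le \delta$, and Algorithm~\ref{algo:samp_prob} outputs probabilities capped at $\delta$.

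\textbf{Step 1: uniform lower bound on the denominators.} This is the only step that is not pure algebra, and I expect it to be the main (mild) obstacle. Since $1-x\le e^{-x}$, we have $P_i \le e^{-\sum_k q^{(i)}_k} = e^{-m}$. Together with $q_j^{(i)}\le\delta$ this gives $D_i \ge 1-\delta-e^{-m}$. Assumption~\ref{ass:adp_prob_stb} gives $\delta \le c_1(1-e^{-m})$, so
$$D_i \ge (1-c_1)(1-e^{-m}) \ge (1-c_1)(1-e^{-1}) =: c>0,$$
using $m\ge 1$. The same computation appears in the proof of Lemma~\ref{lem:Delta_leaveSout}, where it yields $\omega_j(Q)\le Cq_j$.

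\textbf{Step 2: decomposition.} Write
$$\omega_j(Q^{(1)})-\omega_j(Q^{(2)}) = \frac{q_j^{(1)}-q_j^{(2)}}{D_1} + q_j^{(2)}\,\frac{D_2-D_1}{D_1D_2}.$$
By Step 1, the first term is at most $c^{-1}|q_j^{(1)}-q_j^{(2)}|$. For the second term, $|D_2-D_1|\le |q_j^{(1)}-q_j^{(2)}| + |P_1-P_2|$.

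\textbf{Step 3: the product difference.} Since every factor $1-q_k^{(i)}$ lies in $[0,1]$, the telescoping identity $\prod a_k-\prod b_k=\sum_k (a_k-b_k)\prod_{l<k}b_l\prod_{l>k}a_l$ gives
$$|P_1-P_2|\le \sum_k |q_k^{(1)}-q_k^{(2)}| = \|\bq^{(1)}-\bq^{(2)}\|_1.$$

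\textbf{Conclusion.} Combining the three steps, the second term is at most $c^{-2}q_j^{(2)}\big(|q_j^{(1)}-q_j^{(2)}| + \|\bq^{(1)}-\bq^{(2)}\|_1\big)$. Because $q_j^{(2)}\le 1$, the part $q_j^{(2)}|q_j^{(1)}-q_j^{(2)}|$ is absorbed into $C|q_j^{(1)}-q_j^{(2)}|$. Hence
$$|\omega_j(Q^{(1)})-\omega_j(Q^{(2)})| \le C|q_j^{(1)}-q_j^{(2)}| + Cq_j^{(2)}\|\bq^{(1)}-\bq^{(2)}\|_1,$$
which implies the stated bound, since $q_j^{(2)} \le q_j^{(1)}+q_j^{(2)}$.
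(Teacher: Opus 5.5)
Your proof is correct. The paper gives no argument for this lemma; it imports it as Lemma S13 of \cite{Liu2026model}. Your computation is therefore a self-contained verification rather than a competing route. Its only non-algebraic ingredient is the lower bound $1-q_j-\prod_k(1-q_k)\ge 1-\delta-e^{-m}\ge c>0$ from Assumption~\ref{ass:adp_prob_stb}, and this is exactly the bound the paper uses to show $\omega_j(Q)\le Cq_j$ in the proof of Lemma~\ref{lem:Delta_leaveSout}.

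You were right to make the hypotheses $\sum_k q_k^{(i)}=m$ and $q_k^{(i)}\le\delta$ explicit, since the statement leaves them implicit. Without a cap keeping $q_j$ away from $1$, the denominator can degenerate and the inequality fails. For example, take $1-q_j^{(1)}=\epsilon$ and $1-q_j^{(2)}=2\epsilon$, compensated in another coordinate; the left side is then of order $1/\epsilon$ while the right side is $O(\epsilon)$.

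A minor point: you do not need $m\ge 1$. Combining $\delta\ge c_2$ with $\delta\le c_1(1-e^{-m})$ gives $1-e^{-m}\ge c_2/c_1$ directly, which is how the paper handles this factor in the proof of Lemma~\ref{lem:good_bad_decomposition}.
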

Therefore, following similar arguments to bounding $\left|\sum_F g(F)\big(Q_{q^{(1)}}(F) - Q_{q^{(2)}}(F)\big)\right|$ and leveraging Lemma~\ref{lem:omega_j_diff}, we can bound $\sum_F g(F)\left(\delta_j(F;Q^{(1)}) - \delta_j(F;Q^{(2)})\right)$ as follows:
\begin{equation*}
    \begin{split}
        &\Big|\sum_F g(F)\left(\delta_j(F;Q^{(1)}) - \delta_j(F;Q^{(2)})\right)\Big|\\
        &\hspace{2em}\leq 2\bbP\Big(g(F^{(1)})\ind(j\in F^{(1)})\neq g(F^{(2)})\ind(j\in F^{(2)})\Big) \\
        &\hspace{3em}+2\omega_j(Q^{(1)})\bbP(F^{(1)}\neq F^{(2)}) + \Big|\omega_j(Q^{(2)})-\omega_j(Q^{(1)})\Big|\\
        &\hspace{2em}\leq 2\left(\bbP\Big(j\in F^{(1)},F^{(1)}\neq F^{(2)}\Big)+\bbP\Big(j\in F^{(2)},F^{(1)}\neq F^{(2)}\Big)\right)\\
        &\hspace{3em}+Cq_j^{(1)}\bbP(F^{(1)}\neq F^{(2)}) + \Big|\omega_j(Q^{(2)})-\omega_j(Q^{(1)})\Big|\\
        &\hspace{2em}\leq C\big|q_j^{(1)} - q_j^{(2)}\big| + C(q_j^{(1)}+ q_j^{(2)})\|\bq^{(1)}-\bq^{(2)}\|_1.
    \end{split}
\end{equation*}
The proof is now complete.
\end{proof}
\begin{proof}[Proof of Lemma~\ref{lem:mu_leaveSout_err}]
    We prove Lemma~\ref{lem:mu_leaveSout_err} via induction. First, we consider the scenario where $|S|=1$. W.L.O.G., let $S=\{i\}$. We can then write
    \begin{equation*}
        \begin{split}
            \mu_F(X^*;\bZ) =& \frac{1}{\binom{N}{n}}\sum_{I\subset[N],|I|=n}\mu_{I,F}(X^*;\bZ)\\
            =& \frac{\binom{N-1}{n}}{\binom{N}{n}}\frac{1}{\binom{N-1}{n}}\sum_{I\subset[N],|I|=n, I\not\owns i}\mu_{I,F}(X^*; \bZ) + \frac{1}{\binom{N}{n}}\sum_{I\subset[N],|I|=n, I\owns i}\mu_{I,F}(X^*; \bZ) \\
            =&\frac{N-n}{N}\mu_F(X^*;\bZ_{\no i}) + \frac{n}{N}\frac{1}{\binom{N-1}{n-1}}\sum_{I\subset[N],|I|=n, I\owns i}\mu_{I,F}(X^*; \bZ).
        \end{split}
    \end{equation*}
    Hence, 
    \begin{equation}\label{eq:mu_loo_err}
        \begin{split}
            \mu_F(X^*;\bZ) - \mu_F(X^*;\bZ_{\no i}) = &\frac{n}{N}\left(\frac{1}{\binom{N-1}{n-1}}\sum_{I\subset[N],|I|=n, I\owns i}\mu_{I,F}(X^*; \bZ) - \mu_F(X^*;\bZ_{\no i})\right)\\
            =&\frac{n}{N}\left(\bbE_{I\sim\cU^{[N]}_n|I\owns i}\mu_{I,F}(X^*; \bZ) - \bbE_{I\sim\cU^{[N]\no i}_n}\mu_{I,F}(X^*; \bZ)\right)\\
            =&\frac{n}{N}\left(\bbE_{I_0\sim\cU^{[N]\no i}_{n-1}}\mu_{I_0\cup \{i\},F}(X^*; \bZ) - \bbE_{I\sim\cU^{[N]\no i}_n}\mu_{I,F}(X^*; \bZ)\right)\\
            =&\frac{n}{N}\Big(\frac{1}{N-1}\sum_{i'\neq i}\bbE_{I_0\sim\cU^{[N]\del\{i,i'\}}_{n-1}}\mu_{I_0\cup \{i\},F}(X^*; \bZ) \\
            &- \frac{1}{N-1}\sum_{i'\neq i}\bbE_{I_0\sim\cU^{[N]\del \{i,i'\}}_{n-1}}\mu_{I_0\cup\{i'\},F}(X^*; \bZ)\Big)\\
            =&\frac{n}{N(N-1)}\sum_{i'\neq i}\Big(\bbE_{I_0\sim\cU^{[N]\del\{i,i'\}}_{n-1}}\mu_{I_0\cup \{i\},F}(X^*; \bZ) \\
            &\hspace{7em}- \bbE_{I_0\sim\cU^{[N]\del \{i,i'\}}_{n-1}}\mu_{I_0\cup\{i'\},F}(X^*; \bZ)\Big),
        \end{split}
    \end{equation}
    which verifies Lemma~\ref{lem:mu_leaveSout_err} when $|S|=1$. Now suppose that Lemma~\ref{lem:mu_leaveSout_err} holds for $|S|\leq l$. Then for any $S$ of size $l+1$, we have
    \begin{equation*}
        \begin{split}
            &\mu_F(X^*;\bZ) - \mu_F(X^*;\bZ_{\del S})\\
            &\quad=\mu_F(X^*;\bZ) - \mu_F(X^*;\bZ_{\del S_{l}}) + \mu_F(X^*;\bZ_{\del S_l}) - \mu_F(X^*;\bZ_{\del S})\\
            &\quad=\sum_{k=1}^l\frac{n}{(N-k+1)(N-k)}\sum_{i'\notin S_k}\Big\{\bbE_{I_0\sim \cU_{n-1}^{[N]\del (S_k\cup\{i'\})}}\big[\mu_{I_0\cup \{i_k\},F}(X^*;\bZ) - \mu_{I_0\cup\{i'\},F}(X^*;\bZ)\big]\Big\}\\
            &\quad\hspace{1em}+\mu_F(X^*;\bZ_{\del S_l}) - \mu_F(X^*;\bZ_{\del S}).
        \end{split}
    \end{equation*}
    While for $\mu_F(X^*;\bZ_{\del S_l}) - \mu_F(X^*;\bZ_{\del S})$, there is only one sample $i_{l+1}$ being left out, so we can apply \eqref{eq:mu_loo_err} again to obtain that
    \begin{equation*}
        \begin{split}
            &\mu_F(X^*;\bZ) - \mu_F(X^*;\bZ_{\del S})\\
            &\hspace{2em}=\sum_{k=1}^l\frac{n}{(N-k+1)(N-k)}\sum_{i'\notin S_k}\Big\{\bbE_{I_0\sim \cU_{n-1}^{[N]\del (S_k\cup\{i'\})}}\big[\mu_{I_0\cup \{i_k\},F}(X^*;\bZ) - \mu_{I_0\cup\{i'\},F}(X^*;\bZ)\big]\Big\}\\
            &\hspace{3em}+\frac{n}{(N-l)(N-l-1)}\sum_{i'\notin S_l\cup i_{l+1}}\Big(\bbE_{I_0\sim\cU^{[N]\del(S_l\cup\{i_{l+1},i'\})}_{n-1}}\mu_{I_0\cup \{i_{l+1}\},F}(X^*; \bZ)\\
            &\hspace{17em}- \bbE_{I_0\sim\cU^{[N]\del (S_l\cup\{i_{l+1},i'\})}_{n-1}}\mu_{I_0\cup\{i'\},F}(X^*; \bZ)\Big)\\
            &\hspace{2em}=\sum_{k=1}^{l+1}\frac{n}{(N-k+1)(N-k)}\sum_{i'\notin S_k}\Big\{\bbE_{I_0\sim \cU_{n-1}^{[N]\del (S_k\cup\{i'\})}}\big[\mu_{I_0\cup \{i_k\},F}(X^*;\bZ) - \mu_{I_0\cup\{i'\},F}(X^*;\bZ)\big]\Big\}.
        \end{split}
    \end{equation*}
    The proof is now complete.
\end{proof}
\begin{proof}[Proof of Lemma~\ref{lem:good_bad_decomposition}]
Put $a:=\max\{m,\log(n)\}$ and $L:=4a$.  Conditional on $\mathcal H$, the vector $\bq$ is fixed; all auxiliary probabilities and expectations below are understood to be conditional on $\mathcal H$.  Let $B_1,\ldots,B_M$ be independent Bernoulli variables with success probabilities $q_1,\ldots,q_M$, let $S:=\sum_{k=1}^M B_k$, and let $p_0:=\bbP(S=0)$.  Then $Q_{\bq}$ is the law of $\{k:B_k=1\}$ conditional on $S>0$, and
\begin{equation*}
    p_0=\prod_{k=1}^M(1-q_k)\leq e^{-m}.
\end{equation*}
Let $\phi(u):=u\log u-u+1$.  For every $\lambda>0$, independence and the inequality $1+v\leq e^v$ give
\begin{equation*}
    \bbE e^{\lambda S}
    =\prod_{k=1}^M\{1+q_k(e^\lambda-1)\}
    \leq \exp\left\{(e^\lambda-1)\sum_{k=1}^M q_k\right\}
    =\exp\{m(e^\lambda-1)\}
    \leq\exp\{a(e^\lambda-1)\}.
\end{equation*}
Taking $\lambda=\log 4$ and applying Markov's inequality therefore yields
\begin{equation*}
    \bbP(S>L)
    =\bbP(S>4a)
    \leq \exp\{-4a\log 4+a(4-1)\}
    =\exp\{-a(4\log 4-3)\}
    =\exp\{-\phi(4)a\}.
\end{equation*}
Consequently,
\begin{equation*}
    Q_{\bq}(|F|>L)
    =\frac{\bbP(S>L)}{1-p_0}
    \leq\frac{\exp\{-\phi(4)a\}}{1-e^{-m}}
    \leq C\exp\{-\phi(4)a\}
    \leq\frac{C}{n}.
\end{equation*}
Here Assumption~\ref{ass:adp_prob_stb} implies $(1-e^{-m})^{-1}\leq c_1/c_2$, and the last inequality only uses $a\geq\log n$ and $\phi(4)>1$. 

For $Q_{\bq,j}$, conditioning on $j\in F$ fixes $B_j=1$ and leaves the remaining indicators independent.  With $S_{-j}:=\sum_{k\neq j}B_k$, we have $\bbE S_{-j}=m-q_j\leq a$ and, since $a\geq1$, $L-1=4a-1\geq3a$.  Repeating the moment-generating-function calculation above for $S_{-j}$ and taking $\lambda=\log 3$ gives
\begin{equation*}
    Q_{\bq,j}(|F|>L)
    \leq\bbP(S_{-j}>3a)
    \leq\exp\{-3a\log 3+a(3-1)\}
    =\exp\{-\phi(3)a\}
    \leq\frac{C}{n},
\end{equation*}
where the last inequality uses $\phi(3)=3\log3-2>1$ and $a\geq\log n$.  Finally,
\begin{equation*}
    Q_{\bq}(j\notin F)
    =1-\frac{q_j}{1-p_0}
    \geq1-\frac{\delta}{1-e^{-m}}
    \geq1-c_1.
\end{equation*}
Therefore, whenever $Q_{\bq,\no j}$ is used,
\begin{equation*}
    Q_{\bq,\no j}(|F|>L)
    \leq\frac{Q_{\bq}(|F|>L)}{Q_{\bq}(j\notin F)}
    \leq\frac{C}{n}.
\end{equation*}
Thus every admissible choice of $R$ satisfies $R(|F|>L)\leq C/n$.

Write
\begin{equation*}
    A:=\bbE_{F\sim R}\big[V_F\ind\{|F|\leq L\}\big],
    \qquad
    B:=\bbE_{F\sim R}\big[V_F\ind\{|F|>L\}\big].
\end{equation*}
Conditional on $\mathcal H$, the triangle inequality for the sub-Gaussian norm and the $\mathcal H$-measurability of $R$ give
\begin{equation*}
    \|A\|_{\psi_2}
    \leq \sum_{F:\,|F|\leq L}R(F)\|V_F\|_{\psi_2}
    \leq s,
    \qquad
    |B|\leq bR(|F|>L)\leq \frac{Cb}{n}.
\end{equation*}
The bound $\|A\|_{\psi_2}\leq s$ implies that, conditionally on $\mathcal H$, there exist constants $c,C>0$ such that, for every $x>0$,
\begin{equation*}
    \bbP\big(|A|>Cs\sqrt{x}\mid\mathcal H\big)\leq2e^{-cx}.
\end{equation*}
Since $\bbE_{F\sim R}V_F=A+B$ and $|B|\leq Cb/n$, increasing $C$ if necessary gives
\begin{equation*}
    \bbP\left(\left|\bbE_{F\sim R}V_F\right|>
    C\left\{s\sqrt{x}+\frac{b}{n}\right\}\,\middle|\,\mathcal H\right)
    \leq2e^{-cx}.
\end{equation*}
The same sub-Gaussian norm bound also implies $\bbE(A^2\mid\mathcal H)\leq Cs^2$.  Together with $B^2\leq Cb^2/n^2$ and $|A+B|^2\leq2A^2+2B^2$, this yields
\begin{equation*}
    \bbE\left(\left|\bbE_{F\sim R}V_F\right|^2\,\middle|\,\mathcal H\right)
    \leq C\left\{s^2+\frac{b^2}{n^2}\right\},
\end{equation*}
which completes the proof.
\end{proof}

\begin{proof}[Proof of Lemma~\ref{lem:q_err_bnd}]
Let
\begin{equation}\label{eq:til_Delta12_def}
\til{\Delta}^{(1)}_j
= \Delta^{(1)}_j-\min_{\ell}\Delta^{(1)}_\ell+\frac{c_0}{M},
\qquad
\til{\Delta}^{(2)}_j
= \Delta^{(2)}_j-\min_{\ell}\Delta^{(2)}_\ell+\frac{c_0}{M}.    
\end{equation}

Define
\begin{equation}\label{eq:t1_t2_def}
\begin{split}
    t_1^\star
&= \sup\Bigl\{0<t\le \max_{l}\til{\Delta}^{(1)}_l:\;
\frac{m\max_{l}\bigl(\til{\Delta}^{(1)}_l\wedge t\bigr)}
{\sum_{l=1}^M\bigl(\til{\Delta}^{(1)}_l\wedge t\bigr)}
\le \delta \Bigr\},\\
t_2^\star
&= \sup\Bigl\{0<t\le \max_{l}\til{\Delta}^{(2)}_l:\;
\frac{m\max_{l}\bigl(\til{\Delta}^{(2)}_l\wedge t\bigr)}
{\sum_{l=1}^M\bigl(\til{\Delta}^{(2)}_l\wedge t\bigr)}
\le \delta \Bigr\}.
\end{split}
\end{equation}
Let
\[
w^{(1)}_j=\til{\Delta}^{(1)}_j\wedge t_1^\star,
\qquad
w^{(2)}_j=\til{\Delta}^{(2)}_j\wedge t_2^\star.
\]
Then the output sampling probabilities can be written as
\[
q^{(1)}_j=\frac{m w^{(1)}_j}{\sum_{k}w^{(1)}_k},
\qquad
q^{(2)}_j=\frac{m w^{(2)}_j}{\sum_{k}w^{(2)}_k}.
\]
Hence,
\begin{align*}
\bigl|q^{(1)}_j-q^{(2)}_j\bigr|
&=\left|\frac{m w^{(1)}_j}{\sum_k w^{(1)}_k}-\frac{m w^{(2)}_j}{\sum_k w^{(2)}_k}\right|\\
&=\left|
\frac{m w^{(1)}_j \sum_k\bigl(w^{(2)}_k-w^{(1)}_k\bigr)}
{\bigl(\sum_k w^{(1)}_k\bigr)\bigl(\sum_k w^{(2)}_k\bigr)}
+\frac{m\bigl(w^{(1)}_j-w^{(2)}_j\bigr)}{\sum_k w^{(2)}_k}
\right|\\
&\le
\frac{1}{\sum_k w^{(2)}_k}\cdot
\frac{m w^{(1)}_j}{\sum_k w^{(1)}_k}\,
\bigl\|w^{(1)}-w^{(2)}\bigr\|_1
+\frac{m}{\sum_k w^{(2)}_k}\,\bigl|w^{(1)}_j-w^{(2)}_j\bigr|.
\end{align*}

By the definition of $w^{(1)}_j$ and $w^{(2)}_j$,
\[
\bigl|w^{(1)}_j-w^{(2)}_j\bigr|
\le
\begin{cases}
\bigl|\til{\Delta}^{(1)}_j-\til{\Delta}^{(2)}_j\bigr|,
& \til{\Delta}^{(1)}_j\le t_1^\star,\ \til{\Delta}^{(2)}_j\le t_2^\star,\\[3pt]
\bigl|t_1^\star-t_2^\star\bigr|\vee \bigl|\til{\Delta}^{(1)}_j-\til{\Delta}^{(2)}_j\bigr|,
& \mathbf{1}\!\left(\til{\Delta}^{(1)}_j\le t_1^\star\right)
+\mathbf{1}\!\left(\til{\Delta}^{(2)}_j\le t_2^\star\right)=1,\\[3pt]
\bigl|t_1^\star-t_2^\star\bigr|,
& \til{\Delta}^{(1)}_j> t_1^\star,\ \til{\Delta}^{(2)}_j> t_2^\star.
\end{cases}
\]
The following claim gives an upper bound for $\bigl|t_1^\star-t_2^\star\bigr|$:
\begin{claim}\label{clm:t1_t2_err}
    \[
\bigl|t_1^\star-t_2^\star\bigr|\le Cm\|\til{\varepsilon}\|_1,
\qquad \text{where }\ \til{\varepsilon}:=\til{\Delta}^{(1)}-\til{\Delta}^{(2)}.
\]
\end{claim}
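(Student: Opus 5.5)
The plan is to turn the implicit definition of $t_1^\star$ and $t_2^\star$ in \eqref{eq:t1_t2_def} into a statement about the zero of a one-dimensional concave function, and then use that $\til{\varepsilon}$ perturbs this function uniformly by at most $\|\til{\varepsilon}\|_1$.

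\textbf{Step 1: a scalar description of $t^\star$.} For $0<t\le \max_l\til{\Delta}^{(r)}_l$ we have $\max_l(\til{\Delta}^{(r)}_l\wedge t)=t$. Hence the constraint in \eqref{eq:t1_t2_def} is equivalent to $\phi_r(t)\ge 0$, where
\[
\phi_r(t):=\sum_{l=1}^M\bigl(\til{\Delta}^{(r)}_l\wedge t\bigr)-\frac{m}{\delta}\,t .
\]
Each $\phi_r$ is concave and piecewise linear with $\phi_r(0)=0$. Its slope is $k_r(t)-m/\delta$, where $k_r(t)=\#\{l:\til{\Delta}^{(r)}_l>t\}$, so the slope equals $M-m/\delta>0$ near $0$, because $\til{\Delta}^{(r)}_l\ge c_0/M>0$ and $m/(\delta M)\le c_1<1$. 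Consequently $\{\phi_r\ge 0\}$ is an interval $[0,\bar t_r]$ and $t_r^\star=\bar t_r\wedge\max_l\til{\Delta}^{(r)}_l$. In the interior case $\phi_r(t_r^\star)=0$, which is the closed form in Algorithm~\ref{algo:find_threshold}. Since $u\mapsto u\wedge t$ is $1$-Lipschitz, we get $\sup_t|\phi_1(t)-\phi_2(t)|\le\|\til{\varepsilon}\|_1$. Also $|\max_l\til{\Delta}^{(1)}_l-\max_l\til{\Delta}^{(2)}_l|\le\|\til{\varepsilon}\|_\infty$, which handles the boundary case in which the cap is not active.

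\textbf{Step 2: compare the zeros.} By symmetry it suffices to bound $\eta:=t_2^\star-t_1^\star>0$ in the interior case. Since $\phi_2(t_2^\star)\ge 0$, Step 1 gives
\[
\phi_1(t_1^\star+\eta)\ge-\|\til{\varepsilon}\|_1,
\qquad
\phi_1(t_1^\star+\eta)\le \phi_1(t_1^\star)+\int_{t_1^\star}^{t_1^\star+\eta}\bigl(k_1(s)-m/\delta\bigr)\,ds .
\]
The right-hand side of the second inequality equals $\int\bigl(k_1(s)-m/\delta\bigr)\,ds$ because $\phi_1(t_1^\star)=0$. Hence
\[
\int_{t_1^\star}^{t_1^\star+\eta}\bigl(m/\delta-k_1(s)\bigr)\,ds\le\|\til{\varepsilon}\|_1 .
\]
It therefore remains to lower bound the decrease rate $m/\delta-k_1(s)$ to the right of $t_1^\star$. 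Maximality of $t_1^\star$ forces $k_1(t_1^\star)\le m/\delta$, and $k_1$ is nonincreasing.

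\textbf{Main obstacle: the rate lower bound.} If $m/\delta$ is close to an integer, the gap $m/\delta-k_1(t_1^\star)$ can be tiny until $s$ crosses the next order statistic of $\til{\Delta}^{(1)}$, at which point the slope drops by at least $1$. The plan is as follows.
\begin{itemize}
\item Lower bound the gap by $\ge 1/m$ when it is not tiny. Write $m/\delta$ with $\delta\ge c_2$, so $m/\delta\le m/c_2$, and use the strict inequality in \eqref{eq:find_threshold}.
\item Otherwise, handle the short stretch $[t_1^\star,\til{\Delta}^{(1)}_{(k+1)}]$ separately. On it, $\phi_1$ is nearly flat, so both $t_1^\star$ and $t_2^\star$ are pinned by the closed form $\sum_{\ell>k}\til{\Delta}_{(\ell)}/(m/\delta-k)$. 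The perturbation of the numerator is at most $\|\til{\varepsilon}\|_1$, and the order statistics move by at most $\|\til{\varepsilon}\|_\infty$.
\end{itemize}
In either regime the resulting effective slope is at least of order $1/m$, which yields $\eta\le Cm\|\til{\varepsilon}\|_1$. I expect this bookkeeping to be the delicate part, in particular ensuring that the index $k$ returned by Algorithm~\ref{algo:find_threshold} is the same for both inputs, or differs by one, when $\|\til{\varepsilon}\|_1$ is small, as guaranteed by the hypothesis $\|\varepsilon\|_1+M\varepsilon_{\min}\le c$.
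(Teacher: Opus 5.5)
\textbf{There is a genuine gap at the rate lower bound.} Your Steps 1--2 are a correct reduction, and cleaner than the paper's. Working with $\phi_r(t)=\sum_l(\tilde{\Delta}^{(r)}_l\wedge t)-\frac{m}{\delta}t$ gives a uniform perturbation bound $\|\tilde{\varepsilon}\|_1$ with no $1/t$ factor. The paper instead works with $f^{(r)}(t)=\phi_r(t)/t+m/\delta$ and pays $\|\tilde{\varepsilon}\|_1/t_1^\star\lesssim m\|\tilde{\varepsilon}\|_1$.

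The step you flag as the main obstacle is where the proof breaks, and your plan for it does not work. Neither $\delta\ge c_2$ nor the strict inequality in \eqref{eq:find_threshold} prevents the fractional part of $m/\delta$ from being arbitrarily small. So ``lower bound the gap by $1/m$ when it is not tiny'' has no content. In the nearly-flat regime, the closed form $t^\star=\sum_{\ell>k}\tilde{\Delta}_{(\ell)}/(m/\delta-k)$ has exactly that small gap in its denominator. Perturbing the numerator by $\|\tilde{\varepsilon}\|_1$ therefore moves $t^\star$ by $\|\tilde{\varepsilon}\|_1/(m/\delta-k)$, and matching the indices $k$ does not help.

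In fact, with only the hypotheses you use, the claim is false. Take $m/\delta=k+\epsilon$, put $k$ coordinates at a huge value $V$ and the rest at $c_0/M$, and add $\rho$ to one small coordinate. Then $t^\star=(M-k)c_0/(M\epsilon)$ moves by $\rho/\epsilon\gg m\rho$, while $\tilde{\Delta}_l\ge c_0/M$, $m/(\delta M)\le c_1$, $\delta\ge c_2$ and $\|\varepsilon\|_1+M\varepsilon_{\min}=\rho$ small all hold.

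\textbf{The missing ingredient is the upper bound $\max_l\tilde{\Delta}^{(r)}_l\le C$.} It holds because the $\bar{\Delta}_j$ are bounded under Assumption~\ref{ass:adp_bounded}, and the paper uses it as $t_2^\star\le C$. With it, your obstacle disappears. Evaluating $\phi_1(t_1^\star)=0$ in the interior case gives
\[
\sum_{l:\,\tilde{\Delta}^{(1)}_l\le t_1^\star}\tilde{\Delta}^{(1)}_l=\Bigl(\frac{m}{\delta}-k_1(t_1^\star)\Bigr)t_1^\star .
\]
The left side is positive, so $k_1(t_1^\star)<m/\delta$. At least $M-m/\delta$ coordinates lie below $t_1^\star$, each $\ge c_0/M$, so the left side is at least $(M-m/\delta)c_0/M\ge c_0(1-c_1)$. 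Since $t_1^\star\le C$ and $k_1$ is nonincreasing, $m/\delta-k_1(s)\ge c_0(1-c_1)/C$ for all $s\ge t_1^\star$. Your integral inequality then yields $\eta\le C\|\tilde{\varepsilon}\|_1/(c_0(1-c_1))$, even without the factor $m$. This lower bound on the total sub-threshold weight is the same fact that drives the paper's Case (i), where it lower-bounds the decrease of $f^{(2)}$ between $t_1^\star$ and $t_2^\star$.
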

We will assume Claim~\ref{clm:t1_t2_err} holds in the following proof, and defer the proof of Claim~\ref{clm:t1_t2_err} till the end.
Consequently,
\[
\bigl|w^{(1)}_j-w^{(2)}_j\bigr|
\le
\begin{cases}
|\til{\varepsilon}_j|, & \til{\Delta}^{(1)}_j\le t_1^\star,\ \til{\Delta}^{(2)}_j\le t_2^\star,\\[3pt]
Cm\|\til{\varepsilon}\|_1, & \til{\Delta}^{(1)}_j> t_1^\star\ \text{or}\ \til{\Delta}^{(2)}_j> t_2^\star,
\end{cases}
\]
and therefore
\[
\|w^{(1)}-w^{(2)}\|_1
\le \frac{Cm^2}{\delta}\,\|\til{\varepsilon}\|_1,
\]
where we have utilized the fact that 
\begin{equation}\label{eq:thresholded_size}
    |\{j:\,\til{\Delta}^{(1)}_j>t_1^*\}|,\,|\{j:\,\til{\Delta}^{(2)}_j>t_2^*\}|\leq \frac{m}{\delta}.
\end{equation}
If $\til{\Delta}^{(1)}_j\le t_1^\star$ and $\til{\Delta}^{(2)}_j\le t_2^\star$, then
\begin{align*}
\bigl|q^{(1)}_j-q^{(2)}_j\bigr|
&\le
\frac{1}{\sum_k w^{(2)}_k}\,q^{(1)}_j\,\|w^{(1)}-w^{(2)}\|_1
+\frac{m}{\sum_k w^{(2)}_k}\,|\til{\varepsilon}_j|\\
&\le
\frac{q^{(1)}_j}{\sum_k w^{(2)}_k}\cdot\frac{Cm^2}{\delta}\,\|\til{\varepsilon}\|_1 
+\frac{m}{\sum_k w^{(2)}_k}\,|\til{\varepsilon}_j|.
\end{align*}
Noting that when $t = c_0/M$, $\frac{m\max_l(\til{\Delta}_l^{(1)}\wedge t)}{\sum_{l=1}^M(\til{\Delta}_l^{(1)}\wedge t)} = \frac{m}{M}<\delta$. Hence, by the definition of $t_1^*$ and $\til{\Delta}_j^{(1)}$ in \eqref{eq:t1_t2_def} and \eqref{eq:til_Delta12_def}, we have $\til{\Delta}_j^{(1)},\,t_1^*\geq c_0/M$ for any $1\leq j\leq M$. Similarly, $\til{\Delta}_j^{(2)},\,t_2^*\geq c_0/M$ also holds. Therefore, we have $w^{(2)}_k\ge c_0/M$ and thus $\sum_{k=1}^M w^{(2)}_k\ge c_0$; hence
\[
\bigl|q^{(1)}_j-q^{(2)}_j\bigr|
\le \frac{Cm^2}{\delta}\Bigl(q^{(1)}_j\|\til{\varepsilon}\|_1+|\til{\varepsilon}_j|\Bigr).
\]

Otherwise, if
$\mathbf{1}(\til{\Delta}^{(1)}_j\le t_1^\star)+\mathbf{1}(\til{\Delta}^{(2)}_j\le t_2^\star)=1$, then
\[
\bigl|q^{(1)}_j-q^{(2)}_j\bigr|
\le
\frac{Cm^2}{\delta}\,\|\til{\varepsilon}\|_1\cdot \frac{q^{(1)}_j}{\sum_k w^{(2)}_k}
+\frac{Cm^2}{\sum_k w^{(2)}_k}\,\|\til{\varepsilon}\|_1
\le Cm^2\,\|\til{\varepsilon}\|_1,
\]
where the last inequality is due to the fact that $q_j^{(1)}\leq \delta$.
On the other hand, if $\til{\Delta}^{(1)}_j>t_1^\star$ and $\til{\Delta}^{(2)}_j>t_2^\star$, then $q^{(1)}_j=q^{(2)}_j=\delta$, implying that $\bigl|q^{(1)}_j-q^{(2)}_j\bigr|=0$.
Therefore,
\[
\|\bq^{(1)}-\bq^{(2)}\|_1 \le \frac{Cm^3}{\delta}\|\til{\varepsilon}\|_1 + \frac{Cm^2}{\delta}\|\til{\varepsilon}\|_1\leq  \frac{Cm^3}{\delta}\|\til{\varepsilon}\|_1,
\]
where we have applied \eqref{eq:thresholded_size} and the fact that $\sum_{j=1}^M q_j^{(1)}=m$.

\paragraph{Proof of claim~\ref{clm:t1_t2_err}.} Let
\[
f^{(1)}_j(t)=\frac{\til{\Delta}^{(1)}_j\wedge t}{\bigl(\max_l \til{\Delta}^{(1)}_l\bigr)\wedge t},
\qquad
f^{(2)}_j(t)=\frac{\til{\Delta}^{(2)}_j\wedge t}{\bigl(\max_l \til{\Delta}^{(2)}_l\bigr)\wedge t},
\]
and
\[
f^{(1)}(t)=\sum_{j=1}^M f^{(1)}_j(t),
\qquad
f^{(2)}(t)=\sum_{j=1}^M f^{(2)}_j(t).
\]
We can write
\[
t_1^\star=\sup\Bigl\{0<t\le \max_l\til{\Delta}^{(1)}_l:\ f^{(1)}(t)\ge \frac{m}{\delta}\Bigr\},
\qquad
t_2^\star=\sup\Bigl\{0<t\le \max_l\til{\Delta}^{(2)}_l:\ f^{(2)}(t)\ge \frac{m}{\delta}\Bigr\}.
\]
Since $f^{(1)}(t)$ and $f^{(2)}(t)$ are both continuous non-increasing functions, we know that
either $f^{(1)}(t_1^\star)=m/\delta$ (resp.\ $f^{(2)}(t_2^\star)=m/\delta$), or
$t_1^\star=\max_l\til{\Delta}^{(1)}_l$ with $f^{(1)}(t_1^\star)\ge m/\delta$
(resp.\ $t_2^\star=\max_l\til{\Delta}^{(2)}_l$ with $f^{(2)}(t_2^\star)\ge m/\delta$).

We now consider three distinct cases:
\begin{enumerate}
\item[(i)]
$\mathbf{1}\!\left(t_1^\star=\max_l\til{\Delta}^{(1)}_l\right)
+\mathbf{1}\!\left(t_2^\star=\max_l\til{\Delta}^{(2)}_l\right)=0$.
\item[(ii)]
$\mathbf{1}\!\left(t_1^\star=\max_l\til{\Delta}^{(1)}_l\right)
+\mathbf{1}\!\left(t_2^\star=\max_l\til{\Delta}^{(2)}_l\right)=2$.
\item[(iii)]
$\mathbf{1}\!\left(t_1^\star=\max_l\til{\Delta}^{(1)}_l\right)
+\mathbf{1}\!\left(t_2^\star=\max_l\til{\Delta}^{(2)}_l\right)=1$.
\end{enumerate}

\noindent\textbf{Case (i).}
$t_1^\star<\max_l\widetilde\Delta_l^{(1)}$ and $t_2^\star<\max_l\widetilde\Delta_l^{(2)}$.
Hence $f^{(1)}(t_1^\star)=f^{(2)}(t_2^\star)=\frac{m}{\delta}.$
W.L.O.G., suppose that $t_1^\star\le t_2^\star\le \max_l\widetilde\Delta_l^{(2)}$.
We note that
\begin{align*}
\bigl|f^{(1)}(t_1^\star)-f^{(2)}(t_1^\star)\bigr|
&\le \sum_{j=1}^M \bigl|f_j^{(1)}(t_1^\star)-f_j^{(2)}(t_1^\star)\bigr|\\
&= \sum_{j=1}^M \frac{\bigl|(\widetilde\Delta_j^{(1)}\wedge t_1^\star)-(\widetilde\Delta_j^{(2)}\wedge t_1^\star)\bigr|}{t_1^\star}\\
&\le \frac{\|\widetilde\Delta^{(1)}-\widetilde\Delta^{(2)}\|_1}{t_1^\star}.
\end{align*}
In addition, for any $\tau\ge 0$,
\begin{align*}
\bigl|f^{(2)}(t_1^\star)-f^{(2)}(t_2^\star)\bigr|
&= \sum_{j=1}^M\bigl(f_j^{(2)}(t_1^\star)-f_j^{(2)}(t_2^\star)\bigr)\\
&\ge
\sum_{j:\ \widetilde\Delta_j^{(2)}\le (t_1^\star+\tau)\wedge t_2^\star}
\Bigl(t_1^{\star-1}\wedge \widetilde\Delta_j^{(2)-1}-t_2^{\star-1}\Bigr)\widetilde\Delta_j^{(2)}\\
&\ge
\sum_{j:\ \widetilde\Delta_j^{(2)}\le (t_1^\star+\tau)\wedge t_2^\star}
\frac{t_2^\star-(t_1^\star\vee \widetilde\Delta_j^{(2)})}{t_2^{\star 2}}\,
\widetilde\Delta_j^{(2)}\\
&\ge
\frac{t_2^\star-t_1^\star-\tau}{t_2^{\star 2}}
\sum_{j:\ \widetilde\Delta_j^{(2)}\le (t_1^\star+\tau)\wedge t_2^\star}
\widetilde\Delta_j^{(2)}.
\end{align*}
Let $\tau=\|\widetilde\varepsilon\|_\infty$. Then
$\mathbf{1}\{\widetilde\Delta_j^{(2)}\le t_1^\star+\tau\}\ge \mathbf{1}\{\widetilde\Delta_j^{(1)}\le t_1^\star\}.$
If $t_1^\star+\tau\le t_2^\star$, then
\begin{align*}
\sum_{j:\ \widetilde\Delta_j^{(2)}\le (t_1^\star+\tau)\wedge t_2^\star}\widetilde\Delta_j^{(2)}
&=\sum_{j:\ \widetilde\Delta_j^{(2)}\le t_1^\star+\tau}\widetilde\Delta_j^{(2)}\\
&\ge \sum_{j:\ \widetilde\Delta_j^{(1)}\le t_1^\star}\widetilde\Delta_j^{(1)}
-\|\widetilde\Delta^{(1)}-\widetilde\Delta^{(2)}\|_1\\
&\ge \frac{c_0}{M}\Bigl(M-\frac{m}{\delta}\Bigr)-\|\widetilde\Delta^{(1)}-\widetilde\Delta^{(2)}\|_1\\
&= c_0\Bigl(1-\frac{m}{\delta M}\Bigr)-\|\widetilde\varepsilon\|_1\\
&\ge \frac{c_0(1-c_1)}{2},
\end{align*}
where the last line is due to that $\|\widetilde\varepsilon\|_1\leq \|\varepsilon\|_1 + M\varepsilon_{\min}$, and the latter is assumed to be bounded by $\frac{c_0(1-c_1)}{2}$.
Otherwise, 
\[
\sum_{j:\ \widetilde\Delta_j^{(2)}\le (t_1^\star+\tau)\wedge t_2^\star}\widetilde\Delta_j^{(2)}
=\sum_{j:\ \widetilde\Delta_j^{(2)}\le t_2^\star}\widetilde\Delta_j^{(2)}
\ge c_0(1-c_1),
\]
where the last line is follows from the same argument as above when lower bounding $\sum_{j:\ \widetilde\Delta_j^{(1)}\le t_1^\star}\widetilde\Delta_j^{(1)}$.  
Since $f^{(2)}(t_2^\star)-f^{(2)}(t_1^\star)=f^{(1)}(t_1^\star)-f^{(2)}(t_1^\star)$, we have
\[
\frac{t_2^\star-t_1^\star-\|\widetilde\varepsilon\|_\infty}{t_2^{\star 2}}\cdot \frac{c_0(1-c_1)}{2}
\le
\frac{\|\widetilde\Delta^{(1)}-\widetilde\Delta^{(2)}\|_1}{t_1^\star}.
\]
Therefore,
\[
|t_1^\star-t_2^\star|
\le \|\widetilde\varepsilon\|_\infty+\frac{2t_2^{\star 2}}{c_0(1-c_1)t_1^\star}\,\|\widetilde\Delta^{(1)}-\widetilde\Delta^{(2)}\|_1.
\le \|\widetilde\varepsilon\|_\infty+\frac{C}{c_0^2c_1}\,\|\widetilde\xi\|_1.
\]
Since $t_2^*\leq \max_l\til{\Delta}^{(2)}_l \leq C$, and $t_1^*$ satisfies $f^{(1)}(t_1^*) = \frac{\sum_{j=1}^M(\til{\Delta}_j\wedge t_1^*)}{t_1^*}=\frac{m}{\delta}$ which implies $t_1^*\geq c_0\frac{\delta}{m}$, we have
\[
|t_1^\star-t_2^\star|
\le Cm\|\til{\varepsilon}\|_1.
\]
Here, we omit the factor $\frac{1}{\delta}$ since it is bounded by a universal constant $2$.

\noindent\textbf{Case (ii).}
\[
|t_1^\star-t_2^\star|
=\left|\max_l\widetilde\Delta_l^{(1)}-\max_l\widetilde\Delta_l^{(2)}\right|
\le \|\widetilde\varepsilon\|_\infty.
\]

\smallskip
\noindent\textbf{Case (iii).}
As before, W.L.O.G.\ we assume $t_1^\star\le t_2^\star$.
If $t_1^\star=\max_l\widetilde\Delta_l^{(1)}$, then
\[
|t_1^\star-t_2^\star|
\le \left|\max_l\widetilde\Delta_l^{(1)}-\max_l\widetilde\Delta_l^{(2)}\right|
\le \|\widetilde\varepsilon\|_\infty.
\]
Otherwise, $t_2^\star=\max_l\widetilde\Delta_l^{(2)}$ and $f^{(2)}(t_2^\star)\ge \frac{m}{\delta}$, while
$t_1^\star<\max_l\widetilde\Delta_l^{(1)}$ and $f^{(1)}(t_1^\star)=\frac{m}{\delta}$.
Then
\[
0\ge f^{(2)}(t_2^\star)-f^{(2)}(t_1^\star)\ge f^{(1)}(t_1^\star)-f^{(2)}(t_1^\star),
\]
which implies
\[
\bigl|f^{(2)}(t_2^\star)-f^{(2)}(t_1^\star)\bigr|
\le \bigl|f^{(1)}(t_1^\star)-f^{(2)}(t_1^\star)\bigr|.
\]
Following the same argument as in Case (i), we obtain $|t_1^\star-t_2^\star|\le Cm\|\widetilde\varepsilon\|_1$. This completes the proof of Claim~\ref{clm:t1_t2_err}, and hence the proof.
\end{proof}

\section{Additional Empirical Results}
\subsection{Implementation Details}\label{add_sim_set}
In the simulation study, all machine learning models are implemented using scikit-learn. For all LOCO methods, ridge regression uses fit\_intercept $=$ False, and kernel ridge regression uses the RBF kernel. LOCO-AdaMP and LOCO-MP use DecisionTreeRegressor with its default hyperparameters, whereas LOCO-Split uses RandomForestRegressor with n\_estimators set to $100$, max\_features set to $1/3$ and min\_samples\_leaf set to $5$, while retaining the default values for all other hyperparameters.

In the case study, for LOCO-AdaMP, we set the number of minipatches within each iteration as $K_1=\cdots=K_4=1,000, K_5 = 10,000$, and keep the probability upper bound and the constant buffer the same as in the simulation study. For LOCO-MP, the number of minipatches is set as $14, 000$, same as the total number of minipatches sampled by LOCO-AdaMP across all iterations. For Floodgate, we follow Section 4.5 of \cite{zhang2020floodgate} and estimate the conditional feature distributions using graphical lasso (GLASSO) with $3$-fold cross-validation. We implemented GCM using version 0.2.0 of the R package GeneralisedCovarianceMeasure, obtained from the CRAN archive. For each method, we rank the features by their inference $p$-values and select the top $K$. We then fit a separate random forest using those $K$ features, with the same hyperparameters as the random forest used by LOCO-Split.

\subsection{Validation of Assumption~\ref{ass:minDelta_q_bnd}}
To assess the plausibility of Assumption~\ref{ass:minDelta_q_bnd}, we empirically examine the probabilities of adaptive sampling in our simulation studies. We focus on the high-dimensional setting, since Assumption~\ref{ass:minDelta_q_bnd} can be easily satisfied in the low-dimensional setting where $m/M=0.3$. In particular, we use the same simulation setup as in Section~\ref{sec:sim_pred}, and for each iteration $t>=2$, we identify the least important feature and record the largest sampling probability assigned to that feature over the current and all previous iterations except for the initial probability $m/M$. Figure~\ref{fig:ass500} displays the histograms of the recorded maximum probabilities for least important features across $100$ independent replicates and iterations. The results show that these maximal sampling probabilities for least important features are consistently bounded by approximately $Cm/M$ with $C$ being roughly bounded by $3$. This provides empirical evidence that Assumption~\ref{ass:minDelta_q_bnd} is mild and likely satisfied in practice. 
\begin{figure}[!htb]
    \centering    
    \includegraphics[width=\linewidth]{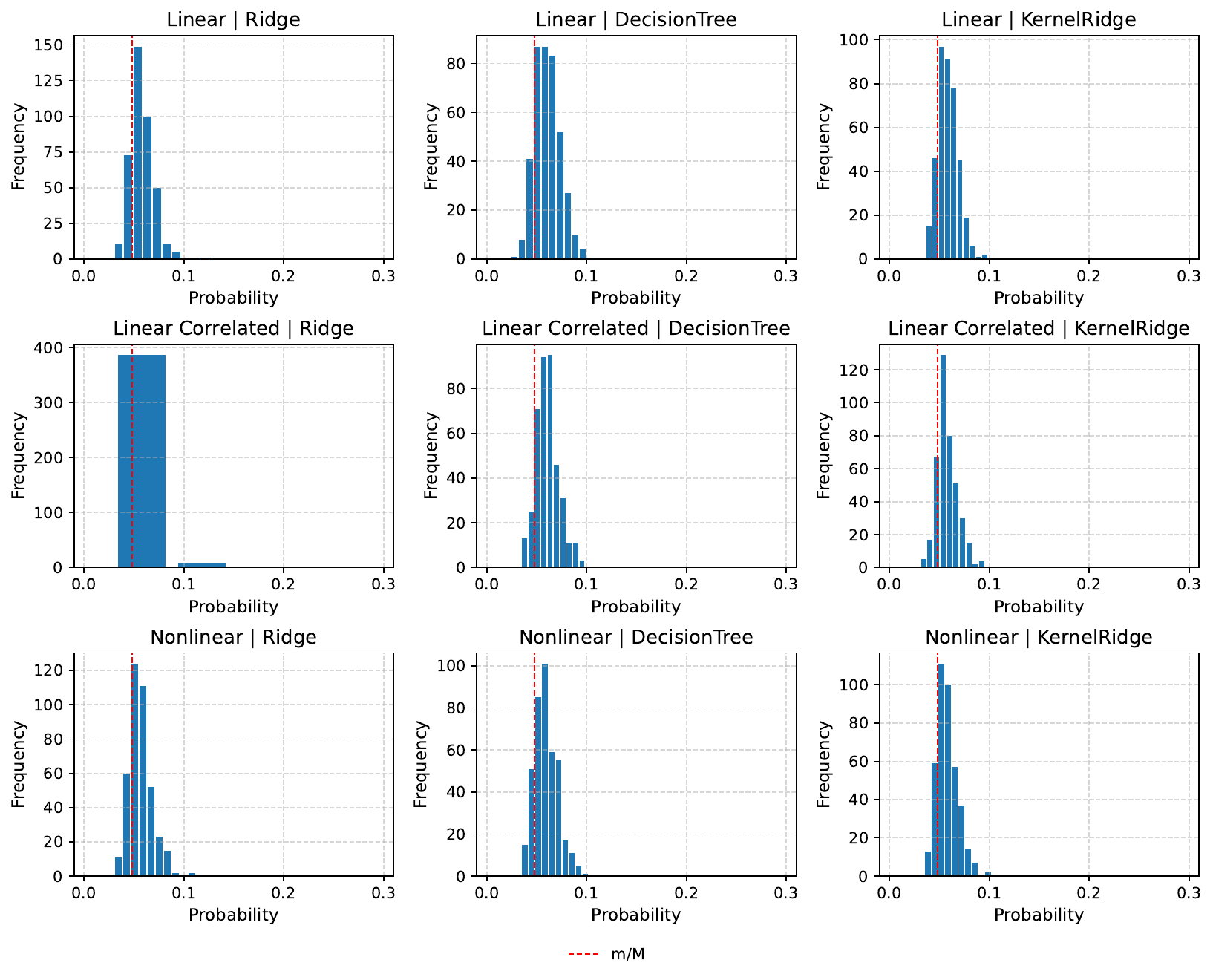}
    \caption{Empirical assessment for Assumption~\ref{ass:minDelta_q_bnd} in the high-dimensional setting $M=500$. The x-axis represents the largest sampling probability ever assigned to a feature identified as least important in one iteration $t>=2$. The y-axis represents the frequency across 100 simulation replicates and all iterations $t>=2$.}
    \label{fig:ass500}
\end{figure}

\subsection{Additional Simulation Results}
\subsubsection{High-dimensional Results}\label{add_high}
We report the coverage validation, confidence intervals, and identification rates for signal features under the high-dimensional setting ($M=500$) in Figures~\ref{fig:cover500}-\ref{fig:power500}.
\begin{figure}[!htb]
    \centering
    \includegraphics[width=\linewidth]{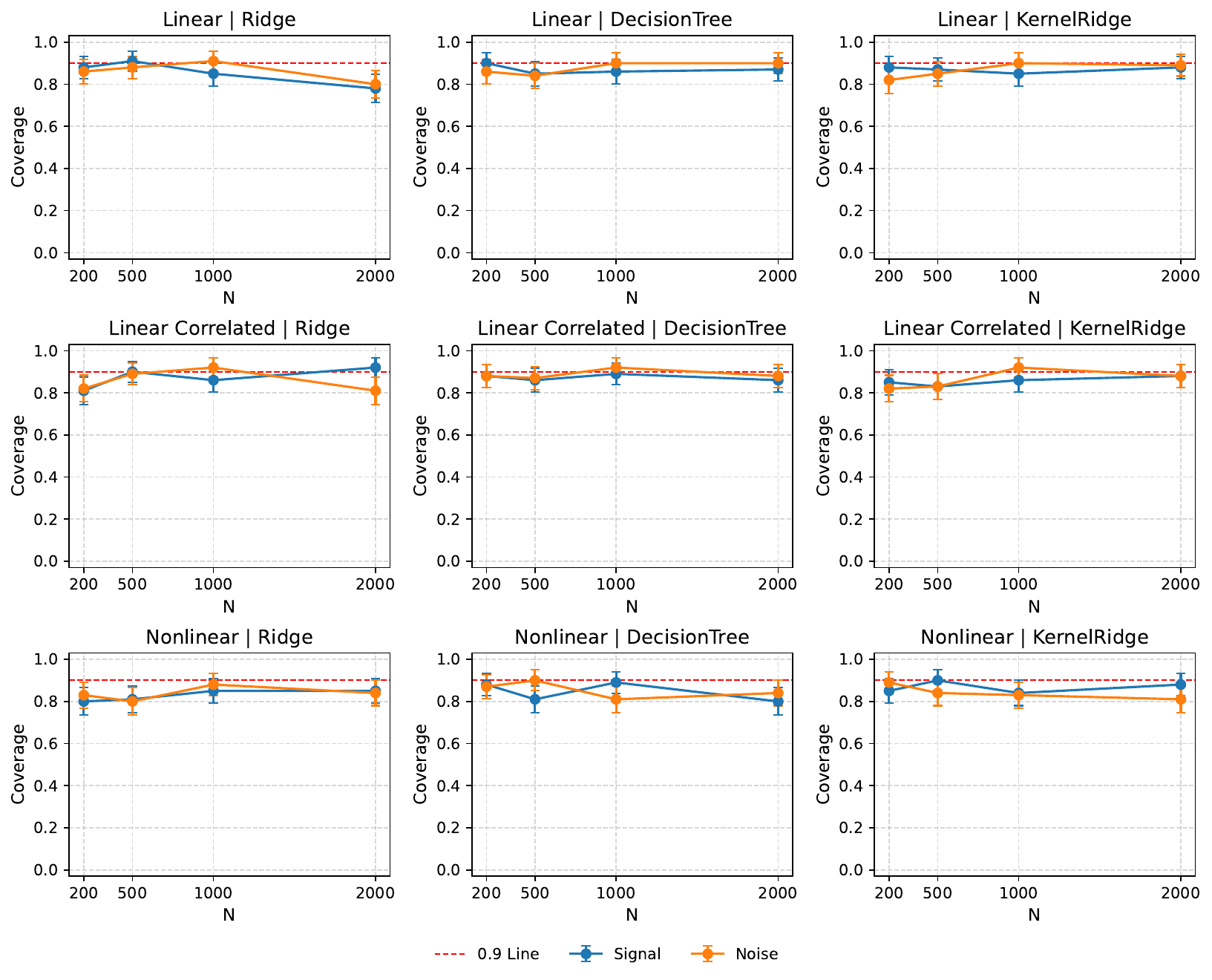}
    \caption{\small Coverage rates of $90\%$ confidence intervals for the targets in the high-dimensional setting $(M=500)$ over $100$ replicates. The first and third rows use the default setting with independent features, whereas the second uses the setting with one correlated feature pair. The blue and orange lines represent a signal feature (feature 5) and a noise feature (feature 6) respectively. The true target value is approximated via Monte Carlo method with $10,000$ test observations; hence the reported coverage rates slightly underestimate the true coverage. To address the underestimation effect due to Monte Carlo randomness, adjusted coverage rates are reported in Figure~\ref{fig:adjcover500}.}
    \label{fig:cover500}
\end{figure}
\begin{figure}[!htb]
    \centering
    \includegraphics[width=\linewidth]{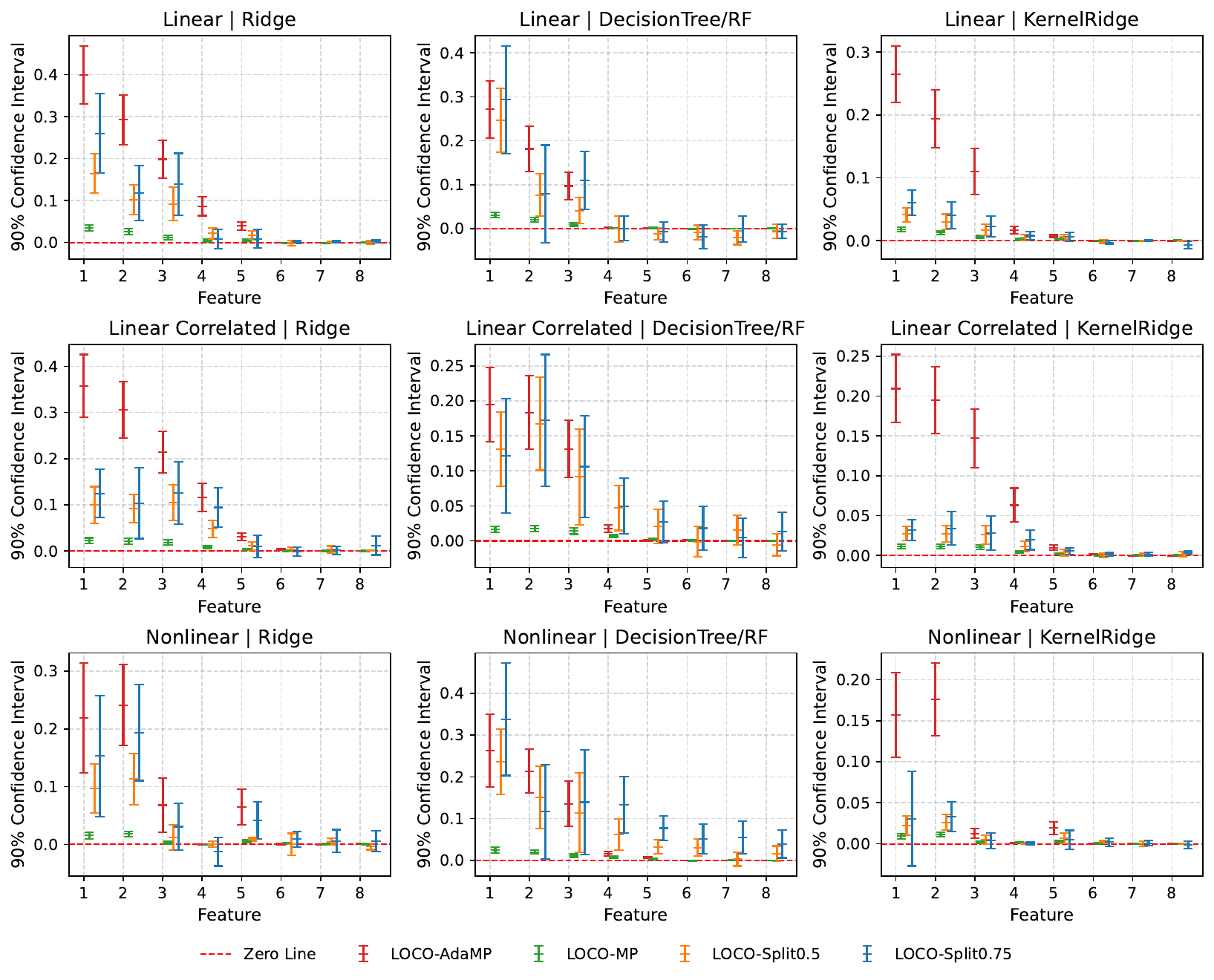}
    \caption{\small 90\% confidence intervals for the first 8 features in the high-dimensional setting ($M=500, N=200$). The first $5$ features are signal features with decreasing strengths. The first and third rows use the default setting with independent features, whereas the second uses the setting with one correlated feature pair. For the second column, decision trees are used for LOCO-MP and LOCO-AdaMP, and random forests are used for LOCO-Split. Unlike the low-dimensional results (Figure~\ref{fig:ci50}), LOCO-AdaMP's interval magnitudes are larger than LOCO-Split when the base model is ridge/kernel ridge, perhaps due to that overly large shrinkage of ridge penalty in high-dimensional settings.}
    \label{fig:ci500}
\end{figure}
\begin{figure}[!htb]
    \centering
    \includegraphics[width=\linewidth]{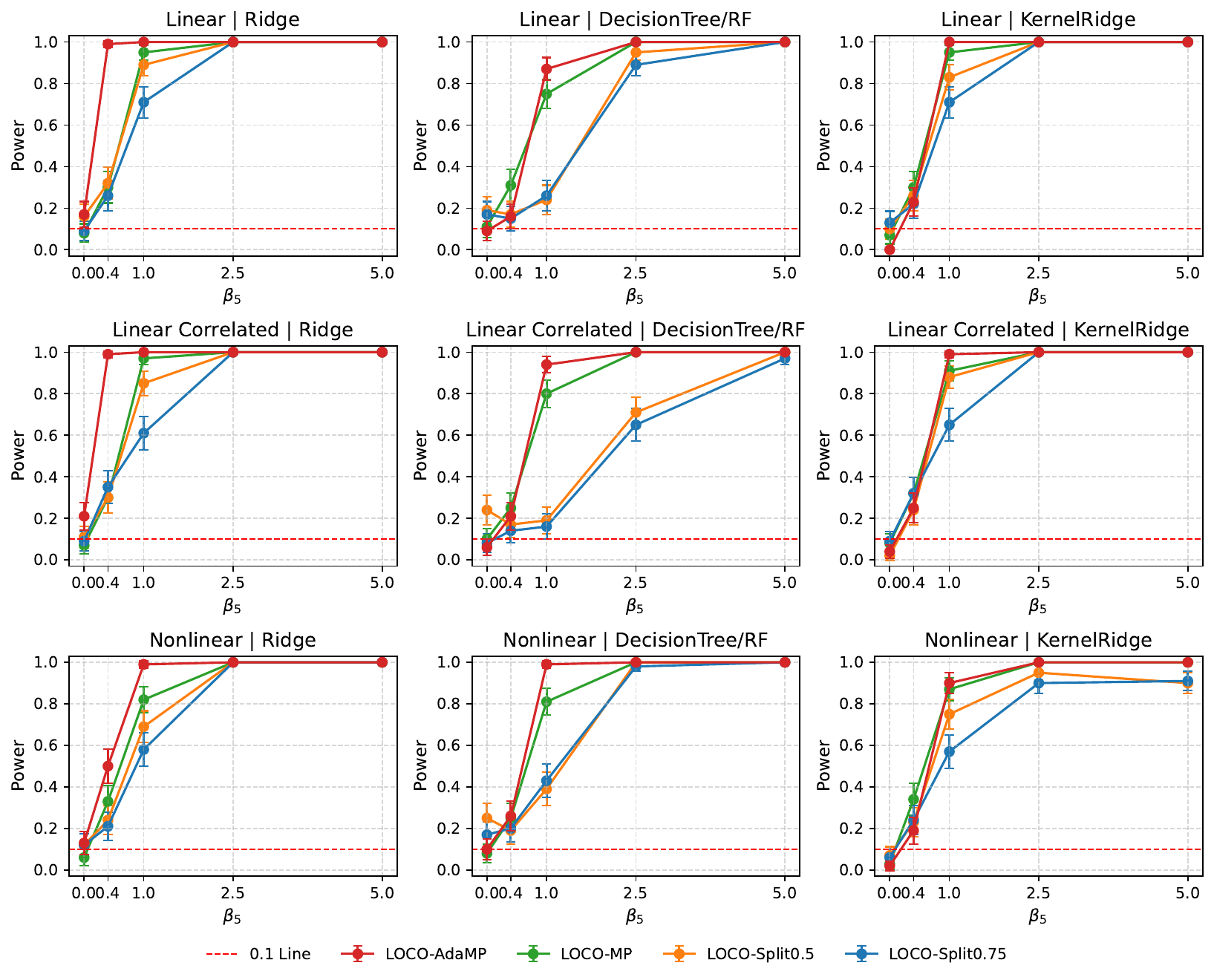}
    \caption{\small Rejection rates of one-sided hypothesis tests ($\alpha = 0.1$) for feature 5 in the high-dimensional setting $(M = 500, N = 200)$, averaged over 100 replicates. The x-axis represents values for $\beta_5$. LOCO-AdaMP quickly attains higher rejection rates when $\beta_5$ increases above zero. The first and third rows use the default setting with independent features, whereas the second uses the setting with one correlated feature pair. LOCO-AdaMP still attains the highest rejection rate as $\beta_5$ increases beyond zero.}
    \label{fig:power500}
\end{figure}

\subsubsection{Inference Targets over Iterations}\label{add_inf_tar}
We present additional figures tracking the inference targets over LOCO-AdaMP iterations, complementing Figure~\ref{fig:target50_ind} in Section~\ref{sec:disc_target}. We use the same simulation setup as in Section~\ref{sec:sim_pred}. Figure~\ref{fig:target50_cn} demonstrates the result for the correlated linear and nonlinear models in the low-dimensional setting, while Figure~\ref{fig:target500} presents the result in the high-dimensional setting, showing qualitatively similar trends to Figure~\ref{fig:target50_ind}. Moreover, we show the inference targets with y-axis on a symmetric logarithmic scale in Figures~\ref{fig:log_target50} and \ref{fig:log_target500}.
\begin{figure}[!htb]
    \centering
    \includegraphics[width=\linewidth]{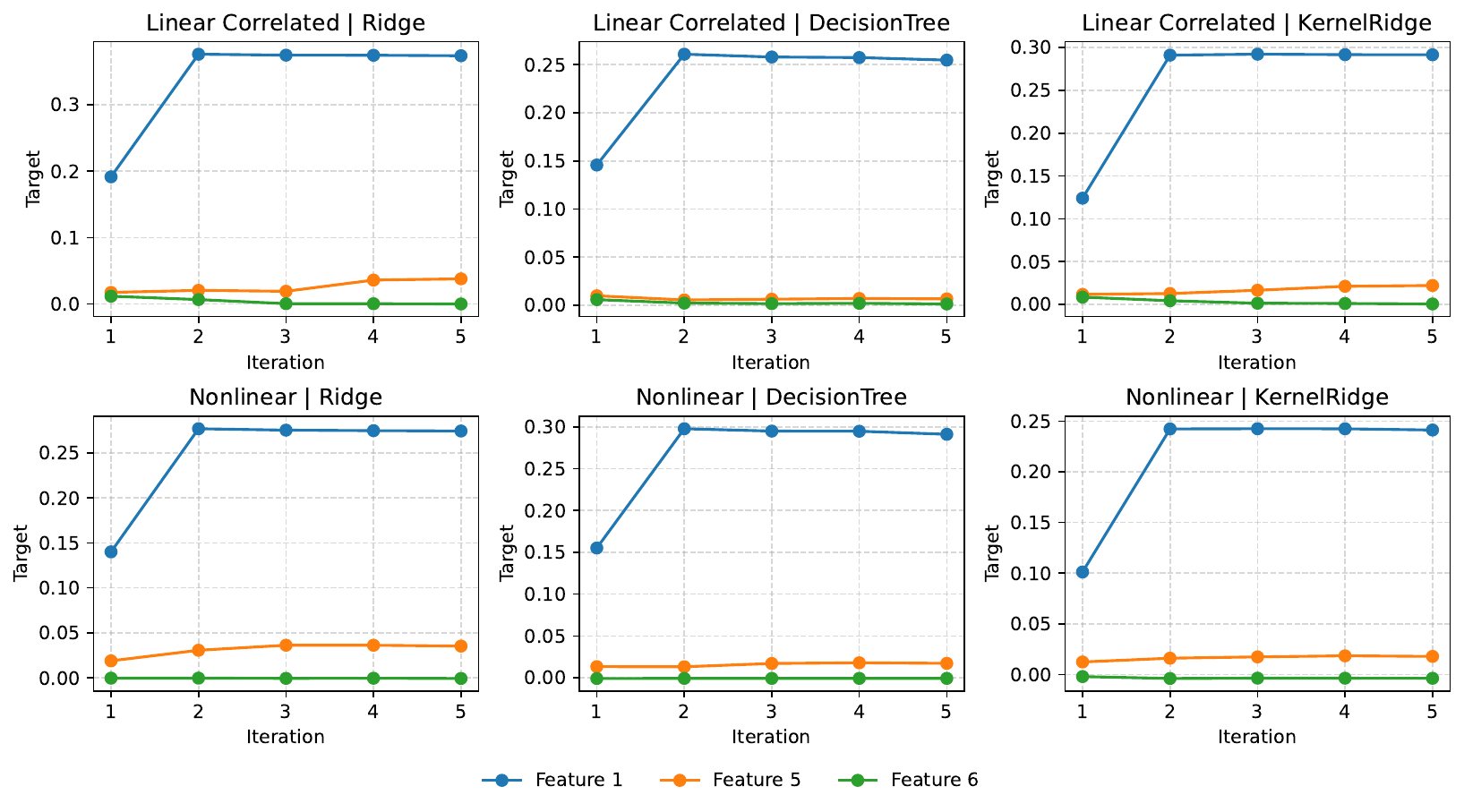}
    \caption{\small Inference targets $\Delta_j^{(t)}$ of LOCO-AdaMP over iterations $t$ in the low-dimensional setting ($M = 50$) for correlated linear model and nonlinear model. Feature 1 and feature 5 are signal features with SNR$=2.5$ and SNR$=1$ respectively, while feature 6 is a noise feature. Results are averaged over $100$ simulation replicates.}
    \label{fig:target50_cn}
\end{figure}

\begin{figure}[!htb]
    \centering
    \includegraphics[width=\linewidth]{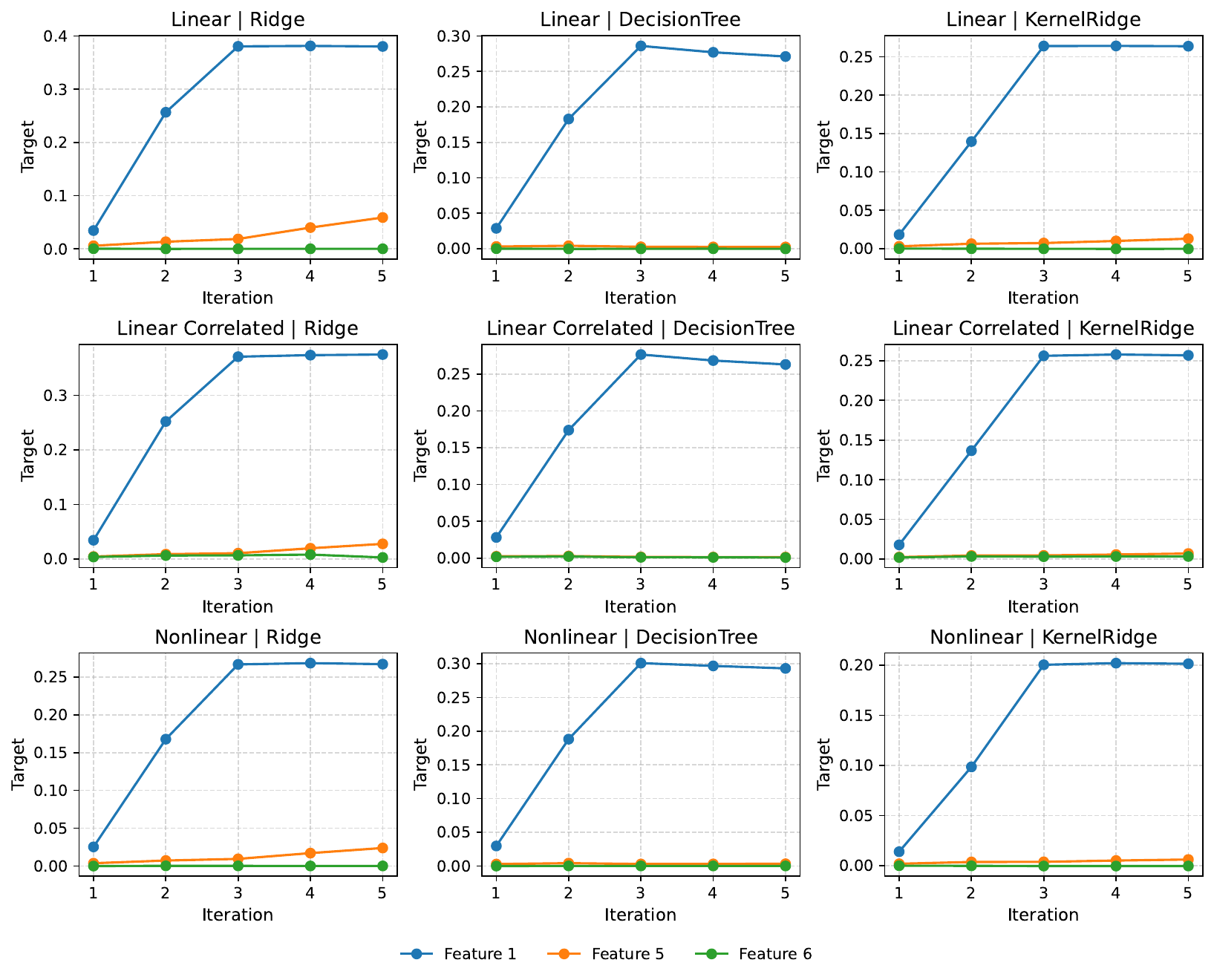}
    \caption{\small Inference targets $\Delta_j^{(t)}$ of LOCO-AdaMP over iterations $t$ in the high-dimensional setting ($M = 500$). Feature 1 and feature 5 are signal features with SNR$=2.5$ and SNR$=1$ respectively, while feature 6 is a noise feature. Results are averaged over $100$ simulation replicates.}
    \label{fig:target500}
\end{figure}

\begin{figure}[!htb]
    \centering
    \includegraphics[width=\linewidth]{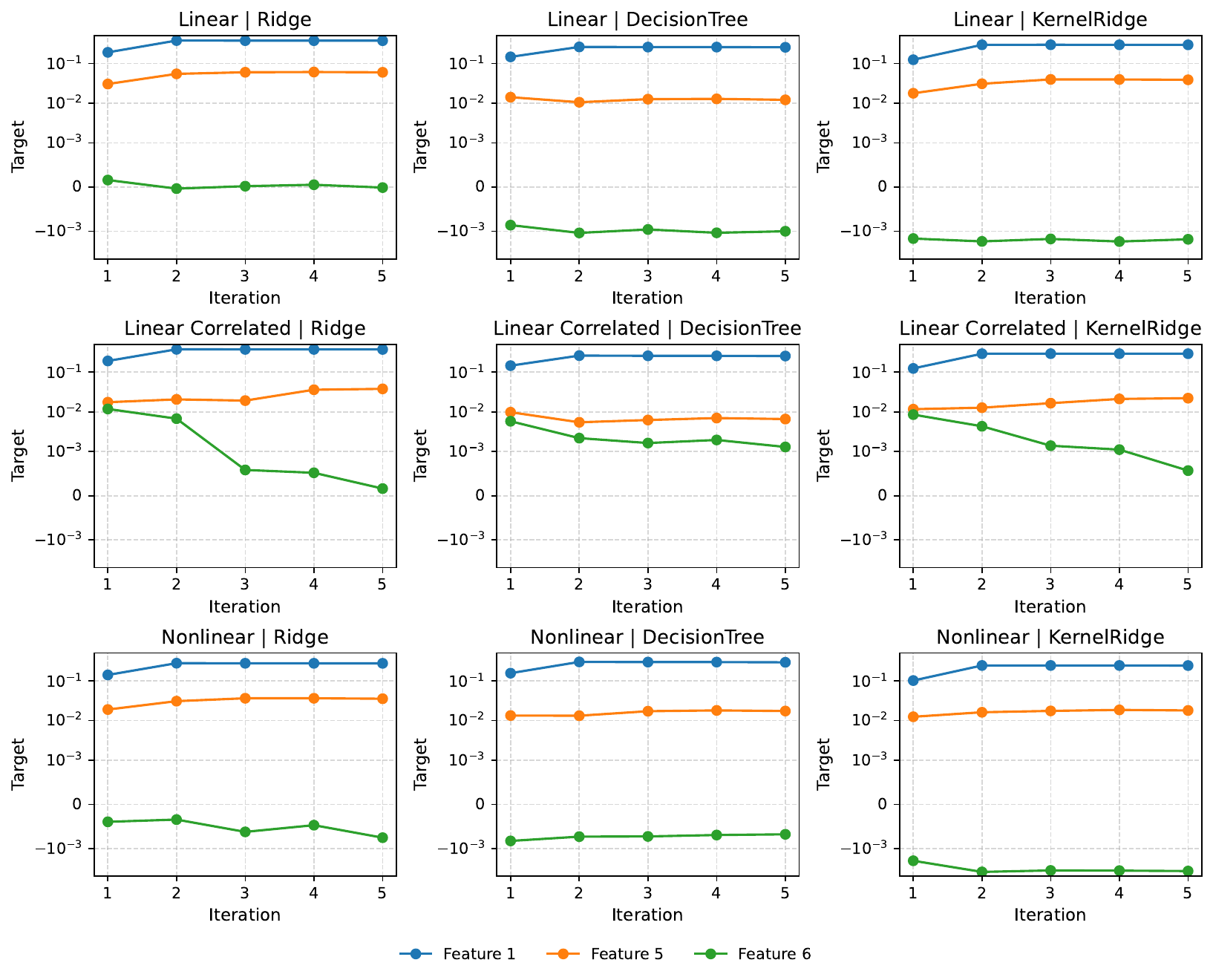}
    \caption{\small Log-scaled inference targets $\log\Delta_j^{(t)}$ of LOCO-AdaMP over iterations $t$ in the low-dimensional setting ($M = 50$). Feature 1 and feature 5 are signal features with SNR$=2.5$ and SNR$=1$ respectively, while feature 6 is a noise feature. Results are averaged over $100$ simulation replicates. The y-axis uses a symmetric logarithmic scale with a linear region within $[-10^{-3}, 10^{-3}]$.}
    \label{fig:log_target50}
\end{figure}

\begin{figure}[!htb]
    \centering
    \includegraphics[width=\linewidth]{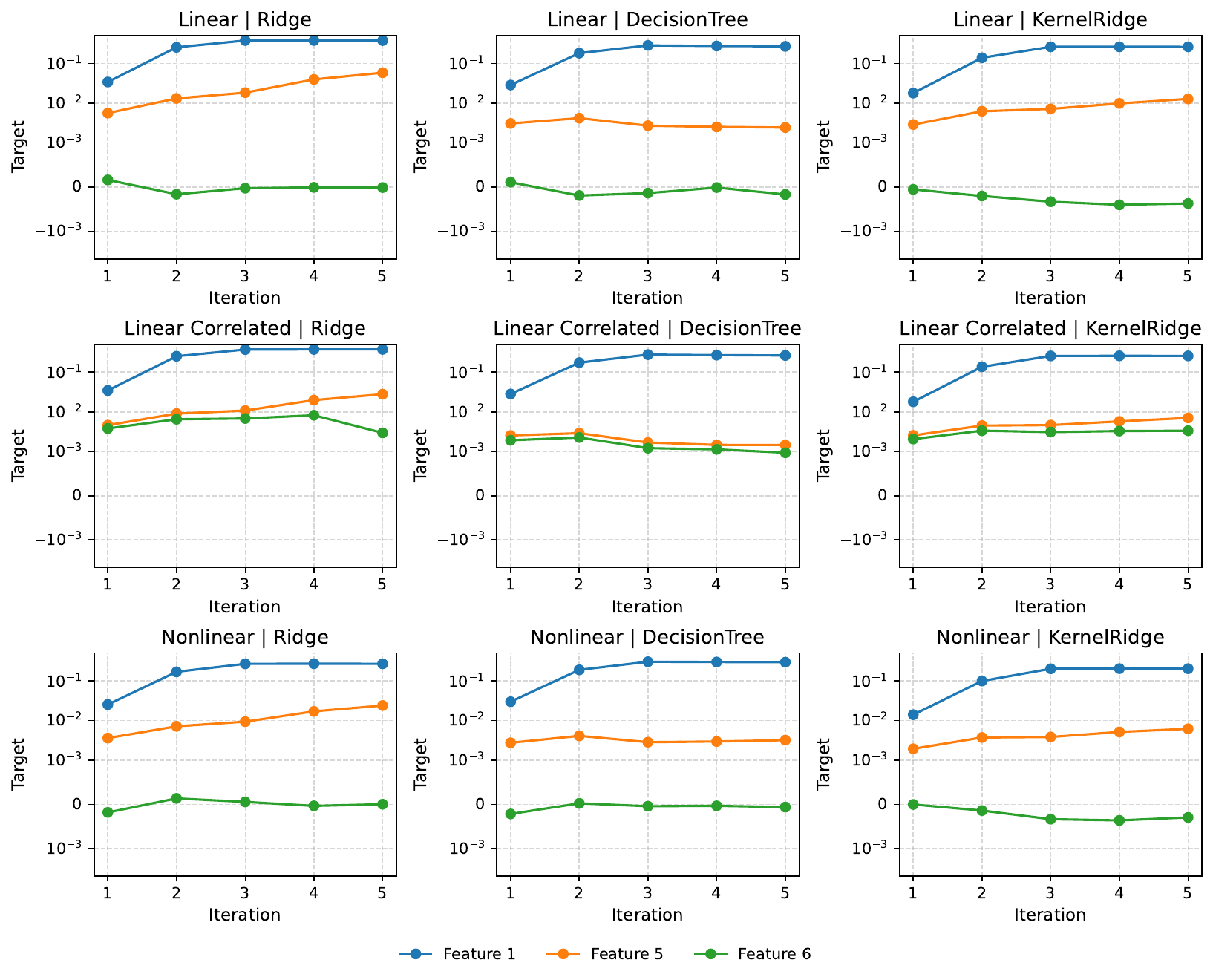}
    \caption{\small Log-scaled inference targets $\log\Delta_j^{(t)}$ of LOCO-AdaMP over iterations $t$ in the high-dimensional setting ($M = 500$). Feature 1 and feature 5 are signal features with SNR$=2.5$ and SNR$=1$ respectively, while feature 6 is a noise feature. Results are averaged over $100$ simulation replicates. The y-axis uses a symmetric logarithmic scale with a linear region within $[-10^{-3}, 10^{-3}]$.}
    \label{fig:log_target500}
\end{figure}

\subsubsection{Adjusted Coverage Results}\label{adj_cover}
The true inference target is difficult to evaluate because it involves an expectation over unseen data. We therefore approximate it using a Monte Carlo estimator based on an independent test sample of size $N_1$. Directly treating this surrogate as the true target may underestimate the empirical coverage rate because the surrogate itself contains random error. Throughout this section, we consider the inference target of the final iteration $T$ and suppress the iteration index $T$ in the notation. Denote the surrogate by $\tilde{\Delta}_j$:
$$
\tilde{\Delta}_j=\frac{1}{N_1}\sum_{l=1}^{N_1}(\Delta_j(\bZ))_l,\quad (\Delta_j(\bZ))_l=\ell(Z_l,\mu_{\no j}(\cdot;\bZ_{:,\no j})) - \ell(Z_l,\mu(\cdot;\bZ)).
$$
Then under the asymptotic normality result (Theorem~\ref{thm:main}), we have $$\bar{\Delta}_{j} \overset{d.}{\rightarrow}  \cN\left(\Delta_j,\frac{\sigma_{j}^2}{N}\right),$$
and $$\tilde{\Delta}_{j} \overset{d.}{\rightarrow}  \cN\left(\Delta_j,\frac{\tilde\sigma_{j}^2}{N_1}\right),$$ where $\tilde\sigma_{j}^2$ is the conditional variance of the LOCO importance over random test data. 
Since the training and test data are independent, Theorem~\ref{thm:main} implies that $$\bar{\Delta}_{j} - \tilde{\Delta}_{j} \overset{d.}{\rightarrow}\cN\left(0,\frac{\sigma_{j}^2}{N}+\frac{\tilde\sigma_{j}^2}{N_1}\right).$$ Therefore, to address the coverage underestimation effect of Monte Carlo approximation, we construct the adjusted confidence interval $$\hat{\mathbb{C}}^{\text{adj}}_j= \left[\bar{\Delta}_j - z_{\alpha/2}\sqrt{\frac{\hat{\sigma}^2_j}{N}+\frac{\hat{\tilde{\sigma}}^2_j}{N_1}},\bar{\Delta}_j +z_{\alpha/2}\sqrt{\frac{\hat{\sigma}^2_j}{N}+\frac{\hat{\tilde{\sigma}}^2_j}{N_1}} \right],$$ where $\hat{\sigma}_j^2$ is our LOCO-AdaMP variance estimate returned by Algorithm~\ref{algo:loco_adaptive}, and $\hat{\tilde{\sigma}}_j^2$ is the sample variance of $(\Delta_j(\bZ))_l$. We note here that this adjusted confidence interval is not what one would use in practice; it is only an instrumental tool that we construct to validate Theorem~\ref{thm:main}, as Theorem~\ref{thm:main} implies $$\lim_{N\rightarrow\infty}\bbP(\tilde{\Delta}_j\in \hat{\mathbb{C}}^{\text{adj}}_j)=1-\alpha.$$ In the experiment, we set $N_1=10,000$. Figures~\ref{fig:adjcover50} and \ref{fig:adjcover500} show higher and mostly valid coverage rates than Figure~\ref{fig:cover50} and \ref{fig:cover500} respectively for both noise feature and signal feature.

\begin{figure}[!htb]
    \centering
    \includegraphics[width=\linewidth]{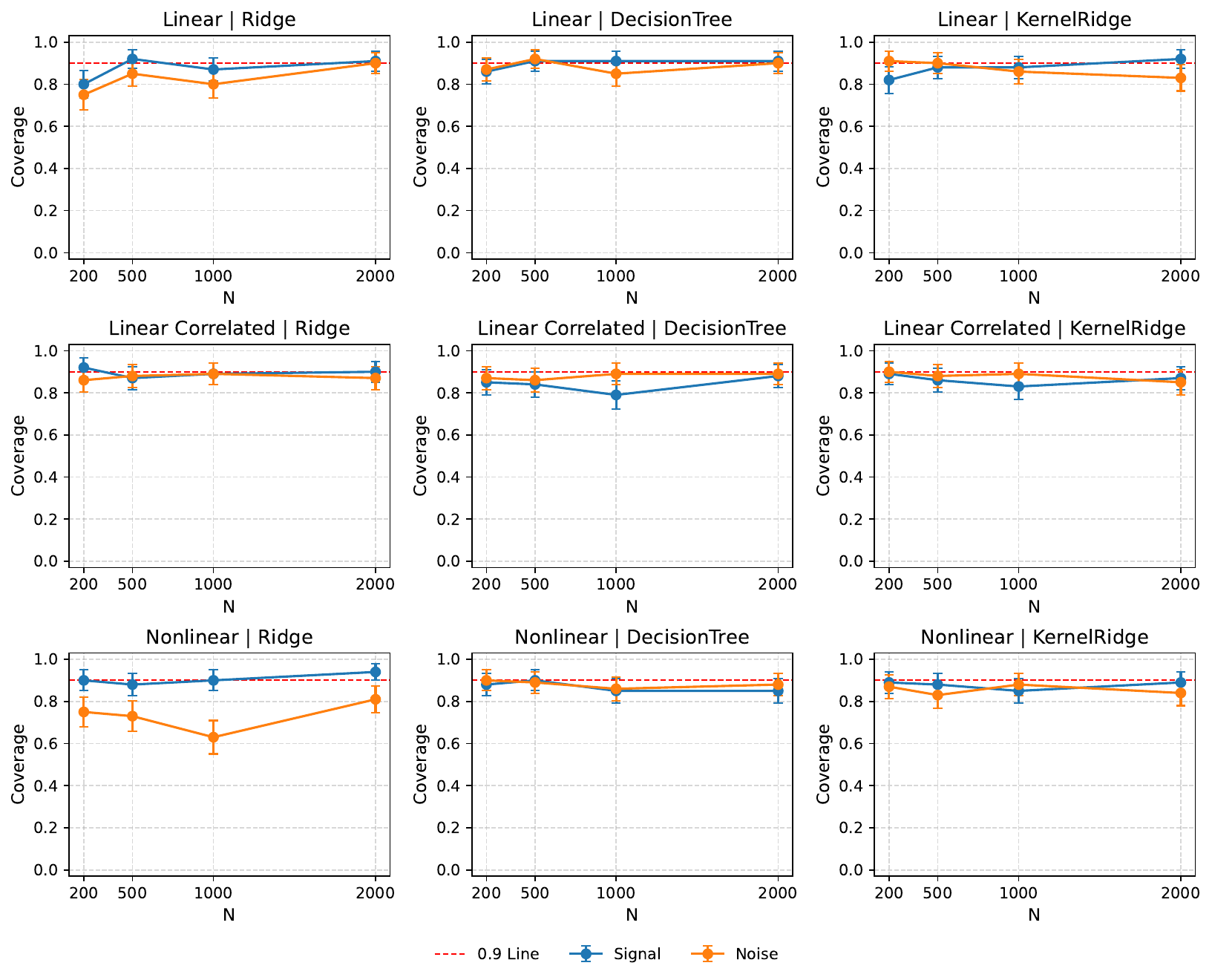}
    \caption{\small Adjusted coverage rates of $90\%$ confidence intervals for the targets in the low-dimensional setting $(M=50)$ over $100$ replicates. The first and third rows use the default setting with independent features, whereas the second uses the setting with one correlated feature pair. The blue and orange lines represent a signal feature (feature 5) and a noise feature (feature 6) respectively.}
    \label{fig:adjcover50}
\end{figure}
\begin{figure}[!htb]
    \centering
    \includegraphics[width=\linewidth]{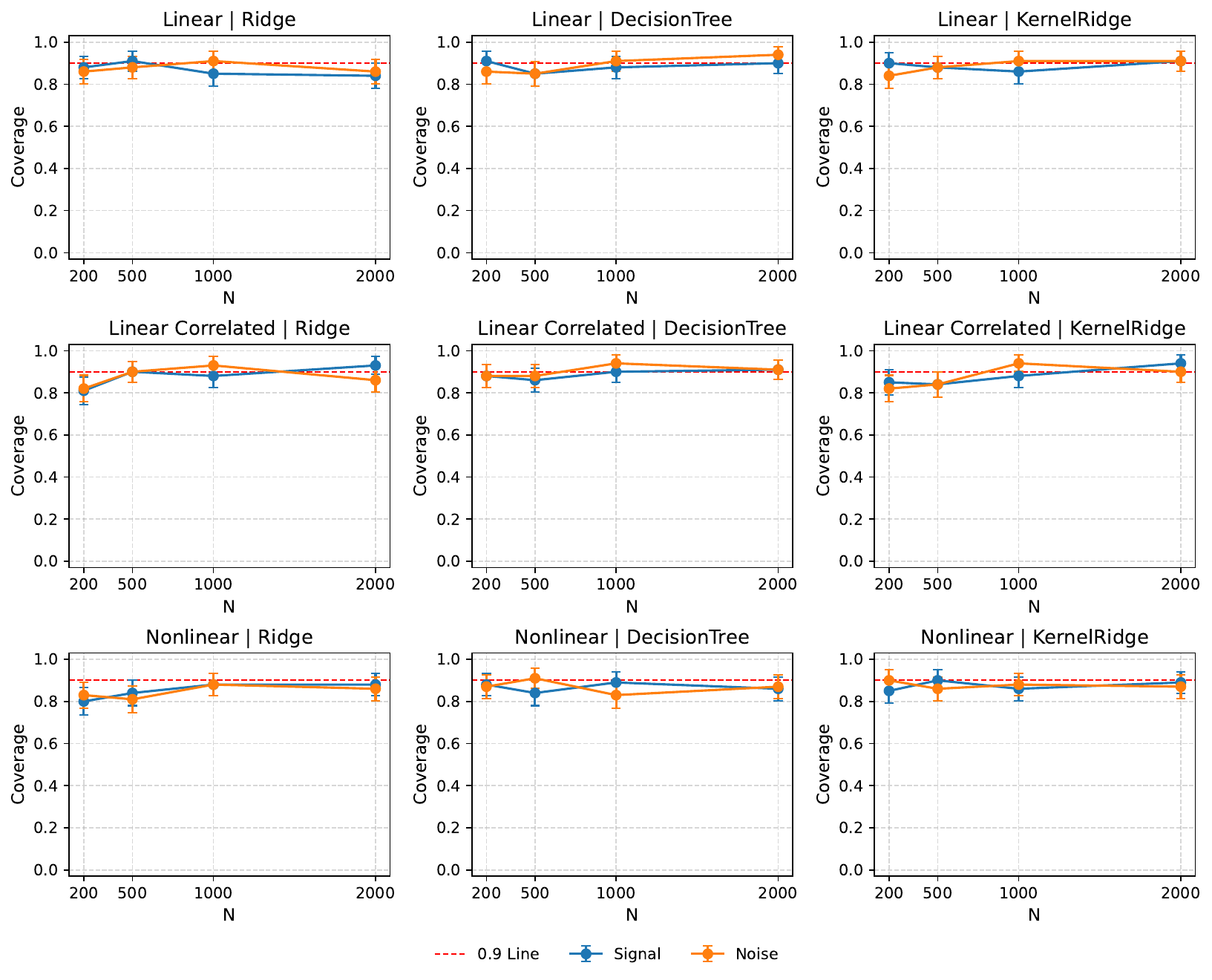}
    \caption{\small Adjusted coverage rates of $90\%$ confidence intervals for the targets in the high-dimensional setting $(M=500)$ over $100$ replicates. The first and third rows use the default setting with independent features, whereas the second uses the setting with one correlated feature pair. The blue and orange lines represent a signal feature (feature 5) and a noise feature (feature 6) respectively.} 
    \label{fig:adjcover500}
\end{figure}

\subsubsection{Experiments with Absolute Error Function in LOCO Metric}
\label{abs_loss}
To examine the robustness of our method to the choice of loss function, we conduct additional simulation studies using the absolute loss function. All simulation settings are kept the same as in Section~\ref{sec:sims}, with the only difference being the choice of loss function. Figure~\ref{fig:cover50_abs}-\ref{fig:power500_abs} show that the results are quantitatively consistent with those under the squared loss function, indicating that the performance of the proposed method is not sensitive to this choice.

\begin{figure}[!htb]
    \centering
    \includegraphics[width=\linewidth]{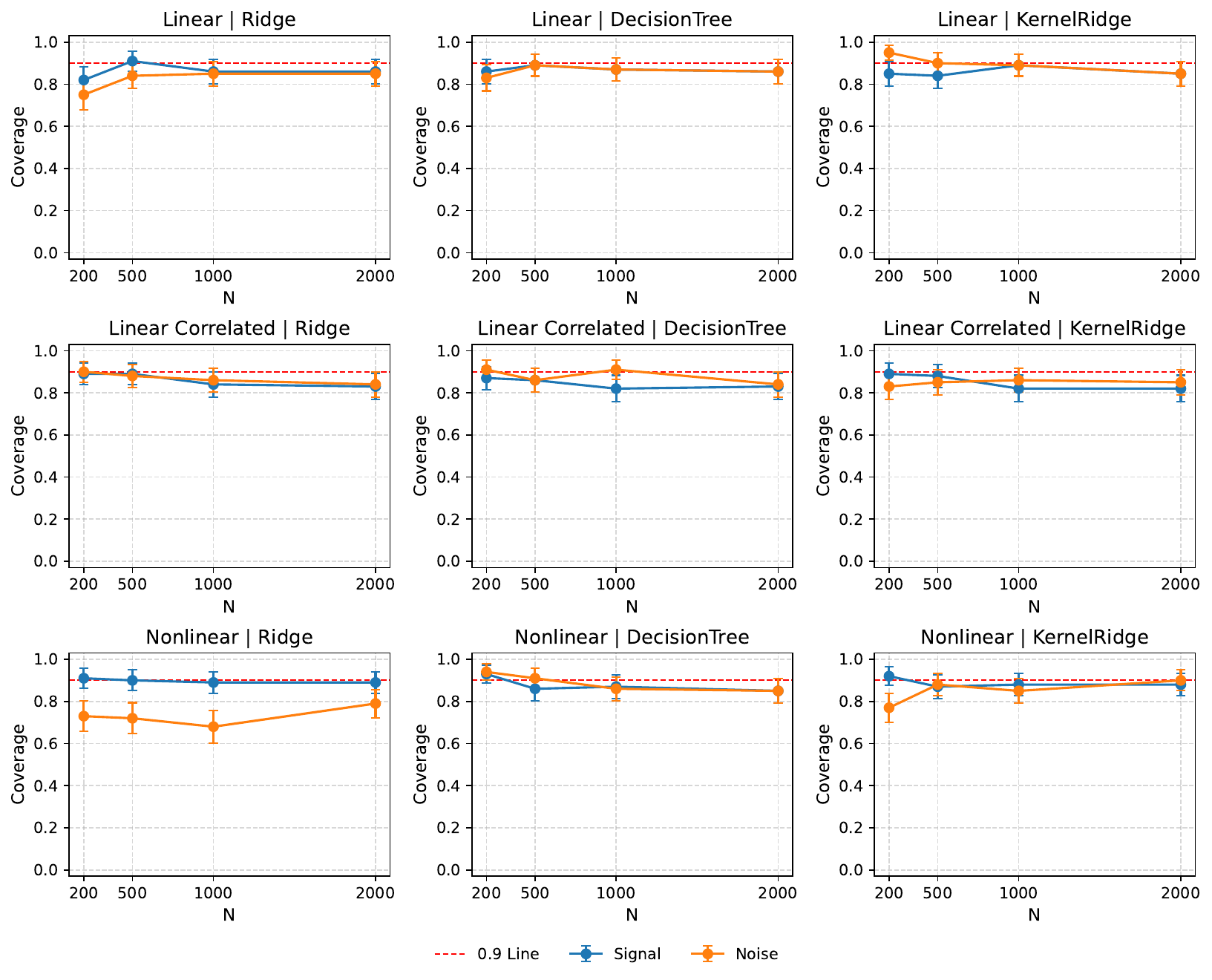}
    \caption{\small Coverage rates of $90\%$ confidence intervals for the targets in the low-dimensional setting $(M=50)$ over $100$ replicates. The first and third rows use the default setting with independent features, whereas the second uses the setting with one correlated feature pair. The blue and orange lines represent a signal feature (feature 5) and a noise feature (feature 6) respectively. The true target value is approximated via Monte Carlo method with $10,000$ test observations; hence the reported coverage rates slightly underestimate the true coverage.}
    \label{fig:cover50_abs}
\end{figure}
\begin{figure}[!htb]
    \centering
    \includegraphics[width=\linewidth]{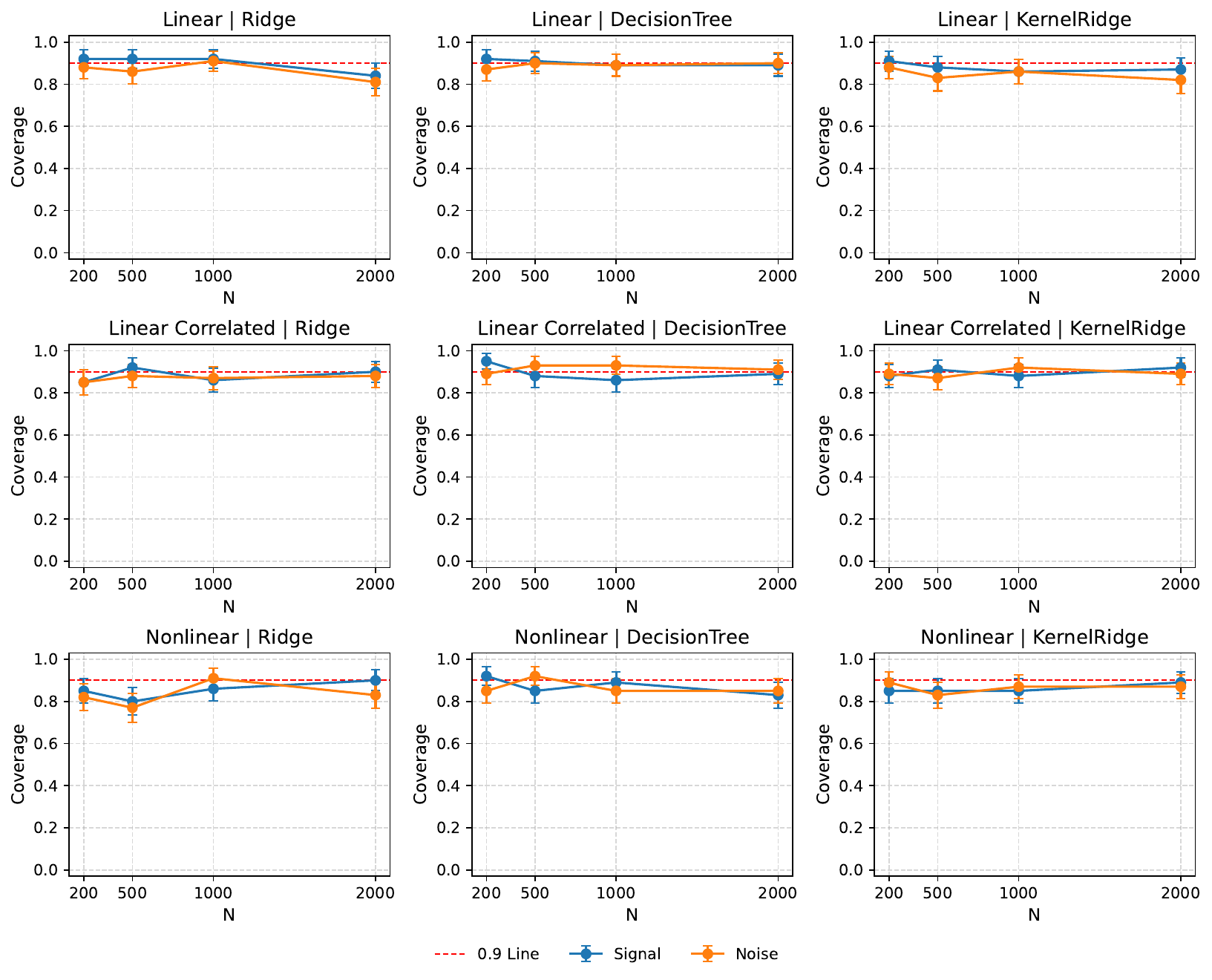}
    \caption{\small Coverage rates of $90\%$ confidence intervals for the targets in the high-dimensional setting $(M=500)$ over $100$ replicates. The first and third rows use the default setting with independent features, whereas the second uses the setting with one correlated feature pair. The blue and orange lines represent a signal feature (feature 5) and a noise feature (feature 6) respectively. The true target value is approximated via Monte Carlo method with $10,000$ test observations; hence the reported coverage rates slightly underestimate the true coverage.}
    \label{fig:cover500_abs}
\end{figure}

\begin{figure}[!htb]
    \centering
    \includegraphics[width=1\linewidth]{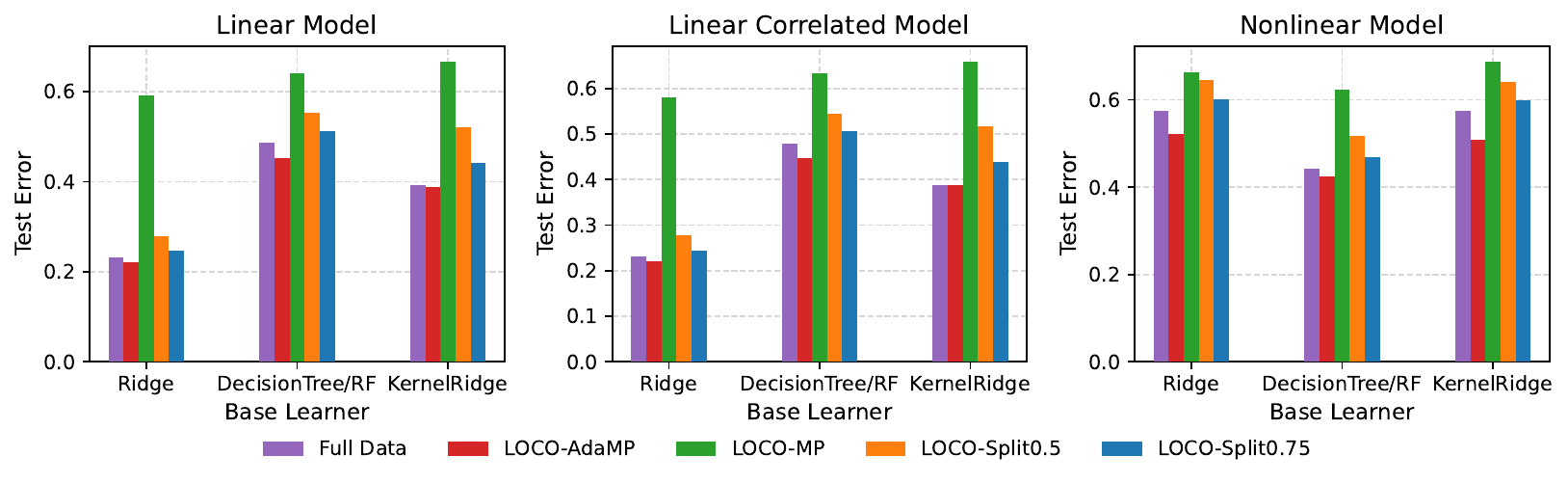}
    \includegraphics[width=1\linewidth]{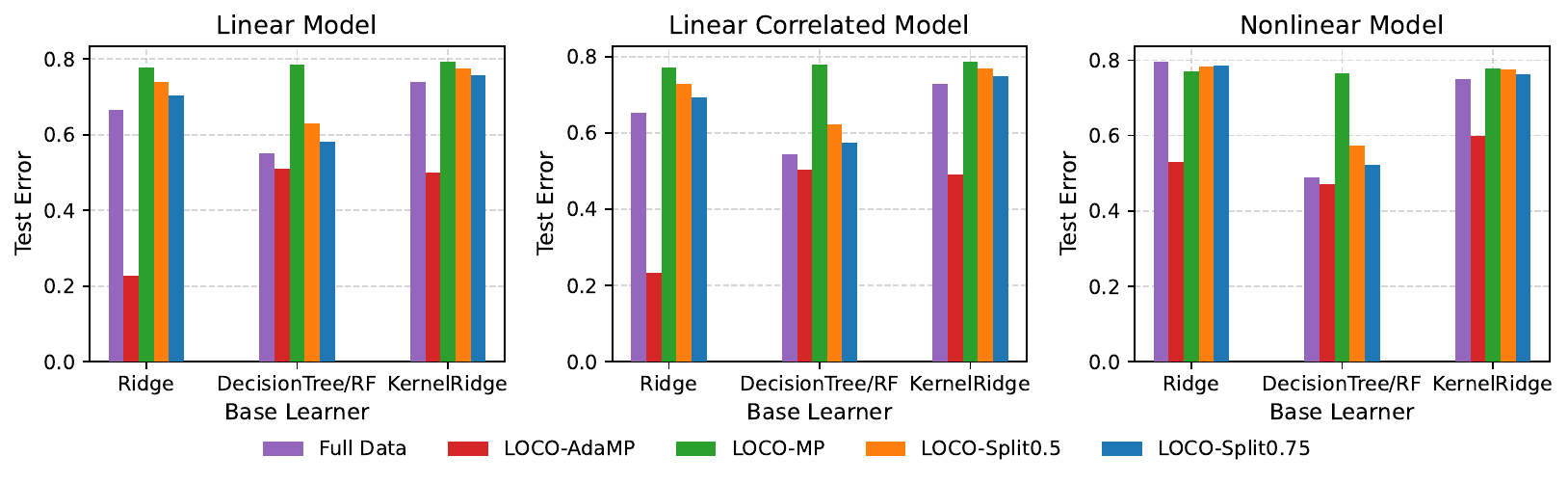}
    \captionsetup{font=normalsize}  
    \caption{\small Prediction error comparison between our LOCO-AdaMP and baseline LOCO methods; the top row is the low-dimensional setting ($M=50, N=200$) and the bottom row is the high-dimensional setting ($M=500, N=200$). ``Full Data'' denotes the corresponding base learner trained on the entire data set and is shown as reference for what can be achieved without requiring LOCO inference. The first and third columns correspond to the default independent features, whereas the second considers one correlated feature pair.}
    \label{fig:pred_abs}
\end{figure}

\begin{figure}[!htb]
    \centering
    \includegraphics[width=\linewidth]{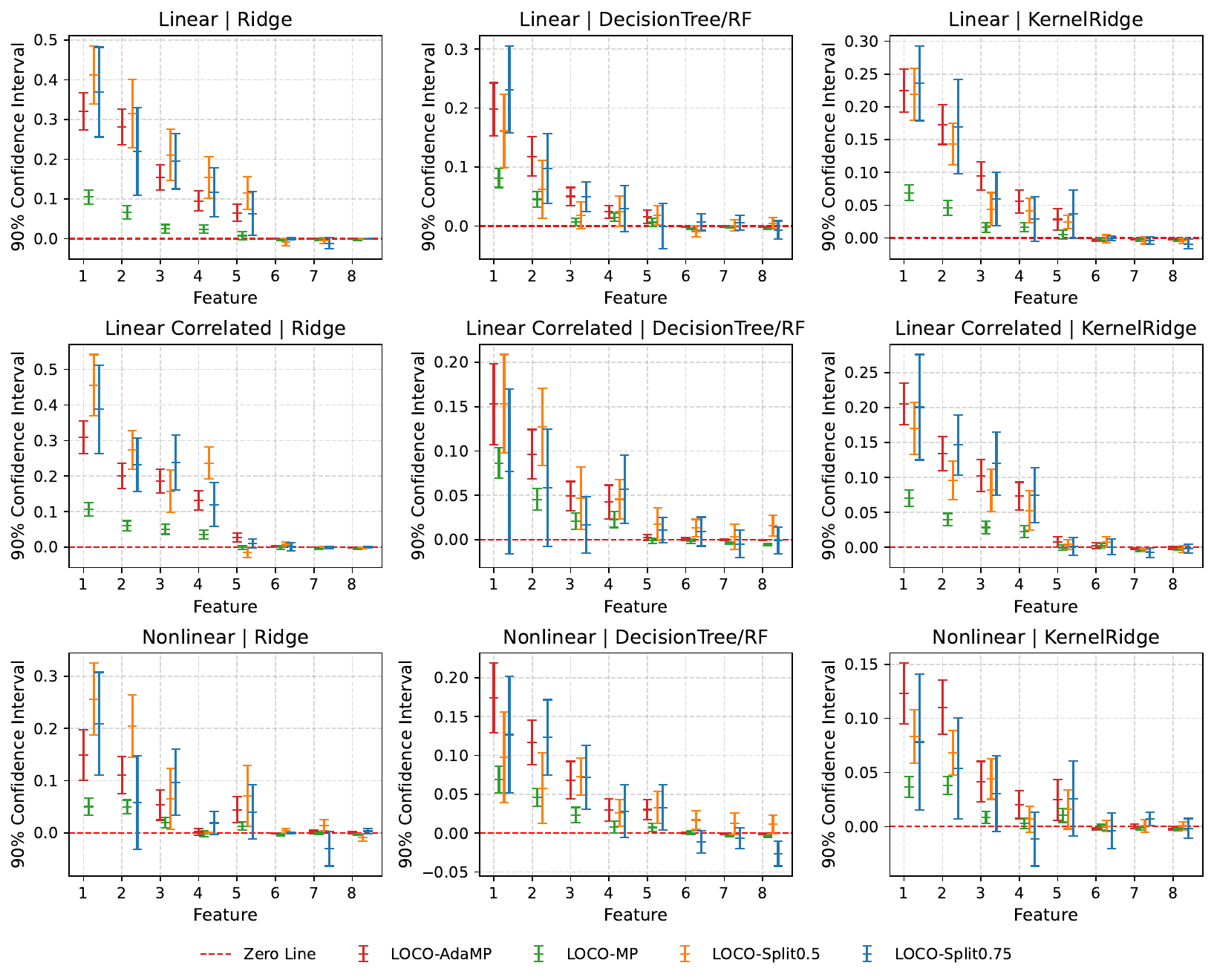}
    \caption{\small 90\% confidence intervals for the first 8 features in the low-dimensional setting ($M=50, N=200$). The first $5$ features are signal features with decreasing strengths. The first and third rows use the default setting with independent features, whereas the second uses the setting with one correlated feature pair. For the second column, decision trees are used for LOCO-MP and LOCO-AdaMP, and random forests are used for LOCO-Split.}
    \label{fig:ci50_abs}
\end{figure}

\begin{figure}[!htb]
    \centering
    \includegraphics[width=\linewidth]{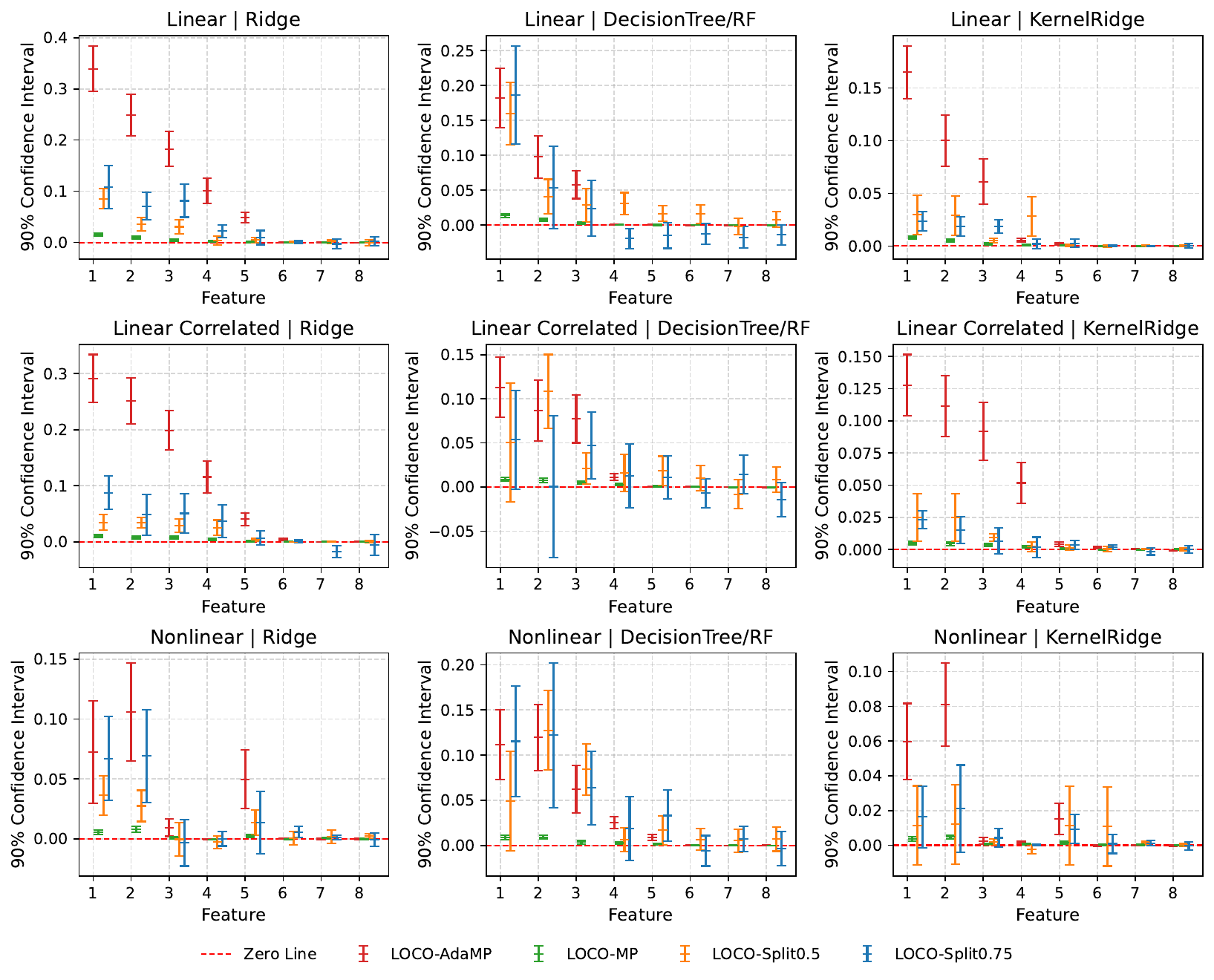}
    \caption{\small 90\% confidence intervals for the first 8 features in the high-dimensional setting ($M=500, N=200$). The first $5$ features are signal features with decreasing strengths. The first and third rows use the default setting with independent features, whereas the second uses the setting with one correlated feature pair. For the second column, decision trees are used for LOCO-MP and LOCO-AdaMP, and random forests are used for LOCO-Split.}
    \label{fig:ci500_abs}
\end{figure}

\begin{figure}[!htb]
    \centering
    \includegraphics[width=\linewidth]{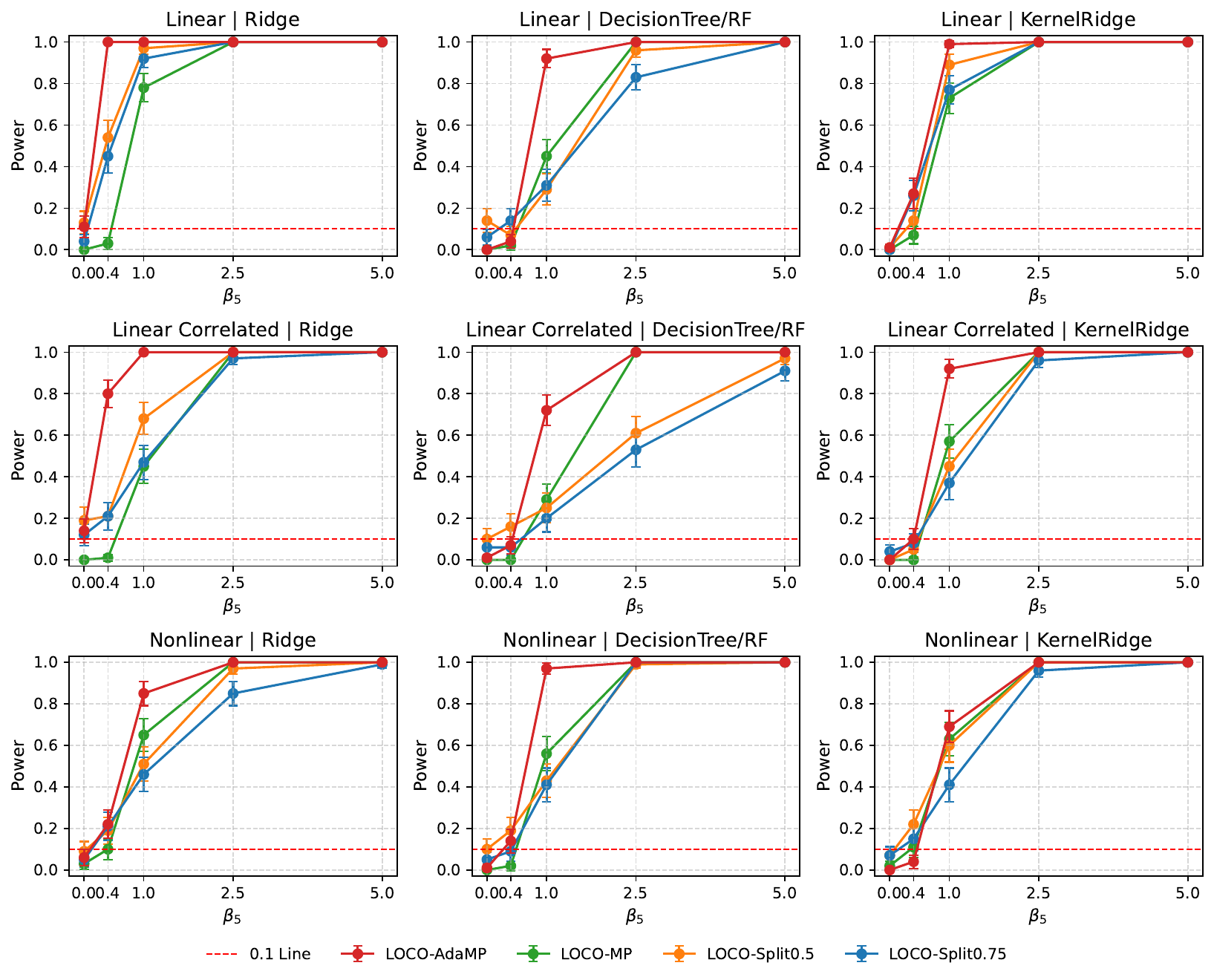}
    \caption{\small Rejection rates of one-sided hypothesis tests ($\alpha = 0.1$) for feature 5 in the low-dimensional setting $(M = 50, N = 200)$, averaged over 100 replicates. The x-axis represents values for $\beta_5$. LOCO-AdaMP quickly attains higher rejection rates when $\beta_5$ increases above zero. The first and third rows use the default setting with independent features, whereas the second uses the setting with one correlated feature pair.}
    \label{fig:power50_abs}
\end{figure}

\begin{figure}[!htb]
    \centering
    \includegraphics[width=\linewidth]{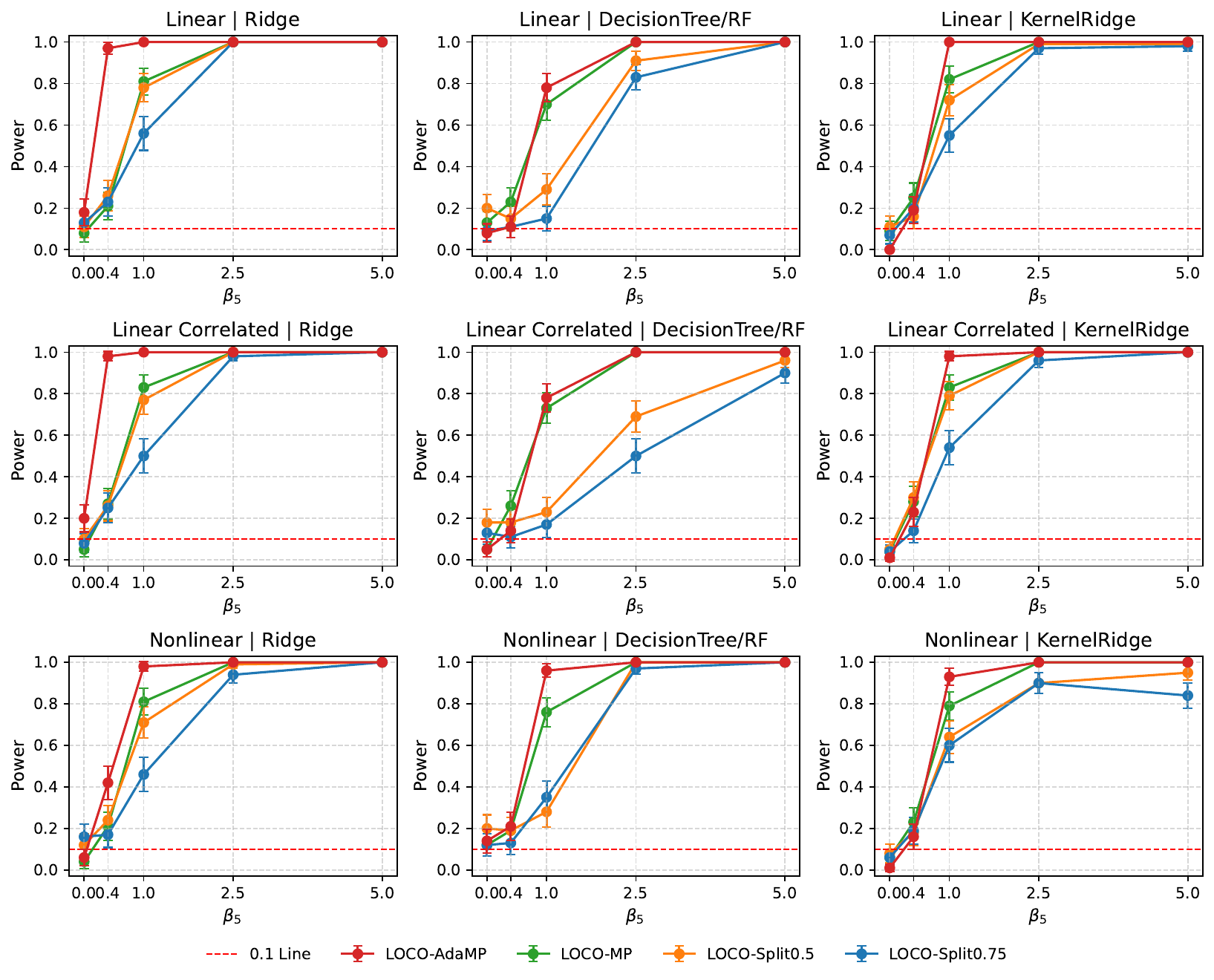}
    \caption{\small Rejection rates of one-sided hypothesis tests ($\alpha = 0.1$) for feature 5 in the high-dimensional setting $(M = 500, N = 200)$, averaged over 100 replicates. The x-axis represents values for $\beta_5$. LOCO-AdaMP quickly attains higher rejection rates when $\beta_5$ increases above zero. The first and third rows use the default setting with independent features, whereas the second uses the setting with one correlated feature pair.}
    \label{fig:power500_abs}
\end{figure}

\end{document}